\theoremstyle{plain}
\newtheorem{theorem}{Theorem}[section]
\newtheorem{proposition}[theorem]{Proposition}
\newtheorem{lemma}[theorem]{Lemma}
\theoremstyle{definition}
\newtheorem{assumption}[theorem]{Assumption}
\theoremstyle{remark}
\newtheorem{remark}[theorem]{Remark}
\title{Online Assortment and Price Optimization Under \\ Contextual Choice Models}
\date{}
\author{Yigit Efe Erginbas}
\author{Thomas A.~Courtade}
\author{Kannan Ramchandran}
\affil{UC Berkeley}
\begin{document}

\def\ARXIV{1}

\maketitle

\begin{abstract}
  We consider an assortment selection and pricing problem in which a seller has $N$ different items available for sale. In each round, the seller observes a $d$-dimensional contextual preference information vector for the user, and offers to the user an assortment of $K$ items at prices chosen by the seller. The user selects at most one of the products from the offered assortment according to a multinomial logit choice model whose parameters are unknown. The seller observes which, if any, item is chosen at the end of each round, with the goal of maximizing cumulative revenue over a selling horizon of length $T$. For this problem, we propose an algorithm that learns from user feedback and achieves a revenue regret of order \smash{$\wt{\mathcal{O}}(d \sqrt{K T} / L_0 )$} where $L_0$ is the minimum price sensitivity parameter. We also obtain a lower bound of order \smash{$\Omega(d \sqrt{T}/ L_0)$} for the regret achievable by any algorithm. 
\end{abstract}

\section{INTRODUCTION}

In online marketplaces, dynamic assortment selection and pricing for sequentially arriving buyers presents a challenge for online learning. Since the preferences of buyers are varying and uncertain, adaptive strategies are essential to meet their needs and maximize the effectiveness of offers. To address this problem, we investigate the application of online learning techniques for contextual assortment selection and pricing. Assortment selection involves the seller choosing a subset of items from a vast catalog to present to buyers, and dynamically assigning prices to the offered items. The overall goal is to maximize revenue over the course of repeated interactions.

Dynamic assortment selection and pricing strategies are deployed in a variety of online sectors including e-commerce (e.g., Amazon), food delivery (e.g., Uber Eats), and hospitality (e.g., Airbnb). With similar systems becoming ubiquitous in our daily lives, there is a growing opportunity to deliver tailored product recommendations and pricing adjustments. Therefore, it is crucial to consider data-driven approaches that can enhance user experiences and boost profitability in today's highly competitive digital industry. 

\begin{figure}[t]
\centering
\if\ARXIV1
    \includegraphics[width= 0.50 \columnwidth]{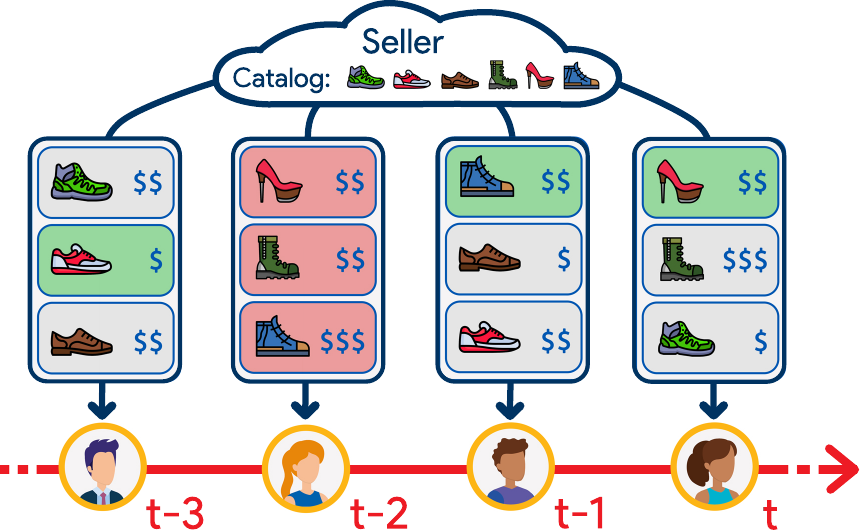}
\else
    \includegraphics[width= 0.99 \columnwidth]{images/assortment_system_diagram.pdf}
\fi
\caption{A seller has access to a catalog (set) of $N = 6$ distinct items, from which it can advertise to sequentially arriving users. In each round, the seller offers an assortment of $K = 3$ items at well-chosen prices. The user selects one of the products from the offered assortment (represented with a green background), or rejects all offered items (represented with a red background).}
\label{fig:system}
\vspace{-10pt}
\end{figure}

We design sequential \emph{assortment selection and pricing} algorithms that offer a sequence of assortments (menus) of up to $K$ items from a catalog of $N$ possible items. The learning agent (seller) sequentially selects assortments to offer and sets prices for the included items. After making assortment and pricing decisions in each round, the learning agent receives user feedback, which consists of the specific item chosen from the offered assortment. We assume that the item choice follows a multinomial logistic (MNL) model \citep{McFadden_1978}, which is one of the most widely used models in dynamic assortment optimization literature \citep{Caro_Gallien_2007, Agrawal_Avadhanula_Goyal_Zeevi_2017, Aouad_Levi_Segev_2018}. Because assortment-based offers are relevant to many industries that involve access to additional information about users, contextual choice models have gained significant traction in recent years \citep{Chen_Wang_Zhou_2020, Javanmard_Nazerzadeh_Shao_2020}. In alignment with this approach, we assume that the utility parameters in the MNL choice model are linear functions of $d$-dimensional context vectors that are revealed at each round. 

To address a range of real-world scenarios where price optimization is essential for maximal revenue, we incorporate the \emph{pricing} of items as a second component of the seller's problem. This largely differs from previous literature on sequential assortment selection, wherein prices are assumed to be predetermined \citep{Chen_Wang_Zhou_2020, Oh_Iyengar_2021}. The main challenge in our work is \textbf{the complex interdependence between assortment and pricing decisions}, an issue that existing methods are not designed to address.

In the process of offering a sequence of assortments with judiciously chosen prices, the seller's goal is to maximize the expected revenue accumulated over a time horizon of $T$ rounds. However, since the seller does not have knowledge of the parameters of the contextual choice model ahead of time, the decisions involve a trade-off between learning the choice model in order to increase long-term revenues and earning short-term revenues by leveraging the already-acquired information. 

\subsection{Overview of Our Algorithm}

Our algorithm selects \emph{optimistic} assortment and prices that balance the trade-off between exploration and exploitation. This is accomplished by deriving tight upper bounds for the utility functions in the MNL model. In contrast to the dynamic assortment selection literature, which only establishes a pointwise upper bound for the value of an assortment, we construct price-dependent functions that upper bound the values across all price points. This construction allows us to quantify the varying uncertainty for different prices and successfully assess the trade-off involved in joint optimization of assortment and prices. 

To construct these utility upper bounds, we need to obtain estimates of the parameters in the MNL model. However, dynamic estimation of MNL parameters has been exclusively studied under the assumption of fixed prices and the state-of-the-art techniques result in a dependence on a problem-dependent parameter $\kappa$ \citep{ Oh_Iyengar_2021}~\footnotemark[1]. If we consider extending these analyses to include price selection, we observe that the $\kappa$ parameter strongly depends on the assortment size $K$ and the minimum price sensitivity $L_0$~\footnotemark[2]. In particular, the worst-case dependency is $\kappa=K^{2 + 1 / L_0}$ which would translate into a $\mathcal{O}(K^{2 + 1 / L_0} d \sqrt{T})$ regret bound. Hence, a direct extension of existing approaches is far from optimal, especially when the minimum price sensitivity parameter $L_0$ is small.

We tackle this issue by constructing better estimates of the Fisher Information Matrix for the parameters of the MNL model, which enables us to eliminate the $\kappa$ dependence. The key to our analysis is a novel Bernstein-type inequality for self-normalized vector-valued martingales which we derive based on techniques introduced in \citet{Faury_Abeille_Calauzenes_Fercoq_2020}. 

\footnotetext[1]{The parameter $\kappa$ inversely scales with the minimum probability of each item being chosen. For a precise definition of $\kappa$ and additional details, please refer to Appendix~\ref{sect:estimation_importance}.}
\footnotetext[2]{The minimum price sensitivity $L_0$ is a lower bound for the rate of decay of the utility as a function of the prices.}

Consistent with the sequential decision-making literature, we measure the performance of algorithms using a relevant notion of regret, defined as the difference between the expected revenue generated by the algorithm and the offline optimal expected revenue when all parameters are known. We show that our algorithm achieves a revenue regret of order \smash{$\wt{\mathcal{O}}(d \sqrt{K T} / L_0)$}, which, as we show, is the best possible up to logarithmic factors in $d$, $T$, and minimum price sensitivity $L_0$~\footnotemark[2].



\vspace{-5pt}
\subsection{Our Contributions}
To the best of our knowledge, we are the first to address the problem of dynamic contextual assortment selection and pricing \textbf{simultaneously}. Our contributions are:
\begin{itemize}[wide=0.5pt, leftmargin=*]
    \item \emph{Formulation:} We introduce and formalize the problem of sequential assortment and price optimization under contextual multinomial logit choice model.
    \item \emph{Regret upper bound:} We develop an algorithm for the contextual assortment selection and pricing problem (Algorithm \ref{alg:seq_assortment}). We show that it achieves \smash{$\wt{O}(d \sqrt{K T}/L_0)$} regret in $T$ rounds where $d$ is the dimension of the context vectors, $K$ is the assortment size, and $L_0$ is the minimum price sensitivity. We further improve the time and space complexity of our algorithm by leveraging online Newton step (ONS) techniques for parameter estimation in Algorithm \ref{alg:seq_assortment_online}.
    \item \emph{Regret lower bound:} We show that for any algorithm, there exists an adversarial problem instance such that it incurs \smash{$\Omega(d \sqrt{T}/ L_0)$} regret. Therefore, Algorithm \ref{alg:seq_assortment} enjoys optimal regret up to logarithmic terms in $d$, $T$, $N$, and $L_0$.
    \item \emph{Assortment and price optimization algorithm:} As a part of our solution, we develop an efficient algorithm (Algorithm~\ref{alg:optimization_algo}) to find the optimal assortment and prices under the MNL model with any differentiable and strictly decreasing utility function.
\end{itemize}

\begin{remark}
The gap between our upper and lower bounds for regret is on the order of $\mathcal{O}(\sqrt{K})$, but given that the maximum assortment size is typically small (e.g., 5 to 20) in most real-world scenarios, this difference might be considered non-critical. 
\end{remark}

 \vspace{-5pt}
\subsection{Related Works}

\paragraph{Generalized Linear Bandits} Linear bandits, generalized linear bandits, and their variants have been extensively studied in the context of sequential decision-making with contextual information \citep{abbasi_2011, Chu_Li_Reyzin_Schapire_2011, Li_Lu_Zhou_2017}. Building on this literature, recent works by \cite{ban2021personalized, xu2024pricing, wang2025dynamic} have explored parametric contextual pricing for a single item under generalized linear demand models, where demand depends solely on the item's own price. In contrast, the MNL model we consider captures demand through a choice model, accounting for the influence of all item prices in the assortment. Another line of research examines combinatorial variants of the contextual bandit problem, often incorporating semi-bandit or cascading feedback \citep{chen_2013, Qin_Chen_Zhu_2014, kveton_2015, Zong_Ni_Sung_Ke_Wen_Kveton_2016}. However, these approaches cannot account for substitution effects, as their choice models fail to consider which other items are included in the assortment.

\renewcommand{\arraystretch}{1.3}
\renewcommand\theadfont{\normalsize\bfseries}

\begin{table*}[t]
\centering
\caption{Comparison of related works, provided regret bounds, and computational complexity per time step of given algorithms. $T$ is the number of rounds, $K$ is the assortment size, $N$ is the total number of items, $d$ is the feature dimension. The big-$\mathcal{O}$ and big-$\Omega$ notations denote the regret upper and lower bounds, respectively. To the best of our knowledge, we are the first to address the problem of simultaneous contextual assortment selection and pricing.}
\begin{adjustbox}{width=\textwidth}
\begin{tabular}{llllll}
 \toprule
 &  \thead{Context}&  \thead{Assortment}&  \thead{Pricing}& \thead{Regret}& \thead{Computational \\ Complexity \footnotemark[3]}\\ 
 \midrule
 \citet{Agrawal_Avadhanula_Goyal_Zeevi_2019}& No & Yes & No & $\wt{\mathcal{O}}( \sqrt{N T})$, $\Omega( \sqrt{N T/K})$& $\Theta \left( N \right)$\\ 
 \citet{Miao_Chao_2018} & No & Yes & Yes & $\wt{\mathcal{O}}(\sqrt{N T})$ \footnotemark[4] & unknown \footnotemark[5]\\
 \citet{Chen_Wang_Zhou_2020}& Yes & Yes & No & $\wt{\mathcal{O}}( d\sqrt{T})$, $\Omega( d \sqrt{T} / K)$& $\Theta ( K T +  \binom{N}{K} )$\\ 
 \citet{Oh_Iyengar_2021}& Yes & Yes & No & $\wt{\mathcal{O}}(  \kappa d \sqrt{T} )$ & $\Theta \left( N \right)$\\ 
 \citet{Javanmard_Nazerzadeh_Shao_2020}& Yes & No & Yes & $\mathcal{O}(\log(dT)\sqrt{T})$& $\Theta ( N \sqrt{T} )$ \\ 
\citet{Perivier_Goyal_2022} & Yes & No & Yes & $\wt{\mathcal{O}}(d\sqrt{T})$ \footnotemark[6] & $\Theta \left( N \right)$ \\
\citet{Perivier_Goyal_2022} & Yes & Yes & No & $\wt{\mathcal{O}}(d K \sqrt{T})$  & unknown \footnotemark[7]\\
\thead{CAP} (Algorithm \ref{alg:seq_assortment})& Yes & Yes & Yes & $\wt{\mathcal{O}}(d \sqrt{K T} / L_0)$, $\Omega(d \sqrt{T} / L_0)$ & $\Theta \left( K T + N \right)$\\ 
\thead{CAP-ONS} (Algorithm \ref{alg:seq_assortment_online})& Yes & Yes & Yes & $\wt{\mathcal{O}}(d K \sqrt{T} / L_0)$ & $\Theta \left( N \right)$\\ 
 \bottomrule
\end{tabular}
\end{adjustbox}
\label{tab:literature_comparison}
\if\ARXIV0
    \vspace{-15pt}
\fi
\end{table*}

\paragraph{Dynamic Assortment Selection} There has been an emerging body of literature on multinomial logit (MNL) bandits in both non-contextual \citep{Cheung_Simchi_2017, Agrawal_Avadhanula_Goyal_Zeevi_2019} and contextual settings \citep{oh2019thompson, Chen_Wang_Zhou_2020, agrawal2020tractable, Oh_Iyengar_2021}. While these studies address the sequential assortment selection problem under the MNL choice model, they assume fixed prices for the items.

Incorporating variable prices directly into these algorithms, such as in \citet{Chen_Wang_Zhou_2020} or \citet{agrawal2020tractable}, proves impractical as they compute separate upper confidence bounds for the value of each of the $\binom{N}{K}$ possible assortments and choose the one with maximum value. With the addition of pricing into the problem, these upper bounds become functions of the prices for all items and make the optimization even harder. \citet{Oh_Iyengar_2021} offers a polynomial-time contextual MNL-bandit algorithm that computes upper confidence bounds for the value of each item rather than each assortment. However, their algorithm and analysis translates into a $\mathcal{O}(K^{2 + 1 / L_0} d \sqrt{T})$ regret bound when we introduce price optimization (see Appendix \ref{sect:estimation_importance} for details). Hence, this approach is also far from optimal. Recently, \citet{Perivier_Goyal_2022} has also provided an assortment selection algorithm with improved regret bounds. However, their analysis only works under the assumption of uniform prices across items, which does not hold in our setting.

\footnotetext[3]{We provide the amortized computational complexity per time step with respect to parameters $N$, $K$, and $T$.}
\footnotetext[4]{The given regret bounds are for Bayesian regret.}
\footnotetext[5]{It depends on the prior of the parameter.}
\footnotetext[6]{This result considers an adversarial arrival model.}
\footnotetext[7]{Their confidence sets are expensive to compute.}

\paragraph{Dynamic Pricing} The problem of dynamic pricing has been typically modeled as a variant of the multi-armed bandit problem that aims to maximize revenue from selling copies of a single good to sequentially arriving users \citep{kleinberg2003, Besbes_Zeevi_2009, Bubeck_Devanur_Huang_Niazadeh_2019, Paes_Leme_Schneider_2018, Xu_Wang_2021}. Our contribution stands out by considering the combinatorial aspect of the assortment selection problem faced in simultaneously offering multiple items. Recent studies by \citet{Javanmard_Nazerzadeh_Shao_2020} and \citet{Perivier_Goyal_2022} consider the problem of pricing multiple items that are offered under the MNL choice model. However, in contrast to our work, these frameworks assume that all available items are offered to the buyer. To the best of our knowledge, there is only one work \citep{Miao_Chao_2018} which considers the joint problem of assortment optimization and pricing under unknown demand information. However, this work does not utilize a contextual model and assumes that the unknown parameter is randomly drawn from a prior distribution known to the algorithm.

\section{PROBLEM DEFINITION}

\paragraph{Notation:} We use bold lowercase font for vectors $\vect{x}$ and bold uppercase font for matrices $\vect{X}$. For a vector $\vect{x}$, we denote its $i$-th entry by $x_i$ and we use $\|\vect{x}\|$ to denote its $\ell^2$-norm. For two vectors $\vect{x}$ and $\vect{y}$, we use $(\vect{x}; \vect{y})$ to denote their concatenation and use $\langle \vect{x}, \vect{y} \rangle$ to denote their inner product. For a vector $\vect{x}$ and a positive-definite matrix $\vect{W}$, we use $\|\vect{x}\|_{W}$ to denote the weighted $\ell^2$-norm. For any positive integer $N$, we use $[N]$ to denote the set $\{1, 2, \dots, N\}$. 

We consider the problem of online assortment selection and pricing for selling items to sequentially arriving buyers. We denote the set of available items by $[N]$ and consider that the seller is constrained to offer at most $K$ items to each arriving buyer. Accordingly, we let $\mathcal{S}_K := \{S \subseteq [N] : |S| \leq K\}$ denote the set of all possible assortments that the seller can choose to offer.

At each time $t \in [T]$, the seller observes random feature vectors $\vect{x}_{ti} \in \mathbb{R}^d$ for each item $i \in [N]$. Given this contextual information, the seller offers an assortment of items $S_t \in \mathcal{S}_K$ and chooses a price $p_{ti} \in \bbR$ for each offered item $i \in S_t$. At the end of each round $t$, the seller observes only the purchase decision $i_t \in S_t \cup \{0\}$ of the buyer and obtains revenue $p_{t i_t}$. Here, $\{ 0 \}$ represents the no-purchase option (or outside option), which indicates that the user did not choose any item offered in $S_t$,  resulting in revenue $p_{t 0} = 0$. For convenience, we let $\vect{p}_t \in \bbR^N$ denote the collection of prices chosen for all items.

For a given assortment $S_t$ and price vector $\vect{p}_t$, the buyer's decision $i_t$ is a categorical random variable with support $S_t \cup \{ 0 \}$. We model this decision via the widely used multinomial logit (MNL) choice model \citep{McFadden_1978} under a linear contextual utility function. Formally, the choice probability for each item $i \in S_t$ (and the no-purchase option) is assumed to be given as in the following assumption.


\begin{assumption}[Multinomial logit choice under linear contextual utility]

The utility of the buyer at time $t$ for item $i$ is given by the linear model
\begin{align*}
    u_{ti}(p) = \langle \vect{\psi}^*, \vect{x}_{ti} \rangle - \langle \vect{\phi}^*, \vect{x}_{ti} \rangle \cdot p
\end{align*}
where $\vect{\psi}^* \in \bbR^d$ and $\vect{\phi}^* \in \bbR^d$ are time-invariant parameter vectors unknown to the seller. In this model, the $\alpha_{ti} := \langle \vect{\psi}^*, \vect{x}_{ti} \rangle$ term represents the buyer's base valuation of the item while the $\beta_{ti} := \langle \vect{\phi}^*, \vect{x}_{ti} \rangle$ term represents the buyer's price sensitivity.

Then, given an assortment $S_t$ with prices $\vect{p}_t$, the probability that the buyer selects item $i \in S_t$ is
\begin{align*}
    q_t( i | S_t,  \vect{p}_t) := \frac{\exp\{u_{ti}(p_{ti})\}}{1 + \sum_{j \in S_t} \exp\{u_{tj}(p_{tj})\}}.
\end{align*}
Consequently, the probability of no purchase is
\begin{align*}
    q_t( 0 | S_t,  \vect{p}_t) := \frac{1}{1 + \sum_{j \in S_t} \exp\{u_{tj}(p_{tj})\}}.
\end{align*}
\label{assumption_mnl}
\vspace{-10pt}
\end{assumption}

Under this model, the expected revenue at time $t$ is
\begin{align}
    R_t(S_t, \vect{p}_t) := \sum_{i \in S_t} p_{ti} \cdot q_t( i | S_t,  \vect{p}_t)
\label{eqn_rev_t}
\end{align}
for any selection of assortment $S_t \in \mathcal{S}_K$ and price vector $\vect{p}_t \in \bbR^N$. Thus, for a sequence of assortments $S_t \in \mathcal{S}_K$ and price vectors $\vect{p}_t \in \bbR^N$ chosen over time, the cumulative expected revenue can be written as $\sum_{t = 1}^{T} R_t(S_t, \vect{p}_t)$.

After the seller decides on the assortment $S_t \in \mathcal{S}_K$ and prices $\vect{p}_t \in \bbR^N$ to offer to the user at each time $t$, the user reports the item $i_t \in S_t \cup \{0\}$ that they have decided to purchase. We denote by $H_t$ the history $\{\{\vect{x}_{\tau i}\}_{i \in [N]}, S_{\tau}, \vect{p}_{\tau}, i_{\tau}\}_{\tau = 1}^{t-1}$ of observations available to the seller when choosing the next set of assortment $S_t \in \mathcal{S}_K$ along with the next price vector $\vect{p}_t$. Then, the seller agent employs a policy $\bm{\pi} = \{ \pi^t | t \in [T]\}$, which is a sequence of functions, each mapping the history $H_t$ and the context vectors $\{\vect{x}_{t i}\}_{i \in [N]}$ to an action $(S_t, \vect{p}_t) \in \mathcal{S}_K \times \bbR^N$.

Given the contextual information at every round $t$, the task of the seller is to sequentially offer the items to users at well-chosen prices such that it can achieve maximal revenue. To evaluate policies in achieving this objective, we define the \emph{regret} metric that measures the gap between the expected revenue of policy $\vect{\pi}$ and that of the offline optimal sequence of assortments and prices. The regret $\mathcal{R}_T$ for a time horizon of $T$ periods is defined as
\begin{align*}
    \mathcal{R}_T := \sum_{t = 1}^{T} R_t(S_t^*, \vect{p}_t^*) - \sum_{t = 1}^{T} R_t(S_t, \vect{p}_t),
\end{align*}
where $(S_t^*, \vect{p}_t^*)$ denotes an offline optimal assortment and price selection that satisfies
\begin{align}
    (S_t^*, \vect{p}_t^*) \in \argmax_{\substack{S \in \mathcal{S}_K \\ \vect{p} \in \bbR^N}} R_t(S, \vect{p}).
\label{opt_action}
\end{align}
\vspace{-15pt}

Based on the definition of the regret metric, we see that regret minimization is equivalent to maximizing the cumulative expected revenue.
\vspace{-5pt}
\section{ASSORTMENT AND PRICE OPTIMIZATION}
\vspace{-5pt}
\label{sect_optimal_assortment}

As stated in Assumption \ref{assumption_mnl}, we assume that buyers' purchase decisions are given by a multinomial logit (MNL) model. Therefore, the assortment and price optimization at time $t$ can be formulated as
\begin{align*}
    \max_{\substack{S_t \in \mathcal{S}_K \\ \vect{p}_t \in \bbR^N}} R_t(S_t, \vect{p}_t) &= \max_{\substack{S_t \in \mathcal{S}_K \\ \vect{p}_t \in \bbR^N}} \frac{\sum_{i \in S_t} p_{ti} \exp\{u_{ti}(p_{ti})\}}{1 + \sum_{j \in S_t} \exp\{u_{tj}(p_{tj})\}}.
\end{align*}

We recall that the utility functions are given by linear form $u_{ti}(p) =  \langle \vect{\psi}^*, \vect{x}_{ti} \rangle - \langle \vect{\phi}^*, \vect{x}_{ti} \rangle \cdot p$ and make the following regularity assumption.

\begin{assumption}[Minimum Price Sensitivity]
    There exists a constant $L_0 > 0$ such that utility functions satisfy $u_{ti}'(p) = - \langle \vect{\phi}^*, \vect{x}_{ti} \rangle \leq - L_0$ for all $t \in [T]$ and $i \in [N]$, almost surely.
    \label{assumption_positive_sens}
\end{assumption}

This assumption ensures that the utility function $u_{ti}(p)$ is strictly decreasing in price and hence infinity is a so-called null price, i.e. $\lim_{p \to \infty} p e^{u_{ti}(p)} = 0$, so that the objective function $R_t(S_t, \vect{p}_t)$ has a finite maximum. 


\paragraph{Characterization of Optimality} Even though the true utility functions are assumed to be linear, our learning algorithm will require us to solve for the optimum assortment and prices under broader classes of utility functions. Hence, in the next proposition, we characterize optimality under any differentiable and strictly decreasing utility function $h_{ti}(p)$.

\begin{restatable}[Optimum assortment and prices]{proposition}{optassortment}
Suppose utility functions $h_{ti}(p)$ are differentiable and strictly decreasing for all items $i \in [N]$. Let $B_t$ be the unique solution of the fixed point equation
\begin{align}
    B = \max_{S \in \mathcal{S}_K} \sum_{i \in S} v_{ti}(B)
    \label{eq_fixed_point_assortment}
\end{align}
where $v_{ti}(B) := \max_{p \in \bbR} \left\{ f_{ti}(p) : p + 1 / h_{ti}'(p) = B \right\}$ and $f_{ti}(p) := - e^{h_{ti}(p)} / h_{ti}'(p)$. Then, the optimum assortment $S^*_t$ is the assortment $S$ that achieves the maximum in the optimization problem~\eqref{eq_fixed_point_assortment}, the optimum prices are
\begin{align*}
p_{ti}^* = \argmax_{p \in \bbR} \left\{ f_{ti}(p) : p + 1 / h_{ti}'(p) = B_t \right\},
\end{align*}
and the optimum revenue achieved by $(S_t^*, \vect{p}_t^*)$ is $B_t$.
\label{prop_opt_assortments_and_prices}
\end{restatable}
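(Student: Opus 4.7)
The approach is a Gallego-style linearization of the MNL revenue function, extended to joint optimization over prices. Let $B^\ast$ denote the optimal revenue. The key algebraic observation is that
\begin{equation*}
R_t(S, \vect{p}) \le B \iff \sum_{i \in S} (p_i - B)\, e^{h_{ti}(p_i)} \le B,
\end{equation*}
which holds by clearing denominators in the definition of $R_t$. Applied at $B = B^\ast$ together with equality at the optimizer, this shows that $B^\ast$ satisfies the fixed-point identity
\begin{equation*}
B^\ast = \max_{S \in \mathcal{S}_K,\; \vect{p} \in \bbR^N} \sum_{i \in S} (p_i - B^\ast)\, e^{h_{ti}(p_i)}.
\end{equation*}

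The next step exploits separability: for fixed $S$ and $B$, the inner objective decouples across items. Define $g_{ti}(p; B) := (p - B)\, e^{h_{ti}(p)}$. Differentiating in $p$ gives $\partial_p g_{ti}(p;B) = e^{h_{ti}(p)}\bigl[\,1 + (p - B)\, h_{ti}'(p)\,\bigr]$, so any interior critical point satisfies $p + 1/h_{ti}'(p) = B$, which is meaningful since $h_{ti}' < 0$. Substituting this identity back into $g_{ti}$ yields $g_{ti}(p; B) = -e^{h_{ti}(p)}/h_{ti}'(p) = f_{ti}(p)$, so the per-item maximum is exactly $v_{ti}(B)$. Plugging this into the previous display collapses the fixed-point equation to precisely \eqref{eq_fixed_point_assortment}, with the optimal assortment being the outer maximizer at $B = B_t$ and the optimal prices being the per-item inner maximizers.

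For uniqueness of $B_t$, observe that $\Phi(B) := \max_{S \in \mathcal{S}_K} \sum_{i \in S} v_{ti}(B)$ is non-increasing in $B$, because for every $p$ the map $B \mapsto g_{ti}(p;B)$ is strictly decreasing ($e^{h_{ti}(p)} > 0$). Hence $B \mapsto \Phi(B) - B$ is strictly decreasing and has at most one root; existence of a root follows from continuity together with finiteness of $B^\ast$, which is guaranteed by the null-price behavior $p\, e^{h_{ti}(p)} \to 0$ that rules out unbounded revenue.

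The main obstacle I anticipate is justifying that the \emph{constrained} maximum defining $v_{ti}(B)$ recovers the \emph{unconstrained} maximum of $g_{ti}(\cdot;B)$. This requires showing that $g_{ti}(\cdot;B)$ attains its supremum at an interior critical point: as $p \to \infty$ one has $e^{h_{ti}(p)} \to 0$ (so $g_{ti} \to 0^-$), while for $p$ sufficiently negative $(p-B)\, e^{h_{ti}(p)} \le 0$, so the global supremum must be attained at some stationary point. If the stationarity equation $p + 1/h_{ti}'(p) = B$ admits multiple solutions, the maximum of $f_{ti}$ over that solution set still recovers the global maximum, which is precisely how $v_{ti}(B)$ is defined. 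Once these attainment issues are settled, the three claims of the proposition follow directly from the identifications made above.
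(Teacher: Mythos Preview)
Your proof is correct and takes a genuinely different route from the paper. The paper proceeds by writing out the first-order conditions $\nabla_{\vect{p}} R_t(S,\vect{p}) = \vect{0}$ for a fixed assortment, simplifying them via the MNL structure to the stationarity relation $p_j + 1/h'_{tj}(p_j) = R_t(S,\vect{p})$, and then rewriting the revenue identity $B = \sum_i p_i q_i(\vect{p})$ as $B = \sum_i f_{ti}(p_i)$ to obtain the fixed-point formulation. The monotonicity of $v_{ti}(B)$ is established afterwards in a separate technical lemma that tracks local extrema of $p \mapsto p + 1/h'_{ti}(p)$ and relates them to local extrema of $f_{ti}$; this is the most delicate part of the paper's argument. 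Your linearization $R_t(S,\vect{p}) \le B \iff \sum_i (p_i - B)\,e^{h_{ti}(p_i)} \le B$ sidesteps most of that work: separability across items is immediate, the stationarity condition drops out of a one-variable optimization of $g_{ti}(\cdot;B)$, and strict monotonicity of $\Phi$ is inherited directly from $B \mapsto (p - B)\,e^{h_{ti}(p)}$ being strictly decreasing pointwise. The price you pay is the attainment argument you flagged, which you handle correctly (one quibble: $g_{ti}(p;B) \to 0^{+}$, not $0^{-}$, as $p \to \infty$; also, existence of the fixed point already follows from your identification $B^\ast = \Phi(B^\ast)$, so continuity is not strictly needed there). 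Your route is the cleaner of the two and is in the spirit of the classical Gallego--van~Ryzin linearization; the paper's route is more self-contained and makes the first-order structure explicit, but at the cost of the harder monotonicity lemma.
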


\begin{proof}

(Sketch) First, we write the first-order necessary conditions for the optimality of prices as 
\if\ARXIV1
    \begin{align*}
    \nabla_{\mathbf{p}} \left\{\sum_{i \in S} p_{i} q_{t}(i | S, \mathbf{p}) \right\} = \mathbf{0}
    \end{align*} 
\else
    $\nabla_{\mathbf{p}} \left\{\sum_{i \in S} p_{i} q_{t}(i | S, \mathbf{p}) \right\} = \mathbf{0}$
\fi
under any fixed assortment $S$. Using the structure of the MNL model, this necessary condition reduces to $\sum_{ i \in S} p_{i} q_{t}(i | S, \mathbf{p}) = p_{j} + \frac{1}{h_{tj}'(p_{j})} $ for all $j \in S$. Note that the left-hand side of the equation is equal to the revenue obtained at prices $\mathbf{p}$.  Therefore, the pricing problem can be written as maximizing $B$ subject to $B = \sum_{ i \in S} p_{i} q_{ti}(\mathbf{p})$ and $B = p_i + \frac{1}{h_i'(p_i)}$ for all $i \in S$. 

Furthermore, using the form of MNL, we can show that the condition $B = \sum_{ i \in S} p_{i} q_{ti}(\mathbf{p})$ is equivalent to $B = \sum_{ i \in S} f_{ti}(p_{i})$ where $f_{ti}(p) = - e^{h_{ti}(p)} / h_{ti}'(p)$. Therefore, the pricing problem can be written as maximizing $B$ subject to conditions (a) $B = \sum_{ i \in S} f_{ti}(p_i)$ and (b) $B = p_i + \frac{1}{h_i'(p_i)}$ for all $i$.

To convert this problem into a fixed point equation, we define $v_{ti}(B) =  \max_{p \in \mathbb{R}} \left\{ f_{ti}(p) : p + 1 / h_{ti}'(p) = B \right\}$, which corresponds to the maximum value the right hand side of condition (a) can take when the condition (b) is satisfied. As we show in our proof, $v_{ti}(B)$ is a continuous and strictly decreasing function of $B$. This implies that the optimum $B$ value uniquely satisfies the fixed point equation $B = \sum_{i \in S} v_{ti}(B)$. Lastly, we incorporate the assortment selection into this optimization problem and show that the assortment and price optimization can be achieved by solving the fixed point equation \ref{eq_fixed_point_assortment}. See Appendix~\ref{sect:appendix_solving_assortment_price_optimization} for details.
\end{proof}

\begin{remark}
    \citet{Wang_2013} provides a weaker version of Proposition~\ref{prop_opt_assortments_and_prices} that requires the additional assumption that the utility functions $h_{ti}(p)$ are twice-differentiable and concave in $p$. Even though this assumption holds for linear utility functions, the learning algorithm that we will introduce in the following sections requires us to solve the assortment and price optimization problem under non-concave utility functions.
\end{remark}

\paragraph{Optimization Algorithm} To solve the fixed point equation~\eqref{eq_fixed_point_assortment}, we start by showing that its right-hand side is a positive and strictly decreasing function in $B$. We also show that if the utility functions satisfy $h_{ti}(0) \leq 1$ and $h_{ti}'(p) \leq -L_0$ for all $p \in \bbR$, then the solution to~\eqref{eq_fixed_point_assortment} lies in the interval $[0, P_0]$ for some $P_0 = \mathcal{O}(\log K /L_0)$. Note that this condition holds for the true utility function $u_{ti}(p)$. Under this condition, we can use a binary-search based algorithm to find the fixed point over the interval $[0, P_0]$. For future reference, we describe this procedure in Algorithm~\ref{alg:optimization_algo}.

\begin{algorithm}[ht]
\caption{Assortment and price optimization}
\begin{algorithmic}[1]
\State \textbf{Input:} utility functions $h_{ti}(p)$ for $i \in [n]$
\State \textbf{Input:} precision parameter $\epsilon$ 
\State \textbf{Input:} search interval $[0, P_0]$ 
\State $B_{\ell} = 0$, $B_{r} = P_0$
\While{$B_{r} - B_{\ell} > \epsilon$}
\State $B \gets (B_{r} + B_{\ell})/2$
\For{$i \in [N]$}
\State Find $\mathcal{P}_{ti}(B) = \{p : p + 1 / h_{ti}'(p) = B\}$
\State $v_{ti} \gets \max\{f_{ti}(p) : p \in \mathcal{P}_{ti}(B)\}$
\EndFor
\State $B^* = \max_{S \in \mathcal{S}_K} \sum_{i \in S} v_{ti}$
\IfThenElse{$B > B^*$}{$B_{r} \gets B$}{$B_{\ell} \gets B$}
\EndWhile
\State \textbf{Output:} $B^*$
\end{algorithmic}
\label{alg:optimization_algo}
\end{algorithm}

\paragraph{Computational Complexity} The main difficulty in running Algorithm~\ref{alg:optimization_algo} is finding the set $\mathcal{P}_{ti}(B)$ that contains the solutions for the equation $p + 1 / h_{ti}'(p) = B$ for any given $B$. Fortunately, for utility functions $h_{ti}(p)$ that we will use in Algorithms~\ref{alg:seq_assortment} and~\ref{alg:seq_assortment_online}, we can show that there are only a small number of solutions (i.e., $\Theta(1)$), and these solutions can be efficiently computed. (See Appendix~\ref{sect:computation} for details.) Since each iteration of this binary-search based algorithm requires us to compute the $v_{ti}$ value for all $i \in [N]$, the algorithm has an overall computational complexity of $\Theta(N \log(P_0 / \epsilon))$ for any arbitrary precision $\epsilon$.
\vspace{-5pt}

\section{ONLINE LEARNING}

In this section, we discuss how to estimate parameters based on user choices, introduce our online learning algorithms, and provide our regret bounds.

\subsection{MLE for Multinomial Logistic Regression}

Since the seller does not have access to problem parameters $\vect{\psi}^* \in \bbR^d$ and $\vect{\phi}^* \in \bbR^d$, it cannot directly compute the optimum assortments and prices. Therefore, it needs to construct an estimate of the parameters based on the history $H_t$ of observations.

For convenience, we let $\vect{\theta} = (\vect{\psi}, \vect{\phi})$ and $\widetilde{\vect{x}}_{t i} = ( \vect{x}_{t i}, - p_{t i} \vect{x}_{t i} )$ denote the extended parameter and feature vectors such that $\langle \vect{\theta}, \widetilde{\vect{x}}_{t i} \rangle = \langle \vect{\psi}, \vect{x}_{ti} \rangle - \langle \vect{\phi}, \vect{x}_{ti} \rangle \cdot p_{ti}$.

Then, we write the MNL choice probabilities under some parameter $\vect{\theta} = (\vect{\psi}, \vect{\phi})$ using the notation
\begin{align*}
    q_{ti}(\vect{\theta}) &= \frac{e^{\langle \vect{\theta}, \widetilde{\vect{x}}_{t i} \rangle}}{1 + \sum_{j \in S_t} e^{\langle \vect{\theta}, \widetilde{\vect{x}}_{t j} \rangle}}.
\end{align*}
With this notation, the negative log-likelihood function for the observations up to time $t$ is given by 
\vspace{-2pt}
\begin{align}
    \ell_t (\vect{\theta}) := -  \sum_{s = 1}^{t-1} \log q_{s i}( \vect{\theta} ).
    \label{log_likelihood}
\end{align}

\vspace{-5pt}
The maximum likelihood estimator is the parameter $\widehat{\vect{\theta}}_t$ that minimizes $\ell_t (\vect{\theta})$ over the parameter space. Since $\ell_t (\vect{\theta})$ is convex, we can use gradient-based convex optimization methods to find an MLE solution \citep{Boyd_Vandenberghe_2004}. See Appendix \ref{sect:mle} for details.
\vspace{-5pt}
\subsection{Algorithm}\label{sect:algorithm}

\begin{figure}[t]
\centering
\if\ARXIV1
\includegraphics[width= 0.6 \linewidth]{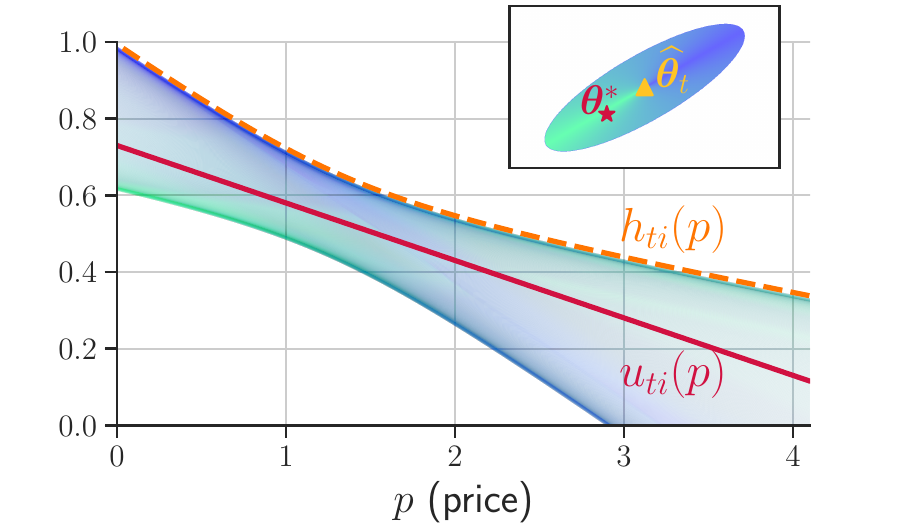}
\else
\includegraphics[width= 0.95 \linewidth]{images/utility_function.pdf}
\fi
\caption{The confidence region depicted in the top right corner contains the true parameter $\vect{\theta}^*$ with high probability. Each parameter in the confidence set corresponds to a different linear function and we construct $h_{ti}(p)$ as a tight upper bound to $u_{ti}(p)$.}
\label{fig:utility_function}
\if\ARXIV0
    \vspace{-10pt}
\fi
\end{figure}

Our core idea is to construct a tight, high-confidence upper bound for the revenue $R_t(S, \vect{p})$ as a function of $S \in \mathcal{S}_K$ and $\vect{p} \in \bbR^N$, and to determine the assortments and prices according to the \emph{optimisim} principle in order to ensure low regret. The upper confidence bound (UCB) techniques and the \emph{optimism in the face of uncertainty} principle have been widely known to be effective in balancing the exploration and exploitation in many bandit problems \citep{Lattimore_Szepesvari_2020, abbasi_2011, Li_Lu_Zhou_2017}. The key distinction of our approach lies in the construction and analysis of functional upper bounds, which capture the continuous dependence of revenue on prices. In particular, we construct a pointwise confidence upper bound $h_{ti}(p)$ for each utility function $u_{ti}(p)$, i.e., $h_{ti}(p) \geq u_{ti}(p)$ for all $p \in \bbR$ with high probability. In order to achieve low regret rates, it is crucial to obtain tight upper bounds as depicted in Figure \ref{fig:utility_function}.

\begin{algorithm*}[t]
\caption{CAP: Contextual Assortment and Pricing under MNL Model}
\begin{algorithmic}[1]
\State \textbf{Input:} initialization rounds $T_0$, confidence parameters $\{\alpha_t\}_{t \in [T]}$, minimum price sensitivity $L_0$
\State $\vect{V}_1 \gets \vect{0} \in \bbR^{2d \times 2d}$
\For{$t = 1, 2, \dots, T_0-1$} \Comment{Initialization rounds}
    \State Choose $S_t$ uniformly at random from $\{S \subseteq [n] : |S| \leq K\}$
    \State Choose $p_{ti}$ independently and uniformly at random from $[1,2]$ for all $i \in S_t$
    \State Offer assortment $S_t$ at price $\vect{p}_t$ and observe $i_t$
    \State $\vect{V}_{t+1} \gets \vect{V}_{t} + \frac{1}{K^2} \sum_{i \in S_t} \wt{\vect{x}}_{ti} \wt{\vect{x}}_{ti}^\top$
\EndFor
\For{$t = T_0, T_0 + 1, \dots, T$}
    \State Compute $\vect{\wh{\theta}}_t = (\vect{\wh{\psi}}_t, \vect{\wh{\phi}}_t)$ by minimizing \eqref{log_likelihood} \Comment{MLE computation}
    \State Let $g_{ti}(p) := \alpha_t \|(\vect{x}_{ti}, -p\vect{x}_{ti})\|_{\vect{V}_{t}^{-1}}$ for all $i \in [n]$ \Comment{Price-dependent confidence function}
    \State Let $\wt{h}_{ti}(p) := \langle \widehat{\vect{\psi}}_t, \vect{x}_{ti} \rangle - \langle \widehat{\vect{\phi}}_t, \vect{x}_{ti} \rangle \cdot p + g_{ti}(p)$ for all $i \in [n]$
    \vspace{2pt}
    \State Let $ h_{ti}(p) := \min_{p' \leq p} \left\{ \wt{h}_{ti} (p') - L_0 (p - p') \right\}$ for all $i \in [n]$ \Comment{Utility function estimate}
    \vspace{2pt}
    \State Choose $(S_t, \vect{p}_t)$ using Algorithm \ref{alg:optimization_algo} with estimated utility functions $h_{ti}(p)$ 
    \vspace{2pt}
    \State Offer assortment $S_t$ at price $\vect{p}_t$ and observe $i_t$
    \vspace{2pt}
    \State $\vect{V}_{t+1} \gets \vect{V}_{t} + \sum_{i \in S_t} q_{ti}(\vect{\wh{\theta}}_t) \wt{\vect{x}}_{ti} \wt{\vect{x}}_{ti}^\top - \sum_{i \in S_t} \sum_{j \in S_t} q_{ti}(\vect{\wh{\theta}}_t) q_{tj}(\vect{\wh{\theta}}_t) \wt{\vect{x}}_{ti} \wt{\vect{x}}_{tj}^\top$ \Comment{Information estimate}
\EndFor
\end{algorithmic}
\label{alg:seq_assortment}
\vspace{-2pt}
\end{algorithm*}

We offer randomly selected assortments and prices for the first $T_0$ rounds to ensure that our maximum likelihood estimates $\wh{\vect{\theta}}_t = (\wh{\vect{\psi}}_t, \wh{\vect{\phi}}_t)$ in subsequent rounds are sufficiently close to the true parameter $\vect{\theta}^*$. This allows us to construct a matrix $\vect{V}_t$ as a tight estimate of the Fisher Information Matrix around $\vect{\theta}^*$ (please refer to Algorithm~\ref{alg:seq_assortment} for the definition of $\vect{V}_t$). Then, we obtain confidence regions of the form $\{ \vect{\theta} : \|\vect{\theta} - \wh{\vect{\theta}}_t\|_{\vect{V}_{t}} \leq \alpha_t\}$ for some confidence radius $\alpha_t$ such that $\vect{\theta}^*$ is contained within the region with high probability. In contrast to prior works \citep{Chen_Wang_Zhou_2020, Oh_Iyengar_2021}, we use estimated choice probabilities $q_{ti}(\wh{\vect{\theta}}_t)$ in our $\vect{V}_t$ construction, which is the key in achieving a better scaling of $\alpha_t$ with respect to $K$ and $L_0$.

Based on these confidence regions for the parameter, we obtain an intermediate utility upper bound
\begin{align*}
\wt{h}_{ti}(p) := \langle \widehat{\vect{\psi}}_t, \vect{x}_{ti} \rangle - \langle \widehat{\vect{\phi}}_t, \vect{x}_{ti} \rangle \cdot p + g_{ti}(p)
\end{align*}
where $g_{ti}(p) := \alpha_t \|(\vect{x}_{ti}, -p \cdot \vect{x}_{ti})\|_{\vect{V}_{t}^{-1}}$ is a \emph{price-dependent} confidence bonus. Note that \smash{$\wt{h}_{ti}(p)$} is a convex and differentiable function. However, it is not necessarily a decreasing function and hence we cannot immediately use our Proposition \ref{prop_opt_assortments_and_prices} to find optimum assortments and prices under $\wt{h}_{ti}(p)$. To resolve this problem, we use the fact that $u_{ti}'(p) \leq -L_0$ for all $p \in \bbR$, and construct a tighter upper bound
\begin{align*}
h_{ti}(p) := \min_{p' \leq p} \left\{ \wt{h}_{ti} (p') - L_0 (p - p') \right\}.
\end{align*}
As a result, we can replace each $u_{ti}(p)$ in \eqref{eqn_rev_t} with $h_{ti}(p)$ to obtain an upper bound for the revenue function as
\begin{align}
    \wt{R}_t(S, \vect{p}) := \frac{\sum_{i \in S_t} p_{ti} \exp\{h_{ti}(p_{ti})\}}{1 + \sum_{j \in S_t} \exp\{h_{tj}(p_{tj})\}}.
\label{eqn_rev_ub}
\end{align}

As we verify in our proofs, this estimate satisfies $\wt{R}_t(S, \vect{p}) \geq R_t(S, \vect{p})$ for any $S \in \mathcal{S}_K$ and any $\vect{p} \in \bbR^N$. Using $\wt{R}_t$ as a proxy for $R_t$, we choose the assortments and prices according to 
\begin{align}
(S_t, \vect{p}_t) \in \argmax_{\substack{S \in \mathcal{S}_K \\ \vect{p} \in \bbR_{+}^{n}}} \wt{R}_t(S, \vect{p}).
\label{estimated_action}
\end{align}

As discussed in Section~\ref{sect_optimal_assortment}, we can efficiently solve this optimization problem using Algorithm~\ref{alg:optimization_algo} since $h_{ti}(p)$ are differentiable and strictly decreasing.

\subsection{Regret Analysis}

Our main result presented in Theorem \ref{thm_regret_ub} concerns the regret upper bound for Algorithm \ref{alg:seq_assortment}. We show this result under the following regularity assumption on the context process which is a standard assumption made in the generalized linear bandit \citep{Li_Lu_Zhou_2017} and MNL contextual bandit \citep{Chen_Wang_Zhou_2020, Oh_Iyengar_2021} literature.

\begin{assumption}[Stochastic and bounded features]
    Each feature vector $\vect{x}_{ti}$ is an independent random variable with unknown distribution;  they satisfy $\|\vect{x}_{ti}\| \leq 1$, and there exists a constant $\sigma_0 > 0$ such that $\bbE[\vect{x}_{ti} \vect{x}_{ti}^\top] \succcurlyeq \sigma_0 \vect{I}$. Furthermore, parameter vectors satisfy $\|(\vect{\psi}^*, \vect{\phi}^*)\| \leq 1$. 
\label{assumption_stochastic_contexts}
\end{assumption}

Accordingly, we can demonstrate in Theorem \ref{thm_regret_ub} that Algorithm~\ref{alg:seq_assortment} enjoys $\wt{\mathcal{O}}(d \sqrt{K T} / L_0)$ regret bound in terms of key problem primitives $N$, $K$, $d$, $L_0$, and $T$. This regret rate is independent of the number of items $N$, and is thus applicable in settings with a large number of candidate items.

\begin{theorem}
    Suppose Assumptions~\ref{assumption_mnl}, \ref{assumption_positive_sens}, and \ref{assumption_stochastic_contexts} hold and we run CAP (Algorithm \ref{alg:seq_assortment}) with initialization length $T_0$ given in \eqref{eqn_T0} and confidence width $\alpha_t$ given in \eqref{eqn_alpha_t}. Then, the expected regret for a sufficiently large time horizon $T$ is upper-bounded as
    \begin{equation*}
        \mathcal{R}_T \leq C_1 \cdot \frac{\log K}{L_0} \, d  \sqrt{K \, T \log T \log (T/d) }
    \end{equation*}
    for a constant $C_1$ independent of $N$, $K$, $d$, $L_0$, and $T$.
    \label{thm_regret_ub}
\if\ARXIV0
    \vspace{-15pt}
\fi
\end{theorem}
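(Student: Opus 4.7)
The plan is a standard optimism-based regret decomposition, adapted to CAP's price-functional confidence bonuses. On a high-probability event $\mathcal{E}$ under which $\vect{\theta}^* \in \{\vect{\theta} : \|\vect{\theta} - \wh{\vect{\theta}}_t\|_{\vect{V}_t} \leq \alpha_t\}$ for every $t \geq T_0$, the construction of $h_{ti}$ will make $\wt{R}_t(\cdot,\cdot)$ a pointwise upper bound for $R_t(\cdot,\cdot)$, so the instantaneous regret reduces to the ``optimism gap'' $\wt{R}_t(S_t, \vect{p}_t) - R_t(S_t, \vect{p}_t)$. This gap can in turn be bounded through the confidence bonuses $g_{ti}(p_{ti})$ and summed via an elliptical-potential argument.

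First I would establish $\mathcal{E}$. The $T_0$ initialization rounds with uniform assortments and prices in $[1,2]$, together with Assumption~\ref{assumption_stochastic_contexts}, yield $\vect{V}_{T_0} \succeq c (T_0 \sigma_0 / K^2)\vect{I}$. A local Taylor analysis of the negative log-likelihood $\ell_t$ around $\vect{\theta}^*$ shows that its Hessian at an intermediate $\wt{\vect{\theta}}$ equals a sum of Fisher-information blocks of exactly the form $\sum_{i \in S_s} q_{si}(\wt{\vect{\theta}}) \wt{\vect{x}}_{si} \wt{\vect{x}}_{si}^\top - \sum_{i,j \in S_s} q_{si}(\wt{\vect{\theta}}) q_{sj}(\wt{\vect{\theta}}) \wt{\vect{x}}_{si} \wt{\vect{x}}_{sj}^\top$, i.e., the very summand that defines $\vect{V}_t$ in Algorithm~\ref{alg:seq_assortment}, but with $q$'s evaluated at $\wt{\vect{\theta}}$ rather than $\wh{\vect{\theta}}_t$. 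A Bernstein-type self-normalized martingale concentration in the spirit of \citet{Faury_Abeille_Calauzenes_Fercoq_2020}, applied to the score $\nabla \ell_t(\vect{\theta}^*)$ with the proper $\vect{V}_t$-weighting, then gives $\alpha_t = \wt{\mathcal{O}}(\sqrt{d})$ \emph{independent} of any $\kappa$-like constant, provided one can inductively keep $\wh{\vect{\theta}}_t$ within a neighborhood of $\vect{\theta}^*$ small enough that the plug-in $\vect{V}_t$ is a constant-factor approximation of the true Fisher matrix along the segment $[\vect{\theta}^*, \wh{\vect{\theta}}_t]$. The initialization ensures this at $t = T_0$ and self-consistency propagates it forward.

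Given $\mathcal{E}$, the identity $\wt{h}_{ti}(p) - u_{ti}(p) = \langle \wh{\vect{\psi}}_t - \vect{\psi}^*, \vect{x}_{ti}\rangle - p \langle \wh{\vect{\phi}}_t - \vect{\phi}^*, \vect{x}_{ti}\rangle + g_{ti}(p)$ combined with Cauchy--Schwarz in the $\vect{V}_t^{-1}$-norm gives $\wt{h}_{ti}(p) \geq u_{ti}(p)$ for every $p$. The infimal-convolution step $h_{ti}(p) = \min_{p' \leq p}\{\wt{h}_{ti}(p') - L_0(p-p')\}$ preserves this domination (because $u_{ti}'(p) \leq -L_0$) while making $h_{ti}$ strictly decreasing so that Proposition~\ref{prop_opt_assortments_and_prices} applies, and keeps $h_{ti}(p) - u_{ti}(p)$ controlled pointwise by $2 g_{ti}(p')$ at some $p' \leq p$. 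Monotonicity of the MNL revenue in its utilities then gives $\wt{R}_t \geq R_t$ on all $(S,\vect{p})$, and the choice rule \eqref{estimated_action} produces the decomposition $R_t(S_t^*, \vect{p}_t^*) - R_t(S_t, \vect{p}_t) \leq \wt{R}_t(S_t, \vect{p}_t) - R_t(S_t, \vect{p}_t)$.

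Finally, I would bound the per-round gap by a revenue-Lipschitz argument: since CAP's prices lie in $[0, P_0]$ with $P_0 = \mathcal{O}(\log K / L_0)$, the revenue is $\mathcal{O}(\log K / L_0)$-Lipschitz in each utility $u_{ti}(p_{ti})$, so the instantaneous gap is at most $(\log K / L_0) \sum_{i \in S_t} q_{ti}(\vect{\theta}^*)\,(h_{ti}(p_{ti}) - u_{ti}(p_{ti})) \lesssim (\alpha_t \log K / L_0) \sum_{i \in S_t} q_{ti}(\vect{\theta}^*) \|\wt{\vect{x}}_{ti}\|_{\vect{V}_t^{-1}}$. A Cauchy--Schwarz in $i$ (over $|S_t|\leq K$ items with probability weights bounded by $1$) contributes a factor of $\sqrt{K}$, and a second Cauchy--Schwarz in $t$ combined with the elliptical potential lemma applied to the weighted design sequence that defines $\vect{V}_t$ yields $\sum_t \sum_{i \in S_t} q_{ti}(\vect{\theta}^*) \|\wt{\vect{x}}_{ti}\|^2_{\vect{V}_t^{-1}} = \mathcal{O}(d \log T)$; multiplying by $\alpha_T/L_0 = \wt{\mathcal{O}}(\sqrt{d}/L_0)$ gives the claimed $\wt{\mathcal{O}}((\log K / L_0)\,d\sqrt{KT})$ bound. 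The main obstacle is the coupled induction maintaining $\|\wh{\vect{\theta}}_t - \vect{\theta}^*\|_{\vect{V}_t} \leq \alpha_t$ while $\vect{V}_t$ itself depends on $\wh{\vect{\theta}}_t$: one must show the confidence radius is self-consistently small enough that the plug-in Fisher matrix stays within constants of its true counterpart, which is exactly where the novel Bernstein-type self-normalized inequality is indispensable for avoiding the $\kappa$-blowup that a generic Azuma/Freedman approach would incur.
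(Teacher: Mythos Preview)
Your proposal is essentially the paper's proof: initialization to anchor $\wh{\vect{\theta}}_t$ near $\vect{\theta}^*$ so that $\vect{V}_t$ approximates the Fisher matrix within constants (Lemmas~\ref{lemma_initalization_min_eigenvalue}--\ref{lemma_V_ub} and~\ref{lemma_consistency}), a Bernstein-type self-normalized bound on the score to get $\alpha_t = \wt{\mathcal{O}}(\sqrt{d})$ without a $\kappa$-factor (Lemma~\ref{lemma_normality} via Theorem~\ref{theorem_martingale}), the optimism and per-round gap (Lemma~\ref{lemma_revenue_ub}), and an elliptical-potential sum.

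There is, however, a bookkeeping error in your last paragraph: the $\sqrt{K}$ does \emph{not} come from Cauchy--Schwarz in $i$. Since $\sum_{i\in S_t} q_{ti}(\vect{\theta}^*) \le 1$, Jensen gives $\bigl(\sum_i q_{ti}\|\wt{\vect{x}}_{ti}\|_{\vect{V}_t^{-1}}\bigr)^2 \le \sum_i q_{ti}\|\wt{\vect{x}}_{ti}\|_{\vect{V}_t^{-1}}^2$ with no extra $K$. The $K$ enters instead in the elliptical potential: because $\vect{V}_t$ is updated by $\vect{H}_t(\wh{\vect{\theta}}_t)$ rather than by $\sum_i q_{ti}\wt{\vect{x}}_{ti}\wt{\vect{x}}_{ti}^\top$, the potential argument only controls $\sum_t \sum_i q_{ti}\, q_{t0}\,\|\wt{\vect{x}}_{ti}\|_{\vect{V}_t^{-1}}^2 = \mathcal{O}(d\log(T/d))$, and removing the factor $q_{t0}\ge 1/(9K)$ yields $\sum_t \sum_i q_{ti}\|\wt{\vect{x}}_{ti}\|_{\vect{V}_t^{-1}}^2 = \mathcal{O}(dK\log(T/d))$ (Lemma~\ref{lemma_cumulative_max_uncertainty}), not $\mathcal{O}(d\log T)$ as you claim. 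Your two misplacements happen to cancel, so the final rate is correct, but the assertion that the weighted elliptical potential gives $\mathcal{O}(d\log T)$ is false as written.
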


\begin{proof}
(Sketch) In proving our regret bounds, we first show that the optimum prices $p_{ti}^*$ are bounded within $[0, P]$ for some $P = \mathcal{O}(\log K / L_0)$ under our utility estimations $h_{ti}(p)$. Then, we show that $T_0 = \Theta(\sigma_0^{-3}d P^2 K \log^2 T)$ initialization steps are enough to ensure $\|\wh{\vect{\theta}}_{t} - \vect{\theta}^*\|_2 = \mathcal{O}(1/P)$ for all $t \geq T_0$. This result enables us to estimate the Fisher Information Matrix around $\vect{\theta}^*$ within a constant factor using $\vect{V}_{t}$. Next, we establish a confidence region $\{ \vect{\theta} : \|\vect{\theta} - \wh{\vect{\theta}}_t\|_{\vect{V}_{t}} \leq \alpha_t\}$ for $\vect{\theta}^*$ with $\alpha_t = \mathcal{O}(\sigma_0^{-1} d \log T)$, importantly noting that $\alpha_t$ is independent of both $K$ and $L_0$. Here, we use a novel Bernstein-type inequality for self-normalized vector-valued martingales which allows us to fully capture the correlation structure between our observations with the help of $q_{ti}(\vect{\wh{\theta}}_t)$. Based on these confidence regions, we construct optimistic utility estimate functions $h_{ti}(p)$ as described in Section~\ref{sect:algorithm}. The selection of assortment and prices according to $h_{ti}(p)$ allows us to obtain an upper bound for the regret incurred at each time step, and hence an upper bound for $\mathcal{R}_T$. Please see Appendix \ref{proof_regret_ub} for details.
\end{proof}

\begin{remark}
    Our analysis in this work assumes that the $L_0$ parameter, or a lower bound of it, is known to the algorithm. However, as we describe in Appendix \ref{sect:estimate_L0}, it is possible to relax this assumption and estimate $L_0$ to achieve the same regret rates.
\end{remark}

\subsection{Extension to Online Parameter Update}

Algorithm~\ref{alg:seq_assortment} is simple to implement and enjoys provable regret bounds as shown in Theorem \ref{thm_regret_ub}. However, the computation of the MLE at each round requires access to all feature vectors corresponding to previous assortments. To reduce the time and space complexities of our algorithm, we can instead use a variant of the online Newton step rule from \cite{Hazan_Koren_Levy_2014}. The online version presented as Algorithm~\ref{alg:seq_assortment_online} in Appendix \ref{proof_regret_ub_online} finds an approximate MLE solution only using the context vectors of the last assortment. We show that the modified algorithm enjoys the following regret rate.

\begin{theorem}
    Suppose Assumptions~\ref{assumption_mnl}, \ref{assumption_positive_sens}, and \ref{assumption_stochastic_contexts} hold and we run CAP-ONS (Algorithm~\ref{alg:seq_assortment_online}) with initialization length $T_0$ given in \eqref{eqn_T0_online} and confidence width $\alpha_t$ given in \eqref{eqn_alpha_online}. Then, the expected regret for a sufficiently large time horizon $T$ satisfies
    \begin{equation*}
        \mathcal{R}_T = \wt{\mathcal{O}} (d  K \sqrt{T} / L_0).
    \end{equation*}
    \label{thm_regret_ub_online}
\vspace{-10pt}
\end{theorem}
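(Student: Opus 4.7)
The plan is to follow the same skeleton as the proof of Theorem~\ref{thm_regret_ub}: bound the optimum and chosen prices, establish a valid confidence set for $\vect{\theta}^*$ around the running estimate, verify that the constructed $h_{ti}$ are high-probability upper bounds on $u_{ti}$, and finally bound the regret via the optimism principle and the elliptical potential lemma. The only substantive change is that the batch MLE analysis must be replaced by an analysis of the online Newton step estimator, following the framework of \citet{Hazan_Koren_Levy_2014}. In particular, I would again show that under the estimated $h_{ti}$, the assortment-and-price optimum lies in $[0, P]$ with $P = \mathcal{O}(\log K / L_0)$, so that throughout the algorithm the arguments to the log-likelihood are uniformly bounded.

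The heart of the new argument is a concentration inequality for the ONS iterate $\wh{\vect{\theta}}_t$. On a neighborhood of $\vect{\theta}^*$ of radius $\mathcal{O}(1/P)$, the per-round MNL log-loss $\ell_s(\vect{\theta}) := -\log q_{s\, i_s}(\vect{\theta})$ is $\gamma$-exp-concave with $\gamma = \Theta(1/P^2)$, its gradients are uniformly bounded by $\mathcal{O}(P\sqrt{K})$, and its Hessian coincides with the Fisher information term $\vect{V}_{t+1} - \vect{V}_t$ used by the algorithm. Applying the ONS regret guarantee of \citet{Hazan_Koren_Levy_2014} to these losses and then converting online regret into a self-normalized deviation, exactly as in the logistic-bandit analysis of \citet{Faury_Abeille_Calauzenes_Fercoq_2020}, yields a confidence set $\{\vect{\theta} : \|\vect{\theta} - \wh{\vect{\theta}}_t\|_{\vect{V}_t} \leq \alpha_t\}$ that contains $\vect{\theta}^*$ for all $t$ with high probability. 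The crucial difference from the batch case is that the online-to-batch conversion introduces an extra $\sqrt{K}$ factor in the confidence width, so one obtains $\alpha_t = \wt{\mathcal{O}}(d \sqrt{K}/L_0)$ rather than the $\wt{\mathcal{O}}(d/L_0)$ used in Theorem~\ref{thm_regret_ub}; this is precisely the source of the extra $\sqrt{K}$ that separates the two regret bounds. The initialization length $T_0$ from \eqref{eqn_T0_online} is chosen so that the ONS starting point already lies in this exp-concave neighborhood with high probability.

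Once the confidence set is in place, the remainder of the argument essentially mirrors the proof of Theorem~\ref{thm_regret_ub}. The pointwise upper bound $h_{ti}(p) \geq u_{ti}(p)$ together with the structure of the MNL model yields $\wt{R}_t(S, \vect{p}) \geq R_t(S, \vect{p})$ for every feasible $(S, \vect{p})$, so the optimism principle bounds the per-round regret by $\wt{R}_t(S_t, \vect{p}_t) - R_t(S_t, \vect{p}_t)$. A first-order expansion of the revenue function controls this gap by a constant multiple of $\sum_{i \in S_t} q_t(i \mid S_t, \vect{p}_t)\, g_{ti}(p_{ti})$, and summing over $t$ via the elliptical potential lemma applied to $\vect{V}_t$ contributes a factor of $\sqrt{T \log T}$. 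Multiplying the per-round elliptical bound by the $\wt{\mathcal{O}}(d\sqrt{K}/L_0)$ confidence width and by the $\mathcal{O}(\log K / L_0)$ price range then produces the claimed $\wt{\mathcal{O}}(d K \sqrt{T}/L_0)$ rate.

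The main technical obstacle is the ONS analysis and the associated self-normalized confidence inequality. Unlike in the MLE case, the iterates $\wh{\vect{\theta}}_t$ are defined by a stochastic recursion rather than as the minimizer of an empirical objective, so the Bernstein-type inequality used in Theorem~\ref{thm_regret_ub} is not directly applicable. Handling this requires combining the exp-concavity-to-generalized-quadratic reduction of \citet{Hazan_Koren_Levy_2014} with the martingale arguments of \citet{Faury_Abeille_Calauzenes_Fercoq_2020}, all carried out on a domain whose diameter is controlled by $P = \mathcal{O}(\log K / L_0)$ so that the exp-concavity constant and the gradient norms do not depend on the chosen prices. Once this confidence guarantee is established, the rest of the proof reduces to largely routine bookkeeping built on top of the decomposition already used for Theorem~\ref{thm_regret_ub}.
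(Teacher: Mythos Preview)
Your high-level skeleton is right and matches the paper: bound the optimal prices by $P=\mathcal{O}(\log K/L_0)$, establish a $\vect{V}_t$-weighted confidence set for the ONS iterate, and then reuse the optimism and elliptical-potential steps from Theorem~\ref{thm_regret_ub}. However, there is a concrete bookkeeping error and a methodological divergence worth flagging.

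\textbf{Scaling of $\alpha_t$.} You state that the batch confidence width is $\wt{\mathcal{O}}(d/L_0)$ and that the online version becomes $\wt{\mathcal{O}}(d\sqrt{K}/L_0)$. In the paper neither width carries a $1/L_0$ factor: the batch $\alpha_t$ in \eqref{eqn_alpha_t} is $\wt{\mathcal{O}}(\sqrt{d})$ and the online $\alpha_t^{\mathrm{OL}}$ in \eqref{eqn_alpha_online} is $\wt{\mathcal{O}}(\sqrt{dK})$. The $1/L_0$ enters only through the price bound $P$ multiplying the final regret decomposition. With your stated $\alpha_t$, the product $P\cdot\alpha_t\cdot\sqrt{dKT}$ gives $\wt{\mathcal{O}}(d^{3/2}K\sqrt{T}/L_0^2)$, not the claimed $\wt{\mathcal{O}}(dK\sqrt{T}/L_0)$, so as written your arithmetic does not close.

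\textbf{Technical route for the ONS confidence set.} The paper does \emph{not} go through exp-concavity plus the self-normalized conversion of \citet{Faury_Abeille_Calauzenes_Fercoq_2020}; that machinery is used only for the batch MLE in Theorem~\ref{thm_regret_ub}. For CAP-ONS the paper instead (i) projects each ONS iterate onto a ball of radius $\gamma/2$ around the pilot MLE $\vect{\theta}_0=\wh{\vect{\theta}}_{T_0}$, which deterministically keeps all iterates in $\mathcal{B}_\gamma$; (ii) uses the local Hessian sandwich $\tfrac{1}{4}\vect{H}_t(\wh{\vect{\theta}}_t)\preccurlyeq\vect{H}_t(\vect{\theta})\preccurlyeq 4\vect{H}_t(\wh{\vect{\theta}}_t)$ on $\mathcal{B}_\gamma$ to get a per-round quadratic lower bound on the log-loss; (iii) unrolls the ONS first-order optimality condition to obtain a telescoping bound on $\|\wh{\vect{\theta}}_{t+1}-\vect{\theta}^*\|_{\vect{V}_{t+1}}^2$; and (iv) controls the martingale term $\sum_\tau(\widebar{g}_\tau-g_\tau)^\top(\wh{\vect{\theta}}_\tau-\vect{\theta}^*)$ via Freedman's inequality with a peeling argument over the predictable variation. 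This yields $\alpha_t^{\mathrm{OL}}=\wt{\mathcal{O}}(\sqrt{dK})$ directly. Your exp-concavity route may be workable in principle, but you would need to verify an exp-concavity constant that does not pick up stray $1/L_0$ or $d$ factors, and the projection-onto-the-pilot-ball device is what makes the local analysis valid in the first place; without it, the iterates need not stay in the region where any quadratic or exp-concave lower bound holds.
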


\vspace{-10pt}
\subsection{Regret Lower Bound}

In this section, we provide a regret lower bound of order $\Omega(d\sqrt{T}/L_0)$ in terms of key problem primitives $N$, $d$, and $T$ for the problem of assortment selection and pricing under the contextual MNL choice model. This result demonstrates that CAP (Algorithm \ref{alg:seq_assortment}) and CAP-ONS (Algorithm \ref{alg:seq_assortment_online}) are optimal, up to logarithmic terms in $d$, $T$, and $L_0$. 

\begin{theorem}
    For any maximum assortment size $K$, any minimum price sensitivity $L_0 > 0$, any context dimension $d$ divisible by $4$, and any policy $\vect{\pi}$, there exists a worst-case problem instance with $n = \Theta (K \cdot 2^d)$ items, bounded context vectors (i.e., $\|\vect{x}_{ti}\| \leq 1$ for all $i \in [n]$), and bounded feature vectors (i.e., $\|(\vect{\theta}^*; \vect{\phi}^*)\| \leq 1$) such that the regret of policy $\vect{\pi}$ is lower bounded as
    \begin{align*}
        \mathcal{R}_T(\vect{\pi}) \geq C_3 \cdot d\sqrt{T} / L_0
    \end{align*}
    for some universal constant $C_3 > 0$.
    \label{thm_regret_lb}
\end{theorem}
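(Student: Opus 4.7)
The plan is to establish the bound via Assouad's method applied to a family of $2^{\Omega(d)}$ indistinguishable MNL instances, following the information-theoretic template used for contextual MNL bandits in \citet{Chen_Wang_Zhou_2020} and for stochastic linear bandits in \citet{abbasi_2011}, while threading the pricing subproblem so that the price sensitivity $L_0$ surfaces as the $1/L_0$ multiplicative factor in the regret.

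\textbf{Construction.} Let $m = d - 1$ and draw a hidden sign vector $\vect{\sigma} \in \{-1, +1\}^m$ uniformly at random. Set $\vect{\psi}^*_{\vect{\sigma}} = (0; \eta \vect{\sigma}/\sqrt{m}) \in \mathbb{R}^d$ for a scalar $\eta$ to be tuned, and pin $\vect{\phi}^* = c \vect{e}_1$ with $c = L_0 \sqrt{m + 1}$ so that $\langle \vect{\phi}^*, \vect{x}_{ti}\rangle = L_0$ identically. The catalog consists of $n = K \cdot 2^m = \Theta(K \cdot 2^d)$ items partitioned into $2^m$ classes indexed by $\vect{\tau} \in \{-1, +1\}^m$ with $K$ identical copies per class; each item in class $\vect{\tau}$ carries the fixed context $\vect{x}_{\vect{\tau}} = (1; \vect{\tau})/\sqrt{m + 1}$. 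A direct check yields $\|\vect{x}_{\vect{\tau}}\| = 1$ and $\|(\vect{\psi}^*; \vect{\phi}^*)\|^2 = \eta^2 + L_0^2 (m + 1) \leq 1$ throughout the nontrivial regime $\eta \lesssim d/\sqrt{T}$ and $L_0 \lesssim 1/\sqrt{d}$; the complementary large-$L_0$ regime is easier and handled by a trimmed variant of the construction.

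\textbf{Information-theoretic core.} For each coordinate $j \in [m]$, compare $\vect{\sigma}$ with its one-bit flip $\vect{\sigma}^{(j)}$. Smoothness of the MNL log-partition function gives per-round KL at most $C \cdot \|\Delta \vect{\psi}\|_{\vect{x}_{\vect{\tau}_t} \vect{x}_{\vect{\tau}_t}^\top}^2 = O(\eta^2/d^2)$; the KL chain rule for adaptive policies then bounds the total trajectory KL between the two induced laws by $O(\eta^2 T/d^2)$. Pinsker together with Assouad's pairwise lemma yield that any policy's terminal estimate $\hat{\vect{\sigma}}$ satisfies $\mathbb{E}[d_H(\hat{\vect{\sigma}}, \vect{\sigma})] = \Theta(d)$ whenever $\eta = O(d/\sqrt{T})$, where $d_H$ denotes Hamming distance.

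\textbf{Regret translation and main obstacle.} Under hypothesis $\vect{\sigma}$, Proposition~\ref{prop_opt_assortments_and_prices} specialized to the single effective item type yields $p^*_t = \Theta(1/L_0)$ and $R_t^\star = \Theta(1/L_0)$. Offering a class at Hamming distance $k$ from $\vect{\sigma}$ shifts the base utility downward by $\Delta u = \Theta(\eta k/d)$, and the fixed-point relation $p + 1/h'(p) = B$ from Proposition~\ref{prop_opt_assortments_and_prices} forces the best-attainable revenue for that class to drop by $\Omega(\Delta u/L_0) = \Omega(\eta k / (d L_0))$. Combined with the Assouad lower bound on expected Hamming distance, the Bayes regret is at least $\Omega(\eta T / L_0)$; setting $\eta = \Theta(d/\sqrt{T})$ saturates the information constraint and yields Bayes regret $\Omega(d \sqrt{T}/L_0)$, which transfers to a specific worst-case instance by the standard max-over-prior argument. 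The technically delicate step is precisely this revenue-regret bound: because the MNL revenue curve is essentially flat around its maximum in the price coordinate, a naive comparison at the original $p^*$ is too loose, and one must use the full two-variable optimality system of Proposition~\ref{prop_opt_assortments_and_prices} to argue that the entire revenue envelope shifts down by $\Omega(1/L_0)$ per unit of lost base utility.
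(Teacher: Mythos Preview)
Your high-level strategy matches the paper's: build a $2^{\Theta(d)}$ hypercube of MNL instances, bound pairwise KL by $O(\eta^2 T/d^2)$ via the chain rule, and extract the $1/L_0$ factor from the fact that the fixed point in Proposition~\ref{prop_opt_assortments_and_prices} moves by $\Theta(1/L_0)$ per unit of base utility. The paper parameterizes the hypercube by subsets $W \in \{U \subseteq [d]: |U|=d/4\}$ with nonnegative $\{0,2/\sqrt{d}\}$-valued features and then runs the counting argument of \citet{Chen_Wang_Zhou_2020}; you use sign vectors and textbook Assouad. At the architectural level these are interchangeable.

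There are, however, two genuine gaps. First, your $\vect{\phi}^* = L_0\sqrt{d}\,\vect{e}_1$ has $\|\vect{\phi}^*\| = L_0\sqrt{d}$, so the boundedness hypothesis $\|(\vect{\psi}^*;\vect{\phi}^*)\|\le 1$ forces $L_0 \lesssim 1/\sqrt{d}$ rather than the full range claimed; the ``trimmed variant'' you allude to for larger $L_0$ is not obvious, because spreading $\vect{\phi}^*$ across coordinates makes $\langle \vect{\phi}^*, \vect{x}_{\vect{\tau}}\rangle$ depend on the signs in $\vect{\tau}$ and can violate Assumption~\ref{assumption_positive_sens}. The paper's nonnegative-feature construction gives $\|\vect{\phi}^*\| = L_0$ directly and works for all $L_0 < 1$. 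Second, the regret translation is mis-stated: you invoke a \emph{terminal} estimate $\hat{\vect{\sigma}}$ with $\bbE[d_H(\hat{\vect{\sigma}},\vect{\sigma})] = \Theta(d)$, but regret is cumulative, and a bad terminal estimate does not by itself lower-bound $\sum_{t} d_H(\vect{\tau}_t,\vect{\sigma})$. The correct argument (which is what the paper does) bounds this sum coordinate-wise via Pinsker applied to the full trajectory law, after first reducing an arbitrary $(S_t,\vect{p}_t)$ to its best single-class surrogate $(\wt{S}_t, \vect{p}^*(\wt{S}_t))$---a reduction the paper proves explicitly and you omit.
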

\vspace{-10pt}
\begin{proof}
    (Sketch) We reduce the task of lower bounding the worst-case regret to lower bounding the Bayes risk over an adversarial parameter set. Then, we use a counting argument similar to the one used in \citet{Chen_Wang_Zhou_2020} to provide an explicit lower bound on the Bayes risk. See Appendix \ref{lb_proofs} for details.
\end{proof}
\section{NUMERICAL EXPERIMENTS}\label{sect:numerical_exp}
\vspace{-5pt}

\begin{figure*}[t]
\centering
\includegraphics[width= 0.99 \linewidth]{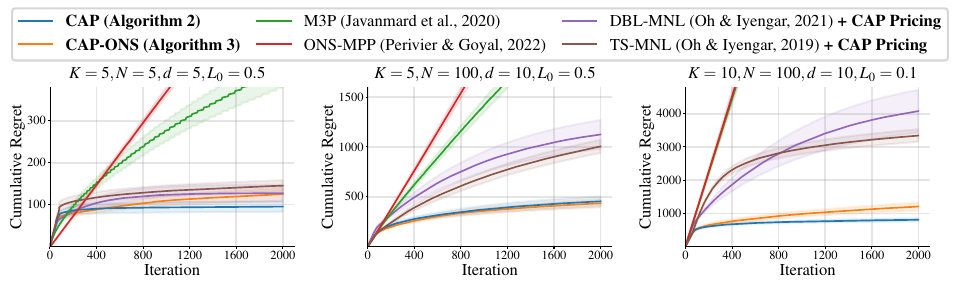}
\caption{Cumulative regret for CAP (Algorithm~\ref{alg:seq_assortment}), CAP-ONS (Algorithm~\ref{alg:seq_assortment_online}), M3P \citep{Javanmard_Nazerzadeh_Shao_2020}, ONS-MPP \citep{Perivier_Goyal_2022}, a version of DBL-MNL \citep{Oh_Iyengar_2021} extended with our dynamic pricing, and a version of TS-MNL \citep{oh2019thompson} extended with our dynamic pricing. The center lines show the mean across the runs while the shaded regions indicate two standard deviations. Results demonstrate the efficacy of our algorithms in achieving diminishing regret per round as our theoretical results predict. Since M3P and ONS-MPP consider only dynamic pricing, they are not able to achieve diminishing regret. DBL-MNL and TS-MNL are designed solely for assortment selection, but their extensions using our pricing approach enable simultaneous assortment selection and pricing. However, even with dynamic pricing, their regret rates quickly deteriorate as $K$ increases or $L_0$ decreases.}
\label{fig:experimental_results}
\if\ARXIV0
    \vspace{-5pt}
\fi
\end{figure*}

We demonstrate the efficacy of our proposed algorithms: CAP presented in Algorithm~\ref{alg:seq_assortment} and its online version CAP-ONS in Algorithm~\ref{alg:seq_assortment_online}. We numerically evaluate our algorithms over independently generated problem instances and provide our results in Figure \ref{fig:experimental_results}. In each instance, we generate problem parameters $(\vect{\psi}^*; \vect{\phi}^*)$ and context vectors $\vect{x}_{ti}$ by sampling their entries from uniform distributions such that we satisfy Assumptions~\ref{assumption_positive_sens} and \ref{assumption_stochastic_contexts}. See Appendix \ref{appendix_exp_details} for further details. The code for our experiments is available at \url{https://github.com/basics-lab/assortment_selection_pricing}.

We compare the performance of our proposed algorithms against state-of-the-art algorithms designed for the MNL choice model. Since the literature primarily focuses on either assortment selection or pricing, our baselines concentrate solely on either assortment selection or pricing. Figure \ref{fig:experimental_results} illustrates that our algorithms, which simultaneously address both assortment selection and pricing, outperform baseline methods.

Our baselines include two MNL pricing algorithms: M3P \citep{Javanmard_Nazerzadeh_Shao_2020} and ONS-MPP \citep{Perivier_Goyal_2022}. These algorithms are designed to optimize prices under the assumption that all $N$ items can be offered without any need for assortment selection. To account for the requirements of our experimental setting, we consider that only top $K$ items (based on their estimated utility value) are offered with chosen prices. These pricing-only algorithms perform comparably when (since there is no assortment decision to be made) but their performance deteriorates as $N \gg K$.

We also consider two MNL assortment selection algorithms as baselines: DBL-MNL \citep{Oh_Iyengar_2021} and TS-MNL \citep{oh2019thompson}. Since both of these algorithms are designed specifically for assortment selection under fixed prices, they cannot achieve diminishing regret in their original form. Therefore, we use our pricing approach to implement heuristic extensions of these algorithms applicable for the joint assortment selection and pricing setting. These extensions utilize the respective frameworks to derive linear estimates for the utility functions and determine the optimal assortments and prices using our Algorithm~\ref{alg:optimization_algo}. In our empirical studies, these heuristic extensions are able to achieve diminishing regret, but the regret gaps between these algorithms and CAP increase as $K$ increases or $L$ decreases.
\section{CONCLUSION}
We study the joint problem of contextual assortment selection and pricing in which a seller aims to maximize cumulative revenue over a horizon. The user’s choice behavior follows a multinomial logit model with unknown parameters, and the seller learns from sequential user feedback. We propose an algorithm that achieves \smash{$\wt{O}(d \sqrt{K T}/L_0)$} regret in $T$ rounds where $d$ is the dimension of the context vectors, $K$ is the assortment size, and $L_0$ is the minimum price sensitivity. We show that this regret rate is optimal up to logarithmic terms in $d$, $T$, $N$, and $L_0$.

\subsection*{Acknowledgements}
This work was supported in part by NSF grants CIF-2007669 and CIF-1937357.

\newpage

\bibliography{ref}
\bibliographystyle{apalike}

\newpage
\appendix
\onecolumn
\section{Properties of Maximum Likelihood Estimation}\label{sect:mle}

\begin{proposition}
The maximum likelihood estimator is any parameter $\widehat{\vect{\theta}}_t$ that minimizes the negative log-likelihood function over the parameter space, that is
\begin{align}
\widehat{\vect{\theta}}_t \in \argmin_{\vect{\theta}} \ell_t (\vect{\theta}).
\label{eqn_mle}
\end{align}
The negative log-likelihood function $\ell_t (\vect{\theta})$ is convex over $\vect{\theta} \in \bbR^{2d}$. Furthermore, if the Fisher information matrix $\mathcal{I}_t(\vect{\theta}) = \nabla^2_{\vect{\theta}} \ell_t (\vect{\theta})$ is positive definite, then $\ell_t (\vect{\theta})$ is strongly convex and thus admits a unique minimizer.
\label{prop_mle}
\end{proposition}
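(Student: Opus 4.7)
The plan is to treat the three claims in sequence. The first is a matter of definition: by construction of the MNL likelihood, the MLE is any $\widehat{\vect{\theta}}_t$ that minimizes the negative log-likelihood $\ell_t(\vect{\theta})$, so equation~\eqref{eqn_mle} is immediate.

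For the convexity claim, I would expand each summand of~\eqref{log_likelihood} as
\[
-\log q_{s i}(\vect{\theta}) \;=\; -\langle \vect{\theta}, \widetilde{\vect{x}}_{s i}\rangle \;+\; \log\!\Bigl(1 + \sum_{j \in S_s} e^{\langle \vect{\theta}, \widetilde{\vect{x}}_{s j}\rangle}\Bigr),
\]
where the first term is affine in $\vect{\theta}$ and the second is a log-sum-exp (with an implicit zero exponent corresponding to the no-purchase option), which is a standard convex function on $\bbR^{2d}$. Since $\ell_t$ is a finite sum of such convex summands, it is convex.

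For the strong-convexity claim, I would compute the Hessian of each summand directly. Writing $Z_s(\vect{\theta}) := 1 + \sum_{j \in S_s} e^{\langle \vect{\theta}, \widetilde{\vect{x}}_{sj}\rangle}$ and noting $q_{sj}(\vect{\theta}) = e^{\langle \vect{\theta}, \widetilde{\vect{x}}_{sj}\rangle}/Z_s(\vect{\theta})$, the standard softmax computation yields
\[
\nabla^2\!\bigl[-\log q_{s i}(\vect{\theta})\bigr] \;=\; \sum_{j \in S_s} q_{sj}(\vect{\theta}) \widetilde{\vect{x}}_{sj}\widetilde{\vect{x}}_{sj}^{\top} \;-\; \Bigl(\sum_{j \in S_s} q_{sj}(\vect{\theta}) \widetilde{\vect{x}}_{sj}\Bigr)\Bigl(\sum_{j \in S_s} q_{sj}(\vect{\theta}) \widetilde{\vect{x}}_{sj}\Bigr)^{\!\top},
\]
which I would identify as the covariance matrix of $\widetilde{\vect{x}}_{sJ}$ (taking $\widetilde{\vect{x}}_{s0} := \vect{0}$) under the MNL law $q_s(\cdot \mid S_s, \vect{p}_s)$ at parameter $\vect{\theta}$, and hence positive semi-definite. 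Summing over $s$ gives $\mathcal{I}_t(\vect{\theta})$ and reconfirms convexity.

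Positive definiteness of $\mathcal{I}_t(\vect{\theta})$ strengthens the resulting convexity inequality to a strict one, which immediately yields uniqueness of any minimizer. For existence, I would invoke coercivity: a positive lower bound on the smallest eigenvalue of $\mathcal{I}_t$ on any sublevel set of $\ell_t$ produces quadratic growth along every direction, so $\ell_t(\vect{\theta}) \to \infty$ as $\|\vect{\theta}\| \to \infty$ and the continuous convex function $\ell_t$ attains its infimum at the unique point $\widehat{\vect{\theta}}_t$. This coercivity step is the only mildly non-routine piece of the argument; everything else is an elementary convex-analytic fact for generalized linear models.
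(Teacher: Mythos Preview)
Your approach is essentially the same as the paper's: both compute the Hessian of $\ell_t$ and identify each per-round block with the covariance matrix of the feature vector under the MNL law, yielding positive semidefiniteness and hence convexity. Your log-sum-exp decomposition is a clean shortcut for the convexity claim that the paper does not take (the paper goes straight to the Hessian), but the Hessian computation and its interpretation are identical.

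One caveat worth flagging: your coercivity sketch for existence is circular as written. You appeal to a uniform positive lower bound on $\lambda_{\min}(\mathcal{I}_t)$ over sublevel sets, but that bound relies on the sublevel sets being compact, which is precisely coercivity. A pointwise positive-definite Hessian does not by itself imply coercivity (think of $\sqrt{1+x^2}$), and the MNL negative log-likelihood grows only linearly along generic rays. The paper's own proof is no more careful here: it asserts ``strongly convex'' from a positive-definite Hessian, which strictly speaking yields only strict convexity. Strict convexity is enough for uniqueness of any minimizer, and that is really all either argument delivers rigorously; neither you nor the paper actually establishes existence from the stated hypothesis alone.
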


For each item $i \in S_t \cup \{0\}$, we define the choice response variables $y_{ti} = \1 \{ i_t = i \} \in \{0,1\}$. Then, the gradient of these probabilities with respect to $\vect{\theta}$ can be written as
\begin{align*}
    \nabla_{\vect{\theta}} q_{t i}(\vect{\theta}) = q_{t i}(\vect{\theta}) \left( \wt{\vect{x}}_{ti} - \sum_{j \in S_t} q_{tj}(\vect{\theta}) \wt{\vect{x}}_{tj} \right).
\end{align*}

On the other hand, we can write the negative log-likelihood function at time $t$ as
\begin{align*}
    \ell_t (\vect{\theta}) := -  \sum_{\tau = 1}^{t-1} \sum_{i \in S_\tau \cup \{0\}} y_{ti} \log q_{\tau i}( \vect{\theta} ).
\end{align*}

Calculating the gradient of this negative log-likelihood with respect to $\vect{\theta}$ we obtain
\begin{align*}
    \nabla_{\vect{\theta}} \ell_t (\vect{\theta}) &= \sum_{\tau = 1}^{t-1} \sum_{i \in S_\tau} (q_{\tau i}( \vect{\theta} ) - y_{\tau i}) \wt{\vect{x}}_{\tau i}
\end{align*}

On the other hand, the Hessian of the negative log-likelihood is given by
\begin{align*}
    \nabla^2_{\vect{\theta}} \ell_t (\vect{\theta}) &= \sum_{\tau = 1}^{t-1} \sum_{i \in S_\tau} q_{\tau i}(\vect{\theta}) \wt{\vect{x}}_{\tau i}  \left( \wt{\vect{x}}_{\tau i} - \sum_{j \in S_t} q_{\tau j}(\vect{\theta}) \wt{\vect{x}}_{\tau j} \right)^\top \\
    &= \sum_{\tau = 1}^{t-1} \left(\sum_{i \in S_\tau} q_{t i}(\vect{\theta}) \wt{\vect{x}}_{\tau i} \wt{\vect{x}}_{\tau i}^\top - \sum_{i \in S_\tau} \sum_{j \in S_t} q_{t i}(\vect{\theta}) q_{\tau j}(\vect{\theta}) \wt{\vect{x}}_{\tau i} \wt{\vect{x}}_{\tau j}^\top \right).
\end{align*}

Since this log-likelihood satisfies the necessary regularity conditions, the Hessian of the negative log-likelihood is also equal to the Fisher information matrix $\mathcal{I}_t (\vect{\theta}) = \nabla^2_{\vect{\theta}} \ell_t (\vect{\theta})$.

Now, let $\vect{q}_t (\vect{\theta})$ denote the vector of probabilities $q_{t i}(\vect{\theta})$ and let $\wt{\vect{X}}_t$ be the matrix with columns $\wt{\vect{x}}_{ti}$ for $i \in S_t$, we can write
\begin{align*}
    \nabla^2_{\vect{\theta}} \ell_t (\vect{\theta}) &= \sum_{\tau = 1}^{t-1} \wt{\vect{X}}_{\tau} \Sigma_\tau(\vect{\theta}) \wt{\vect{X}}_{\tau}^\top.
\end{align*}
where $\Sigma_t(\vect{\theta}) =\mathrm{diag}(\vect{q}_t (\vect{\theta})) - \vect{q}_t (\vect{\theta}) \vect{q}_t (\vect{\theta})^\top$. Since we have $q_{t i}(\vect{\theta}) q_{t 0}(\vect{\theta}) > 0$ for all $\vect{\theta} \in \bbR^{2d}$, we conclude that $\Sigma_t (\vect{\theta}) \succ \vect{0}$ for all $\vect{\theta} \in \bbR^{2d}$. 

Therefore, $\nabla^2_{\vect{\theta}} \ell_t (\vect{\theta}) \succcurlyeq \vect{0}$ for all $\vect{\theta} \in \bbR^{2d}$. Hence, the negative log-likelihood is convex with respect to $\vect{\theta}$. As a result, any $\vect{\theta}$ that satisfies the first-order optimality condition $\nabla_{\vect{\theta}} \ell_t (\vect{\theta}) = 0$ is a minimizer. 

Furthermore, if we are given that the Fisher Information Matrix $\nabla^2_{\vect{\theta}} \ell_t (\vect{\theta})$ is positive definite, i.e. $\nabla^2_{\vect{\theta}} \ell_t (\vect{\theta}) \succ \vect{0}$, the negative log-likelihood function becomes strongly convex with respect to $\vect{\theta}$. Consequently, we have a unique MLE solution.

\section{Solving the Assortment and Price Optimization Problem}\label{sect:appendix_solving_assortment_price_optimization}

As stated in Proposition~\ref{prop_opt_assortments_and_prices}, we make the following regularity assumption for the utility functions.

\begin{assumption}
    For each item $i \in [N]$, the utility function $u_{i}(p_i)$ is differentiable, strictly decreasing in price $p_i$, and satisfies $\lim_{p \to \infty} p e^{u_i(p)} = 0$.
    \label{assumption_utility_reg}
\end{assumption}

We first consider the price optimization for a given assortment $S$. That is,
\begin{align*}
    \max_{\vect{p} \in \bbR^N} \sum_{i \in S} p_{i} \cdot q_{i}(\vect{p})
\end{align*}
where $q_{i}(\vect{p})$ denotes the probability of choosing item $i$ under prices $\vect{p} \in \bbR^N$.

\begin{proposition}
    Fix some assortment $S \subseteq [N]$. Under Assumption \ref{assumption_utility_reg}, the quantity $p_i^* + 1 / u_i'(p_i^*)$ is constant for all $i \in S$ at the optimal price vector $\vect{p}^*$. Moreover, $p_i^* + 1 / u_i'(p_i^*)$ is equal to the total revenue obtained by pricing at $\vect{p}^*$.
\label{prop_price_theta}
\end{proposition}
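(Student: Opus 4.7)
The plan is to derive the identity from the first-order optimality conditions for the unconstrained maximization of $R(\vect{p}) := \sum_{i \in S} p_i q_i(\vect{p})$. A preliminary step is to argue that the supremum is attained at a finite $\vect{p}^*$ in the interior of $\bbR^N$. This combines the condition $\lim_{p \to \infty} p e^{u_i(p)} = 0$ with the strict monotonicity of $u_i$, which together imply that the contribution $p_i q_i(\vect{p})$ is driven toward $0$ or $-\infty$ as $|p_i| \to \infty$. Continuity of $R$ on a sufficiently large compact box then yields an interior maximizer $\vect{p}^*$, at which $\nabla R(\vect{p}^*) = 0$.

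The central step is to compute $\partial R/\partial p_j$ and simplify using the multinomial logit structure. A direct calculation gives
\begin{align*}
\frac{\partial q_j}{\partial p_j}(\vect{p}) &= u_j'(p_j)\, q_j(\vect{p})\bigl(1 - q_j(\vect{p})\bigr), \\
\frac{\partial q_i}{\partial p_j}(\vect{p}) &= -u_j'(p_j)\, q_i(\vect{p})\, q_j(\vect{p}) \qquad (i \neq j).
\end{align*}
Substituting these into $\partial R/\partial p_j = q_j(\vect{p}) + \sum_{i \in S} p_i \, \partial q_i/\partial p_j$, pulling out the common factor $u_j'(p_j)\, q_j(\vect{p})$, and recognizing that $R(\vect{p}) = \sum_{i \in S} p_i q_i(\vect{p})$, the expression collapses to
\[
\frac{\partial R}{\partial p_j}(\vect{p}) \;=\; q_j(\vect{p}) \Bigl[\, 1 + u_j'(p_j)\bigl(p_j - R(\vect{p})\bigr)\Bigr].
\]

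Setting this equal to zero at $\vect{p}^*$ and dividing by $q_j(\vect{p}^*) > 0$ yields $R(\vect{p}^*) = p_j^* + 1/u_j'(p_j^*)$ for every $j \in S$ simultaneously, which is precisely both claims of the proposition at once: the quantity $p_j^* + 1/u_j'(p_j^*)$ is independent of $j$, and it equals the optimal revenue $R(\vect{p}^*)$. The main technical obstacle is the existence argument in the first step, since Assumption \ref{assumption_utility_reg} only controls the $p \to +\infty$ side directly; handling $p_j \to -\infty$ requires arguing via strict monotonicity that $u_j(p_j)$ either stays bounded from above (so $p_j q_j(\vect{p}) \to -\infty$) or grows without bound (so $q_j(\vect{p}) \to 1$ and again $p_j q_j(\vect{p}) \to -\infty$), ruling out minimizing sequences that escape to infinity and securing an interior optimum where the first-order analysis applies.
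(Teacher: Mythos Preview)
Your proposal is correct and follows essentially the same approach as the paper: compute the partial derivatives of the MNL choice probabilities, substitute into the first-order condition, factor out $q_j(\vect{p})\,u_j'(p_j)$, and read off the identity $p_j^* + 1/u_j'(p_j^*) = R(\vect{p}^*)$. The only difference is that you include an existence argument for a finite interior maximizer, which the paper omits entirely (it simply writes down the first-order condition and proceeds).
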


\begin{proof}
    The first-order condition for optimality is
    \begin{align*}
        \nabla_{\vect{p}} \left\{\sum_{i \in S} p_{i} \cdot q_{i}(\vect{p}) \right\} = \vect{0}.
    \end{align*}

For any $i \in S$, it is straightforward to verify that
\begin{align*}
    \frac{\partial q_i(\vect{p})}{\partial p_{i}} &= u_i'(p_i) \cdot q_i(\vect{p}) ( 1- q_i(\vect{p})) < 0,\\
    \frac{\partial q_j(\vect{p})}{\partial p_{i}} &= - u_i'(p_i) \cdot q_j(\vect{p}) q_i(\vect{p}) > 0, \forall j \in S, j \neq i.
\end{align*}
 
Therefore, for each $i \in S$, we need
\begin{align*}
    q_{i}(\vect{p}) + p_i u_i'(p_i) q_i(\vect{p}) ( 1- q_i(\vect{p})) - \sum_{ j \in S, j \neq i} p_j u_i'(p_i) q_i(\vect{p}) q_j(\vect{p}) = 0.
\end{align*}

Rearranging the above equation results in
\begin{align*}
    q_{i}(\vect{p}) u_i'(p_i) \left [ p_i + \frac{1}{u_i'(p_i)}  - \sum_{ j \in S} p_j q_j(\vect{p}) \right] = 0.
\end{align*}

Since $q_{i}(\vect{p}) > 0$ and $u_i'(p_i) < 0$ for all $i \in S$ and for all $p_i \in \bbR$, the above equation is equivalent to
\begin{align}
    p_i + \frac{1}{u_i'(p_i)} = \sum_{ i \in S} p_i q_i(\vect{p}). 
    \label{eq_kkt_price}
\end{align}

The right-hand side of this equation is independent of the item index $i$, so $p_i^* + 1 / u_i'(p_i^*)$ is constant for all $i \in S$ at an optimal price vector $\vect{p}^*$. Moreover, this equality shows that the $p_i^* + 1 / u_i'(p_i^*)$ quantity is equal to the total revenue at the optimal price vector $\vect{p}^*$.

\end{proof}

Let us denote this constant quantity as $\theta = p_i + 1 / u_i'(p_i)$. 

\begin{remark}
Under the concavity and twice-differentiability assumptions made in \cite{Wang_2013}, it is possible to show that there exists a unique $p_i$ value that satisfies $\theta = p_i + 1 / u_i'(p_i)$ for all different values of $\theta$. In other words, it is possible to show that, $\theta = p_i + 1 / u_i'(p_i)$ is a one-to-one relation. However, it is not possible to show this property only under Assumption \ref{assumption_utility_reg}.
\end{remark}

For some values of $\theta$, there might exist multiple $p_i$ values that satisfy $\theta = p_i + 1 / u_i'(p_i)$. Let $\mathcal{P}_i(\theta)$ denote this set of prices that satisfy $\theta = p_i + 1 / u_i'(p_i)$.

 With this notation, the necessary condition for optimality in \eqref{eq_kkt_price} can be equivalently written as
\begin{align*}
   \theta &= \sum_{ i \in S} \left( \theta - \frac{1}{u_i'(p_i)} \right) q_i(\vect{p}) \quad \text{and}\\
   p_i &\in \mathcal{P}_i(\theta), \quad \forall i \in S.
\end{align*}

Noting that $1 - \sum_{ i \in S} q_i(\vect{p}) = q_0(\vect{p}) $ and $q_i(\vect{p}) / q_0(\vect{p}) = e^{u_i(p_i)}$, we have the equivalent conditions
\begin{align*}
   \theta &= - \sum_{ i \in S} \frac{e^{u_i(p_i)}}{u_i'(p_i)} \quad \text{and}\\
   p_i &\in \mathcal{P}_i(\theta), \quad \forall i \in S.
\end{align*}

Under these conditions, Proposition \ref{prop_price_theta} states that the objective function (revenue) is equal to the value of $\theta$. Therefore, we can cast the following problem to find the optimum values for $\theta$ and $\vect{p}$.
\begin{align}
\begin{split}
    \theta^* = \max_{\substack{\theta \in \bbR\\ p_i \in \bbR, \forall i \in S}} & \quad \theta\\
    \text{s.t.} \quad & \quad \theta = - \sum_{ i \in S} \frac{e^{u_i(p_i)}}{u_i'(p_i)},\\
   &\quad p_i \in \mathcal{P}_i(\theta), \quad \forall i \in S.
\end{split}
\label{problem_theta_star}
\end{align}

Now, we define the following functions
\begin{align*}
    g_i(p_i) &= - \frac{e^{u_i(p_i)}}{u_i'(p_i)}\\
    f_i(\theta) &= \max \left\{ g_i(p_i) : p_i \in \mathcal{P}_i(\theta) \right\}
\end{align*}

In Lemma \ref{lemma_f_decreasing}, we show that $f_i(\theta) > 0$ is a continuous and strictly decreasing function of $\theta$. As a result, we have the following proposition.

\begin{proposition}
Under Assumption \ref{assumption_utility_reg}, the optimum objective value $\theta^*$ of problem \eqref{problem_theta_star} satisfies 
\begin{align*}
    \sum_{i \in S} f_i(\theta^*) = \theta^*.
\end{align*}
\label{optimum_theta_star_condition}
\end{proposition}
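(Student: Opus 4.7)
The plan is to sandwich $\theta^*$ between matching upper and lower bounds, exploiting the strict monotonicity of the $f_i$'s supplied by Lemma~\ref{lemma_f_decreasing}.

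For the upper bound, I would take any feasible triple $(\theta,\vect{p})$ for problem~\eqref{problem_theta_star}. Since $p_i \in \mathcal{P}_i(\theta)$ for every $i \in S$, the very definition $f_i(\theta) = \max\{g_i(p) : p \in \mathcal{P}_i(\theta)\}$ forces $g_i(p_i) \le f_i(\theta)$. Summing over $i \in S$ and invoking the equality constraint $\theta = \sum_{i \in S} g_i(p_i)$ yields $\theta \le \sum_{i \in S} f_i(\theta)$ for every feasible $\theta$; specializing to the optimum gives $\theta^* \le \sum_{i \in S} f_i(\theta^*)$.

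For the matching reverse inequality I would argue by contradiction. Suppose instead $\sum_{i \in S} f_i(\theta^*) > \theta^*$, and introduce $h(\theta) := \sum_{i \in S} f_i(\theta) - \theta$. By Lemma~\ref{lemma_f_decreasing} each $f_i$ is continuous and strictly decreasing, so $h$ is continuous and strictly decreasing on $\bbR$. Using the null-price condition $\lim_{p \to \infty} p\, e^{u_i(p)} = 0$ from Assumption~\ref{assumption_utility_reg}, one can show $f_i(\theta) \to 0$ as $\theta \to \infty$, whence $h(\theta) \to -\infty$. Since we are assuming $h(\theta^*) > 0$, the intermediate value theorem then produces some $\bar\theta > \theta^*$ with $h(\bar\theta)=0$, i.e., $\sum_{i \in S} f_i(\bar\theta) = \bar\theta$. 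Choosing $\bar p_i \in \arg\max\{g_i(p) : p \in \mathcal{P}_i(\bar\theta)\}$ for each $i \in S$ then exhibits a feasible point of~\eqref{problem_theta_star} with objective value $\bar\theta > \theta^*$, contradicting optimality. Consequently $\sum_{i \in S} f_i(\theta^*) = \theta^*$.

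The main obstacle is the existence of the zero-crossing $\bar\theta$: this hinges on the asymptotic decay $f_i(\theta) \to 0$ as $\theta \to \infty$, which in turn requires tracking how prices in $\mathcal{P}_i(\theta)$ are driven to infinity (since $u_i'$ is bounded away from zero is not assumed here, only that it is negative), and then invoking the null-price hypothesis to conclude that $-e^{u_i(p)}/u_i'(p) \to 0$ along those prices. Once that asymptotic behavior is secured, the continuity and strict monotonicity of $h$ together with the attainment of the maximum in the definition of $f_i(\bar\theta)$ close the argument. The squeezing argument itself is then a one-line consequence of the two inequalities.
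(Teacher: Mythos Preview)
Your proof is correct and follows essentially the same two-case argument as the paper: feasibility forces $\theta \le \sum_{i\in S} f_i(\theta)$, and strict inequality at $\theta^*$ would let you slide along the strictly decreasing continuous map $h(\theta)=\sum_i f_i(\theta)-\theta$ to a larger feasible $\bar\theta$. One simplification: the ``main obstacle'' you flag is not actually needed, because Lemma~\ref{lemma_f_decreasing} already gives $f_i>0$ and strictly decreasing, so $\sum_{i\in S} f_i(\theta)$ is bounded above by $\sum_{i\in S} f_i(\theta^*)$ for $\theta\ge\theta^*$ and hence $h(\theta)\to-\infty$ without any appeal to the null-price condition.
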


\begin{proof}
    Assume $\sum_{i \in S} f_i(\theta) < \theta$ for some $\theta$. Then, for any $\theta' \geq \theta$ and for any $\vect{p}$ that satisfies $p_i \in \mathcal{P}_i(\theta')$ for all $i \in S$, we have $\theta' \geq \theta > \sum_{i \in S} f_i(\theta) \geq \sum_{i \in S} f_i(\theta') \geq \sum_{i \in S} g_i(p_i)$. Therefore, all $\theta' \geq \theta$ is infeasible and hence $\theta^* < \theta$.

    Now, assume $\sum_{i \in S} f_i(\theta) > \theta$ for some $\theta$. Then, since each $f_i(\theta) > 0$ is continuous and strictly decreasing, there exists some $\theta' > \theta$ such that $\sum_{i \in S} f_i(\theta') = \theta'$. Since this $\theta'$ is a feasible solution to the problem together with prices $p_i' = \argmax_{p_i} \left\{ g_i(p_i) : p_i \in \mathcal{P}_i(\theta) \right\}$, we have $\theta^* \geq \theta' > \theta$.

    Therefore, any optimum point $\theta = \theta^*$ must satisfy $\sum_{i \in S} f_i(\theta) = \theta$.
\end{proof}

Using Proposition \ref{optimum_theta_star_condition}, we can reduce the multi-product price optimization problem for any given assortment $S$ to a single-dimensional problem given by
\begin{align*}
    \max_{\theta \in \bbR} & \left\{ \theta : \theta = \sum_{i \in S} f_i(\theta) \right\}.
\end{align*}

Furthermore, since $f_i(\theta) > 0$ is strictly decreasing in $\theta$, there exists a unique solution to the condition $\sum_{i \in S} f_i(\theta) = \theta$ for any given assortment $S$. Let us denote this unique feasible (and hence optimal) solution by $\theta^*_S$.

The next step is to incorporate the assortment selection into our optimization problem. We can achieve this by considering the problem
\begin{align}
    \max \left\{ \theta^*_S : S \in \mathcal{S}_K \right\} = \max_{\theta \in \bbR} & \left\{ \theta : S \in \mathcal{S}_K \text{ and } \theta = \sum_{i \in S} f_i(\theta) \right\}.
\label{eq_assortment_bad}
\end{align}

This assortment selection problem in the given form requires searching all possible assortments of size at most $K$ and there are $\sum_{\ell = 1}^K \binom{N}{\ell} = \sum_{\ell = 1}^K N!/((N - \ell)! \, \ell !)$ assortments to consider. However, this search space can be significantly reduced by noticing that finding the unique fixed point of the equation
\begin{align}
    \theta = \max_{S \in \mathcal{S}_K} \sum_{i \in S} f_i(\theta).
\end{align}
is equivalent to solving \eqref{eq_assortment_bad}. Since each $f_i(\theta)$ is strictly decreasing in $\theta$ and the right-hand side is strictly increasing in $\theta$, there exists a unique solution $\theta^*$ to this equation.

Denote an assortment $S^* \in \mathcal{S}_K$ such that $\theta^* = \sum_{i \in S^*} f_i(\theta^*)$. Then, $S^*$ is an optimal assortment together with prices $p_i^* = \argmax_{p_i} \left\{ g_i(p_i) : p_i \in \mathcal{P}_i(\theta^*) \right\}$ for all $i \in S^*$. 

As a result, we obtain the following proposition given in the main paper.

\optassortment*

\subsection{Assortment Selection and Pricing under Estimated Utility Functions} \label{sect:computation}

In the following section, we describe how we can \textbf{efficiently} run Algorithm \ref{alg:optimization_algo} for the estimated utility functions $h_{ti}(p)$ described in Section \ref{sect:algorithm}. 

In Lemma \ref{lemma_utility_ub}, we show that $h_{ti}(p)$ is differentiable and strictly decreasing with derivatives $h_{ti}'(p) \leq -L_0$. Therefore, $h_{ti}(p)$ satisfies the conditions of Proposition \ref{prop_opt_assortments_and_prices} and we can use Algorithm \ref{sect:algorithm} to find an optimal assortment and pricing.

The main difficulty in running Algorithm \ref{sect:algorithm} is to find the set of points $\mathcal{P}_{ti}(B) = \{p : p + 1 / h_{ti}'(p) = B\}$ for any given $B > 0$. In the following, we will show how we can obtain this set for the specific structure of $h_{ti}(p)$.

In the proof of Lemma \ref{lemma_utility_ub}, we establish that $\wt{h}_{ti}(p)$ is smooth and strictly convex. Then, letting $p_0$ be the unique value such that $\wt{h}_{ti}' (p_0) = - L_0$, we show that
\begin{align*}
    h_{ti}(p) =
    \begin{cases}
        \wt{h}_{ti} (p_0) - L_0 (p - p_0) \quad &\text{if } p \geq p_0,\\
        \wt{h}_{ti} (p) \quad &\text{if } p < p_0.
    \end{cases}
\end{align*}

To find all points that satisfy $p + 1 / h_{ti}'(p) = B$, we search for $p$ values to the left and right of $p_0$ separately.

To find all the points $p \in \mathcal{P}_{ti}(B)$ such that $p \leq p_0$, we will use the structure of $\wt{h}_{ti} (p)$. Since $\wt{h}_{ti} (p)$ is given as a sum of a linear function and a square root of a quadratic function, we can write it as
\begin{align*}
    \wt{h}_{ti} (p) = a_1 - a_2 \cdot p + \sqrt{a_3 - 2 a_4 \cdot p + a_5 \cdot p^2}
\end{align*}
for some $a_1, a_2, a_3, a_4, a_5 \in \bbR$. Since the square root part is a norm, the quadratic inside must have non-positive determinant, i.e. $a_4^2 - a_3 a_5 \leq 0$.

Therefore, for any $p \leq p_0$, we have
\begin{align*}
    h_{ti}'(p) &= \frac{a_5 \cdot p - a_4}{\sqrt{a_3 - 2 a_4 \cdot  p + a_5 \cdot p^2}} - a_2\\
    h_{ti}''(p) &= \frac{a_3 a_5 - a_4^2}{(a_3 - 2 a_4 \cdot p + a_5 \cdot p^2)^{3/2}}
\end{align*}

Our goal is to find the solutions for $p + \frac{1}{h_{ti}'(p)} - B = 0$, or equivalently we want to find the roots of $z(p) := h_{ti}'(p) - \frac{1}{B - p}$. Since $h_{ti}'(p) < 0$, this equation only has root on $p > B$. Since $z(p)$ is continuous on $p > B$, there exists at most one root between each local minima/maxima points of $z(p)$. Furthermore, since $z(p)$ is also differentiable function on $p > B$, we can find all points with $z'(p) = 0$ to identify local minima and maxima. Now, we observe that
\begin{align*}
    z'(p) = h_{ti}''(p) - \frac{1}{(B-p)^2}.
\end{align*}

Hence, at any point satisfying $z'(p) = 0$, we must have 
\begin{align*}
    \frac{a_3 a_5 - a_4^2}{(a_3 - 2 a_4 \cdot p + a_5 \cdot p^2)^{3/2}} = \frac{1}{(B-p)^2}
\end{align*}

Then, raising this equation two the second power, we obtain
\begin{align*}
    \frac{1}{(a_3 a_5 - a_4^2)^2 } (a_3 - 2 a_4 \cdot p + a_5 \cdot p^2)^3 - (B-p)^4 = 0.
\end{align*}

Since the left hand side is a $6^{\text{th}}$ order polynomial in $p$, we can easily find all the roots for this equation. Since any local minima/maxima points of $z(p)$ must be one of these roots, this gives us a necessary condition for local minima/maxima points of $z(p)$. Then, check for each of these points and construct the set $\mathcal{Z}$ that contains local minima/maxima points of $z(p)$.

Then, we can search for a root of $z(p)$ between each pair of consecutive points in $\mathcal{Z}$. Since there is at most one root between every pair of consecutive points, we can find all roots of $z(p)$ efficiently. After finding all the roots, we only add the ones that satisfy $p < p_0$ to the set $\mathcal{P}_{ti}(B)$.

Lastly, we check for any solutions $p + 1/h_{ti}'(p) - B = 0$ over $p > p_0$. Since $h_{ti}(p)$ is a linear function over $p > p_0$, the only possible solution is $p = 1/L_0 + B$. If this solution satisfies $p > p_0$, we also add it to the set $\mathcal{P}_{ti}(B)$.

\subsection{Technical Lemmas for Assortment Selection and Pricing}

\begin{lemma}
Under Assumption \ref{assumption_utility_reg}, $f_i(\theta) > 0$ is a continuous and strictly decreasing function of $\theta \in \bbR$.
\label{lemma_f_decreasing}
\end{lemma}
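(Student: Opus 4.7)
The plan is to reduce $f_i(\theta)$ to the unconstrained supremum of a jointly continuous function of $(p,\theta)$; from that representation strict monotonicity, positivity, and continuity all fall out.

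First I would eliminate $u_i'(p)$ from $g_i$ on the constraint set. On $\mathcal{P}_i(\theta)$, the equation $p+1/u_i'(p)=\theta$ combined with $u_i'(p)<0$ (needed so that $1/u_i'(p)$ is defined, and forced by $u_i$ being strictly decreasing) yields $u_i'(p)=1/(\theta-p)$ and $p>\theta$. Substituting,
\begin{equation*}
g_i(p) \;=\; -\frac{e^{u_i(p)}}{u_i'(p)} \;=\; e^{u_i(p)}(p-\theta) \;=:\; F(p,\theta) \qquad \text{for } p\in\mathcal{P}_i(\theta).
\end{equation*}
Moreover $\partial_p F(p,\theta) = e^{u_i(p)}\bigl[1+u_i'(p)(p-\theta)\bigr]$ vanishes exactly when $p+1/u_i'(p)=\theta$, so $\mathcal{P}_i(\theta)$ is precisely the critical set of $F(\cdot,\theta)$, and $g_i$ coincides with $F(\cdot,\theta)$ on it. On $(\theta,\infty)$ the function $F(\cdot,\theta)$ is continuous and strictly positive, with $F(p,\theta)\to 0$ as $p\to\theta^+$ and, by Assumption~\ref{assumption_utility_reg}, as $p\to\infty$ (since $F(p,\theta)\le(p+|\theta|)\,e^{u_i(p)}\to 0$). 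Hence the supremum over $(\theta,\infty)$ is attained at an interior critical point, giving
\begin{equation*}
f_i(\theta) \;=\; \max_{p\in\mathcal{P}_i(\theta)} g_i(p) \;=\; \sup_{p>\theta} F(p,\theta) \;>\; 0.
\end{equation*}
In particular $\mathcal{P}_i(\theta)\neq\emptyset$ and $f_i(\theta)>0$ for every $\theta\in\bbR$.

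Strict monotonicity is then immediate. For $\theta_1<\theta_2$, pick $p^*\in\mathcal{P}_i(\theta_2)$ attaining $f_i(\theta_2)$. Since $p^*>\theta_2>\theta_1$, the identity
\begin{equation*}
F(p^*,\theta_1)-F(p^*,\theta_2) \;=\; (\theta_2-\theta_1)\,e^{u_i(p^*)} \;>\; 0
\end{equation*}
gives $f_i(\theta_1)\ge F(p^*,\theta_1)>F(p^*,\theta_2)=f_i(\theta_2)$. For continuity I would run a maximum-theorem-style argument on the sup representation. Fix $\theta$ and let $\theta_n\to\theta$. Lower semicontinuity is cheap: $f_i(\theta_n)\ge F(p^*,\theta_n)\to F(p^*,\theta)=f_i(\theta)$ for any fixed maximizer $p^*\in\mathcal{P}_i(\theta)$, valid once $|\theta_n-\theta|$ is small enough that $p^*>\theta_n$. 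For upper semicontinuity, pick maximizers $p_n\in\mathcal{P}_i(\theta_n)$; these stay in a compact subset of $(\theta,\infty)$, because (i)~$p_n\not\to\infty$—otherwise $F(p_n,\theta_n)\to 0$, contradicting $f_i(\theta_n)\ge f_i(\theta+1)>0$; and (ii)~$p_n-\theta_n$ is bounded below by a positive constant for the same reason. Extracting a convergent subsequence $p_n\to p^\circ>\theta$ and using joint continuity of $F$ yields $f_i(\theta_n)=F(p_n,\theta_n)\to F(p^\circ,\theta)\le f_i(\theta)$.

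The main obstacle is this compactness step: one has to rule out maximizers escaping toward either endpoint of $(\theta,\infty)$, which is exactly where the decay condition $pe^{u_i(p)}\to 0$ in Assumption~\ref{assumption_utility_reg} is indispensable. Once that compactness is in hand, the rest is routine.
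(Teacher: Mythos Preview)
Your proof is correct and takes a genuinely cleaner route than the paper. Your key move---rewriting $g_i(p)=e^{u_i(p)}(p-\theta)=:F(p,\theta)$ on the constraint set and recognizing $\mathcal{P}_i(\theta)$ as exactly the critical set of $F(\cdot,\theta)$ on $(\theta,\infty)$---collapses the constrained maximum into an unconstrained one, $f_i(\theta)=\sup_{p>\theta}F(p,\theta)$; positivity, strict monotonicity (via $\partial_\theta F<0$), and continuity (via a Berge-type compactness argument, where the null-price condition in Assumption~\ref{assumption_utility_reg} is precisely what rules out maximizers escaping to $+\infty$) then fall out directly. The paper instead keeps the constrained formulation: it studies the set-valued map $\theta\mapsto\mathcal{P}_i(\theta)$ through $z(p)=p+1/u_i'(p)$, shows by a Taylor-expansion argument that local extrema of $z$ and of $g_i$ are interchanged, and handles the potential discontinuities of $\mathcal{P}_i(\theta)$ case-by-case using an auxiliary comparison lemma (Lemma~\ref{lemma_two_prices}). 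That lemma is itself proved with exactly your function $w(p)=e^{u_i(p)}(p-\theta)$, but only locally; you exploit the same identity globally. As a side benefit of your representation, $f_i(\theta)=\sup_{p}\bigl(e^{u_i(p)}p-e^{u_i(p)}\theta\bigr)$ is a pointwise supremum of affine-in-$\theta$ functions, so $f_i$ is in fact convex---something the paper's argument does not surface.
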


\begin{proof}

Recall the definitions
\begin{align*}
    g_i(p_i) &= - \frac{e^{u_i(p_i)}}{u_i'(p_i)}\\
    f_i(\theta) &= \max \left\{ g_i(p_i) : p_i \in \mathcal{P}_i(\theta) \right\}
\end{align*}
where $\mathcal{P}_i(\theta) = \{ p : \theta = p + 1 / u_i'(p) \}$. Since $u_i(p)$ is a differentiable function, its derivative $u_i'(p)$ is continuous everywhere. We also have $u_i'(p) < L_0$ since $u_i(p)$ is decreasing. 

Let $z(p) = p + 1 / u_i'(p)$. By continuity of $u_i'(p)$, both $z(p)$ and $g_i(p)$ are continuous functions.

First, we show that any $p$ that is a local minimum for $z(p)$ is a local maximum for $g_i(p)$. Similarly, any $p$ that is a local maximum for $z(p)$ is a local minimum for $g_i(p)$.

Suppose $p$ is a local minimum for $z(p)$. Then, there exists some $\delta > 0$ such that $z(p') \geq z(p)$ for all $|p - p'| \leq \delta$. That is, $p' - p + \frac{1}{u_i'(p')} - \frac{1}{u_i'(p)} \geq 0$.

Now, we use Taylor's expansion for $e^{u_i(p)}$ at $p$ to write
\begin{align*}
    e^{u_i(p')} = e^{u_i(p)} + u_i'(p) e^{u_i(p)} (p' - p) + o(\delta).
\end{align*}
Then, dividing through by $e^{u_i(p)} u_i'(p')$, we obtain
\begin{align*}
    \frac{e^{u_i(p')-u_i(p)}}{u_i'(p')} = \frac{1}{u_i'(p')} + \frac{u_i'(p)}{u_i'(p')} (p' - p) + e^{-u_i(p)} o(\delta).
\end{align*}
Since $u_i'(p) < 0$ for all $p$, we have $\frac{u_i'(p)}{u_i'(p')} > 0$. Then, using $p' - p + \frac{1}{u_i(p')} - \frac{1}{u_i(p)} \geq 0$ for any $|p - p'| \leq \delta$, we can write
\begin{align*}
    \frac{e^{u_i(p')-u_i(p)}}{u_i'(p')} &= \frac{1}{u_i'(p')} + \frac{u_i'(p)}{u_i'(p')} \left( \frac{1}{u_i'(p)} - \frac{1}{u_i'(p')} \right) + e^{-u_i(p)} o(\delta)\\
    &= \frac{2}{u_i'(p')} - \frac{u_i'(p)}{(u_i'(p'))^2} + e^{-u_i(p)} o(\delta).
\end{align*}
It is possible to show that $f(x) = \frac{2}{x} - \frac{a}{x^2}$ has a local minimum at $x = a$ when $a < 0$. Therefore, using continuity of $u_i'(p)$, we can show that there exists $\eta > 0$ such that
\begin{align*}
    \frac{e^{u_i(p')-u_i(p)}}{u_i'(p')} \geq \frac{1}{u_i'(p')}
\end{align*}
This inequality is equivalent to $g_i(p') \leq g_i(p)$. Therefore, $g_i(p)$ has a local maximum at $p$. We can show the symmetric result using similar arguments.

The next step is to show the continuity of $f_i(p)$. Consider any $\theta$ and any $p \in \mathcal{P}_i(\theta)$ that is not a local maximum or minimum for $z(p)$. Then, for any $\eta > 0$, there exists a real $\delta > 0$ such that for any $\theta'$, $0 < | \theta - \theta'| < \delta$ implies that there exists a $p' \in \mathcal{P}_i(\theta')$ such that $|p - p'| \leq \eta$. In other words, $\mathcal{P}_i(\theta)$ is a \emph{continuous} function (that maps a real number to a set of real numbers) unless one of the prices in $\mathcal{P}_i(\theta)$ is a local maximum or minimum for $z(p)$.

As a result, $f_i(\theta)$ is a continuous function over $\theta$ values for which no price in $\mathcal{P}_i(\theta)$ is local maxima or minima for $z(p)$. Next, we show continuity on other $\theta$ values. Consider a $\theta$ value and a price point $p \in \mathcal{P}_i(\theta)$ that is a local maximum or minimum for $z(p)$. 

If $p \in \mathcal{P}_i(\theta)$ is a local minimum for $z(p)$, there exists another $p_1 \in \mathcal{P}_i(\theta)$ such that $p_1 < p$ and $z(p) \geq \theta$ for all $p' \in (p_1, p)$. This is because $z(p)$ is continuous $\lim_{p \to - \infty} z(p) = - \infty$. Then, by Lemma \ref{lemma_two_prices}, we have $g_i(p) \leq g_i(p_1)$. Since $g_i(p) \leq g_i(p_1)$, $\theta$ is not \emph{active} in $f_i$. As a result, the continuity of $f_i(\theta)$ is preserved at $\theta$.

Similarly, if $p \in \mathcal{P}_i(\theta)$ is a local maximum for $z(p)$, there exists another $p_2 \in \mathcal{P}_i(\theta)$ such that $p < p_2$ and $z(p) \leq \theta$ for all $p' \in (p_1, p)$. This is because $z(p)$ is continuous $\lim_{p \to \infty} z(p) = \infty$. 
Then, by Lemma \ref{lemma_two_prices}, we have $g_i(p_1) \leq g_i(p_2)$. As a result, the continuity of $f_i(\theta)$ is preserved at $\theta$.

We have shown that $f_i(\theta)$ is continuous. Next, we show that it is a decreasing function. As we showed in previous parts of this proof, the continuity of $f_i(\theta)$ is not affected at $\theta$ values with some $p \in \mathcal{P}_i(\theta)$ that is a local maximum or minimum for $z(p)$. Therefore, it is sufficient to show that $f_i(\theta)$ is decreasing over $\theta$ values such that no $p \in \mathcal{P}_i(\theta)$ is a local maximum or minimum for $z(p)$.

Let $\theta$ be such a value. Since $g_i(p)$ is strictly increasing on every interval in which $z(p)$ is strictly decreasing and $g_i(p)$ is strictly decreasing on every interval in which $z(p)$ is strictly increasing, there exists a real $\delta > 0$ such that for any $p \in \mathcal{P}_i(\theta)$, there exists a $p' \in \mathcal{P}_i(\theta')$ satisfying $g_i(p) > g_i(p')$ whenever $\theta < \theta' < \theta + \delta$. Therefore, there exists a real $\delta > 0$ such that $f_i(\theta) > f_i(\theta')$ whenever $\theta < \theta' < \theta + \delta$.

Since the function $f_i(\theta)$ is continuous and it is locally strictly decreasing almost everywhere in $\bbR$, it must be strictly decreasing.
    
\end{proof}

\begin{lemma}
    Let $p_1 < p_2$ be two price points such that $p_1, p_2 \in \mathcal{P}_i(\theta)$ for some $\theta$. If $p + 1 / u_i'(p) \leq \theta$ for all $p \in (p_1, p_2)$, then $g_i(p_2) \geq g_i(p_1)$. If $p + 1 / u_i'(p) \geq \theta$ for all $p \in (p_1, p_2)$, then $g_i(p_2) \leq g_i(p_1)$.
    \label{lemma_two_prices}
\end{lemma}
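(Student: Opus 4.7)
The plan is to reduce the comparison between $g_i(p_1)$ and $g_i(p_2)$ to the monotonicity of a single auxiliary function $H$ on the interval $[p_1,p_2]$, where $H(p)$ coincides with $g_i(p)$ whenever $p\in\mathcal{P}_i(\theta)$. Concretely, since $u_i'(p_k)<0$ and $p_k+1/u_i'(p_k)=\theta$, I first observe that $1/u_i'(p_k)=\theta-p_k<0$, so both $p_1,p_2>\theta$ and in fact every $p\in(p_1,p_2)$ satisfies $p>\theta$. I then substitute $1/u_i'(p_k)=\theta-p_k$ into $g_i(p_k)=-e^{u_i(p_k)}/u_i'(p_k)$ to obtain $g_i(p_k)=(p_k-\theta)\,e^{u_i(p_k)}$ for $k=1,2$. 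This motivates defining
\begin{equation*}
H(p) := (p-\theta)\,e^{u_i(p)}, \qquad p>\theta,
\end{equation*}
so that $H(p_1)=g_i(p_1)$ and $H(p_2)=g_i(p_2)$; note that $H$ need not equal $g_i$ at interior points, but that is irrelevant for comparing the endpoints.

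Next I differentiate. Since $u_i$ is differentiable by Assumption~\ref{assumption_utility_reg}, $H$ is differentiable on $(\theta,\infty)$ with
\begin{equation*}
H'(p) = e^{u_i(p)}\bigl[\,1 + (p-\theta)\,u_i'(p)\,\bigr].
\end{equation*}
The key step is to translate the hypothesis on $z(p):=p+1/u_i'(p)$ into the sign of the bracket. For any $p\in(p_1,p_2)$, $u_i'(p)<0$ and $\theta-p<0$, so the inequality $z(p)\le\theta$ is equivalent to $1/u_i'(p)\le\theta-p$; multiplying both sides by the negative quantity $u_i'(p)$ and flipping the inequality yields $u_i'(p)(\theta-p)\le 1$, i.e.\ $1+(p-\theta)u_i'(p)\ge 0$. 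Therefore $H'(p)\ge 0$ throughout $(p_1,p_2)$, so $H$ is nondecreasing on $[p_1,p_2]$ by continuity, and $g_i(p_2)=H(p_2)\ge H(p_1)=g_i(p_1)$. The reverse case $z(p)\ge\theta$ follows by the symmetric sign chain: the same manipulation gives $1+(p-\theta)u_i'(p)\le 0$, hence $H'\le 0$ and $g_i(p_2)\le g_i(p_1)$.

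The main obstacle is simply to handle the sign bookkeeping correctly when converting $z(p)\lessgtr\theta$ into the sign of $1+(p-\theta)u_i'(p)$, since multiple multiplications by negative quantities occur; once one pins down that every $p$ in the interval lies strictly to the right of $\theta$ (which is forced by $u_i'<0$), the rest reduces to a one-line monotonicity argument for $H$. No additional regularity beyond the differentiability and strict monotonicity of $u_i$ assumed in Assumption~\ref{assumption_utility_reg} is required, so the lemma applies in the same generality as the preceding results.
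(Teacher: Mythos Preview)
Your proof is correct and follows essentially the same approach as the paper: you introduce the auxiliary function $H(p)=(p-\theta)e^{u_i(p)}$ (the paper calls it $w$), observe it agrees with $g_i$ at $p_1,p_2$, and convert the hypothesis $z(p)\lessgtr\theta$ into the sign of $H'(p)=e^{u_i(p)}\bigl[1+(p-\theta)u_i'(p)\bigr]$ via the same multiplication-by-$u_i'(p)$ sign flip. Your extra remark that $p_1,p_2>\theta$ is true but not actually needed for the sign chain, since the equivalence $z(p)\le\theta\iff 1+(p-\theta)u_i'(p)\ge0$ only uses $u_i'(p)<0$.
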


\begin{proof}
    Using $p_1 + 1 / u_i'(p_1) = p_2 + 1 / u_i'(p_2) = \theta$, we have 
    \begin{align*}
        g_i(p_1) &= - e^{u_i(p_1)}/u_i'(p_1) = e^{u_i(p_1)} (p_1 - \theta)\\
        g_i(p_2) &= - e^{u_i(p_2)}/u_i'(p_2) = e^{u_i(p_2)} (p_2 - \theta).
    \end{align*}

    We let $w(p) = e^{u_i(p)} (p - \theta)$ and notice that $w(p_1) = g_i(p_1)$ and $w(p_2) = g_i(p_2)$. Now, we compute the derivative of $w(p)$ as
    \begin{align*}
        w'(p) = e^{u_i(p)} (1 + (p - \theta) u_i'(p) ).
    \end{align*}

    Since $u_i'(p) < 0$ for all $p$, we have $w'(p) \geq 0$ if and only if $p + 1 / u_i'(p) \leq \theta$. Hence, if $p + 1 / u_i'(p) \leq \theta$ for all $p \in (p_1, p_2)$, then $w'(p) \geq 0$ for all $p \in (p_1, p_2)$. Since $w(p)$ is continuous and differentiable, we conclude $g_i(p_2) \geq g_i(p_1)$. The symmetric result also follows similarly.
    
\end{proof}

\section{Proof of Theorem~\ref{thm_regret_ub} (Regret Upper Bound for Algorithm~\ref{alg:seq_assortment})}
\label{proof_regret_ub}

In the following section, we present our proof for Theorem~\ref{thm_regret_ub}. For better readability, we first present the overall proof using a series of technical lemmas. We provide the proofs for these technical lemmas later in Appendix~\ref{sect:proof_technical_lemmas}.

We start by recalling Proposition~\ref{prop_opt_assortments_and_prices} which defines $B_t$ as the unique solution of
\begin{align}
    B = \max_{S \in \mathcal{S}_K} \sum_{i \in S} v_{ti}(B)
\end{align}
where $v_{ti}(B) = \max_{p \in \bbR} \left\{ - e^{h_{ti}(p)} / h_{ti}'(p) : p + 1 / h_{ti}'(p) = B \right\}$. This proposition also asserts that the optimum prices are $p_{ti}^* = \argmax_{p \in \bbR} \left\{ - e^{h_{ti}(p)} / h_{ti}'(p) : p + 1 / h_{ti}'(p) = B_t \right\}$. Our first lemma shows that this fixed point $B_t$ lies within $[0, P_0]$ for some $P_0$ under our assumptions, allowing us to constrain our search for the fixed point into a bounded interval. This result also implies that the optimum prices $p_{ti}^*$ are bounded within $[0, P]$ for some $P$.

\begin{restatable}[Bounded optimum prices]{lemma}{lemmaoptprices}
Consider that the utility function for each item $i \in [N]$ is given by a differentiable function $h_{ti}(p)$ such that $h_{ti}(0) \leq 1 + \mu$ and its first order derivative satisfies $h_{ti}'(p) \leq - L_0$ for all $p \in \bbR$. Then, the fixed point satisfies $B_t \in [0, P_0(\mu)]$ and the optimum prices satisfy $p_{ti}^* \in [0, P(\mu)]$ for constants
\begin{align*}
    P_0(\mu) = \frac{e^\mu \cdot (0.6 + \log K)}{L_0} \quad \text{and} \quad P(\mu) = P_0(\mu) + \frac{1}{L_0}.
\end{align*}
For ease of notation, define $P_0 := P_0(1)$, $P := P(1)$, and $\widebar{P} := 1 + P$. 
\label{lemma_opt_prices}
\end{restatable}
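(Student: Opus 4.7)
The plan is to upper bound the per-item quantity $v_{ti}(B)$ explicitly in terms of $B$, substitute into the fixed point equation, and then invert the resulting scalar inequality for $B_t$ by monotonicity. The lower bound $B_t \geq 0$ falls out immediately because $v_{ti}(B) \geq 0$ for every feasible $p$ (each term is a positive exponential times a positive number), so the right-hand side of \eqref{eq_fixed_point_assortment} is non-negative; likewise $p_{ti}^* \geq 0$ follows from $p_{ti}^* = B_t - 1/h_{ti}'(p_{ti}^*) > B_t \geq 0$.

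For the upper bound on $v_{ti}(B)$, I reparametrize the constraint $p + 1/h_{ti}'(p) = B$ via $\alpha := -1/h_{ti}'(p) \in (0, 1/L_0]$, which gives $p = B + \alpha$ and rewrites the objective as $\alpha e^{h_{ti}(B+\alpha)}$. For $B \geq 0$, combining $h_{ti}(0) \leq 1+\mu$ with $h_{ti}' \leq -L_0$ yields $h_{ti}(B+\alpha) \leq (1+\mu) - L_0(B+\alpha)$, and elementary optimization of $\alpha e^{-L_0 \alpha}$ over $\alpha > 0$ peaks at $\alpha = 1/L_0$ with value $1/(eL_0)$. Thus
\[
v_{ti}(B) \;\leq\; \frac{e^\mu}{L_0}\, e^{-L_0 B} \qquad \text{for all } B \geq 0.
\]

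Substituting this into the fixed point equation and using $|S| \leq K$ gives $B_t \leq (Ke^\mu/L_0)\, e^{-L_0 B_t}$. Setting $y := L_0 B_t \geq 0$, this reads $y e^y \leq K e^\mu$. Since $y \mapsto y e^y$ is strictly increasing on $[0,\infty)$, to conclude $y \leq e^\mu(0.6 + \log K)$ it suffices to verify that the candidate $y^\star := e^\mu(0.6 + \log K)$ satisfies $y^\star e^{y^\star} \geq K e^\mu$. Using $e^\mu \geq 1$ gives $e^{y^\star} \geq e^{0.6 + \log K} = K e^{0.6}$, so $y^\star e^{y^\star} \geq (0.6 + \log K)\, K\, e^{\mu + 0.6} \geq K e^\mu$ because $0.6\, e^{0.6} > 1$. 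This yields $B_t \leq P_0(\mu)$, and the upper bound on the optimum prices follows from $p_{ti}^* \leq B_t + 1/L_0 \leq P(\mu)$.

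The only mild subtlety is that $y e^y$ has no elementary inverse (it is the Lambert $W$ function), which I sidestep by checking the inequality at the candidate bound rather than inverting. Everything else is a clean reduction from the functional fixed point to a one-dimensional inequality using the hypothesized bounds on $h_{ti}(0)$ and $h_{ti}'$.
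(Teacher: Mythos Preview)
Your proof is correct and follows essentially the same approach as the paper: bound $v_{ti}(B)\le (e^{\mu}/L_0)e^{-L_0 B}$, substitute into the fixed point equation, and verify that the candidate $B=P_0(\mu)$ makes the right-hand side fall below the left-hand side (equivalently, your check $y^\star e^{y^\star}\ge K e^{\mu}$). Your $\alpha$-reparametrization and the explicit optimization of $\alpha e^{-L_0\alpha}$ is a slightly cleaner route to the same $v_{ti}(B)$ bound than the paper's chain of inequalities, and both arguments tacitly use $\mu\ge 0$ (via $e^{\mu}\ge 1$) at the verification step.
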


Based on Assumption~\ref{assumption_stochastic_contexts}, the true utility functions satisfy $u_{ti}(0) \leq 1$ and $u_{ti}'(p) \leq - L_0$. Furthermore, as we show in the following proof, the estimated utility functions satisfy $h_{ti}(0) \leq 2$ and $h_{ti}'(p) \leq - L_0$. Therefore, both the true optimum prices under $u_{ti}(p)$ and the estimated optimum prices calculated under $h_{ti}(p)$ are bounded by $P$.



Recall that $T_0$ is the length of random initialization. At each round $t < T_0$, the algorithm chooses a subset $S_t$ uniformly at random from $\{S \subseteq [N] : |S| \leq K\}$ and sets $p_{ti} \in [1, 2]$ uniformly at random for all $i \in S_t$. Then, we use the assumption that there exists a constant $\sigma_0 > 0$ such that $\bbE[\vect{x}_{ti} \vect{x}_{ti}^\top] \succcurlyeq \sigma_0 \vect{I}$ and show how many rounds of initialization are required to achieve a target minimum eigenvalue.

\begin{restatable}[Initialization]{lemma}{lemmalambdamin}
Define our target minimum eigenvalue for $\vect{V}_{T_0} = \frac{1}{K^2} \sum_{t = 1}^{T_0-1} \sum_{i \in S_t} \widetilde{\vect{x}}_{ti} \widetilde{\vect{x}}_{ti}^\top$ as
\begin{align*}
\lambda_{\mathrm{min}}^0 = C_1 \frac{d \widebar{P}^2 \log^3(T)}{\sigma_0^2} 
\end{align*}
for some universal constant $C_1$. Then, there exist some positive, universal constant $C_2$ such that if the length of random initialization satisfies
\begin{equation*}
    T_0 \geq C_2 \frac{\lambda_{\mathrm{min}}^0 K}{\sigma_0},
\end{equation*}
then $\lambda_{\mathrm{min}}(\vect{V}_{T_0}) \geq \lambda_{\mathrm{min}}^0$ with probability at least $1 - \frac{1}{T}$.

\label{lemma_initalization_min_eigenvalue}
\end{restatable}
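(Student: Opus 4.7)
The plan is to show that the expected per-round increment $\Delta_t := \frac{1}{K^2}\sum_{i \in S_t}\widetilde{\vect{x}}_{ti}\widetilde{\vect{x}}_{ti}^\top$ satisfies $\bbE[\Delta_t]\succcurlyeq \frac{c\,\sigma_0}{K}\vect{I}_{2d}$ for a universal constant $c>0$, and then apply a matrix Chernoff inequality to conclude that $\vect{V}_{T_0}=\sum_{t=1}^{T_0-1}\Delta_t$ concentrates around its mean.

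First I would exploit the Kronecker structure $\widetilde{\vect{x}}_{ti}=(1,-p_{ti})^\top\otimes \vect{x}_{ti}$, which gives $\widetilde{\vect{x}}_{ti}\widetilde{\vect{x}}_{ti}^\top = \bigl(\begin{smallmatrix}1 & -p_{ti}\\ -p_{ti} & p_{ti}^2\end{smallmatrix}\bigr)\otimes \vect{x}_{ti}\vect{x}_{ti}^\top$. Averaging over the independent draw $p_{ti}\sim U[1,2]$ produces $\vect{A}\otimes \vect{x}_{ti}\vect{x}_{ti}^\top$ with $\vect{A}=\bigl(\begin{smallmatrix}1 & -3/2\\ -3/2 & 7/3\end{smallmatrix}\bigr)$, and $\det\vect{A}=\mathrm{Var}(p_{ti})=1/12>0$. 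Hence $\vect{A}\succcurlyeq \lambda_{\min}(\vect{A})\vect{I}_2$ with a positive universal constant; this non-degeneracy is the key reason the $\vect{\phi}^*$-block of $\vect{V}_{T_0}$ does not collapse.

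Taking expectation over $S_t$ uses symmetry: $\Pr(i\in S_t)=\bbE[|S_t|]/N$, and a short counting argument shows $\bbE[|S_t|]\geq K/2$ whenever $N\geq K$ (with equality at $N=K$ and monotonicity in $N$). Combined with Assumption~\ref{assumption_stochastic_contexts}, which gives $\bbE[\vect{x}_{ti}\vect{x}_{ti}^\top]\succcurlyeq \sigma_0\vect{I}_d$, this yields
\begin{equation*}
\bbE[\Delta_t] \;=\; \frac{1}{K^2}\sum_{i=1}^{N}\Pr(i\in S_t)\,\vect{A}\otimes \bbE[\vect{x}_{ti}\vect{x}_{ti}^\top] \;\succcurlyeq\; \frac{\sigma_0\,\lambda_{\min}(\vect{A})}{2K}\,\vect{I}_{2d}.
\end{equation*}
Finally I would invoke Tropp's matrix Chernoff inequality: the $\Delta_t$ are independent across $t$, positive semidefinite, and bounded in operator norm by $\|\Delta_t\|\leq |S_t|K^{-2}\max_i\|\widetilde{\vect{x}}_{ti}\|^2\leq 5/K$ since $\|\vect{x}_{ti}\|\leq 1$ and $p_{ti}\in[1,2]$. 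Choosing the deviation parameter to be $1/2$ gives
\begin{equation*}
\Pr\!\left(\lambda_{\min}(\vect{V}_{T_0})\leq \tfrac{(T_0-1)\sigma_0\lambda_{\min}(\vect{A})}{4K}\right)\;\leq\; 2d\exp(-c'\,T_0\,\sigma_0)
\end{equation*}
for a universal $c'>0$. Picking $T_0\geq C_2\,\lambda_{\min}^0 K/\sigma_0$ with $C_2$ sufficiently large makes the deterministic lower bound at least $\lambda_{\min}^0$, and the failure probability is bounded by $1/T$ because the stated form $\lambda_{\min}^0=C_1\, d\,\widebar{P}^2\log^3 T/\sigma_0^2$ ensures $T_0\sigma_0\geq c''\log(dT)$.

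The main technical point is the Kronecker/variance step in the second paragraph: verifying that averaging over the pricing distribution leaves a strictly positive definite $2\times 2$ kernel. This is exactly what prevents the price-sensitivity block of $\vect{V}_{T_0}$ from becoming rank-deficient, and it is precisely why sampling prices with positive variance on $[1,2]$ (rather than using a fixed price) is required during initialization. Everything else is a routine application of matrix concentration.
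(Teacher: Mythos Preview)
Your proposal is correct and follows the same route as the paper: both show that $\bbE[\widetilde{\vect{x}}_{ti}\widetilde{\vect{x}}_{ti}^\top]\succcurlyeq c\,\sigma_0\,\vect{I}_{2d}$ via the positive-definiteness of the $2\times 2$ price-moment matrix $\vect{A}=\bigl(\begin{smallmatrix}1 & -3/2\\ -3/2 & 7/3\end{smallmatrix}\bigr)$ (the paper derives this through a block-matrix Schur computation, you through the equivalent and cleaner Kronecker factorization), and then apply a matrix concentration inequality. The paper invokes Proposition~1 of Li--Lu--Zhou (2017) in place of your direct matrix-Chernoff argument, and absorbs the $|S_t|$-averaging you handle explicitly into that citation.
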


This condition is central in showing that the maximum likelihood estimator is consistent (Lemma~\ref{lemma_consistency}) and satisfies a finite-sample normality-type estimation error bound (Lemma~\ref{lemma_normality}). Similar to \cite{Li_Lu_Zhou_2017} and \cite{Oh_Iyengar_2021}, the independence assumption (Assumption \ref{assumption_stochastic_contexts}) on the feature vectors $\vect{x}_{ti}$ is only needed to ensure that $\wh{\vect{\theta}}_t$ sufficiently close to $\vect{\theta}^*$ at the end of the initialization phase. We do not require this stochasticity assumption in the rest of the regret analysis. Therefore, after the random initialization period of the first $T_0$ rounds, the context vectors $\vect{x}_{ti}$ can even be chosen adversarially as long as their norms $\|\vect{x}_{ti}\|$ are bounded and they satisfy the minimum price sensitivity condition $\langle \vect{\phi}^*, \vect{x}_{ti} \rangle \geq L_0$.

Next, we show that the probability of selection for any item $i \in S_t$ for the assortments $S_t$ and prices $\vect{p}_t$ offered by Algorithm \ref{alg:seq_assortment} can be estimated well enough using $\wh{\vect{\theta}}_t$ sufficiently close to $\vect{\theta}^*$. Namely, we let $\gamma = \log 2 / (8 \widebar{P}) < 1$ and define
\begin{align*}
    \mathcal{B}_\gamma := \{ \vect{\theta} : \|\vect{\theta} - \vect{\theta}^*\| \leq \gamma\}.
\end{align*}
Then, as we show in Lemma \ref{lemma_qti_ratio}, for any $\vect{\theta}_1, \vect{\theta}_2 \in \mathcal{B}_{\gamma}$, we have
\begin{align*}
    \frac{1}{\sqrt{2}} \leq \frac{q_{ti}(\vect{\theta}_1)}{q_{ti}(\vect{\theta}_2)} \leq \sqrt{2}.
\end{align*}

As a result of this relation, we obtain the following estimation results. These results show that $\vect{V}_{t}$ can estimate the Fisher Information Matrix $\mathcal{H}_t(\vect{\theta})$ within a constant factor in a small enough neighborhood around $\vect{\theta}^*$.  

\begin{restatable}[Regularity of Fisher Information]{lemma}{lemmahineq}
    Let $\vect{H}_t : \bbR^{2d} \to \bbR^{2d \times 2d}$ denote the function defined as
    \begin{align*}
        \vect{H}_t(\vect{\theta}) = \sum_{i \in S_{t}} q_{t i}(\vect{\theta}) \vect{\widetilde{x}}_{t i} \vect{\widetilde{x}}_{t i}^\top - \sum_{i \in S_{t}} \sum_{j \in S_{t}} q_{t i}(\vect{\theta}) q_{t j}(\vect{\theta}) \vect{\widetilde{x}}_{t i} \vect{\widetilde{x}}_{t j}^\top.
    \end{align*}
    Then, for any $\vect{\theta}_1, \vect{\theta}_2 \in \mathcal{B}_\gamma$, and any $t \geq T_0$, we have
    \begin{align*}
        \frac{1}{4} \vect{H}_t(\vect{\theta}_1) \preccurlyeq  \vect{H}_t (\vect{\theta}_2) \preccurlyeq 4 \vect{H}_t(\vect{\theta}_1).
    \end{align*}
    \label{lemma_H_ineq}
\end{restatable}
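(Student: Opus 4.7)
The plan is to reduce the matrix inequality to a scalar inequality on products of choice probabilities, using the probability-ratio bound stated just before the lemma (Lemma~\ref{lemma_qti_ratio}). Let $\widetilde{\vect{X}}_t$ denote the matrix whose columns are $\widetilde{\vect{x}}_{ti}$ for $i \in S_t$, and let $\vect{q}_t(\vect{\theta}) = (q_{ti}(\vect{\theta}))_{i \in S_t}$. A direct rewriting of the defining sums shows
\begin{align*}
\vect{H}_t(\vect{\theta}) = \widetilde{\vect{X}}_t \,\vect{\Sigma}_t(\vect{\theta})\, \widetilde{\vect{X}}_t^\top, \qquad \vect{\Sigma}_t(\vect{\theta}) := \mathrm{diag}(\vect{q}_t(\vect{\theta})) - \vect{q}_t(\vect{\theta})\vect{q}_t(\vect{\theta})^\top.
\end{align*}
Since $A \mapsto \widetilde{\vect{X}}_t A \widetilde{\vect{X}}_t^\top$ is order-preserving in the Loewner sense, it suffices to prove the sandwich inequality $\tfrac{1}{4} \vect{\Sigma}_t(\vect{\theta}_1) \preccurlyeq \vect{\Sigma}_t(\vect{\theta}_2) \preccurlyeq 4 \vect{\Sigma}_t(\vect{\theta}_1)$ for any $\vect{\theta}_1,\vect{\theta}_2 \in \mathcal{B}_\gamma$.

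The second step is to expand the scalar quadratic form $\vect{v}^\top \vect{\Sigma}_t(\vect{\theta}) \vect{v}$ for an arbitrary $\vect{v} \in \bbR^{|S_t|}$ in a way that exposes only products of \emph{pairs} of choice probabilities. Using $q_{t0}(\vect{\theta}) + \sum_{j \in S_t} q_{tj}(\vect{\theta}) = 1$ to split $\sum_i q_{ti} v_i^2$ and then symmetrising the resulting double sum gives the identity
\begin{align*}
\vect{v}^\top \vect{\Sigma}_t(\vect{\theta}) \vect{v} \;=\; \frac{1}{2} \sum_{i,j \in S_t} q_{ti}(\vect{\theta})\, q_{tj}(\vect{\theta})\, (v_i - v_j)^2 \;+\; q_{t0}(\vect{\theta}) \sum_{i \in S_t} q_{ti}(\vect{\theta})\, v_i^2,
\end{align*}
which expresses the quadratic form as a \emph{non-negative} linear combination of terms of the form $q_{ti}(\vect{\theta}) q_{tj}(\vect{\theta})$ with $i,j \in S_t \cup \{0\}$, weighted by quantities that do not depend on $\vect{\theta}$.

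The third step is to apply Lemma~\ref{lemma_qti_ratio} termwise. That lemma gives $1/\sqrt{2} \leq q_{ti}(\vect{\theta}_1)/q_{ti}(\vect{\theta}_2) \leq \sqrt{2}$ for every $i \in S_t \cup \{0\}$, so any product of two such probabilities satisfies $1/2 \leq q_{ti}(\vect{\theta}_1) q_{tj}(\vect{\theta}_1) / (q_{ti}(\vect{\theta}_2) q_{tj}(\vect{\theta}_2)) \leq 2$. Substituting this bound into each summand of the identity above yields
\begin{align*}
\tfrac{1}{2}\, \vect{v}^\top \vect{\Sigma}_t(\vect{\theta}_2) \vect{v} \;\leq\; \vect{v}^\top \vect{\Sigma}_t(\vect{\theta}_1) \vect{v} \;\leq\; 2\, \vect{v}^\top \vect{\Sigma}_t(\vect{\theta}_2) \vect{v}
\end{align*}
uniformly in $\vect{v}$, which is the Loewner inequality $\tfrac{1}{2} \vect{\Sigma}_t(\vect{\theta}_2) \preccurlyeq \vect{\Sigma}_t(\vect{\theta}_1) \preccurlyeq 2 \vect{\Sigma}_t(\vect{\theta}_2)$. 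Conjugating with $\widetilde{\vect{X}}_t$ gives the desired bound on $\vect{H}_t$ with constant $2$, which implies the claimed constant $4$ with room to spare.

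I do not foresee a serious obstacle: the main piece of work is the algebraic derivation of the sum-of-squares identity for $\vect{v}^\top \vect{\Sigma}_t \vect{v}$, after which the probability-ratio bound closes the argument cleanly. The only care needed is to treat the outside option $i = 0$ consistently with items in $S_t$, which Lemma~\ref{lemma_qti_ratio} already guarantees.
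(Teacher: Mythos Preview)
Your argument is correct and in fact sharper than the paper's, but it follows a genuinely different route. The paper works in the centered representation $\vect{H}_t(\vect{\theta}) = \sum_{i \in \widebar{S}_t} q_{ti}(\vect{\theta})\,\vect{w}_{ti}\vect{w}_{ti}^\top$ with $\vect{w}_{ti} = \widetilde{\vect{x}}_{ti} - \sum_j q_{tj}(\vect{\theta})\widetilde{\vect{x}}_{tj}$, introduces an intermediate matrix $\widetilde{\vect{H}}$ mixing the probabilities of $\vect{\theta}_1$ with the centering of $\vect{\theta}_2$, and bounds $\vect{H}_2$ against $\widetilde{\vect{H}}$ and then $\widetilde{\vect{H}}$ against $\vect{H}_1$, each step costing a factor of~$2$ and together yielding the stated constant~$4$. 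Your approach instead reduces everything to the scalar identity
\[
\vect{v}^\top \vect{\Sigma}_t(\vect{\theta})\vect{v} \;=\; \tfrac{1}{2}\sum_{i,j\in S_t} q_{ti}q_{tj}(v_i-v_j)^2 + q_{t0}\sum_{i\in S_t} q_{ti}\,v_i^2,
\]
which exposes the quadratic form as a nonnegative combination of pairwise products $q_{ti}q_{tj}$; a single application of the ratio bound then gives constant~$2$ directly. This is shorter, avoids the intermediate matrix entirely, and improves the constant.

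One small point to tighten: as stated in the paper, Lemma~\ref{lemma_qti_ratio} asserts the ratio bound only for $i \in S_t$, not for $i=0$. Your last term needs the bound for $q_{t0}$ as well. This is harmless---the same proof gives $q_{t0}(\vect{\theta}_1)/q_{t0}(\vect{\theta}_2) = (1+\sum_j z_{tj}^2)/(1+\sum_j z_{tj}^1) \in [2^{-1/4},2^{1/4}]$, an even tighter range---but you should state this one-line extension explicitly rather than assert that the lemma already covers it.
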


\begin{restatable}[Fisher Information Estimation]{lemma}{lemmavub}
    Let $t \geq T_0$ and assume that $\widehat{\vect{\theta}}_\tau \in \mathcal{B}_\gamma$ for all $T_0 \leq \tau < t$. Then, for any $\vect{\theta} \in \mathcal{B}_\gamma$, we have
    \begin{align*}
        \mathcal{H}_t(\vect{\theta}) := \sum_{\tau = 1}^{t-1} \vect{H}_\tau(\vect{\theta}) \succcurlyeq C_3 \vect{V}_{t}
    \end{align*}
    for some universal constant $C_3 > 0$.
    \label{lemma_V_ub}
\end{restatable}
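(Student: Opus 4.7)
The plan is to split $\mathcal{H}_t(\vect{\theta})$ and $\vect{V}_t$ according to the two phases of the algorithm and bound each piece separately. Writing
\begin{equation*}
\vect{V}_t = \underbrace{\tfrac{1}{K^2}\!\!\sum_{\tau=1}^{T_0-1}\sum_{i\in S_\tau}\!\widetilde{\vect{x}}_{\tau i}\widetilde{\vect{x}}_{\tau i}^\top}_{\vect{V}_t^{\mathrm{init}}} \;+\; \sum_{\tau=T_0}^{t-1} \vect{H}_\tau(\widehat{\vect{\theta}}_\tau)
\qquad\text{and}\qquad
\mathcal{H}_t(\vect{\theta}) = \sum_{\tau=1}^{T_0-1}\vect{H}_\tau(\vect{\theta}) + \sum_{\tau=T_0}^{t-1}\vect{H}_\tau(\vect{\theta}),
\end{equation*}
the task reduces to lower-bounding each of the post-initialization and initialization sums of $\vect{H}_\tau(\vect{\theta})$ by the corresponding piece of $\vect{V}_t$ up to a universal constant. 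It is then enough to take $C_3$ equal to the minimum of the two resulting constants.

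For the post-initialization part, I would invoke Lemma~\ref{lemma_H_ineq} directly. The hypothesis of the current lemma gives $\widehat{\vect{\theta}}_\tau\in\mathcal{B}_\gamma$ for every $T_0\le\tau<t$, and $\vect{\theta}\in\mathcal{B}_\gamma$ by assumption, so Lemma~\ref{lemma_H_ineq} yields $\vect{H}_\tau(\vect{\theta})\succcurlyeq \tfrac{1}{4}\vect{H}_\tau(\widehat{\vect{\theta}}_\tau)$ term-by-term, and summing gives $\sum_{\tau=T_0}^{t-1}\vect{H}_\tau(\vect{\theta})\succcurlyeq \tfrac{1}{4}\sum_{\tau=T_0}^{t-1}\vect{H}_\tau(\widehat{\vect{\theta}}_\tau)$, which matches the post-init piece of $\vect{V}_t$ with constant $1/4$.

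The substantive step is the initialization part, where I need a direct bound of the form $\vect{H}_\tau(\vect{\theta})\succcurlyeq c\,K^{-2}\sum_{i\in S_\tau}\widetilde{\vect{x}}_{\tau i}\widetilde{\vect{x}}_{\tau i}^\top$. I would proceed from the factorization $\vect{H}_\tau(\vect{\theta})=\widetilde{\vect{X}}_\tau^\top\!\bigl(\mathrm{diag}(\vect{q}_\tau)-\vect{q}_\tau\vect{q}_\tau^\top\bigr)\widetilde{\vect{X}}_\tau$, and apply the elementary identity
\begin{equation*}
\mathrm{diag}(\vect{q}_\tau)-\vect{q}_\tau\vect{q}_\tau^\top \;\succcurlyeq\; q_{\tau 0}(\vect{\theta})\,\mathrm{diag}(\vect{q}_\tau),
\end{equation*}
which follows from Cauchy--Schwarz on the quadratic form $\sum_i q_{\tau i}v_i^2-(\sum_i q_{\tau i}v_i)^2$. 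This gives $\vect{H}_\tau(\vect{\theta})\succcurlyeq q_{\tau 0}(\vect{\theta})\sum_{i\in S_\tau}q_{\tau i}(\vect{\theta})\widetilde{\vect{x}}_{\tau i}\widetilde{\vect{x}}_{\tau i}^\top$. During the initialization rounds the prices lie in $[1,2]$, the contexts satisfy $\|\vect{x}_{\tau i}\|\le 1$, and any $\vect{\theta}\in\mathcal{B}_\gamma$ has $\|\vect{\theta}\|\le 1+\gamma$, so $|\langle\vect{\theta},\widetilde{\vect{x}}_{\tau i}\rangle|$ is bounded by a universal constant. A short calculation on the MNL softmax then yields universal constants $c_1,c_2>0$ such that $q_{\tau i}(\vect{\theta})\ge c_1/K$ for every $i\in S_\tau$ and $q_{\tau 0}(\vect{\theta})\ge c_2/K$. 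Plugging these in gives the desired $c_1c_2/K^2$ lower bound and therefore $\sum_{\tau=1}^{T_0-1}\vect{H}_\tau(\vect{\theta})\succcurlyeq c_1c_2\,\vect{V}_t^{\mathrm{init}}$.

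The main obstacle is precisely this initialization bound: one needs the $K^{-2}$ scaling to appear in exactly the right way so that the two $q$'s contribute one factor of $1/K$ each. This is why the algorithm weights the initialization outer products by $1/K^2$ in the first place, and why the random-price range $[1,2]$ is chosen to be bounded away from both $0$ and infinity so that $|\langle\vect{\theta},\widetilde{\vect{x}}_{\tau i}\rangle|$ is $\Theta(1)$ uniformly. Everything else is routine: combining the two pieces yields $\mathcal{H}_t(\vect{\theta})\succcurlyeq \min(c_1c_2,\tfrac{1}{4})\,\vect{V}_t$, which is the claim.
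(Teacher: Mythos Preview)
Your proposal is correct and follows essentially the same approach as the paper: both split into initialization and post-initialization phases, use Lemma~\ref{lemma_H_ineq} to get the factor $1/4$ on the latter, and for the former reduce $\vect{H}_\tau(\vect{\theta})$ to $\sum_{i\in S_\tau} q_{\tau 0}(\vect{\theta})q_{\tau i}(\vect{\theta})\widetilde{\vect{x}}_{\tau i}\widetilde{\vect{x}}_{\tau i}^\top$ via the same Cauchy--Schwarz argument and then bound $q_{\tau 0}q_{\tau i}\ge c/K^2$ using the uniform boundedness of the utilities during initialization. The only cosmetic difference is that the paper bounds the product $q_{\tau 0}q_{\tau i}$ directly, while you bound each factor separately; the resulting constants and the final $C_3=\min\{c,1/4\}$ are the same.
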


The next result shows that our MLE estimates can reach and stay within the $\gamma$-neighborhood of the true parameter $\vect{\theta}^*$ with high probability as long as the initialization is successful.

\begin{restatable}[Consistency of MLE]{lemma}{lemmaconsistency}
Let $T_0$ be any round such that $\lambda_{\mathrm{min}}(\vect{V}_{T_0}) \geq \lambda_{\mathrm{min}}^0$. Then, we have 
\begin{align*}
    \bbP(\exists t \geq T_0, \widehat{\vect{\theta}}_t \notin \mathcal{B}_\gamma ) \leq \frac{1}{T}.
\end{align*}
\label{lemma_consistency}
\end{restatable}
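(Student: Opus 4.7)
The plan is to prove the claim by induction on $t \geq T_0$, combined with a union bound: assuming $\widehat{\vect{\theta}}_\tau \in \mathcal{B}_\gamma$ for all $T_0 \leq \tau < t$ (vacuously true at $t = T_0$), I would show that $\widehat{\vect{\theta}}_t \in \mathcal{B}_\gamma$ with failure probability at most $1/T^2$. Summing over $t \in [T_0, T]$ then yields the $1/T$ bound. The heart of the argument uses the convexity of $\ell_t$ together with a Bernstein-type self-normalized concentration for $\nabla \ell_t(\vect{\theta}^*)$.

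For the inductive step, I would argue by contradiction: suppose $\widehat{\vect{\theta}}_t \notin \mathcal{B}_\gamma$, and let $\vect{\theta}^\dagger = \vect{\theta}^* + s(\widehat{\vect{\theta}}_t - \vect{\theta}^*)$ with $s \in (0, 1]$ chosen so that $\|\vect{\theta}^\dagger - \vect{\theta}^*\| = \gamma$. Since $\ell_t$ is convex and $\widehat{\vect{\theta}}_t$ is a minimizer, $\ell_t(\vect{\theta}^\dagger) \leq s\,\ell_t(\widehat{\vect{\theta}}_t) + (1-s)\ell_t(\vect{\theta}^*) \leq \ell_t(\vect{\theta}^*)$. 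Expanding around $\vect{\theta}^*$ with Taylor's integral remainder gives
\begin{align*}
0 \geq \langle \nabla \ell_t(\vect{\theta}^*),\, \vect{\theta}^\dagger - \vect{\theta}^* \rangle + \int_0^1 (1-r)\,\|\vect{\theta}^\dagger - \vect{\theta}^*\|_{\mathcal{H}_t(\vect{\theta}^* + r(\vect{\theta}^\dagger - \vect{\theta}^*))}^{2}\, dr.
\end{align*}
Because the entire segment from $\vect{\theta}^*$ to $\vect{\theta}^\dagger$ lies in $\mathcal{B}_\gamma$ by construction, Lemma \ref{lemma_H_ineq} applied pointwise and Lemma \ref{lemma_V_ub} at a reference parameter in $\mathcal{B}_\gamma$ together yield $\mathcal{H}_t(\vect{\theta}) \succcurlyeq (C_3/4)\vect{V}_t$ uniformly along the segment. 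Cauchy-Schwarz on the linear term then gives
\begin{align*}
\|\vect{\theta}^\dagger - \vect{\theta}^*\|_{\vect{V}_t} \leq \frac{16}{C_3}\,\|\nabla \ell_t(\vect{\theta}^*)\|_{\vect{V}_t^{-1}}.
\end{align*}

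The remaining task is to control the self-normalized gradient. Since $\nabla \ell_t(\vect{\theta}^*) = \sum_{\tau < t} \sum_{i \in S_\tau} (q_{\tau i}(\vect{\theta}^*) - y_{\tau i})\widetilde{\vect{x}}_{\tau i}$ is a vector-valued martingale whose per-round conditional covariance is exactly $\vect{H}_\tau(\vect{\theta}^*)$, and Lemma \ref{lemma_H_ineq} combined with the inductive hypothesis makes $\vect{H}_\tau(\vect{\theta}^*)$ comparable to $\vect{H}_\tau(\widehat{\vect{\theta}}_\tau)$, I would invoke the Bernstein-type self-normalized martingale inequality flagged as a key contribution of the paper (built on Faury et al.~2020) to obtain $\|\nabla \ell_t(\vect{\theta}^*)\|_{\vect{V}_t^{-1}} = O(\sqrt{d \log T})$ with probability at least $1 - 1/T^2$. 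Converting the $\vect{V}_t$-norm bound to an $\ell^2$ bound via $\lambda_{\min}(\vect{V}_t) \geq \lambda_{\min}^0 \propto d\widebar{P}^2\log^3(T)/\sigma_0^2$ gives $\|\vect{\theta}^\dagger - \vect{\theta}^*\| = O(\sigma_0/(\widebar{P}\log T))$, which is strictly less than $\gamma = \Theta(1/\widebar{P})$ for a sufficiently large horizon, contradicting $\|\vect{\theta}^\dagger - \vect{\theta}^*\| = \gamma$. The main obstacle is the Bernstein step itself: off-the-shelf Azuma/Hoeffding-type bounds inject a worst-case envelope that scales with $K$ and $1/L_0$ (the $\kappa$ factor of prior work), and avoiding it requires precisely the variance-adapted self-normalized inequality in which the normalization matrix $\vect{V}_t$ is built from the estimated choice probabilities $q_{\tau i}(\widehat{\vect{\theta}}_\tau)$ rather than a deterministic envelope.
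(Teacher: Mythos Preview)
Your inductive structure and the boundary-point contradiction are both correct and match the paper's approach in spirit; the paper packages the localization step by citing Lemma~A of \citet{Chen_Hu_Ying_1999}, which is precisely the black-box version of your ``take $\vect{\theta}^\dagger$ on $\partial\mathcal{B}_\gamma$, use convexity and Taylor'' argument.

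The substantive difference is in the concentration step. The paper does \emph{not} use the Bernstein-type inequality here; it invokes the simpler sub-Gaussian self-normalized bound (Lemma~15 of \citet{Oh_Iyengar_2021}) to get $\|G_t(\widehat{\vect{\theta}}_t)\|_{\vect{V}_t^{-1}} \leq 4\sqrt{4d+\log(1/\delta)}$, and then checks that $\lambda_{\mathrm{min}}^0$ was chosen large enough so that this is below $C_3\gamma\sqrt{\lambda_{\mathrm{min}}(\vect{V}_t)}$. Your worry that ``off-the-shelf Azuma/Hoeffding-type bounds inject a $\kappa$ factor'' is misplaced for this particular lemma: the $\kappa$-free curvature lower bound is already supplied by Lemma~\ref{lemma_V_ub} under the inductive hypothesis, and the consistency claim only needs the gradient norm to be $O(\sqrt{d\log T})$ relative to $\gamma\sqrt{\lambda_{\mathrm{min}}^0}$, a threshold we control by choosing $\lambda_{\mathrm{min}}^0$. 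The paper reserves the Bernstein inequality for Lemma~\ref{lemma_normality}, where the tight $\alpha_t$ scaling in $K$ and $L_0$ actually matters; there it also has to split the martingale into an initialization piece (handled by a crude vector-Bernstein bound) and a post-$T_0$ piece (handled by Theorem~\ref{theorem_martingale} against the true-covariance normalizer). Your route would also need that split, since Theorem~\ref{theorem_martingale} normalizes by $\sum_\tau \vect{H}_\tau(\vect{\theta}^*)$ rather than $\vect{V}_t$ directly. So your approach works, but it is heavier than necessary for consistency, and the claimed ``main obstacle'' is not where you put it.
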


Combining the results of Lemma~\ref{lemma_initalization_min_eigenvalue} and Lemma~\ref{lemma_consistency}, we can show that the conditions $\lambda_{\mathrm{min}}(\vect{V}_{T_0}) \geq \lambda_{\mathrm{min}}^0$ and $\widehat{\vect{\theta}}_t \in \mathcal{B}_\gamma$ for all $t \geq T_0$ are satisfied with probability $1 - \mathcal{O}(T^{-1})$ if we select
\begin{align}
T_0 = \Theta\left( \frac{\lambda_{\mathrm{min}}^0 K}{\sigma_0} \right) = \Theta\left(\frac{d \widebar{P}^2 K \log^3(T)}{\sigma_0^3} \right).
\label{eqn_T0}
\end{align}

Thus, we can define a \emph{good} event
\begin{align*}
    \mathcal{E}_0 = \left\{ \lambda_{\mathrm{min}}(\vect{V}_{T_0}) \geq \lambda_{\mathrm{min}}^0 \right\} \cap \left\{ \widehat{\vect{\theta}}_t \in \mathcal{B}_\gamma, \forall t \geq T_0  \right\}
\end{align*}
that holds with probability $1 - \mathcal{O}(T^{-1})$.

Now, given that the initialization successfully identifies a point $\gamma$-neighborhood of the true parameter $\vect{\theta}^*$, the next step is to construct tight confidence regions that contain the true parameter with high probability. The next Lemma establishes that we can construct a confidence region using the estimated Fisher information matrix $\vect{V}_t$.

\begin{restatable}[Normality of MLE]{lemma}{lemmanormality}
    Suppose the event $\mathcal{E}_0$ holds. Then, for any $t \geq T_0$,
    \begin{align}
        \|\widehat{\vect{\theta}}_t - \vect{\theta}^*\|_{\vect{V}_t} \leq C_5 \left( \sqrt{d \log \left( 1 + \frac{2t^3}{d} \right)} + \frac{1}{\sigma_0} \log(T) \right)
    \end{align}
    with probability at least $1 - \mathcal{O}(t^{-2})$.
    \label{lemma_normality}
\end{restatable}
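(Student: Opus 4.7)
The plan is to relate the estimation error $\widehat{\vect{\theta}}_t - \vect{\theta}^*$ to the score $\nabla \ell_t(\vect{\theta}^*)$ via first-order optimality and a mean-value expansion, then invoke a self-normalized Bernstein-type concentration inequality on the score. Under $\mathcal{E}_0$, the MLE $\widehat{\vect{\theta}}_t$ is an interior minimizer of the smooth, strictly convex log-likelihood $\ell_t$, so $\nabla \ell_t(\widehat{\vect{\theta}}_t) = \vect{0}$. The integral form of Taylor's theorem yields
\begin{equation*}
    -\nabla \ell_t(\vect{\theta}^*) = \vect{G}_t\bigl(\widehat{\vect{\theta}}_t - \vect{\theta}^*\bigr), \qquad \vect{G}_t := \int_0^1 \mathcal{H}_t\bigl(\vect{\theta}^* + s(\widehat{\vect{\theta}}_t - \vect{\theta}^*)\bigr)\, ds.
\end{equation*}

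Next I would lower bound $\vect{G}_t$ in terms of $\vect{V}_t$. Because $\mathcal{B}_\gamma$ is convex and both $\vect{\theta}^*$ and $\widehat{\vect{\theta}}_t$ lie in $\mathcal{B}_\gamma$ under $\mathcal{E}_0$, the entire segment lies in $\mathcal{B}_\gamma$. Applying Lemma~\ref{lemma_H_ineq} summand-wise and summing in $\tau$ gives $\mathcal{H}_t(\vect{\theta}^* + s(\widehat{\vect{\theta}}_t - \vect{\theta}^*)) \succcurlyeq \tfrac{1}{4}\mathcal{H}_t(\vect{\theta}^*)$ for every $s \in [0,1]$; integrating and chaining with Lemma~\ref{lemma_V_ub} yields $\vect{G}_t \succcurlyeq C\,\vect{V}_t$ for a universal constant $C > 0$. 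Writing $\vect{x} = \widehat{\vect{\theta}}_t - \vect{\theta}^*$ and $\vect{S}_t = -\nabla \ell_t(\vect{\theta}^*)$, the identity $\vect{G}_t \vect{x} = \vect{S}_t$ combined with $\vect{G}_t \succcurlyeq C\vect{V}_t$ (hence $\vect{G}_t^{-1} \vect{V}_t \vect{G}_t^{-1} \preccurlyeq C^{-2} \vect{V}_t^{-1}$) gives
\begin{equation*}
    \bigl\|\widehat{\vect{\theta}}_t - \vect{\theta}^*\bigr\|_{\vect{V}_t} \leq \frac{1}{C}\,\bigl\|\nabla \ell_t(\vect{\theta}^*)\bigr\|_{\vect{V}_t^{-1}},
\end{equation*}
so everything reduces to bounding the self-normalized score.

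Recall $\nabla \ell_t(\vect{\theta}^*) = \sum_{\tau = 1}^{t-1}\sum_{i \in S_\tau}\bigl(q_{\tau i}(\vect{\theta}^*) - y_{\tau i}\bigr)\widetilde{\vect{x}}_{\tau i}$: this is a bounded vector-valued martingale difference sequence with predictable covariance $\vect{H}_\tau(\vect{\theta}^*)$. I would split the sum at $\tau = T_0$. For the tail $\tau \geq T_0$, the novel Bernstein-type self-normalized inequality (built along the lines of Faury et al., 2020) produces the $\sqrt{d\log(1 + 2t^3/d)}$ term: the key is that its denominator can be taken to be $\sum_{\tau \geq T_0} \vect{H}_\tau(\widehat{\vect{\theta}}_\tau)$, which is $\vect{V}_t$ minus the initialization contribution, and by Lemma~\ref{lemma_H_ineq} this dominates the true predictable variance $\sum_{\tau \geq T_0} \vect{H}_\tau(\vect{\theta}^*)$ up to a constant. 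The numerator/determinant factor is controlled by $\|\widetilde{\vect{x}}_{\tau i}\| \leq \widebar{P}$ and $|S_\tau| \leq K$, which gives $\det(\vect{V}_t) = O(t^3)$-style scaling inside the logarithm. For the initialization block $\tau < T_0$, I would bound the partial sum crudely by Azuma--Hoeffding and then normalize using $\vect{V}_t \succcurlyeq \vect{V}_{T_0} \succcurlyeq \lambda_{\min}^0\, \vect{I}$; substituting $\lambda_{\min}^0 = \Theta(d\widebar{P}^2\log^3(T)/\sigma_0^2)$ from Lemma~\ref{lemma_initalization_min_eigenvalue} shows this contribution is absorbed into the additive $\log(T)/\sigma_0$ term.

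The principal obstacle is managing the three-way mismatch between $\mathcal{H}_t(\vect{\theta}^*)$ (the predictable covariance required by Bernstein), $\vect{G}_t$ (the mean-value Hessian produced by Taylor expansion), and $\vect{V}_t$ (built from plug-in probabilities $q_{\tau i}(\widehat{\vect{\theta}}_\tau)$, which must remain $H_\tau$-measurable for the self-normalized argument). Lemmas~\ref{lemma_H_ineq} and~\ref{lemma_V_ub} let each of these be interchanged at constant multiplicative cost, but the exchanges are valid only because $\mathcal{E}_0$ guarantees $\widehat{\vect{\theta}}_\tau \in \mathcal{B}_\gamma$ along the entire trajectory $T_0 \leq \tau \leq t$; preserving this invariant throughout is what allows the Bernstein denominator, the Taylor Hessian, and the algorithm's estimate $\vect{V}_t$ to be equated. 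A final union bound over $t$, using $\sum_t t^{-2} < \infty$, converts the per-round failure probability into the stated $\mathcal{O}(t^{-2})$ bound.
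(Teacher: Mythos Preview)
Your proposal is correct and follows essentially the same route as the paper: reduce $\|\widehat{\vect{\theta}}_t - \vect{\theta}^*\|_{\vect{V}_t}$ to $\|\nabla\ell_t(\vect{\theta}^*)\|_{\vect{V}_t^{-1}}$ via a mean-value expansion combined with Lemmas~\ref{lemma_H_ineq} and~\ref{lemma_V_ub}, split the score at $T_0$, apply the Bernstein-type self-normalized bound (Theorem~\ref{theorem_martingale}) to the tail, and use a crude concentration bound plus $\lambda_{\min}(\vect{V}_{T_0}) \geq \lambda_{\min}^0$ for the initialization block. The only cosmetic differences are that the paper uses a direction-wise mean-value theorem rather than the integral form, bounds the initialization block via a vector Bernstein inequality rather than Azuma--Hoeffding, and does not take a union bound over $t$ here (the statement is already per-$t$; the union bound appears later in the regret analysis).
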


For the selection of $T_0$ given in \eqref{eqn_T0_online}, we already showed that $\mathcal{E}_0$ holds with probability $1 - \mathcal{O}(T^{-1})$. Therefore, conditioned on $\mathcal{E}_0$ happens, we can further ensure that $\|\widehat{\vect{\theta}}_t - \vect{\theta}^*\|_{\vect{V}_t} \leq \alpha_t$ holds with probability at least $1 - t^{-2}$ if we choose the confidence radius as
\begin{align}
    \alpha_t = C_5 \left( \sqrt{d \log \left( 1 + \frac{2t^3}{d} \right)} + \frac{1}{\sigma_0} \log(T) \right).
    \label{eqn_alpha_t}
\end{align}

Consequently, for each $t \geq T_0$, we can define another \emph{good} event $\mathcal{E}_t = \{ \|\widehat{\vect{\theta}}_t - \vect{\theta}^*\|_{\vect{V}_t} \leq \alpha_t \}$ that holds with probability at least $1 - \mathcal{O}(t^{-2})$ conditioned on $\mathcal{E}_0$.

Now, we are ready to construct our optimistic utility functions using the confidence regions established in Lemma \ref{lemma_normality}. The following lemma establishes important properties for the optimistic utility functions constructed in Algorithm \ref{alg:seq_assortment}.

\begin{restatable}{lemma}{lemmautilityub}
Suppose $\mathcal{E}_t$ holds for all $t \geq T_0$. Let $\wt{h}_{ti} : \bbR \to \bbR$ be the function defined as
\begin{align*}
    \wt{h}_{ti}(p) := \langle \widehat{\vect{\psi}}_t, \vect{x}_{ti} \rangle - \langle \widehat{\vect{\phi}}_t, \vect{x}_{ti} \rangle \cdot p + g_{ti}(p),
\end{align*}
where $\widehat{\vect{\theta}}_t = (\widehat{\vect{\psi}}_t, \widehat{\vect{\phi}}_t)$ and $g_{ti}(p) := \alpha_t \|(\vect{x}_{ti}, -p\vect{x}_{ti})\|_{\vect{V}_{t}^{-1}}$. Furthermore, let $h_{ti} : \bbR \to \bbR$ be the function defined as
\begin{align*}
    h_{ti}(p) := \min_{p' \leq p} \left\{ \wt{h}_{ti} (p') - L_0 (p - p') \right\}.
\end{align*}

Then, the function $h_{ti}(p)$ is differentiable and satisfies
\begin{align}
h_{ti}(p) &\geq u_{ti}(p), \label{eq_ub_condition_1}\\
h_{ti}(p) - u_{ti}(p) &\leq 2 g_{ti}(p), \label{eq_ub_condition_2}\\
h'_{ti}(p) &\leq -L_0, \label{eq_ub_condition_3}\\
h_{ti}(0) &\leq 2 \label{eq_ub_condition_4}
\end{align}
for all $p \in \bbR$. 
\label{lemma_utility_ub}
\end{restatable}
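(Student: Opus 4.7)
The plan is to combine the pointwise confidence envelope afforded by Lemma~\ref{lemma_normality} with an analysis of the infimal-convolution defining $h_{ti}$. I would first rewrite $u_{ti}(p) = \langle \vect{\theta}^*, (\vect{x}_{ti}, -p\vect{x}_{ti})\rangle$ and $\wt{h}_{ti}(p) - g_{ti}(p) = \langle \widehat{\vect{\theta}}_t, (\vect{x}_{ti}, -p\vect{x}_{ti})\rangle$, then apply Cauchy--Schwarz in the dual pair $(\|\cdot\|_{\vect{V}_t}, \|\cdot\|_{\vect{V}_t^{-1}})$. The guarantee $\|\widehat{\vect{\theta}}_t - \vect{\theta}^*\|_{\vect{V}_t} \leq \alpha_t$ on $\mathcal{E}_t$ then yields the two-sided envelope $0 \leq \wt{h}_{ti}(p) - u_{ti}(p) \leq 2 g_{ti}(p)$ valid for every $p \in \bbR$.

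Properties~\eqref{eq_ub_condition_1} and~\eqref{eq_ub_condition_2} would then follow quickly from this envelope. For~\eqref{eq_ub_condition_1}, Assumption~\ref{assumption_positive_sens} implies $u_{ti}(p) \leq u_{ti}(p') - L_0(p - p')$ for every $p' \leq p$; substituting $u_{ti}(p') \leq \wt{h}_{ti}(p')$ from the envelope and minimizing over $p' \leq p$ gives $u_{ti}(p) \leq h_{ti}(p)$. For~\eqref{eq_ub_condition_2}, choosing $p' = p$ inside the defining minimum of $h_{ti}$ yields $h_{ti}(p) \leq \wt{h}_{ti}(p)$, and combining with the envelope gives $h_{ti}(p) - u_{ti}(p) \leq 2 g_{ti}(p)$.

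The most involved step is establishing~\eqref{eq_ub_condition_3} together with differentiability of $h_{ti}$. I would begin by observing that $\wt{h}_{ti}$ is smooth and strictly convex: it is the sum of an affine function and the weighted norm $p \mapsto \alpha_t \|(\vect{x}_{ti}, -p\vect{x}_{ti})\|_{\vect{V}_t^{-1}}$ of an affine map, with the two coordinate blocks $(\vect{x}_{ti}, \vect{0})$ and $(\vect{0}, \vect{x}_{ti})$ linearly independent whenever $\vect{x}_{ti}\neq \vect{0}$. Hence $\wt{h}_{ti}'$ is continuous and strictly increasing, and I define $p_0 \in \bbR \cup \{+\infty\}$ as the unique point (if any) where $\wt{h}_{ti}'(p_0) = -L_0$. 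A first-order analysis of $\min_{p'\leq p}\{\wt{h}_{ti}(p') - L_0(p - p')\}$ using the monotonicity of $\wt{h}_{ti}'$ then yields $h_{ti}(p) = \wt{h}_{ti}(p)$ for $p \leq p_0$ and $h_{ti}(p) = \wt{h}_{ti}(p_0) - L_0(p - p_0)$ for $p > p_0$. Both pieces are differentiable with slope at most $-L_0$, and they glue at $p_0$ with matching slope $-L_0$, so $h_{ti}$ is differentiable everywhere on $\bbR$ with $h_{ti}'(p) \leq -L_0$.

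Property~\eqref{eq_ub_condition_4} would then follow from evaluating at $p = 0$: taking $p' = 0$ in the minimum gives $h_{ti}(0) \leq \wt{h}_{ti}(0) \leq u_{ti}(0) + 2 g_{ti}(0) \leq 1 + 2\alpha_t \|(\vect{x}_{ti}, \vect{0})\|_{\vect{V}_t^{-1}}$, where $u_{ti}(0) \leq \|\vect{\psi}^*\| \cdot \|\vect{x}_{ti}\| \leq 1$ by Assumption~\ref{assumption_stochastic_contexts}. Using $\vect{V}_t \succcurlyeq \vect{V}_{T_0}$, the minimum-eigenvalue bound $\lambda_{\min}(\vect{V}_{T_0}) \geq \lambda_{\min}^0$ from Lemma~\ref{lemma_initalization_min_eigenvalue}, and the $\wt{\mathcal{O}}(\sqrt{d})$ scaling of $\alpha_t$ from~\eqref{eqn_alpha_t}, the bonus term is at most $1$ for sufficiently large $T$, giving $h_{ti}(0) \leq 2$. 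The main obstacle is the case analysis around $p_0$: one must verify that the minimum in the definition of $h_{ti}$ is attained at the correct endpoint in each regime, that the two pieces glue $C^1$-smoothly at $p_0$, and that the degenerate case $p_0 = +\infty$ (where $\wt{h}_{ti}'$ never reaches $-L_0$) still produces a well-defined differentiable $h_{ti}$, which follows because the minimum is then achieved at $p' = p$ and collapses $h_{ti}$ to $\wt{h}_{ti}$.
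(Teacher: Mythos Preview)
Your proposal is correct and follows essentially the same route as the paper: the Cauchy--Schwarz envelope $u_{ti}(p)\le \wt h_{ti}(p)\le u_{ti}(p)+2g_{ti}(p)$, the choice $p'=p$ for~\eqref{eq_ub_condition_2}, the $L_0$-Lipschitz bound on $u_{ti}$ for~\eqref{eq_ub_condition_1}, the piecewise description of $h_{ti}$ via the unique $p_0$ with $\wt h_{ti}'(p_0)=-L_0$ for differentiability and~\eqref{eq_ub_condition_3}, and the eigenvalue bound on $\vect V_{T_0}$ for~\eqref{eq_ub_condition_4}. The only cosmetic differences are that the paper verifies strict convexity of $\wt h_{ti}$ by an explicit second-derivative computation rather than your norm-of-affine argument, and that you additionally note the degenerate case $p_0=+\infty$, which the paper leaves implicit.
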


We recall that Algorithm \ref{alg:seq_assortment} chooses the assortment $S_t$ and prices $\vect{p}_t$ by solving
\begin{align*}
    (S_t, \vect{p}_t) \in \argmax_{\substack{S \in \mathcal{S}_K \\ \vect{p} \in \bbR_{+}^{n}}} \wt{R}_t(S, \vect{p})
\end{align*}
where $\wt{R}_t(S, \vect{p})$ denotes the optimistic estimate of the revenue function as defined in \eqref{eqn_rev_ub}. Then, using the properties of the optimistic estimate of the utility functions $h_{ti}(p)$, we can show the following lemma.

\begin{restatable}{lemma}{lemmarevenueub}
Assume good event $\mathcal{E}_t$ holds for some $t \geq T_0$. Then,
\begin{align*}
    \text{(a)}& \quad R_t(S_t^*, \vect{p}_t^*) \leq \widetilde{R}_t(S_t, \vect{p}_t), \quad \text{and}\\
    \text{(b)}& \quad \widetilde{R}_t(S_t, \vect{p}_t) - R_t(S_t, \vect{p}_t) \leq 4 P \cdot \sum_{i \in S_t} q_{ti}(\vect{\theta}^*) g_{ti}(p_{ti}).
\end{align*}
\label{lemma_revenue_ub}
\end{restatable}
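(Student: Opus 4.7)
The plan is to prove (a) and (b) separately, exploiting the three key properties of $h_{ti}$ established in Lemma~\ref{lemma_utility_ub}: the pointwise upper bound $h_{ti}(p) \geq u_{ti}(p)$, strict monotonicity with $h_{ti}'(p) \leq -L_0$, and the tightness estimate $h_{ti}(p) - u_{ti}(p) \leq 2 g_{ti}(p)$. Throughout, I also use that the optimum prices under both utility families lie in $[0, P]$ by Lemma~\ref{lemma_opt_prices}.

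For part (a), I will construct an auxiliary price vector $\vect{p}'$ on the true optimal assortment $S_t^*$ that witnesses the inequality. Since $h_{ti}$ is continuous, strictly decreasing with slope at most $-L_0$ (so $h_{ti}(p) \to -\infty$ as $p \to \infty$), and satisfies $h_{ti}(p_{ti}^*) \geq u_{ti}(p_{ti}^*)$, the intermediate value theorem produces a unique $p'_i \geq p_{ti}^*$ with $h_{ti}(p'_i) = u_{ti}(p_{ti}^*)$ for every $i \in S_t^*$. Plugging $\vect{p}'$ into $\wt{R}_t(S_t^*, \cdot)$ then makes the exponential terms coincide with those appearing in $R_t(S_t^*, \vect{p}_t^*)$, while the inequalities $p'_i \geq p_{ti}^*$ in the numerators yield $\wt{R}_t(S_t^*, \vect{p}') \geq R_t(S_t^*, \vect{p}_t^*)$. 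Combining with the optimality $\wt{R}_t(S_t, \vect{p}_t) \geq \wt{R}_t(S_t^*, \vect{p}')$ completes the argument.

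For part (b), I will apply the mean value theorem to $F(\vect{v}) := \frac{\sum_{i \in S_t} p_{ti} e^{v_i}}{1 + \sum_{j \in S_t} e^{v_j}}$ along the segment $\vect{v}(s) := (1-s)\vect{u}_t + s\vect{h}_t$ connecting the coordinate vectors $\vect{u}_t = (u_{ti}(p_{ti}))_{i \in S_t}$ and $\vect{h}_t = (h_{ti}(p_{ti}))_{i \in S_t}$. A direct calculation gives $\partial_{v_i} F(\vect{v}) = q_i(\vect{v})(p_{ti} - F(\vect{v}))$, so
\begin{equation*}
\wt{R}_t(S_t, \vect{p}_t) - R_t(S_t, \vect{p}_t) = \sum_{i \in S_t} \bigl(h_{ti}(p_{ti}) - u_{ti}(p_{ti})\bigr) \int_0^1 q_i(\vect{v}(s))\,(p_{ti} - F(\vect{v}(s)))\, ds.
\end{equation*}
I then bound $|p_{ti} - F(\vect{v}(s))| \leq P$, invoke $h_{ti}(p_{ti}) - u_{ti}(p_{ti}) \leq 2 g_{ti}(p_{ti})$, and transfer the intermediate MNL weights to those at $\vect{\theta}^*$ via
\begin{equation*}
\frac{q_i(\vect{v}(s))}{q_{ti}(\vect{\theta}^*)} \leq e^{v_i(s) - u_{ti}(p_{ti})} \leq e^{2 g_{ti}(p_{ti})},
\end{equation*}
where the first inequality uses $v_j(s) \geq u_{tj}(p_{tj})$ to drop the denominator ratio. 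Capping this right-hand side at $2$ via the uniform bound $g_{ti}(p) \leq (\log 2)/2$ on $[0, P]$ then delivers the advertised constant $4P$.

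The main obstacle is the last step in part (b): verifying that $g_{ti}(p)$ is indeed uniformly controlled by a small enough constant on the relevant price range, so that the intermediate weights $q_i(\vect{v}(s))$ stay within a constant factor of $q_{ti}(\vect{\theta}^*)$. This is where the quantitative balance between the initialization length $T_0$ from~\eqref{eqn_T0} and the confidence radius $\alpha_t$ from~\eqref{eqn_alpha_t} does real work---specifically, the lower bound $\lambda_{\min}(\vect{V}_t) \geq \lambda_{\min}^0 = \Theta(d\widebar{P}^2 \log^3 T / \sigma_0^2)$ must dominate $\alpha_t^2 \widebar{P}^2$ by enough to force $g_{ti}(p) \leq (\log 2)/2$ on $[0, P]$; the rest of the calculation is routine manipulation.
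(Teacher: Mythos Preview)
Your proposal is correct, and part (b) is essentially the paper's argument (the paper uses the multivariate mean value theorem to produce a single intermediate point $\vect{z}_t$ rather than integrating along the segment, but the gradient formula, the bound $|p_{ti} - F| \leq P$, the tightness $h_{ti} - u_{ti} \leq 2 g_{ti}$, and the ratio control $q_t(i|\vect{z}_t)/q_{ti}(\vect{\theta}^*) \leq 2$ via $g_{ti}(p_{ti}) \leq (\log 2)/2$ are all the same).

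For part (a), however, you take a genuinely different and more elementary route. The paper argues by induction over subsets $A \subseteq [N]$: it defines hybrid revenue functions $R^A(S,\vect{p})$ that use $h_{ti}$ for $i \in A$ and $u_{ti}$ otherwise, lets $(S^A, \vect{p}^A)$ be the corresponding optimizer, and shows $R^{A \cup \{j\}}(S^A, \vect{p}^A) \geq R^A(S^A, \vect{p}^A)$ for each $j$. The key ingredient there is the first-order optimality condition $p_j^A \geq R^A(S^A, \vect{p}^A)$ (from Proposition~\ref{prop_price_theta}), which feeds a monotone fraction manipulation. Your ``price-stretching'' construction---choosing $p'_i \geq p_{ti}^*$ with $h_{ti}(p'_i) = u_{ti}(p_{ti}^*)$ so that the exponential weights in $\wt{R}_t(S_t^*, \vect{p}')$ exactly match those in $R_t(S_t^*, \vect{p}_t^*)$ while the numerator prices only increase---bypasses all of this: it needs only continuity, strict monotonicity, and the pointwise bound $h_{ti} \geq u_{ti}$, and it works in one shot on the fixed optimal assortment $S_t^*$ rather than tracking a chain of intermediate optimizers. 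The paper's approach, on the other hand, isolates a structural property (swapping one utility at a time never hurts at the current optimum) that could be reused elsewhere. Both are valid; yours is the cleaner proof of the stated lemma.
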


Now, we break the regret $\mathcal{R}_T$ into the initialization phase and the learning phase:
\begin{align*}
    \mathcal{R}_T &= \bbE \left[ \sum_{t=1}^{T_0-1} \left( R_t(S_t^*, \vect{p}_t^*) - R_t(S_t, \vect{p}_t) \right)\right] + \bbE \left[ \sum_{t=T_0}^{T} \left( R_t(S_t^*, \vect{p}_t^*) - R_t(S_t, \vect{p}_t) \right)\right] \\
    &\leq P T_0 + \bbE \left[ \sum_{t=T_0}^{T} \left( R_t(S_t^*, \vect{p}_t^*) - R_t(S_t, \vect{p}_t) \right)\right] \\
    &\leq P T_0 + \bbE \left[ \sum_{t=T_0}^{T} \left( \widetilde{R}_t(S_t, \vect{p}_t) - R_t(S_t, \vect{p}_t) \right) \right]
\end{align*}
where the first inequality follows from $R_t(S_t^*, \vect{p}_t^*) \leq P$ and the second inequality follows from property (a) in Lemma~\ref{lemma_revenue_ub}. Now, we decompose the remaining expectation term into two parts where the high probability event $\mathcal{E}_0$ holds and it does not.
\begin{align*}
    \mathcal{R}_T &\leq P T_0 + \bbE \left[ \sum_{t=T_0}^{T} \left( \widetilde{R}_t(S_t, \vect{p}_t) - R_t(S_t, \vect{p}_t) \right) \1(\mathcal{E}_0) \right] + \bbE \left[ \sum_{t=T_0}^{T} \left( \widetilde{R}_t(S_t, \vect{p}_t) - R_t(S_t, \vect{p}_t) \right) \1(\neg \mathcal{E}_0) \right] \\
    &\leq P T_0 + \sum_{t=T_0}^{T} \bbE \left[ \left( \widetilde{R}_t(S_t, \vect{p}_t) - R_t(S_t, \vect{p}_t) \right) \1(\mathcal{E}_0) \right] + \mathcal{O}(P).
\end{align*}
where the last inequality uses $\widetilde{R}_t(S_t, \vect{p}_t) \leq P$ and $\Pr(\neg \mathcal{E}_0) \leq \mathcal{O}(T^{-1})$. For each expectation term in the remaining summation, we can split it into two parts where the high probability event $\mathcal{E}_t$ holds and it does not:
\begin{align*}
    \bbE &\left[ \left( \widetilde{R}_t(S_t, \vect{p}_t) - R_t(S_t, \vect{p}_t) \right) \1(\mathcal{E}_0) \right] \\
    &= \bbE \left[ \left( \widetilde{R}_t(S_t, \vect{p}_t) - R_t(S_t, \vect{p}_t) \right) \1(\mathcal{E}_0) \1(\mathcal{E}_t) \right] + \bbE \left[ \left( \widetilde{R}_t(S_t, \vect{p}_t) - R_t(S_t, \vect{p}_t) \right) \1(\mathcal{E}_0) \1(\neg \mathcal{E}_t) \right]\\
    &\leq 4 P \alpha_t \sum_{i \in S_t} q_{ti}(\vect{\theta}^*) \| \wt{\vect{x}}_{ti} \|_{\vect{V}_{t}^{-1}} + \mathcal{O}(P \cdot t^{-2}).
\end{align*}
where the last inequality follows from property (b) in Lemma~\ref{lemma_revenue_ub} as well as $\Pr(\neg \mathcal{E}_t) \leq \mathcal{O}(t^{-2})$ and $\widetilde{R}_t(S_t, \vect{p}_t) \leq P$. As a result,
\begin{align*}
    \mathcal{R}_T &\leq P T_0 + 4 P \sum_{t=T_0}^{T} \alpha_t \sum_{i \in S_t} q_{ti}(\vect{\theta}^*) \| \wt{\vect{x}}_{ti} \|_{\vect{V}_{t}^{-1}} + \sum_{t=1}^{T} \mathcal{O}(P \cdot t^{-2}) +\mathcal{O}(P)\\
    &\leq P T_0 + 4 P \sum_{t=T_0}^{T} \alpha_t \sum_{i \in S_t} q_{ti}(\vect{\theta}^*) \| \wt{\vect{x}}_{ti} \|_{\vect{V}_{t}^{-1}} +\mathcal{O}(P)
\end{align*}
 Applying Cauchy-Schwarz and Jensen's inequalities in the second term, it follows that
\begin{align*}
\mathcal{R}_T &\leq P T_0 + 4 P \alpha_T \sqrt{ T  \sum_{t=T_0}^{T} \left( \sum_{i \in S_t} q_{ti}(\vect{\theta}^*) \| \wt{\vect{x}}_{ti} \|_{\vect{V}_{t}^{-1}} \right)^2 } + \mathcal{O}(P)\\
&\leq P T_0 + 4 P \alpha_T \sqrt{ T  \sum_{t=T_0}^{T} \sum_{i \in S_t} q_{ti}(\vect{\theta}^*) \| \wt{\vect{x}}_{ti} \|_{\vect{V}_{t}^{-1}}^2 } + \mathcal{O}(P).
\end{align*}

Applying Lemma~\ref{lemma_cumulative_max_uncertainty} and Lemma~\ref{lemma_qti_ratio}, we obtain 
\begin{align*}
    \mathcal{R}_T \leq P T_0 + 29 P \alpha_T \sqrt{ d K T \log ( T / d ) } + \mathcal{O}(P).
\end{align*}

\subsection{Proofs for Technical Lemmas}\label{sect:proof_technical_lemmas}

\lemmaoptprices*

\begin{proof}

Given the conditions on $h_{ti}(p)$, we have $h_{ti}(p) \leq 1 + \mu - L_0 p$ for all $p \geq 0$. Therefore, for any $B \geq 0$, we have
\begin{align*}
    v_{ti}(B) &= \max_{p \in \bbR} \left\{ - \exp\{h_{ti}(p)\} / h_{ti}'(p) : p + 1 / h_{ti}'(p) = B \right\} \\
    &\leq \max_{p \in \bbR} \left\{ \frac{\exp\{h_{ti}(B - 1 / h_{ti}'(p))\}}{L_0} \right\} \\
    &\leq \frac{\exp\{h_{ti}(B + 1 / L_0)\}}{L_0}\\
    &\leq \frac{1}{L_0} e^{\mu - L_0 B}.
\end{align*}

The first equality is the definition of $v_{ti}(B)$ and the second inequality uses the condition $h_{ti}'(p) \leq -L_0$ for all $p$, and the last inequality uses the result $h_{ti}(p) \leq 1 + \mu - L_0 p$ for all $p \geq 0$.

As a result, for any $S \in \mathcal{S}_K$, we have $\sum_{i \in S} v_{ti}(B) \leq \frac{K}{L_0} e^{\mu - L_0 B}$ for all $B \geq 0$.

Now, we let $B_u$ be the unique solution of the fixed point equation
\begin{align}
    B = \frac{K}{L_0} e^{\mu - L_0 B}.
\label{eq_fixed_point_ub}
\end{align}

Since the right-hand sides of \eqref{eq_fixed_point_ub} and \eqref{eq_fixed_point_assortment} are both positive for all $B \in \bbR$, the fixed points $B_u$ and $B_t$ are both positive. Furthermore, since the right-hand side of \eqref{eq_fixed_point_ub} is an upper bound for the right-hand side of \eqref{eq_fixed_point_assortment} for all $B \geq 0$, we must have $B_t \leq B_u$. 

In \eqref{eq_fixed_point_ub}, the left-hand side is increasing and the right-hand side is decreasing in $B$. Additionally, for $B = e^\mu (0.6 + \log(K)) / L_0$, the left-hand side of \eqref{eq_fixed_point_ub} is greater than the right-hand side. Hence, the fixed point satisfies $0 \leq B_t \leq B_u \leq P_0 = e^\mu (0.6 + \log(K)) / L_0$.

Furthermore, the optimum prices satisfy $p_{ti}^* + 1 / h_{ti}'(p_{ti}^*) = B_t$. Hence, $0 \leq p_{ti}^* + 1 / h_{ti}'(p_{ti}^*) \leq B_u$. Using that $h_{ti}'(p) \leq - L_0$, we have $0 \leq p_{ti}^* \leq P_0 + 1/L_0$.

\end{proof}

\lemmalambdamin*

\begin{proof}
    Let $\vect{\Sigma} = \bbE[\vect{x}_{ti} \vect{x}_{ti}^\top]$ and $\widetilde{\vect{\Sigma}} = \bbE[\widetilde{\vect{x}}_{ti} \widetilde{\vect{x}}_{ti}^\top]$. Then, noting that $p_{ti}$ is uniformly and independently distributed over $[1,2]$ for all $t \leq T_0$, we can write
    \begin{align*}
        \widetilde{\vect{\Sigma}} = \begin{bmatrix}
            \vect{\Sigma} & -\frac{3}{2} \vect{\Sigma}\\
            -\frac{3}{2} \vect{\Sigma} & \frac{7}{3} \vect{\Sigma}
        \end{bmatrix}.
    \end{align*}
    Then, using Schur's formula, each eigenvalue $\widetilde{\lambda}$ of $\widetilde{\vect{\Sigma}}$ are given by solutions of the equation
    \begin{align*}
        0 &= \det(\widetilde{\vect{\Sigma}} - \widetilde{\lambda} \vect{I})\\
        &= \det(\vect{\Sigma} - \widetilde{\lambda} \vect{I}) \det \left(\frac{7}{3} \vect{\Sigma} - \widetilde{\lambda} \vect{I} - \frac{9}{4} \vect{\Sigma} (\vect{\Sigma} - \widetilde{\lambda} \vect{I})^{-1} \vect{\Sigma} \right).
    \end{align*}

    Since the inverse of the matrix $\vect{\Sigma} - \widetilde{\lambda} \vect{I}$ appears on the right-hand side, we must have $\det(\vect{\Sigma} - \widetilde{\lambda} \vect{I}) \neq 0$. Hence, all eigenvalues must satisfy 
    \begin{align*}
        \det \left(\frac{7}{3} \vect{\Sigma} - \widetilde{\lambda} \vect{I} - \frac{9}{4} \vect{\Sigma} (\vect{\Sigma} - \widetilde{\lambda} \vect{I})^{-1} \vect{\Sigma} \right) = 0.
    \end{align*}

    Letting $\vect{\Sigma} = \vect{V} \vect{\Lambda} \vect{V}^\top$ be the eigen-decomposition of $\vect{\Sigma}$ with $\{\lambda_j\}_{j = 1}^{d}$ denoting the eigenvalues. Then, we can write
    \begin{align*}
       0 &= \det \left(\frac{7}{3} \vect{V} \vect{\Lambda} \vect{V}^\top - \widetilde{\lambda} \vect{I} - \frac{9}{4} \vect{V} \vect{\Lambda} \vect{V}^\top (\vect{V} \vect{\Lambda} \vect{V}^\top - \widetilde{\lambda} \vect{I})^{-1} \vect{V} \vect{\Lambda} \vect{V}^\top \right) \\
       &= \det (\vect{V})^2\det \left(\frac{7}{3}\vect{\Lambda} - \widetilde{\lambda} \vect{I} - \frac{9}{4} \vect{\Lambda} (\vect{\Lambda} - \widetilde{\lambda} \vect{I})^{-1} \vect{\Lambda} \right)\\
       &= \prod_{j = 1}^{d} \left(\frac{7}{3} \lambda_j - \widetilde{\lambda} - \frac{9}{4} \frac{\lambda_j^2}{\lambda_j - \widetilde{\lambda}} \right).
    \end{align*}
     Consequently, the eigenvalues of $\widetilde{\vect{\Sigma}}$ are given by
    \begin{align*}
        \widetilde{\lambda}_{j,1} = (20 + 2\sqrt{97}) \lambda_j \text{ and } \widetilde{\lambda}_{j,2} =(20 - 2\sqrt{97}) \lambda_j, \; \forall j \in [d].
    \end{align*}
     Since $\lambda_j \geq \sigma_0$ for all $j$  by Assumption \ref{assumption_stochastic_contexts}, $\lambda_{\mathrm{min}}(\widetilde{\vect{\Sigma}}) \geq C \sigma_0$ for some positive, universal constant $C$. Then, using Proposition 1 from \cite{Li_Lu_Zhou_2017}, we establish that there exist some positive, universal constants $C_1$ and $C_2$ such that if the length of random initialization satisfies
    \begin{equation*}
        T_0 \geq \left( \frac{C_2 \sqrt{d} + C_3 \sqrt{\log T}}{\sigma_0} \right)^2 + \frac{2 B K}{\sigma_0},
    \end{equation*}
    then $\lambda_{\mathrm{min}}(\vect{V}_{T_0}) \geq B$ with probability at least $1 - \frac{1}{T}$. Lastly, we set $B = \lambda_{\mathrm{min}}^0$ and observe that 
    \begin{align*}
        \lambda_{\mathrm{min}}^0 = C_1 \left( \frac{\widebar{P} d \log(T)}{\sigma_0} \right)^2 \geq C_4 \left( \frac{C_1 \sqrt{d} + C_2 \sqrt{\log T}}{\sigma_0} \right)^2
    \end{align*}
    for some universal constant $C_4 > 0$ since $\widebar{P} \geq 1$.

\end{proof}

\lemmahineq*

\begin{proof}

We start with some definitions that will be useful in the following proof. 

First, we let $\vect{\widetilde{x}}_{t 0} = \vect{0}_{2d}$ and $\widebar{S}_t = S_t \cup \{0\}$ denote the extended assortment that includes the null item. Then, we can write 
\begin{align*}
    \vect{H}_t(\vect{\theta}) = \sum_{i \in \widebar{S}_t} q_{t i}(\vect{\theta}) \vect{\widetilde{x}}_{t i} \vect{\widetilde{x}}_{t i}^\top - \sum_{i \in \widebar{S}_t} \sum_{j \in \widebar{S}_t} q_{t i}(\vect{\theta}) q_{t j}(\vect{\theta}) \vect{\widetilde{x}}_{t i} \vect{\widetilde{x}}_{t j}^\top.
\end{align*}
    
Then, we let $\vect{H}_1 = \vect{H}_t(\vect{\theta}_1)$ and $\vect{H}_2 = \vect{H}_t(\vect{\theta}_2)$ denote the matrices of interest.

Define the differences in the probabilities as $\delta_{t i} = q_{t i}(\vect{\theta}_1) - q_{t i}(\vect{\theta}_2)$.

Define the expected value of the context selection as $\widebar{\vect{x}}_1 = \sum_{i \in \widebar{S}_t} q_{t i}(\vect{\theta}_1) \vect{\widetilde{x}}_{t i}$ and $\widebar{\vect{x}}_2 = \sum_{i \in \widebar{S}_t} q_{t i}(\vect{\theta}_2) \vect{\widetilde{x}}_{t i}$.

Lastly, we define the mean-centered contexts as $\vect{w}_{ti} = \wt{\vect{x}}_{ti} - \widebar{\vect{x}}_1$ and $\wt{\vect{w}}_{ti} = \wt{\vect{x}}_{ti} - \widebar{\vect{x}}_2$.

Using these definitions and noting that $\sum_{i \in \widebar{S}_t} q_{t i}(\vect{\theta}_1) \vect{w}_{ti} = \vect{0}$ and $\sum_{i \in \widebar{S}_t} q_{t i}(\vect{\theta}_2) \wt{\vect{w}}_{ti} = \vect{0}$, we can write
\begin{align*}
    \vect{H}_1 &= \sum_{i \in \widebar{S}_t} q_{t i}(\vect{\theta}_1) \vect{w}_{ti} \vect{w}_{ti}^\top\\
    \vect{H}_2 &= \sum_{i \in \widebar{S}_t} q_{t i}(\vect{\theta}_2) \wt{\vect{w}}_{ti} \wt{\vect{w}}_{ti}^\top.
\end{align*}

Our initial goal is to show $\frac{1}{2} \wt{\vect{H}} \preccurlyeq \vect{H}_2 \preccurlyeq 2 \wt{\vect{H}}$ for an intermediate matrix defined as
\begin{align*}
    \wt{\vect{H}} = \sum_{i \in \widebar{S}_t} q_{t i}(\vect{\theta}_1) \wt{\vect{w}}_{ti} \wt{\vect{w}}_{ti}^\top.
\end{align*}

To achieve this goal, it is sufficient to show that $- \frac{1}{2} \wt{\vect{H}} \preccurlyeq \wt{\vect{H}} - \vect{H}_2 \preccurlyeq \frac{1}{2} \wt{\vect{H}}$. We notice that this difference can be written as
\begin{align*}
    \wt{\vect{H}} - \vect{H}_2 = \sum_{i \in \widebar{S}_t} \delta_{t i} \wt{\vect{w}}_{ti} \wt{\vect{w}}_{ti}^\top.
\end{align*}

To prove that this inequality holds, it is sufficient to show that $|\delta_{ti}| \leq \frac{1}{2} q_{t i}(\vect{\theta}_1)$. Using Lemma \ref{lemma_qti_ratio}, we can show that
\begin{align*}
    \frac{\delta_{ti}}{q_{t i}(\vect{\theta}_1)} = \frac{q_{t i}(\vect{\theta}_1) - q_{t i}(\vect{\theta}_2)}{q_{t i}(\vect{\theta}_1)} = 1 - \frac{q_{t i}(\vect{\theta}_2)}{q_{t i}(\vect{\theta}_1)} \leq 1 - \frac{1}{\sqrt{2}} < \frac{1}{2}
\end{align*}
and similarly
\begin{align*}
    \frac{-\delta_{ti}}{q_{t i}(\vect{\theta}_1)} = \frac{q_{t i}(\vect{\theta}_2) - q_{t i}(\vect{\theta}_1)}{q_{t i}(\vect{\theta}_1)} = \frac{q_{t i}(\vect{\theta}_2)}{q_{t i}(\vect{\theta}_1)} - 1 \leq \sqrt{2} - 1 < \frac{1}{2}.
\end{align*}

With this, we showed that $\frac{1}{2} \wt{\vect{H}} \preccurlyeq \vect{H}_2 \preccurlyeq 2 \wt{\vect{H}}$. To show the final intended result, the next step is to show $\frac{1}{2} \vect{H}_1 \preccurlyeq \wt{\vect{H}} \preccurlyeq 2 \vect{H}_1$. Similar to the previous part, it is sufficient to show that $- \frac{1}{2} \vect{H}_1 \preccurlyeq \wt{\vect{H}} - \vect{H}_1 \preccurlyeq \frac{1}{2} \vect{H}_1$. We can write this difference as
\begin{align*}
\wt{\vect{H}} - \vect{H}_1 &= \sum_{i \in \widebar{S}_t} q_{t i}(\vect{\theta}_1) \left[  \wt{\vect{w}}_{ti} \wt{\vect{w}}_{ti}^\top - \vect{w}_{ti} \vect{w}_{ti}^\top \right]\\
&= \sum_{i \in \widebar{S}_t} q_{t i}(\vect{\theta}_1) \left[ (\vect{w}_{ti} + \widebar{\vect{x}}_1 - \widebar{\vect{x}}_2) (\vect{w}_{ti} + \widebar{\vect{x}}_1 - \widebar{\vect{x}}_2)^\top - \vect{w}_{ti} \vect{w}_{ti}^\top \right]\\
&= \sum_{i \in \widebar{S}_t} q_{t i}(\vect{\theta}_1) \left[ (\widebar{\vect{x}}_1 - \widebar{\vect{x}}_2) (\widebar{\vect{x}}_1 - \widebar{\vect{x}}_2)^\top + (\widebar{\vect{x}}_1 - \widebar{\vect{x}}_2) \vect{w}_{ti}^\top + \vect{w}_{ti} (\widebar{\vect{x}}_1 - \widebar{\vect{x}}_2)^\top \right]\\
&= (\widebar{\vect{x}}_2 - \widebar{\vect{x}}_1) (\widebar{\vect{x}}_2 - \widebar{\vect{x}}_1)^\top.
\end{align*}

Now, we note that $\widebar{\vect{x}}_2 - \widebar{\vect{x}}_1 = - \sum_{i \in \widebar{S}_t} \delta_{ti} \vect{\widetilde{x}}_{t i}$. On the other hand, we also have
\begin{align*}
    \sum_{i \in \widebar{S}_t} q_{t i}(\vect{\theta}_2) \vect{w}_{ti} &= \sum_{i \in \widebar{S}_t} q_{t i}(\vect{\theta}_2) ( \wt{\vect{x}}_{ti} - \widebar{\vect{x}}_1 )\\
    &=\sum_{i \in \widebar{S}_t} q_{t i}(\vect{\theta}_2) \wt{\vect{x}}_{ti} - \sum_{i \in \widebar{S}_t} q_{t i}(\vect{\theta}_1) \vect{\widetilde{x}}_{t i}\\
    &=  - \sum_{i \in \widebar{S}_t} \delta_{ti} \vect{\widetilde{x}}_{t i}.
\end{align*}

Hence, we can write $\widebar{\vect{x}}_2 - \widebar{\vect{x}}_1 = \sum_{i \in \widebar{S}_t} q_{t i}(\vect{\theta}_2) \vect{w}_{ti} = \sum_{i \in \widebar{S}_t} q_{t i}(\vect{\theta}_2) \vect{w}_{ti} - \sum_{i \in \widebar{S}_t} q_{t i}(\vect{\theta}_1) \vect{w}_{ti} = - \sum_{i \in \widebar{S}_t} \delta_{ti} \vect{w}_{ti}$. Putting our results together, we have
\begin{align*}
    \wt{\vect{H}} - \vect{H}_1 &= \left( \sum_{i \in \widebar{S}_t} \delta_{ti} \vect{w}_{ti} \right) \left( \sum_{i \in \widebar{S}_t} \delta_{ti} \vect{w}_{ti} \right)^\top\\
     &=  \sum_{i \in \widebar{S}_t} \sum_{j \in \widebar{S}_t} \delta_{ti} \delta_{tj} \vect{w}_{ti} \vect{w}_{tj}^\top\\
     &= \frac{1}{2} \sum_{i \in \widebar{S}_t} \sum_{j \in \widebar{S}_t} \delta_{ti} \delta_{tj} (\vect{w}_{ti} \vect{w}_{tj}^\top + \vect{w}_{tj} \vect{w}_{ti}^\top)
\end{align*}

Using the inequality $- 2 \vect{w}_{ti} \vect{w}_{ti}^\top \preccurlyeq (\vect{w}_{ti} \vect{w}_{tj}^\top + \vect{w}_{tj} \vect{w}_{ti}^\top) \preccurlyeq 2 \vect{w}_{ti} \vect{w}_{ti}^\top$ and the fact that $\sum_{j \in \widebar{S}_t} |\delta_{tj}| \leq 2$, we can show
\begin{align*}
    - 2 \sum_{i \in \widebar{S}_t} |\delta_{ti}| \vect{w}_{ti} \vect{w}_{ti}^\top \preccurlyeq \wt{\vect{H}} - \vect{H}_1 \preccurlyeq 2 \sum_{i \in \widebar{S}_t} |\delta_{ti}| \vect{w}_{ti} \vect{w}_{ti}^\top
\end{align*}

Now, we use our result $|\delta_{ti}| \leq \frac{1}{2} q_{t i}(\vect{\theta}_1)$ to conclude $- \frac{1}{2} \vect{H}_1 \preccurlyeq \wt{\vect{H}} - \vect{H}_1 \preccurlyeq \frac{1}{2} \vect{H}_1$. This inequality implies $\frac{1}{2} \vect{H}_1 \preccurlyeq \wt{\vect{H}} \preccurlyeq 2 \vect{H}_1$. Combining with the previous result $\frac{1}{2} \wt{\vect{H}} \preccurlyeq \vect{H}_2 \preccurlyeq 2 \wt{\vect{H}}$, we show the final result.
    
\end{proof}

\lemmavub*

\begin{proof}

For any $t \geq T_0$, $\vect{V}_{t}$ is given by
\begin{align*}
    \vect{V}_{t} &= \frac{1}{K^2} \sum_{\tau = 1}^{T_0 - 1} \sum_{i \in S_\tau} \wt{\vect{x}}_{\tau i} \wt{\vect{x}}_{\tau i}^\top + \sum_{\tau = T_0}^{t - 1} \left[ \sum_{i \in S_\tau} q_{\tau i}(\vect{\wh{\theta}}_\tau) \wt{\vect{x}}_{\tau i} \wt{\vect{x}}_{\tau i}^\top - \sum_{i \in S_t} \sum_{j \in S_\tau} q_{\tau i}(\vect{\wh{\theta}}_\tau) q_{\tau j}(\vect{\wh{\theta}}_\tau) \wt{\vect{x}}_{\tau i} \wt{\vect{x}}_{\tau j}^\top \right] \\
    &= \frac{1}{K^2} \sum_{\tau = 1}^{T_0 - 1} \sum_{i \in S_\tau} \wt{\vect{x}}_{\tau i} \wt{\vect{x}}_{\tau i}^\top + \sum_{\tau = T_0}^{t - 1} \vect{H}_\tau(\vect{\wh{\theta}}_\tau). 
\end{align*}

Now, we will upper bound these two terms separately. 

To upper bound the terms for $\tau \geq T_0$, we use Lemma \ref{lemma_H_ineq} which states that $\vect{H}_\tau(\vect{\theta}) \succcurlyeq \frac{1}{4} \vect{H}_\tau(\vect{\wh{\theta}}_\tau)$.

For $\tau < T_0$, we use
\begin{align*}
    \vect{H}_\tau(\vect{\theta}) &= \sum_{i \in S_{t}} q_{t i}(\vect{\theta}) \vect{\widetilde{x}}_{t i} \vect{\widetilde{x}}_{t i}^\top - \sum_{i \in S_{t}} \sum_{j \in S_{t}} q_{t i}(\vect{\theta}) q_{t j}(\vect{\theta}) \vect{\widetilde{x}}_{t i} \vect{\widetilde{x}}_{t j}^\top \\
    &= \sum_{i \in S_{t}} q_{t i}(\vect{\theta}) \vect{\widetilde{x}}_{t i} \vect{\widetilde{x}}_{t i}^\top - \frac{1}{2} \sum_{i \in S_{t}} \sum_{j \in S_{t}} q_{t i}(\vect{\theta}) q_{t j}(\vect{\theta}) (\vect{\widetilde{x}}_{t i} \vect{\widetilde{x}}_{t j}^\top + \vect{\widetilde{x}}_{t j} \vect{\widetilde{x}}_{t i}^\top) \\
    &\succcurlyeq \sum_{i \in S_{t}} q_{t i}(\vect{\theta}) \vect{\widetilde{x}}_{t i} \vect{\widetilde{x}}_{t i}^\top - \frac{1}{2} \sum_{i \in S_{t}} \sum_{j \in S_{t}} q_{t i}(\vect{\theta}) q_{t j}(\vect{\theta}) (\vect{\widetilde{x}}_{t i} \vect{\widetilde{x}}_{t i}^\top + \vect{\widetilde{x}}_{t j} \vect{\widetilde{x}}_{t j}^\top) \\
    &= \sum_{i \in S_{t}} q_{t i}(\vect{\theta}) \vect{\widetilde{x}}_{t i} \vect{\widetilde{x}}_{t i}^\top - \sum_{i \in S_{t}} \sum_{j \in S_{t}} q_{t i}(\vect{\theta}) q_{t j}(\vect{\theta}) \vect{\widetilde{x}}_{t i} \vect{\widetilde{x}}_{t i}^\top \\
    &= \sum_{i \in S_{t}} q_{t i}(\vect{\theta}) q_{t 0}(\vect{\theta}) \vect{\widetilde{x}}_{t i} \vect{\widetilde{x}}_{t i}^\top \\
    &\succcurlyeq \frac{\nu}{K^2} \sum_{i \in S_\tau} \wt{\vect{x}}_{\tau i} \wt{\vect{x}}_{\tau i}^\top 
\end{align*}
where we define the constant
\begin{align*}
    \nu &:= K^2 \cdot \min_{t < T_0} \inf_{\vect{\theta} \in \mathcal{B}_\gamma} q_{t i}(\vect{\theta}) q_{t 0}(\vect{\theta}) > 0.
\end{align*}

Next, we will show that $\nu > C_3'$ for some universal constant $C_3' > 0$. For any $\vect{\theta}$, let $\vect{\theta} = (\vect{\psi}, \vect{\phi})$. If $\vect{\theta} \in \mathcal{B}_{\gamma}$, we can show that $\|\vect{\psi} - \vect{\psi}^*\| \leq \gamma < 1$ and $\|\vect{\phi} - \vect{\phi}^*\| \leq \gamma < 1$, 
\begin{align*}
    |\langle \vect{\psi}, \vect{x}_{ti} \rangle| &\leq |\langle \vect{\psi}^*, \vect{x}_{ti} \rangle| + |\langle \vect{\psi} - \vect{\psi}^*, \vect{x}_{ti} \rangle| \leq 1 + \gamma < 2\quad \text{and}\\
    |\langle \vect{\phi}, \vect{x}_{ti} \rangle| &\leq |\langle \vect{\phi}^*, \vect{x}_{ti} \rangle| + |\langle \vect{\phi} - \vect{\phi}^*, \vect{x}_{ti} \rangle| \leq 1 + \gamma < 2.
\end{align*}

Note that for all $t < T_0$, we have $1 \leq p_{ti} \leq 2$ for all $i \in S_t$. Therefore, for any $\vect{\theta} \in \mathcal{B}_{\gamma}$ and $t < T_0$, we have
\begin{align*}
    q_{ti}(\vect{\theta}) q_{t0}(\vect{\theta}) &= \frac{\exp (\langle \vect{\psi}, \vect{x}_{ti} \rangle - \langle \vect{\phi}, \vect{x}_{ti} \rangle p_{ti})}{ \left( 1 + \sum_{j \in S_t} \exp (\langle \vect{\psi}, \vect{x}_{tj} \rangle - \langle \vect{\phi}, \vect{x}_{tj} \rangle p_{tj}) \right)^2 }\\
    &> \frac{e^{- 6}}{\left(1 + K e^6 \right)^2},
\end{align*}
showing that $\nu > C_3'$ for some constant $C_3' > 0$.

Letting $C_3 = \min \{C_3', 1/4\}$, we can show that
\begin{align*}
    C_3 \vect{V}_{t} &= \frac{C_3}{K^2} \sum_{\tau = 1}^{T_0 - 1} \sum_{i \in S_\tau} \wt{\vect{x}}_{\tau i} \wt{\vect{x}}_{\tau i}^\top + C_3 \sum_{\tau = T_0}^{t - 1} \vect{H}_\tau(\vect{\wh{\theta}}_\tau) \\
    &\preccurlyeq \sum_{\tau = 1}^{t - 1} \vect{H}_\tau(\vect{\theta}).
\end{align*}

\end{proof}

\lemmaconsistency*

\begin{proof}

Recall that the gradient of the negative log-likelihood is given by
\begin{align*}
    \nabla_{\vect{\theta}} \ell_t (\vect{\theta}) &= \sum_{\tau = 1}^{t-1} \sum_{i \in S_\tau} (q_{\tau i}( \vect{\theta} ) - y_{\tau i}) \wt{\vect{x}}_{\tau i}
\end{align*}
and we have $\nabla_{\vect{\theta}} \ell_t (\widehat{\vect{\theta}}_t) = 0$ by definition of $\widehat{\vect{\theta}}_t$.

We can write the expectation of $\nabla_{\vect{\theta}} \ell_t (\vect{\theta})$ over the user choices $y_{\tau i}$ as
\begin{align*}
    G_{t}(\vect{\theta}) := \bbE[\nabla_{\vect{\theta}} \ell_t (\vect{\theta})] = \sum_{\tau = 1}^{t-1} \sum_{i \in S_{\tau}} \left( q_{\tau i}(\vect{\theta} ) - q_{\tau i}( \vect{\theta}^* ) \right) \vect{\widetilde{x}}_{\tau i}.
\end{align*}
We can show that
\begin{align*}
    G_{t}(\vect{\theta^*}) = 0 \text{ and } G_{t}(\widehat{\vect{\theta}}_t) = \sum_{\tau = 1}^{t-1} \sum_{i \in S_{\tau}} \epsilon_{\tau i} \vect{\widetilde{x}}_{\tau i},
\end{align*}
where $\epsilon_{t i} = y_{ti} - q_{t i}(\vect{\theta}^*)$ are sub-Gaussian random variables with parameter $1$. Note that collections of variables $\{\epsilon_{ti}\}_{i \in S_t}$ are independent over $t$, but the variables within each collection are not independent.

    For any $\vect{\theta}_1, \vect{\theta}_2 \in \mathbb{R}^{2d}$ and any $\vect{z} \in \mathbb{R}^{2d}$, the mean value theorem implies that there exists some $\overbar{\vect{\theta}} = \lambda \vect{\theta}_1 + (1 - \lambda) \vect{\theta}_2$ with $0 < \lambda < 1$, such that 
    \begin{align*}
        \vect{z}^\top \left( G_t(\vect{\theta}_1) - G_t(\vect{\theta}_2) \right) = \vect{z}^\top \mathcal{H}_t(\overbar{\vect{\theta}}) (\vect{\theta}_1 - \vect{\theta}_2)
    \end{align*}
    where we defined 
    \begin{align*}
        \mathcal{H}_t(\vect{\theta}) &:= \nabla_{\vect{\theta}}G_t(\vect{\theta}) \\
        &= \sum_{\tau = 1}^{t-1} \sum_{i \in S_{\tau}} \vect{\widetilde{x}}_{\tau i} \nabla_{\vect{\theta}}q_{\tau i}(\vect{\theta}).
    \end{align*}
    
    Recalling the definition
    \begin{align*}
        \vect{H}_\tau(\vect{\theta}) &= \sum_{i \in S_{\tau}} \vect{\widetilde{x}}_{\tau i} \nabla_{\vect{\theta}}q_{\tau i}(\vect{\theta}) \\
         &= \sum_{i \in S_{\tau}} q_{\tau i}(\vect{\theta}) \vect{\widetilde{x}}_{\tau i} \vect{\widetilde{x}}_{\tau i}^\top - \sum_{i \in S_{\tau}} \sum_{j \in S_{\tau}} q_{\tau i}(\vect{\theta}) q_{\tau j}(\vect{\theta}) \vect{\widetilde{x}}_{\tau i} \vect{\widetilde{x}}_{\tau j}^\top,
    \end{align*}
    we also see that $\mathcal{H}_t(\vect{\theta}) = \sum_{\tau = 1}^{t-1} \vect{H}_\tau(\vect{\theta})$.

    Now, we're ready to complete the proof with strong induction. The base case is $t = T_0$ and we proceed with inductive steps for each $t \in \{T_0+1, T_0+2, \dots, T\}$.

    We start with proving the inductive steps. Assuming that $\|\widehat{\vect{\theta}}_t - \vect{\theta}^*\| \leq \gamma$ for all $T_0 \leq \tau < t$, we have $\mathcal{H}_t(\vect{\theta}) \succcurlyeq C_3 \vect{V}_t$ for any $\vect{\theta} \in \mathcal{B}_\gamma$ using Lemma~\ref{lemma_V_ub}. Therefore, we can write
    \begin{align*}
        (\vect{\theta}_1 - \vect{\theta}_2)^\top (G_t(\vect{\theta}_1) - G_t(\vect{\theta}_2)) \geq C_3 (\vect{\theta}_1 - \vect{\theta}_2)^\top  \vect{V}_t (\vect{\theta}_1 - \vect{\theta}_2) > 0
    \end{align*}
    for any $\vect{\theta}_1 \neq \vect{\theta}_2$ and therefore $G_t(\vect{\theta})$ is an injection from $\bbR^{2d}$ to $\bbR^{2d}$. This allows us to use Lemma~A of~\cite{Chen_Hu_Ying_1999} which implies that
    \begin{align*}
        \left \{ \vect{\theta} : \|G_t(\vect{\theta})\|_{\vect{V}_t^{-1}} \leq C_3 \gamma \sqrt{\lambda_{\mathrm{min}}(\vect{V}_t)} \right\} \subseteq \mathcal{B}_\gamma.
    \end{align*}

    In addition, Lemma~15 of \cite{Oh_Iyengar_2021} shows that the event
    \begin{align*}
        \mathcal{E}_G := \left\{ \|G_t(\widehat{\vect{\theta}}_t)\|_{\vect{V}_t^{-1}} \leq 4 \sqrt{4d + \log(1 / \delta)} \right\}
    \end{align*}
    holds with probability at least $1 - \delta$. Thus, $\|\widehat{\vect{\theta}}_t - \vect{\theta}^* \| \leq \gamma$ holds with probability at least $1 - \delta$ when $\lambda_{\mathrm{min}}(\vect{V}_t) \geq \lambda_{\mathrm{min}}(\vect{V}_{T_0}) \geq \frac{16}{C_3^2 \gamma^2} (4d + \log(1 / \delta))$. Since we have $\lambda_{\mathrm{min}}^0 = C_1 \left( \frac{\widebar{P} d \log(T)}{\sigma_0} \right)^2 \geq \frac{16}{C_3^2 \gamma^2} (4d + 2 \log(T))$ for some constant $C_1$, the minimum eigenvalue condition is satisfied when $\lambda_{\mathrm{min}}(\vect{V}_t) \geq \lambda_{\mathrm{min}}^0$. 

    For the base case $t = T_0$, we similarly have $\mathcal{H}_{T_0}(\vect{\theta}) \succcurlyeq C_3 \vect{V}_{T_0}$ for any $\vect{\theta} \in \mathcal{B}_\gamma$ by Lemma~\ref{lemma_V_ub}. Therefore, we can follow similar steps for $t = T_0$ to argue that $\|\widehat{\vect{\theta}}_{T_0} - \vect{\theta}^* \| \leq \gamma$ holds true with probability at least $1 - T^{-2}$ when $\lambda_{\mathrm{min}}(\vect{V}_{T_0}) \geq \frac{16}{C_3^2 \gamma^2} (4d + 2 \log(T))$. Similarly, the minimum eigenvalue condition is satisfied when $\lambda_{\mathrm{min}}(\vect{V}_t) \geq \lambda_{\mathrm{min}}^0$.
    
    Taking a union bound over the base case and the inductive steps of the proof, we complete the proof of the theorem.
    
\end{proof}

\lemmanormality*

\begin{proof}

Following the proof of Lemma~\ref{lemma_consistency}, we use $\mathcal{H}_t(\overbar{\vect{\theta}}) \succcurlyeq C_3 \vect{V}_t$ to obtain
\begin{align}
\|G(\widehat{\vect{\theta}}_t)\|_{\vect{V}_t^{-1}}^2 &= \|G(\widehat{\vect{\theta}}_t) - G(\vect{\theta}^*)\|_{\vect{V}_t^{-1}}^2\nonumber\\
&\geq (\widehat{\vect{\theta}}_t - \vect{\theta}^*)^\top \mathcal{H}_t(\overbar{\vect{\theta}}) 
 \vect{V}_t^{-1} \mathcal{H}_t(\overbar{\vect{\theta}}) (\widehat{\vect{\theta}}_t - \vect{\theta}^*)\nonumber\\
 &\geq C_3^2 \|\widehat{\vect{\theta}}_t - \vect{\theta}^*\|_{\vect{V}_t}^2
\label{eqn_g_theta_hat}
\end{align}
for any $\widehat{\vect{\theta}}_t \in \{ \vect{\theta} : \|\vect{\theta} - \vect{\theta}^*\| \leq \gamma\}$. 

The next step is to upper bound $\|G(\widehat{\vect{\theta}}_t)\|_{\vect{V}_t^{-1}}$. We first separate it into two terms that correspond to the initialization rounds and the remaining rounds respectively. That is, we write
\begin{align*}
    \|G(\widehat{\vect{\theta}}_t)\|_{\vect{V}_t^{-1}} &= \left\| G(\widehat{\vect{\theta}}_{T_0}) + G(\widehat{\vect{\theta}}_t) - G(\widehat{\vect{\theta}}_{T_0}) \right\|_{\vect{V}_t^{-1}} \\
    &\leq \left\| G(\widehat{\vect{\theta}}_{T_0}) \right\|_{\vect{V}_t^{-1}} + \left\|G(\widehat{\vect{\theta}}_t) - G(\widehat{\vect{\theta}}_{T_0}) \right\|_{\vect{V}_t^{-1}}\\
    &\leq \|G(\widehat{\vect{\theta}}_{T_0})\|_{\vect{V}_{T_0}^{-1}} + \left\|G(\widehat{\vect{\theta}}_t) - G(\widehat{\vect{\theta}}_{T_0}) \right\|_{\vect{V}_t^{-1}}
\end{align*}
where the last inequality follows from $\vect{V}_t \succcurlyeq \vect{V}_{T_0}$ for any $t \geq T_0$.

We upper bound the first term using Lemma \ref{lemma_G_T0_ub} which states that
\begin{align}
    \|G(\widehat{\vect{\theta}}_{T_0})\|_{\vect{V}_{T_0}^{-1}} \leq \frac{C_5}{\sigma_0} \log(T)
\label{eqn_init_grad}
\end{align}
with probability $1 - \mathcal{O}(T^{-2})$.
    
To upper bound the second term, we use an improved self-normalized bound for vector-valued martingales as given in Theorem \ref{theorem_martingale}. In using this result, we let $\vect{\epsilon}_t$ denote the random vector with entries $\epsilon_{t i} = y_{ti} - q_{ti}(\vect{\theta}^*)$ and we let $\wt{\vect{X}}_t \in R^{2d \times K}$ denote the matrix with columns $\vect{\widetilde{x}}_{t i}$. We note that we have $\|\vect{\epsilon}_t\|_1 \leq 2$ and 
\begin{align*}
    \wt{\vect{X}}_t \Sigma_t \wt{\vect{X}}_t^\top = H_t(\vect{\theta}^*) = \sum_{i \in S_{t}} q_{t i}(\vect{\theta}^*) \vect{\widetilde{x}}_{t i} \vect{\widetilde{x}}_{t i}^\top - \sum_{i \in S_{t}} \sum_{j \in S_{t}} q_{t i}(\vect{\theta}^*) q_{t j}(\vect{\theta}^*) \vect{\widetilde{x}}_{t i} \vect{\widetilde{x}}_{t j}^\top
\end{align*}
where $\Sigma_t$ is the covariance matrix $\bbE[\vect{\epsilon}_t \vect{\epsilon}_t^\top]$. As a result, Theorem \ref{theorem_martingale} shows that
    \begin{align*}
        \|G(\widehat{\vect{\theta}}_t) - G(\widehat{\vect{\theta}}_{T_0})\|_{\vect{V}_t^{-1}} \leq \frac{\sqrt{\lambda}}{4} + \frac{4}{\sqrt{\lambda}} \log \left( 
        \frac{\det (\vect{V}_t)^{1/2}}{\delta \lambda^{d}}\right) +  \frac{8}{\sqrt{\lambda}} d \log(2)
    \end{align*}
    with probability at least $1 - \delta$ for any $0 < \lambda < \lambda_{\mathrm{min}}(\vect{V}_{T_0})$. 
    
    Then we combine with Lemma~\ref{lemma_det_ub} to obtain
    \begin{align*}
        \|G(\widehat{\vect{\theta}}_t) - G(\widehat{\vect{\theta}}_{T_0})\|_{\vect{V}_t^{-1}} &\leq \frac{\sqrt{\lambda}}{4} + \frac{4d }{\sqrt{\lambda}} \log \left( \frac{t \widebar{P}^2}{d \lambda} \right) + \frac{4}{\sqrt{\lambda}} \log \left( 
        \frac{1}{\delta}\right) +  \frac{8}{\sqrt{\lambda}} d \log(2)\\
        &\leq \frac{\sqrt{\lambda}}{4} + \frac{4d }{\sqrt{\lambda}} \left(\log \left( \frac{t }{d } \right) + \log \left( 
        \frac{1}{\delta}\right) + 2 \log(2) \right)\\
        &\leq \frac{\sqrt{\lambda}}{4} + \frac{4d }{\sqrt{\lambda}} \log \left( 1 + \frac{2t}{d \delta} \right)
    \end{align*}
    for any $\lambda \geq \widebar{P}^2$. Accordingly, we set $\lambda = \max \left\{ \widebar{P}^2, 16 d \log \left(1 + \frac{2t}{d \delta} \right) \right\}$ to obtain
    \begin{align}
        \|G(\widehat{\vect{\theta}}_t) - G(\widehat{\vect{\theta}}_{T_0})\|_{\vect{V}_t^{-1}} \leq 2 \sqrt{d \log \left( 1 + \frac{2t}{d \delta} \right)}.
    \label{eqn_g_V_ub}
    \end{align}
    Now, we set $\delta = t^{-2}$ and obtain $\lambda = 16 d \log \left( 1 + \frac{2t}{d \delta} \right) \geq 16 d \log \left( 1 + \frac{2}{d} \right) \geq 1$. Lastly, we confirm that $1 < \lambda < \lambda_{\mathrm{min}}(\vect{V}_{T_0})$ is satisfied for our selection of $\lambda$. On the other hand, we can verify that $\lambda < \lambda_{\mathrm{min}}(\vect{V}_{T_0})$ is satisfied under good event $\mathcal{E}_0$ because $\lambda_{\mathrm{min}}(\vect{V}_{T_0}) \geq \lambda_{\mathrm{min}}^0 = C_1 \frac{d \widebar{P}^2 \log^3(T)}{\sigma_0^2} > \lambda$ for some constant $C_1$. 

    Combining \eqref{eqn_init_grad} and \eqref{eqn_g_V_ub} gives the stated result in the lemma.

\end{proof}

\lemmautilityub*

\begin{proof}

Recall the definition of the utility function
\begin{align*}
    u_{ti}(p) = \langle \vect{\psi}^*, \vect{x}_{ti} \rangle - \langle \vect{\phi}^*, \vect{x}_{ti} \rangle \cdot p.
\end{align*}
and recall the definition $\widetilde{\vect{x}}_{ti} = (\vect{x}_{ti}, -p\vect{x}_{ti})$ to write
\begin{align*}
|\langle \widehat{\vect{\theta}}_t, \widetilde{\vect{x}}_{ti} \rangle - \langle \vect{\theta}^*, \widetilde{\vect{x}}_{ti} \rangle| &= \left| \langle \vect{V}_{t}^{1/2} (\widehat{\vect{\theta}}_t - \vect{\theta}^*), \vect{V}_{t}^{-1/2} \widetilde{\vect{x}}_{ti} \rangle \right|\\
&\leq \| \vect{V}_{t}^{1/2} (\widehat{\vect{\theta}}_t - \vect{\theta}^*)\| \| \vect{V}_{t}^{-1/2} \widetilde{\vect{x}}_{ti}\|\\
&\leq  \|\widehat{\vect{\theta}}_t - \vect{\theta}^*\|_{\vect{V}_{t}} \| \widetilde{\vect{x}}_{ti}\|_{\vect{V}_{t}^{-1}}\\
&\leq  \|\widehat{\vect{\theta}}_t - \vect{\theta}^*\|_{\vect{V}_{t}} \|(\vect{x}_{ti}, -p\vect{x}_{ti})\|_{\vect{V}_{t}^{-1}}\\
&\leq \alpha_t \|(\vect{x}_{ti}, -p\vect{x}_{ti})\|_{\vect{V}_{t}^{-1}}.
\end{align*}
where $\|\widehat{\vect{\theta}}_t - \vect{\theta}^*\|_{\vect{V}_{t}} \leq \alpha_t$ follows from the assumption that $\mathcal{E}_t$ holds. Hence, we obtain
\begin{align*}
    \langle \widehat{\vect{\psi}}_t, \vect{x}_{ti} \rangle - \langle \widehat{\vect{\phi}}_t, \vect{x}_{ti} \rangle \cdot p - g_{ti}(p) \leq u_{ti}(p) \leq \langle \widehat{\vect{\psi}}_t, \vect{x}_{ti} \rangle - \langle \widehat{\vect{\phi}}_t, \vect{x}_{ti} \rangle \cdot p + g_{ti}(p),
\end{align*}
showing that $u_{ti}(p) \leq \wt{h}_{ti}(p) \leq u_{ti}(p) + 2 g_{ti}(p)$ for all $p \in \bbR$.

Since $u'_{ti}(p) \leq -L_0$ for all $p \in \bbR$, we also have $u_{ti}(p) \leq u_{ti}(p') - L_0 (p - p') \leq \wt{h}_{ti} (p') - L_0 (p - p')$ for any $p' \leq p$. Therefore, $u_{ti}(p) \leq h_{ti}(p)$ for all $p \in \bbR$ proving condition \eqref{eq_ub_condition_1}.

On the other hand, we have $h_{ti}(p) \leq \wt{h}_{ti} (p) \leq u_{ti}(p) + 2 g_{ti}(p)$ for all $p \in \bbR$ proving \eqref{eq_ub_condition_2}.

Furthermore, $h_{ti}(0) \leq \wt{h}_{ti} (0) \leq u_{ti}(0) + 2 g_{ti}(0) \leq 1 + \alpha_t \|(\vect{x}_{ti}, \vect{0})\|_{\vect{V}_{t}^{-1}} \leq 1 + \alpha_t / \sqrt{\lambda_{\mathrm{min}}^0} \leq 1 + \mathcal{O} \left(\frac{1}{\bar{P} \sqrt{\log T}} \right)$. As a result, $h_{ti}(0) \leq 2$ for sufficiently large $T$.

Next, we show that $h_{ti}(p)$ is a differentiable function and its derivative is at most $- L_0$ for all $p \in \bbR$. Notice that the function $\wt{h}_{ti} (p)$ can be written as $\wt{h}_{ti} (p) = y(p) + c \sqrt{z(p)}$ for a linear function $y : \bbR \to \bbR$ and a positive quadratic function $z : \bbR \to \bbR_{+}$ of the form $z(p) = a + b p + p^2$ satisfying $4a - b^2 > 0$. With this notation, the second derivative of $\wt{h}_{ti} (p)$ is given as
\begin{align*}
\wt{h}_{ti}'' (p) = \frac{4 a - b^2}{4 (a + p (b + p))^{3/2}} > 0.
\end{align*}

Therefore, $\wt{h}_{ti} (p)$ is smooth and strictly convex. Let $p_0$ be the unique value such that $\wt{h}_{ti}' (p_0) = - L_0$.

We let $p^\dagger$ denote the value of $p'$ that minimizes the function $\wt{h}_{ti} (p') - L_0 (p - p')$ over $(-\infty, p]$. As a result, we obtain $h_{ti}(p) = \wt{h}_{ti} (p^\dagger) - L_0 (p - p^\dagger)$. Using that the function $\wt{h}_{ti} (p') - L_0 (p - p')$ is convex, we can write
\begin{align*}
    p^\dagger =
    \begin{cases}
        p_0 \quad &\text{if } p_0 \leq p,\\
        p \quad &\text{if } p < p_0.
    \end{cases}
\end{align*}

Consequently, we obtain
\begin{align*}
    h_{ti}(p) =
    \begin{cases}
        \wt{h}_{ti} (p_0) - L_0 (p - p_0) \quad &\text{if } p \geq p_0,\\
        \wt{h}_{ti} (p) \quad &\text{if } p < p_0.
    \end{cases}
\end{align*}

The function $h_{ti}(p)$ is differentiable everywhere including $p = p_0$ since $\wt{h}_{ti}' (p_0) = -L_0$. Furthermore, $h_{ti}'(p) \leq -L_0$ for all $p \geq 0$ since $\wt{h}_{ti}' (p) \leq - L_0$ for $p < p_0$. Consequently, we prove property \eqref{eq_ub_condition_3}.

\end{proof}

\lemmarevenueub*

\begin{proof}

    \emph{Inequality (a): }Fix some $t$ and define revenue functions $R^A : 2^{[N]} \times \bbR^N_+ \to \bbR$ given by
    \begin{align*}
        R^{A}(S, \vect{p}) = \frac{\sum_{i \in S \setminus A} p_{i} \exp(u_{ti}(p_{i})) + \sum_{i \in S \cap A} p_{i} \exp(h_{ti}(p_{i}))}{1 + \sum_{i \in S \setminus A} \exp(u_{ti}(p_{i})) + \sum_{i \in S \cap A } \exp(h_{ti}(p_{i}))}
    \end{align*}
    for any $A \subseteq [N]$. Note that this definition leads to $R^{\emptyset}(S, \vect{p}) = R_t(S, \vect{p})$ and $R^{S}(S, \vect{p}) = \widetilde{R}_t(S, \vect{p})$. We also define 
    \begin{align*}
        (S^{A}, \vect{p}^{A}) = \argmax_{\substack{S \subseteq [N] : |S| \leq K \\ \vect{p} \in \bbR^N_+}} R^{A}(S, \vect{p}).
    \end{align*}
    which satisfies $(S^{\emptyset}, \vect{p}^{\emptyset}) = (S^*_t, \vect{p}^*_t)$ and $(S^{[N]}, \vect{p}^{[N]}) = (S_t, \vect{p}_t)$.

    By the optimality of $(S^{A}, \vect{p}^{A})$ for any revenue function $R^A$, we have $p^{A}_j \geq R^{A}(S^{A}, \vect{p}^{A})$ for all $j \in S^{A}$. We can write this inequality as $p^{A}_j \geq a / b$ where
    \begin{align*}
        a &= \sum_{i \in S^A \setminus A} p^A_{i} \exp(u_{ti}(p^A_{i})) + \sum_{i \in S^A \cap A} p^A_{i} \exp(h_{ti}(p^A_{i})) \quad \text{and}\\
        b &= 1 + \sum_{i \in S^A \setminus A} \exp(u_{ti}(p^A_{i})) + \sum_{i \in S^A \cap A } \exp(h_{ti}(p^A_{i})).
    \end{align*}
    Letting $\delta =  \exp(h_{tj}(p^A_{j})) - \exp(u_{tj}(p^A_{j}))$, we have $a b + b  \delta p^{A}_j \geq  a b + a \delta$ which implies
    \begin{align*}
        \frac{a + p^{A}_j \delta}{b + \delta} \geq \frac{a}{b}.
    \end{align*}
    Hence, we have $R^{A \cup \{j\}}(S^{A}, \vect{p}^{A}) \geq R^{A}(S^{A}, \vect{p}^{A})$ for all $j \in S^{A}$. 
    
    We also have $R^{A \cup \{j\}}(S^{A}, \vect{p}^{A}) = R^{A}(S^{A}, \vect{p}^{A})$ for any $j \notin S^A$. Therefore, $R^{A \cup \{j\}}(S^{A}, \vect{p}^{A}) \geq R^{A}(S^{A}, \vect{p}^{A})$ for any $j \in [N]$. Using the optimality of $(S^{A \cup \{j\}}, \vect{p}^{A \cup \{j\}})$ for function $R^{A \cup \{j\}}$, we can write
    \begin{align*}
        R^{A \cup \{j\}}(S^{A \cup \{j\}}, \vect{p}^{A \cup \{j\}}) \geq R^{A}(S^{A}, \vect{p}^{A})
    \end{align*}
    for any $j \in [N]$. Therefore, by induction, we can show that
    \begin{align*}
        \widetilde{R}_t(S_t, \vect{p}_t) = R^{[N]}(S^{[N]}, \vect{p}^{[N]}) \geq R^{\emptyset}(S^\emptyset, \vect{p}^\emptyset) = R_t(S_t^*, \vect{p}_t^*).
    \end{align*}    
    
\emph{Inequality (b): }Let $u_{ti} := u_{ti}(p_{ti})$ and $h_{ti} := h_{ti}(p_{ti})$ with $2 g_{ti}(p_{ti}) \geq h_{ti} - u_{ti} \geq 0$. By the mean value theorem, for any $i$, there exists $z_{ti} := (1-c) u_{ti} + c h_{ti}$ for some $c \in (0,1)$ such that
\begin{align*}
    \widetilde{R}_t(S_t, \vect{p}_t) - R_t(S_t, \vect{p}_t) &= \frac{\sum_{i \in S_t} p_{ti} \exp(h_{ti})}{1 + \sum_{j \in S_t} \exp(h_{tj})} - \frac{\sum_{i \in S_t} p_{ti} \exp(u_{ti})}{1 + \sum_{j \in S_t} \exp(u_{tj})} \\
    &= \frac{(\sum_{i \in S_t} p_{ti} \exp(z_{ti}) (h_{ti} - u_{ti}))(1 + \sum_{i \in S_t} \exp(z_{ti}))}{(1 + \sum_{i \in S_t} \exp(z_{ti}))^2} \\
    & \qquad - \frac{(\sum_{i \in S_t} p_{ti} \exp(z_{ti}))(\sum_{i \in S_t} \exp(z_{ti}) (h_{ti} - u_{ti}))}{(1 + \sum_{i \in S_t} \exp(z_{ti}))^2}\\
    &= \sum_{i \in S_t} p_{ti} q_{t}(i | \vect{z}_{t}) (h_{ti} - u_{ti}) \\
    &\qquad - \left(\sum_{i \in S_t} p_{ti} q_{t}(i | \vect{z}_{t}) \right) \left(\sum_{i \in S_t} q_{t}(i | \vect{z}_{t}) (h_{ti} - u_{ti})\right)\\
    &= \sum_{i \in S_t} \left( p_{ti} - \sum_{i \in S_t} p_{ti} q_{t}(i | \vect{z}_{t}) \right) q_{t}(i | \vect{z}_{t}) (h_{ti} - u_{ti})\\
    &\leq P \cdot \sum_{i \in S_t} q_{t}(i | \vect{z}_{t}) (h_{ti} - u_{ti})\\
    &\leq 2 P \cdot \sum_{i \in S_t} q_{t}(i | \vect{z}_{t}) g_{ti}(p_{ti})
\end{align*}
    where the first inequality follows from $|p_{ti}| \leq P$ and $q_{t}(i | \vect{z}_{t})$ is a categorical distribution given by
    \begin{align*}
        q_{t}(i | \vect{z}_{t}) = \frac{\exp(z_{ti})}{1 + \sum_{j \in S_t} \exp(z_{tj})} 
    \end{align*}
    for $i \in S_t$. Then, noting that
    \begin{align*}
        g_{ti}(p_{ti}) &= \alpha_t \|(\vect{x}_{ti}, -p\vect{x}_{ti})\|_{\vect{V}_{t}^{-1}} \\
        &\leq \alpha_t \frac{1}{\sqrt{\lambda_{\mathrm{min}}(\vect{V}_{T_0})}} \|(\vect{x}_{ti}, -p\vect{x}_{ti})\|_2\\
        &\leq \alpha_t \frac{1}{\sqrt{\lambda_{\mathrm{min}}(\vect{V}_{T_0})}} (1+P)\\
        &\leq \frac{\log 2}{2},
    \end{align*} 
    we have $u_{ti} \leq z_{ti} \leq u_{ti} + \log 2$. Hence,
    \begin{align*}
        \frac{1}{2} \leq \frac{q_{t}(i | \vect{z}_{t})}{q_{ti}(\vect{\theta}^*)} \leq 2
    \end{align*}
    for all $i \in S_t$. Consequently, we obtain
    \begin{align*}
    \widetilde{R}_t(S_t, \vect{p}_t) - R_t(S_t, \vect{p}_t) \leq 4 P \cdot \sum_{i \in S_t} q_{ti}(\vect{\theta}^*) g_{ti}(p_{ti}),
    \end{align*}
    completing the proof.
\end{proof}

\begin{lemma}
    For $t > T_0$, $\det (\vect{V}_t)$ is increasing with respect to $t$ and $\det (\vect{V}_t) \leq \left( t \widebar{P}^2 / d \right)^{2d}$.
    \label{lemma_det_ub}
\end{lemma}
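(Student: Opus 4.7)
The plan is to establish both claims directly from the definition of $\vect{V}_t$, exploiting that each incremental term is positive semidefinite and has controlled trace.

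For monotonicity, I would first observe that for $t \geq T_0$,
\begin{align*}
\vect{V}_{t+1} - \vect{V}_t = \vect{H}_t(\wh{\vect{\theta}}_t) \succcurlyeq \vect{0},
\end{align*}
which follows from the derivation in Appendix~\ref{sect:mle} showing that the per-round Fisher-information contribution $\vect{H}_t(\vect{\theta})$ is PSD (it equals $\wt{\vect{X}}_t \Sigma_t(\vect{\theta}) \wt{\vect{X}}_t^\top$ with $\Sigma_t(\vect{\theta}) \succ \vect{0}$). Since $\vect{V}_t$ is symmetric and $\vect{V}_{t+1} \succcurlyeq \vect{V}_t$, the eigenvalues of $\vect{V}_{t+1}$ dominate those of $\vect{V}_t$ pointwise, giving $\det(\vect{V}_{t+1}) \geq \det(\vect{V}_t)$.

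For the upper bound, I would combine the standard AM-GM/trace inequality for PSD matrices, $\det(\vect{V}_t) \leq (\mathrm{tr}(\vect{V}_t)/(2d))^{2d}$, with a uniform trace bound on each increment. The key observation is that $\|\wt{\vect{x}}_{ti}\|^2 = \|\vect{x}_{ti}\|^2 (1 + p_{ti}^2) \leq 1 + P^2 \leq (1+P)^2 = \widebar{P}^2$, using $\|\vect{x}_{ti}\| \leq 1$ (Assumption~\ref{assumption_stochastic_contexts}), $p_{ti} \in [1,2] \subseteq [0, P]$ during initialization, and $p_{ti} \in [0, P]$ during the learning phase by Lemma~\ref{lemma_opt_prices} applied with $\mu = 1$ to the estimated utilities $h_{ti}$ (which satisfy $h_{ti}(0) \leq 2$ and $h_{ti}'(p) \leq -L_0$ by Lemma~\ref{lemma_utility_ub}). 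For each initialization round, $\mathrm{tr}(\tfrac{1}{K^2}\sum_{i \in S_\tau} \wt{\vect{x}}_{\tau i} \wt{\vect{x}}_{\tau i}^\top) \leq \widebar{P}^2/K \leq \widebar{P}^2$. For each learning round, since $\vect{H}_\tau(\wh{\vect{\theta}}_\tau)$ is PSD and equals a difference of two PSD matrices (both with nonnegative trace),
\begin{align*}
\mathrm{tr}(\vect{H}_\tau(\wh{\vect{\theta}}_\tau)) \leq \mathrm{tr}\!\left(\sum_{i \in S_\tau} q_{\tau i}(\wh{\vect{\theta}}_\tau) \wt{\vect{x}}_{\tau i} \wt{\vect{x}}_{\tau i}^\top\right) \leq \widebar{P}^2 \sum_{i \in S_\tau} q_{\tau i}(\wh{\vect{\theta}}_\tau) \leq \widebar{P}^2.
\end{align*}
Summing these at most $t$ contributions yields $\mathrm{tr}(\vect{V}_t) \leq t \widebar{P}^2$, and therefore $\det(\vect{V}_t) \leq (t \widebar{P}^2/(2d))^{2d} \leq (t\widebar{P}^2/d)^{2d}$.

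No step here is genuinely hard; the main thing to watch is the price bound in the learning phase, which must be imported from Lemma~\ref{lemma_opt_prices} via the $h_{ti}(0) \leq 2$ property established in Lemma~\ref{lemma_utility_ub} so that $p_{ti} \leq P$ (and hence $\|\wt{\vect{x}}_{ti}\| \leq \widebar{P}$) uniformly across both phases.
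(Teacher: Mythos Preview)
Your proposal is correct and follows essentially the same approach as the paper: AM--GM on the eigenvalues to reduce to a trace bound, then bounding the trace of each per-round contribution using $\|\wt{\vect{x}}_{ti}\|^2 \leq \widebar{P}^2$. Your trace estimate for the learning rounds is in fact slightly sharper than the paper's (you drop the subtracted PSD term rather than bounding it via the triangle inequality, saving a factor of $2$), and you make the monotonicity argument explicit, which the paper leaves implicit.
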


\begin{proof}
Let $\lambda_1, \dots, \lambda_{2d}$ be the eigenvalues of $\vect{V}_t$. Then, using the AM-GM inequality we can write
\begin{align*}
    \det (\vect{V}_t) &= \prod_{i = 1}^{2d} \lambda_i\\
    &\leq \left( \frac{\sum_{i = 1}^{2d} \lambda_i}{2d} \right)^{2d}\\
    &= \left( \frac{\mathrm{trace}(\vect{V}_t)}{2d} \right)^{2d}\\
    &\leq \left( \frac{\sum_{s = 1}^{T_0} \sum_{i \in S_s} \frac{1}{K^2} \|\wt{\vect{x}}_{s i}\|_2^2 + 2 \sum_{s = T_0 + 1}^{t} \sum_{i \in S_s} q_{si}(\widehat{\vect{\theta}}_s) \|\wt{\vect{x}}_{s i}\|_2^2 }{2d} \right)^{2d}\\
    &\leq \left( \frac{t \widebar{P}^2}{d} \right)^{2d}.
\end{align*}

\end{proof}

\begin{lemma}
If good events $\mathcal{E}_0$ and $\mathcal{E}_t$ hold for all $t \geq T_0$, then 
\begin{align*}
\sum_{t = T_0}^{T} \sum_{i \in S_t} q_{ti}(\wh{\vect{\theta}}_t)\|\widetilde{\vect{x}}_{ti}\|^2_{\vect{V}_t^{-1}} \leq 18 K \log \left( \frac{\det(\vect{V}_{T+1})}{\det(\vect{V}_{T_0})} \right).
\end{align*}
\label{lemma_det_lb}
\end{lemma}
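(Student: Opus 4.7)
The plan is to combine a per-item rank-one PSD lower bound on the increment $\vect{V}_{t+1} - \vect{V}_t$ with the matrix determinant lemma and a chord estimate for $\log(1+\cdot)$, and then telescope over $t$. The starting point, already derived inside the proof of Lemma~\ref{lemma_V_ub}, is the inequality
\[
\vect{H}_t(\wh{\vect{\theta}}_t) \;=\; \vect{V}_{t+1} - \vect{V}_t \;\succcurlyeq\; q_{t0}(\wh{\vect{\theta}}_t) \sum_{i \in S_t} q_{ti}(\wh{\vect{\theta}}_t)\, \wt{\vect{x}}_{ti} \wt{\vect{x}}_{ti}^\top .
\]
Restricted to any single item $i \in S_t$, this gives the rank-one bound $\vect{V}_{t+1} \succcurlyeq \vect{V}_t + q_{ti}(\wh{\vect{\theta}}_t)\, q_{t0}(\wh{\vect{\theta}}_t)\, \wt{\vect{x}}_{ti} \wt{\vect{x}}_{ti}^\top$, which is exactly the form that the matrix determinant lemma handles. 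Applying $\det$ and using monotonicity of $\log\det$ yields, for each $i \in S_t$,
\[
\log \det \vect{V}_{t+1} - \log \det \vect{V}_t \;\geq\; \log\!\bigl(1 + q_{ti}(\wh{\vect{\theta}}_t)\, q_{t0}(\wh{\vect{\theta}}_t)\, \|\wt{\vect{x}}_{ti}\|^2_{\vect{V}_t^{-1}}\bigr).
\]

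Summing this inequality over the (at most) $K$ items $i \in S_t$ puts a factor $K$ in front of $\log\det\vect{V}_{t+1} - \log\det\vect{V}_t$ on the left. Under the good event $\mathcal{E}_0$, the initialization guarantee from Lemma~\ref{lemma_initalization_min_eigenvalue} ensures $\lambda_{\min}(\vect{V}_t) \geq \lambda_{\min}^0$ for all $t \geq T_0$, which together with $\|\wt{\vect{x}}_{ti}\| \leq \widebar{P}$ and the choice of $\lambda_{\min}^0$ forces the arguments of the logarithms into a bounded interval $[0, c]$ for a universal $c$. On this interval the chord inequality $\log(1+x) \geq x/(1+c)$ linearises each logarithm into a constant multiple of $q_{ti}(\wh{\vect{\theta}}_t) q_{t0}(\wh{\vect{\theta}}_t) \|\wt{\vect{x}}_{ti}\|^2_{\vect{V}_t^{-1}}$. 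Telescoping over $t = T_0, \ldots, T$ collapses the left-hand side to $K \log(\det\vect{V}_{T+1}/\det\vect{V}_{T_0})$.

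The main obstacle is the residual $q_{t0}(\wh{\vect{\theta}}_t)$ factor on the linearised side: if it were allowed to be arbitrarily small the implied constant would blow up. The way to remove it is to use the utility bound implied by $\wh{\vect{\theta}}_t \in \mathcal{B}_\gamma$ together with the price bound $p_{ti} \leq P$ from Lemma~\ref{lemma_opt_prices}: these give $|\langle \wh{\vect{\theta}}_t, \wt{\vect{x}}_{ti}\rangle| \leq \mathcal{O}(\widebar{P})$ and hence a uniform lower bound on $q_{t0}(\wh{\vect{\theta}}_t)$ that, combined with the constant from the chord estimate, absorbs into the explicit numerical constant $18$ stated in the lemma. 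Putting these pieces together yields the claimed inequality $\sum_{t=T_0}^{T}\sum_{i \in S_t} q_{ti}(\wh{\vect{\theta}}_t)\|\wt{\vect{x}}_{ti}\|^2_{\vect{V}_t^{-1}} \leq 18\,K \log(\det\vect{V}_{T+1}/\det\vect{V}_{T_0})$.
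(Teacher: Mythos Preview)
Your argument has a genuine gap in the final step: the lower bound on $q_{t0}(\wh{\vect{\theta}}_t)$ is \emph{not} a universal constant but scales like $1/K$. With $\wh{\vect{\theta}}_t \in \mathcal{B}_\gamma$ and $p_{ti} \in [0,P]$ one gets $\langle \wh{\vect{\psi}}_t, \vect{x}_{tj}\rangle - \langle \wh{\vect{\phi}}_t, \vect{x}_{tj}\rangle p_{tj} \leq 2$, hence
\[
q_{t0}(\wh{\vect{\theta}}_t) \;=\; \frac{1}{1 + \sum_{j \in S_t} e^{\langle \wh{\vect{\theta}}_t, \wt{\vect{x}}_{tj}\rangle}} \;\geq\; \frac{1}{1 + K e^{2}} \;\geq\; \frac{1}{9K},
\]
and this is essentially tight when all $K$ utilities sit near the upper bound. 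Since your per-item bound already introduces a factor $K$ (you sum the \emph{same} inequality $\log\det\vect{V}_{t+1} - \log\det\vect{V}_t \geq \log(1+\cdot)$ over $|S_t|\leq K$ items), dividing through by $q_{t0}\geq 1/(9K)$ at the end yields a factor of order $K^{2}$, not the stated $18K$.

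The paper avoids this double-counting by treating all items in $S_t$ at once. Writing $\vect{V}_{t+1} = \vect{V}_t + \vect{H}_t(\wh{\vect{\theta}}_t)$ and using $\prod_j (1+\lambda_j) \geq 1 + \sum_j \lambda_j$ for the eigenvalues of $\vect{V}_t^{-1/2}\vect{H}_t(\wh{\vect{\theta}}_t)\vect{V}_t^{-1/2}$, one obtains
\[
\log\det\vect{V}_{t+1} - \log\det\vect{V}_t \;\geq\; \log\!\Bigl(1 + \sum_{i \in S_t} q_{ti}(\wh{\vect{\theta}}_t)\,q_{t0}(\wh{\vect{\theta}}_t)\,\|\wt{\vect{x}}_{ti}\|^2_{\vect{V}_t^{-1}}\Bigr),
\]
a single inequality capturing the full sum over $i$ with no $K$ prefactor. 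After the chord step $z \leq 2\log(1+z)$ (the inner sum is at most $1$ under $\mathcal{E}_0$) and telescoping, the only $K$ in the final bound comes from $q_{t0}\geq 1/(9K)$, giving $2\cdot 9K = 18K$. To repair your argument, replace the per-item rank-one step by this trace-based bound on the full increment.
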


\begin{proof}
Let $\lambda_1, \dots, \lambda_{2d}$ be the eigenvalues of $H_t(\wh{\vect{\theta}}_t) = \sum_{i \in S_t} q_{ti}(\wh{\vect{\theta}}_t) \widetilde{\vect{x}}_{ti} \widetilde{\vect{x}}_{ti}^\top - \sum_{i \in S_t} \sum_{j \in S_t} q_{ti}(\wh{\vect{\theta}}_t) q_{tj}(\wh{\vect{\theta}}_t) \widetilde{\vect{x}}_{ti} \widetilde{\vect{x}}_{tj}^\top$. Since $H_t(\wh{\vect{\theta}}_t)$ is positive semi-definite, $\lambda_j \geq 0$ for all $j$. Then, we have
\begin{align*}
    \det \left( \vect{I} + \vect{V}_{t}^{-1/2}  H_t(\wh{\vect{\theta}}_t) \vect{V}_{t}^{-1/2} \right) &= \prod_{i = 1}^{2d} (1 + \lambda_j)\\
    &\geq 1 + \sum_{i = 1}^{2d} \lambda_j \\
    &= 1 - 2d  + \sum_{i = 1}^{2d} (1 + \lambda_j)\\
    &= 1 - 2d + \tr \left( \vect{I} + \vect{V}_{t}^{-1/2}  H_t(\wh{\vect{\theta}}_t) \vect{V}_{t}^{-1/2} \right)\\
    &= 1 +  \sum_{i \in S_t} q_{ti}(\wh{\vect{\theta}}_t) \|\widetilde{\vect{x}}_{ti} \|_{\vect{V}_{t}^{-1}}^2 - \sum_{i \in S_t} \sum_{j \in S_t} q_{ti}(\wh{\vect{\theta}}_t) q_{tj}(\wh{\vect{\theta}}_t) \widetilde{\vect{x}}_{tj}^\top \vect{V}_{t}^{-1} \widetilde{\vect{x}}_{ti}\\
    &\geq 1 + \sum_{i \in S_t} q_{ti}(\wh{\vect{\theta}}_t) q_{t0}(\wh{\vect{\theta}}_t) \|\widetilde{\vect{x}}_{ti} \|_{\vect{V}_{t}^{-1}}^2
\end{align*}
using the inequality $\widetilde{\vect{x}}_{tj}^\top \vect{V}_{t}^{-1} \widetilde{\vect{x}}_{ti} + \widetilde{\vect{x}}_{ti}^\top \vect{V}_{t}^{-1} \widetilde{\vect{x}}_{tj} \leq \widetilde{\vect{x}}_{ti}^\top \vect{V}_{t}^{-1} \widetilde{\vect{x}}_{ti} + \widetilde{\vect{x}}_{tj}^\top \vect{V}_{t}^{-1} \widetilde{\vect{x}}_{tj} = \|\widetilde{\vect{x}}_{ti} \|_{\vect{V}_{t}^{-1}}^2 + \|\widetilde{\vect{x}}_{tj} \|_{\vect{V}_{t}^{-1}}^2$.

Now, to lower bound $\det(\vect{V}_{T+1})$, we write
\begin{align*}
    \det (\vect{V}_{T+1}) &= \det \left( \vect{V}_{T} + H_{T}(\wh{\vect{\theta}}_{T}) \right)\\
    &= \det (\vect{V}_{T}) \det \left( \vect{I} + \vect{V}_{T}^{-1/2}  H_{T}(\wh{\vect{\theta}}_{T}) \vect{V}_{T}^{-1/2} \right)\\
    &\geq \det (\vect{V}_{T_0}) \prod_{t = T_0}^{T} \left( 1 +  \sum_{i \in S_t} q_{ti}(\wh{\vect{\theta}}_t) q_{t0}(\wh{\vect{\theta}}_t) \|\widetilde{\vect{x}}_{ti}\|^2_{\vect{V}_{t}^{-1}} \right).
\end{align*}

Now, using that $\lambda_{\mathrm{min}}(\vect{V}_{T_0}) \geq \widebar{P}^2$ which is satisfied under event $\mathcal{E}_0$, we have
\begin{align*}
    \|\widetilde{\vect{x}}_{ti}\|^2_{\vect{V}_{t}^{-1}} \leq \frac{\|\widetilde{\vect{x}}_{ti}\|^2}{\lambda_{\mathrm{min}}(\vect{V}_{t})} \leq \frac{(1+P^2)}{\lambda_{\mathrm{min}}(\vect{V}_{t})} \leq \frac{\widebar{P}^2}{\widebar{P}^2} = 1.
\end{align*}

Hence, $\sum_{i \in S_t} q_{ti}(\wh{\vect{\theta}}_t) q_{t0}(\wh{\vect{\theta}}_t) \|\widetilde{\vect{x}}_{ti}\|^2_{V_{t}^{-1}} \leq 1$ for all $t \geq T_0$. Then, using the fact that $z \leq 2 \log(1+z)$ for any $z \in [0,1]$,
\begin{align*}
    \sum_{t = T_0}^{T} \sum_{i \in S_t} q_{ti}(\wh{\vect{\theta}}_t) q_{t0}(\wh{\vect{\theta}}_t) \|\widetilde{\vect{x}}_{ti}\|^2_{\vect{V}_{t}^{-1}} &\leq 2 \sum_{t = T_0}^{T} \log \left( 1 + \sum_{i \in S_t} q_{ti}(\wh{\vect{\theta}}_t) q_{t0}(\wh{\vect{\theta}}_t) \|\widetilde{\vect{x}}_{ti}\|^2_{\vect{V}_{t}^{-1}} \right)\\
    &= 2 \log \prod_{t = T_0}^{T} \left( 1 + \sum_{i \in S_t} q_{ti}(\wh{\vect{\theta}}_t) q_{t0}(\wh{\vect{\theta}}_t) \|\widetilde{\vect{x}}_{ti}\|^2_{\vect{V}_{t}^{-1}} \right)\\
    &\leq 2 \log \left( \frac{\det(\vect{V}_{T+1})}{\det(\vect{V}_{T_0})} \right)
\end{align*}

Note that we have $p_{ti} \geq 0$ for all $i \in S_t$ and all $t \geq T_0$. Furthermore, for any $\vect{\wh{\theta}}_t \in \mathcal{B}_\gamma$, we have $\langle \wh{\vect{\phi}}_t, \vect{x}_{tj} \rangle \geq L_0 - \gamma \geq 0$ and $\langle \wh{\vect{\psi}}_t, \vect{x}_{tj} \rangle \leq 1 + \gamma \leq 2$. Hence, we can lower bound $q_{t0}(\wh{\vect{\theta}}_t)$ as
\begin{align*}
    q_{t0}(\wh{\vect{\theta}}_t) = \frac{1}{ 1 + \sum_{j \in S_t} \exp (\langle \wh{\vect{\psi}}_t, \vect{x}_{tj} \rangle - \langle \wh{\vect{\phi}}_t, \vect{x}_{tj} \rangle p_{tj}) } > \frac{1}{(1 + K e^2)} \geq \frac{1}{9K}.
\end{align*}

Combining this result with the previous inequality, we show the intended result.

\end{proof}

\begin{lemma}
If good events $\mathcal{E}_0$ and $\mathcal{E}_t$ hold for all $t \geq T_0$, then
    \begin{align*}
        \sum_{t = T_0}^{T} \sum_{i \in S_t} q_{ti}(\wh{\vect{\theta}}_t) \|\widetilde{\vect{x}}_{ti}\|^2_{\vect{V}_t^{-1}} \leq 36 d K \log ( T / d ).
    \end{align*}
\label{lemma_cumulative_max_uncertainty}
\end{lemma}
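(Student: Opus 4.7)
The plan is to chain Lemma~\ref{lemma_det_lb} together with the determinant bounds in Lemma~\ref{lemma_det_ub} and the eigenvalue guarantee from event $\mathcal{E}_0$, reducing the whole statement to a one-line $\log\det$ computation.

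First, under the assumption that $\mathcal{E}_0$ and $\mathcal{E}_t$ hold for all $t\geq T_0$, Lemma~\ref{lemma_det_lb} gives
\begin{align*}
\sum_{t=T_0}^{T}\sum_{i\in S_t} q_{ti}(\wh{\vect{\theta}}_t)\|\widetilde{\vect{x}}_{ti}\|_{\vect{V}_t^{-1}}^2 \;\leq\; 18K \log \frac{\det(\vect{V}_{T+1})}{\det(\vect{V}_{T_0})}.
\end{align*}
So it suffices to show $\log\big(\det(\vect{V}_{T+1})/\det(\vect{V}_{T_0})\big) \leq 2d\log(T/d)$.

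For the numerator, Lemma~\ref{lemma_det_ub} gives $\det(\vect{V}_{T+1}) \leq \bigl((T+1)\widebar{P}^2/d\bigr)^{2d}$. For the denominator, under $\mathcal{E}_0$ we have $\lambda_{\min}(\vect{V}_{T_0})\geq \lambda_{\min}^0 = C_1 d\widebar{P}^2\log^3(T)/\sigma_0^2$, and since $\vect{V}_{T_0}\in\mathbb{R}^{2d\times 2d}$ is positive semidefinite,
\begin{align*}
\det(\vect{V}_{T_0}) \;\geq\; \bigl(\lambda_{\min}^0\bigr)^{2d}.
\end{align*}
For $T$ sufficiently large we have $\lambda_{\min}^0 \geq \widebar{P}^2$ (this holds whenever $C_1 d\log^3(T)\geq \sigma_0^2$, which is the regime assumed in Theorem~\ref{thm_regret_ub}). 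Taking the ratio therefore yields
\begin{align*}
\log\frac{\det(\vect{V}_{T+1})}{\det(\vect{V}_{T_0})} \;\leq\; 2d\log\!\left(\frac{(T+1)\widebar{P}^2}{d\,\lambda_{\min}^0}\right) \;\leq\; 2d\log(T/d),
\end{align*}
where the last step absorbs constants and the $(T+1)$ vs.\ $T$ difference into the fact that $\widebar{P}^2/\lambda_{\min}^0 \leq 1$ for large $T$. Plugging this back into the bound from Lemma~\ref{lemma_det_lb} gives the claimed $36\,dK\log(T/d)$.

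I don't anticipate any real obstacle here: both determinant bounds are already available as lemmas, and the only non-triviality is checking $\lambda_{\min}^0 \geq \widebar{P}^2$ for $T$ large enough, which is precisely the ``sufficiently large $T$'' assumption already used throughout the regret analysis.
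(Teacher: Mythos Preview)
Your proof is correct and essentially identical to the paper's: both chain Lemma~\ref{lemma_det_lb} with the determinant upper bound from Lemma~\ref{lemma_det_ub} and the lower bound $\det(\vect{V}_{T_0})\geq(\lambda_{\min}^0)^{2d}$, then invoke $\lambda_{\min}^0\geq \widebar{P}^2$ under $\mathcal{E}_0$ to finish. The only cosmetic difference is that you carry $\det(\vect{V}_{T+1})$ (as Lemma~\ref{lemma_det_lb} actually gives) whereas the paper writes $\det(\vect{V}_T)$, but this is immaterial.
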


\begin{proof}
    Combining Lemma~\ref{lemma_det_ub} and Lemma~\ref{lemma_det_lb}, we obtain
\begin{align*}
    \sum_{t = T_0}^{T} \sum_{i \in S_t} q_{ti}(\wh{\vect{\theta}}_t) \|\widetilde{\vect{x}}_{ti}\|^2_{\vect{V}_t^{-1}} \leq 18 K \log \left( \frac{\det(\vect{V}_{T})}{\det(\vect{V}_{T_0})} \right) &\leq  18 K \log \left( \frac{T \widebar{P}^2}{d \lambda_{\mathrm{min}}(\vect{V}_{T_0})} \right)^{2d} \\
    &\leq 36 d K \log ( T / d )
\end{align*}
where the last inequality is by $\lambda_{\mathrm{min}}(\vect{V}_{T_0}) \geq \widebar{P}^2$ which is satisfied under event $\mathcal{E}_0$.
\end{proof}

\begin{lemma}
    For any $\vect{\theta}_1, \vect{\theta}_2 \in \mathcal{B}_{\gamma}$, we have
    \begin{align*}
        \frac{1}{\sqrt{2}} \leq \frac{q_{ti}(\vect{\theta}_1)}{q_{ti}(\vect{\theta}_2)} \leq \sqrt{2}
    \end{align*}
    for all $i \in S_t$.
\label{lemma_qti_ratio}
\end{lemma}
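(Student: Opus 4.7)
The plan is to control the log-ratio $\log\bigl(q_{ti}(\vect{\theta}_1)/q_{ti}(\vect{\theta}_2)\bigr)$ by bounding how sensitively the MNL choice probabilities depend on $\vect{\theta}$ over a small ball around $\vect{\theta}^*$. Writing $Z_t(\vect{\theta}) := 1 + \sum_{j \in S_t} e^{\langle \vect{\theta}, \widetilde{\vect{x}}_{tj}\rangle}$, I would decompose
\begin{equation*}
\log \frac{q_{ti}(\vect{\theta}_1)}{q_{ti}(\vect{\theta}_2)} = \langle \vect{\theta}_1 - \vect{\theta}_2, \widetilde{\vect{x}}_{ti}\rangle - \log \frac{Z_t(\vect{\theta}_1)}{Z_t(\vect{\theta}_2)}
\end{equation*}
and treat the two terms separately.

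For the first term, I would apply Cauchy–Schwarz together with the triangle inequality, using $\vect{\theta}_1, \vect{\theta}_2 \in \mathcal{B}_\gamma$ to get $\|\vect{\theta}_1 - \vect{\theta}_2\| \leq 2\gamma$. The length of the extended feature vector satisfies $\|\widetilde{\vect{x}}_{ti}\|^2 = (1+p_{ti}^2)\|\vect{x}_{ti}\|^2 \leq 1+P^2 \leq \bar{P}^2$, since Assumption \ref{assumption_stochastic_contexts} bounds $\|\vect{x}_{ti}\| \leq 1$ and the prices used by the algorithm are bounded by $P$ (from Lemma \ref{lemma_opt_prices} for $t \geq T_0$ and by the initialization scheme for $t < T_0$). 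Thus $|\langle \vect{\theta}_1 - \vect{\theta}_2, \widetilde{\vect{x}}_{tj}\rangle| \leq 2\gamma\bar{P}$ uniformly in $j \in S_t$.

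For the second term, applying the uniform bound $|\langle \vect{\theta}_1 - \vect{\theta}_2, \widetilde{\vect{x}}_{tj}\rangle| \leq 2\gamma\bar{P}$ inside each exponential gives $e^{\langle \vect{\theta}_2, \widetilde{\vect{x}}_{tj}\rangle - 2\gamma\bar{P}} \leq e^{\langle \vect{\theta}_1, \widetilde{\vect{x}}_{tj}\rangle} \leq e^{\langle \vect{\theta}_2, \widetilde{\vect{x}}_{tj}\rangle + 2\gamma\bar{P}}$. Because the constant term $1$ in $Z_t$ can be written as $1 = e^0$ with the same factor $e^{\pm 2\gamma\bar{P}}$ trivially valid, each summand in $Z_t(\vect{\theta}_1)$ is sandwiched between $e^{\pm 2\gamma\bar{P}}$ times the corresponding summand in $Z_t(\vect{\theta}_2)$, so $\bigl|\log (Z_t(\vect{\theta}_1)/Z_t(\vect{\theta}_2))\bigr| \leq 2\gamma\bar{P}$.

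Combining these two bounds yields $\bigl|\log (q_{ti}(\vect{\theta}_1)/q_{ti}(\vect{\theta}_2))\bigr| \leq 4\gamma\bar{P}$. Plugging in $\gamma = (\log 2)/(8\bar{P})$ gives $4\gamma\bar{P} = (\log 2)/2$, so the ratio lies in $[2^{-1/2}, 2^{1/2}]$, which is exactly the claim. The only real delicacy is justifying the price bound $p_{ti} \leq P$ uniformly over all rounds (including initialization), but this is immediate given the definition of $P = P(1)$ in Lemma \ref{lemma_opt_prices} and the explicit sampling rule $p_{ti} \in [1,2]$ during initialization; no other obstacle arises.
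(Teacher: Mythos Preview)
Your proof is correct and follows essentially the same approach as the paper: both decompose the ratio $q_{ti}(\vect{\theta}_1)/q_{ti}(\vect{\theta}_2)$ into the numerator factor $e^{\langle \vect{\theta}_1-\vect{\theta}_2,\widetilde{\vect{x}}_{ti}\rangle}$ and the partition-function ratio $Z_t(\vect{\theta}_2)/Z_t(\vect{\theta}_1)$, bound each by $e^{\pm 2\gamma\bar P}=2^{\pm 1/4}$ via Cauchy--Schwarz and the termwise sandwich on the summands (including the constant $1$), and combine to get the $2^{\pm 1/2}$ bound. The paper states the sandwich step as the elementary fact ``if $1/c\le a_i/b_i\le c$ for all $i$ then $1/c\le(\sum a_i)/(\sum b_i)\le c$,'' which is exactly your argument in different words.
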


\begin{proof}
Let $z_{ti}^1 = \exp (\langle \vect{\psi}_1, \vect{x}_{ti} \rangle - \langle \vect{\phi}_1, \vect{x}_{ti} \rangle p_{ti})$ and $z_{ti}^2 = \exp (\langle \vect{\psi}_2, \vect{x}_{ti} \rangle - \langle \vect{\phi}_2, \vect{x}_{ti} \rangle p_{ti})$ for all $i \in S_t$. Then, we have
\begin{align*}
    \frac{z_{ti}^1}{z_{ti}^2} &= \exp (\langle \vect{\psi}_1 - \vect{\psi}_2, \vect{x}_{ti} \rangle - \langle \vect{\phi}_1 - \vect{\phi}_2, \vect{x}_{ti} \rangle p_{ti})\\
    &= \exp (\langle \vect{\theta}_1 - \vect{\theta}_2, \wt{\vect{x}}_{ti} \rangle).
\end{align*}

Therefore, we have $e^{- 2 \gamma \widebar{P}} = 1/\sqrt[4]{2} \leq z_{ti}^1 / z_{ti}^2 \leq \sqrt[4]{2} = e^{2 \gamma \widebar{P}}$. On the other hand, we have
\begin{align*}
    \frac{q_{ti}(\vect{\theta}_1)}{q_{ti}(\vect{\theta}_2)} &= \frac{z_{ti}^1}{ 1 + \sum_{j \in S_t} z_{tj}^1 } \cdot \frac{ 1 + \sum_{j \in S_t} z_{tj}^2 }{z_{ti}^2}
\end{align*}

Now, we note that for any two sets of positive numbers $\{a_i\}_{i \in S}$ and $\{b_i\}_{i \in S}$ such that $1 / c \leq a_i / b_i \leq c$ for some $c > 1$, we have $1 / c \leq (\sum_{i \in S} a_i)/(\sum_{i \in S} b_i) \leq c$. Using this result, we complete the proof.




\end{proof}

\begin{lemma} 
    If the number of initialization rounds satisfies 
    \begin{align*}
        T_0 \geq \left( \frac{C_3 \sqrt{d} + C_4 \sqrt{\log T}}{\sigma_0} \right)^2
    \end{align*}
    for some universal constants $C_3 = \sqrt{2} C_1$ and $C_4 = \max\{C_2, 10\}$, then
    \begin{align*}
         \|G_{T_0} (\widehat{\vect{\theta}}_{T_0})\|_{\vect{V}_{T_0}^{-1}} = \left\| \sum_{t = 1}^{T_0} \sum_{i \in S_{t}} \epsilon_{t i} \vect{\widetilde{x}}_{t i} \right\|_{\vect{V}_{T_0}^{-1}} \leq \frac{C_5}{\sigma_0} \log(T).
    \end{align*}
    with probability at least $1 - \mathcal{O} (T^{-2})$ for some universal constant $C_5 = 192$.
    \label{lemma_G_T0_ub}
\end{lemma}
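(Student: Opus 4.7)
The plan is to bound $\|\vect{Z}\|_{\vect{V}_{T_0}^{-1}}$, where $\vect{Z} := G_{T_0}(\wh{\vect{\theta}}_{T_0})$, via the basic inequality $\|\vect{Z}\|_{\vect{V}_{T_0}^{-1}} \leq \|\vect{Z}\| / \sqrt{\lambda_{\min}(\vect{V}_{T_0})}$ and to control the numerator and the denominator by two separate concentration arguments whose joint validity costs only $\mathcal{O}(T^{-2})$ in the failure probability.

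For the numerator, I write $\vect{Z} = \sum_{t = 1}^{T_0 - 1} \vect{v}_t$ with $\vect{v}_t := \sum_{i \in S_t} \epsilon_{ti} \wt{\vect{x}}_{ti}$, which is a vector-valued martingale difference sequence with respect to the natural history. Because only one $y_{ti}$ per round equals one, one can rewrite $\vect{v}_t = \wt{\vect{x}}_{t i_t} - \sum_{i \in S_t} q_{ti}(\vect{\theta}^*) \wt{\vect{x}}_{ti}$ (with the convention $\wt{\vect{x}}_{t 0} = \vect{0}$), yielding the almost-sure bound $\|\vect{v}_t\| \leq 2\widebar{P}$. A dimension-free Hilbert-space Azuma/Pinelis-type inequality then produces $\|\vect{Z}\| = \mathcal{O}(\widebar{P} \sqrt{T_0 \log T})$ with probability at least $1 - T^{-2}$; if needed, a Bernstein-type refinement that uses the conditional covariance $\vect{H}_t(\vect{\theta}^*)$ as variance proxy can sharpen constants.

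For the denominator, the hypothesis $T_0 \geq ((C_3 \sqrt{d} + C_4 \sqrt{\log T}) / \sigma_0)^2$ with $C_3 = \sqrt{2}\,C_1$ exceeds by a constant factor the concentration threshold of Proposition~1 of \citet{Li_Lu_Zhou_2017}. Applied to the i.i.d.\ per-round contributions $\vect{M}_t := \frac{1}{K^2} \sum_{i \in S_t} \wt{\vect{x}}_{ti} \wt{\vect{x}}_{ti}^\top$, whose expectation has minimum eigenvalue of order $\sigma_0 / K$ (as verified in the proof of Lemma~\ref{lemma_initalization_min_eigenvalue}), this slack promotes the conclusion from ``$\lambda_{\min}(\vect{V}_{T_0}) > 0$'' to ``$\lambda_{\min}(\vect{V}_{T_0}) \geq c_2\, T_0 \sigma_0 / K$'' with probability at least $1 - T^{-2}$. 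Combining the two pieces yields $\|\vect{Z}\|_{\vect{V}_{T_0}^{-1}} = \mathcal{O}(\widebar{P} \sqrt{K \log T / \sigma_0})$, which matches the stated $\tfrac{C_5}{\sigma_0} \log T$ once the $\widebar{P}$ and $\sqrt{K}$ factors are absorbed into the logarithm under a problem-parameter-dependent lower threshold on $T$. The main obstacle is precisely this constant-matching: one must verify that the $\sqrt{2}$ slack in the hypothesis on $T_0$ is enough to push the matrix-concentration bound from merely positive to the multiplicative form above, and simultaneously that the vector martingale inequality is stated in a dimension-free form, lest an unwanted $\sqrt{d}$ factor appear and destroy the $\log T / \sigma_0$ scaling.
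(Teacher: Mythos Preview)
Your decomposition and the two concentration ingredients are exactly the paper's route: bound $\|\vect{Z}\|_{\vect{V}_{T_0}^{-1}}^2 \le \|\vect{Z}\|_2^2 / \lambda_{\min}(\vect{V}_{T_0})$, apply a vector Bernstein/Pinelis inequality to the martingale in the numerator, and use the matrix concentration underlying Lemma~\ref{lemma_initalization_min_eigenvalue} (with the factor-$\sqrt{2}$ slack in the hypothesis on $T_0$) to get $\lambda_{\min}(\vect{V}_{T_0}) \gtrsim \sigma_0 T_0$ for the denominator.

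The one real gap is your bound $\|\vect{v}_t\| \le 2\widebar{P}$. You are overlooking that for $t < T_0$ the algorithm draws each $p_{ti}$ uniformly from $[1,2]$, so during initialization $\|\wt{\vect{x}}_{ti}\| = \|(\vect{x}_{ti}, -p_{ti}\vect{x}_{ti})\| \le \sqrt{1+4} \le 3$ and hence $\|\vect{v}_t\| \le 6$, a \emph{universal} constant independent of $\widebar{P}$, $K$, and $L_0$. The paper uses precisely this to obtain $\|\vect{Z}\|_2^2 \le 48\,T_0 \log(2T^2)$ via a dimension-free vector Bernstein inequality (so your second worry about an extraneous $\sqrt{d}$ is also moot). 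With this correction the $\widebar{P}$ factor simply never enters the numerator, and the ``absorption into the logarithm under a problem-parameter-dependent threshold on $T$'' that you correctly flag as the main obstacle becomes unnecessary. As written, your final estimate $\mathcal{O}(\widebar{P}\sqrt{K\log T/\sigma_0})$ cannot deliver the stated conclusion with the universal constant $C_5 = 192$, since $\widebar{P} = \Theta(\log K/L_0)$ is a problem parameter and cannot be folded into a universal constant.
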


\begin{proof}
    We have
    \begin{align*}
        \|G_{T_0} (\widehat{\vect{\theta}}_{T_0})\|_{\vect{V}_{T_0}^{-1}}^2 \leq \frac{1}{\lambda_{\mathrm{min}}(\vect{V}_{T_0})} \|G_{T_0} (\widehat{\vect{\theta}}_{T_0})\|_2^2.
    \end{align*}

    Let $\vect{z}_t := \sum_{i \in S_{t}} \epsilon_{t i} \vect{\widetilde{x}}_{t i}$ and recall the definition $\epsilon_{t i} = y_{ti} - q_{ti}(\vect{\theta}^*)$, to write
    \begin{align*}
        \vect{z}_t = \sum_{i \in S_{t}} \epsilon_{t i} \vect{\widetilde{x}}_{t i} = \sum_{i \in S_{t}} y_{ti} \vect{\widetilde{x}}_{t i} - \sum_{i \in S_{t}} q_{ti}(\vect{\theta}^*) \vect{\widetilde{x}}_{t i}.
    \end{align*}

    Furthermore, note that  $\| \vect{\widetilde{x}}_{t i} \| \leq 3$ for all $t \leq T_0$ since $1 \leq p_{ti} \leq 2$ for all $t \leq T_0$. Therefore, $\|\vect{z}_t\| \leq \| \sum_{i \in S_{t}} y_{ti} \vect{\widetilde{x}}_{t i} \| +  \|\sum_{i \in S_{t}} q_{ti}(\vect{\theta}^*) \vect{\widetilde{x}}_{t i}\| \leq 6$.

    Then, using the vector Bernstein inequality from \cite{Kohler_Lucchi_2017}, we have
    \begin{align*}
        \|G_{T_0} (\widehat{\vect{\theta}}_{T_0})\|_2^2 \leq 48 T_0 \log(2 T^2)
    \end{align*}
    with probability at least $1 - T^{-2}$ given that $T_0 > 24 \log(2 T^2)$. Note that we satisfy $T_0 > 24 \log(2 T^2)$ with the condition given for $T_0$ in the statement of the lemma because $\sigma_0 \leq 1$.

    On the other hand, by Lemma \ref{lemma_initalization_min_eigenvalue}, we have
    \begin{align*}
        \lambda_{\mathrm{min}}(\vect{V}_{T_0}) &\geq \frac{\sigma_0}{2} \left(T_0 - \left( \frac{C_1 \sqrt{d} + C_2 \sqrt{\log T}}{\sigma_0} \right)^2 \right)\\
        &\geq \frac{\sigma_0 T_0}{4} .
    \end{align*}
    with probability at least $1 - T^{-2}$. After combining these inequalities, we obtain the intended result.
\end{proof}

\subsection{Estimating Minimum Price Sensitivity }\label{sect:estimate_L0}

Assume the time horizon is large enough so that $\frac{1}{L_0} < \frac{1}{4} T^{1/4} \sqrt{\frac{\sigma_0}{K}}$. Then, instead of setting the parameters for the initialization round using the true value of $\widebar{P}$ (which requires knowing $L_0$), we set the target minimum eigenvalue as $\lambda_{\mathrm{min}}^0 = \Theta(T^{1/2})$. As a result, we can use the initialization result established in Lemma \ref{lemma_initalization_min_eigenvalue} to translate this target to an initialization period of length
\begin{align*}
T_0 = \Theta \left( \frac{\lambda_{\mathrm{min}}^0 K}{\sigma_0} \right) = \Theta\left( \sqrt{T} \right).
\end{align*}

As the consistency results established in Lemma \ref{lemma_consistency} shows, the MLE estimate at time $T_0$ satisfies
\begin{align*}
    \| \wh{\vect{\theta}}_{T_0} - \vect{\theta}^* \|_2 \leq T^{-1/4} \sqrt{\frac{K}{\sigma_0}} < \frac{L_0}{4}.
\end{align*}
with probability $1 - \mathcal{O}(T^{-1})$.

Consequently, we have $\| \wh{\vect{\phi}}_{T_0} - \vect{\phi}^* \| \leq L_0/4$. Then, we can write
\begin{align*}
    \langle \wh{\vect{\phi}}_{T_0}, \vect{x}_{ti} \rangle &= \langle \vect{\phi}^*, \vect{x}_{ti} \rangle + \langle \wh{\vect{\phi}}_{T_0} -\vect{\phi}^*, \vect{x}_{ti} \rangle\\
    &\geq L_0 - \| \wh{\vect{\phi}}_{T_0} - \vect{\phi}^* \| \|\vect{x}_{ti}\|\\
    &\geq L_0 - \frac{L_0}{4}\\
    &\geq \frac{3 L_0}{4}.
\end{align*}

Taking the minimum over all previous iterations, we have
\begin{align*}
    \min_{t \in [T_0], i \in [N]} \langle \vect{\phi}^*, \vect{x}_{ti} \rangle &= \min_{t \in [T_0], i \in [N]} \left\{ \langle \wh{\vect{\phi}}_{T_0} -\vect{\phi}^*, \vect{x}_{ti} \rangle + \langle \wh{\vect{\phi}}_{T_0}, \vect{x}_{ti} \rangle \right\}\\
    &= \min_{t \in [T_0], i \in [N]} \langle \wh{\vect{\phi}}_{T_0} -\vect{\phi}^*, \vect{x}_{ti} \rangle  + \min_{t \in [T_0], i \in [N]} \langle \wh{\vect{\phi}}_{T_0}, \vect{x}_{ti} \rangle \\
    &\geq - T^{-1/4} \sqrt{\frac{K}{\sigma_0}} + \min_{t \in [T_0], i \in [N]} \langle \wh{\vect{\phi}}_{T_0}, \vect{x}_{ti} \rangle. 
\end{align*}

Then, we estimate the minimum sensitivity parameter as
\begin{align*}
    \wh{L_0} = \min_{t \in [T_0], i \in [N]} \langle \wh{\vect{\phi}}_{T_0}, \vect{x}_{ti} \rangle - T^{-1/4} \sqrt{\frac{K}{\sigma_0}}.
\end{align*}

Note that $\wh{L_0}$ satisfies $ \min_{t \in [T_0], i \in [N]} \langle \vect{\phi}^*, \vect{x}_{ti} \rangle \geq \wh{L_0}$. Furthermore, $\wh{L_0} \geq - \frac{L_0}{4} + \frac{3 L_0}{4} = \frac{L_0}{2} > 0$.

The next step is to upper bound the expected number of rounds in which there is a context vector $\vect{x}_{ti}$ such that $\langle \vect{\phi}^*, \vect{x}_{ti} \rangle \leq \wh{L_0}$. Note that the contexts are sampled independently from an identical distribution by our assumption. Therefore, each ordering among $\langle \vect{\phi}^*, \vect{x}_{ti} \rangle$ values for $t \in [T]$ and $i \in [N]$ is equally likely. As a result, the expected number of time-item index pairs $(t, i) \geq T_0 \times [N]$ for which $\langle \vect{\phi}^*, \vect{x}_{ti} \rangle \leq \wh{L_0}$ can be upper bounded as
\begin{align*}
    \sum_{k = 0}^{\infty} k \left( 1 - \frac{N \sqrt{T}}{N T} \right)^k \left( \frac{N \sqrt{T}}{N T} \right) = \sum_{k = 0}^{\infty} k \left( 1 - \frac{1}{\sqrt{T}} \right)^k \left( \frac{1}{\sqrt{T}} \right) \leq \sqrt{T}.
\end{align*}

Consequently, the minimum sensitivity parameter estimate $\wh{L_0}$ fails only in $\sqrt{T}$ rounds and causes additional $\widebar{P} \sqrt{T} = \mathcal{O}(\log K \sqrt{T} / L_0)$ regret. In total, this algorithm still manages to achieve an asymptotic regret rate of $\mathcal{O}(d \sqrt{K T} / L_0)$.

\section{Self-Normalized Bounds for Vector-Valued Martingales}

\begin{theorem}
    Let $\{\mathcal{F}_t\}_{t = 1}^{\infty}$ be a filtration. Let $\{\vect{X}_t\}_{t = 1}^{\infty}$ be a stochastic process such that $\vect{X}_t \in \bbR^{d \times K}$ is $\mathcal{F}_t$ measurable and the columns of $\vect{X}_t$ denoted by $\vect{x}_{ti}$ satisfy $\|\vect{x}_{ti}\| \leq B$ almost surely for some $B > 0$. Let $\{\vect{\epsilon}_t\}_{t = 1}^{\infty}$ be a martingale difference process such that $\vect{\epsilon}_t \in \bbR^K$ is $\mathcal{F}_{t+1}$-measurable. Let $\vect{H}_0 \in \bbR^{d \times d}$ such that $\lambda_{\mathrm{min}}(\vect{H}_0) > \lambda$ for some $\lambda > 0$. Furthermore, assume that we have $\|\vect{\epsilon}_t\|_1 \leq 2$ almost surely conditional on $\mathcal{F}_t$ and the conditional covariance is given by $\Sigma_t := \bbE[\vect{\epsilon}_t \vect{\epsilon}_t^\top | \mathcal{F}_t]$. For any $t \geq 1$ define
    \begin{align*}
        \vect{S}_t = \sum_{s = 1}^{t-1} \vect{X}_{s} \vect{\epsilon}_{s} \qquad \text{and} \qquad \vect{H}_t = \vect{H}_0 + \sum_{s = 1}^{t-1} \vect{X}_{s} \Sigma_s \vect{X}_{s}^\top.
    \end{align*}
    Then, for any $\delta \in (0, 1]$, with probability at least $1 - \delta$, for all $t \geq 1$, we have
    \begin{align*}
        \|\vect{S}_t\|_{\vect{H}_t^{-1}} \leq \frac{\sqrt{\lambda}}{4} + \frac{4}{\sqrt{\lambda}} \log \left( 
        \frac{\det (\vect{H}_t)^{1/2}}{\delta \lambda^{d/2}}\right) +  \frac{4}{\sqrt{\lambda}} d \log(2).
    \end{align*}
\label{theorem_martingale}
\end{theorem}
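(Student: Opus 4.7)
I would use the method of mixtures (Laplace method), which \citet{abbasi_2011} employed for sub-Gaussian increments and which was sharpened to the Bernstein regime by \citet{Faury_Abeille_Calauzenes_Fercoq_2020} to exploit conditional covariance information. The two key ingredients are (i) a Bernstein-type conditional MGF bound for $\langle \vect{\nu}, \vect{X}_t \vect{\epsilon}_t \rangle$ that is tight in $\vect{\nu}^\top \vect{X}_t \Sigma_t \vect{X}_t^\top \vect{\nu}$, and (ii) integrating the induced exponential supermartingale against a Gaussian prior, followed by Ville's maximal inequality.

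\textbf{Step 1: Bernstein MGF.} For any $\vect{\nu} \in \bbR^d$ let $W := \langle \vect{\nu}, \vect{X}_t \vect{\epsilon}_t \rangle = \langle \vect{X}_t^\top \vect{\nu}, \vect{\epsilon}_t \rangle$. Since $\|\vect{\epsilon}_t\|_1 \leq 2$ and each column of $\vect{X}_t$ has norm at most $B$, Hölder gives $|W| \leq 2 B \|\vect{\nu}\|$ a.s., $\bbE[W \mid \mathcal{F}_t] = 0$, and $\mathrm{Var}(W \mid \mathcal{F}_t) = \vect{\nu}^\top \vect{X}_t \Sigma_t \vect{X}_t^\top \vect{\nu}$. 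The standard Bernstein MGF inequality for bounded zero-mean variables then yields
\[
\bbE\!\left[ e^{W} \mid \mathcal{F}_t \right] \leq \exp\!\left( c \, \vect{\nu}^\top \vect{X}_t \Sigma_t \vect{X}_t^\top \vect{\nu} \right)
\]
for every $\vect{\nu}$ in a ball $\mathcal{B}_R := \{\|\vect{\nu}\| \leq R\}$ with $R \asymp 1/B$ and an absolute constant $c > 0$. Consequently, for each fixed $\vect{\nu} \in \mathcal{B}_R$, the process $M_t(\vect{\nu}) := \exp\!\left( \langle \vect{\nu}, \vect{S}_t \rangle - c \, \vect{\nu}^\top (\vect{H}_t - \vect{H}_0) \vect{\nu} \right)$ is a nonnegative $\mathcal{F}_t$-supermartingale with $M_1(\vect{\nu})=1$.

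\textbf{Step 2: Mixture and inversion.} I would integrate $M_t(\vect{\nu})$ against a Gaussian prior $\pi = \mathcal{N}(0, (2c)^{-1}\lambda^{-1} \vect{I})$ restricted to the admissible ball; by Fubini, $\bar M_t := \int M_t(\vect{\nu}) \, d\pi(\vect{\nu})$ is again a nonnegative supermartingale. Completing the square in the exponent, the Gaussian integral evaluates to a determinant ratio proportional to $\det(\vect{I} + 2c \lambda^{-1} (\vect{H}_t - \vect{H}_0))^{-1/2}$ multiplied by $\exp\!\bigl( \tfrac{1}{4c}\,\|\vect{S}_t\|_{(\vect{H}_t - \vect{H}_0 + (\lambda/2c)\vect{I})^{-1}}^2 \bigr)$; using $\vect{H}_0 \succcurlyeq \lambda \vect{I}$ one further lower-bounds this by a quantity involving $\|\vect{S}_t\|_{\vect{H}_t^{-1}}^2 / \det(\vect{H}_t / \lambda)^{1/2}$. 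Ville's inequality then yields $\Pr(\exists t : \bar M_t \geq 1/\delta) \leq \delta$, and on the complementary event taking logarithms and solving the resulting quadratic for $\|\vect{S}_t\|_{\vect{H}_t^{-1}}$ produces the three terms of the stated bound: $\sqrt{\lambda}/4$ from the prior-bias contribution, $(4/\sqrt{\lambda}) \log(\det(\vect{H}_t)^{1/2}/(\delta \lambda^{d/2}))$ from the Gaussian normalizer and $\log(1/\delta)$, and $(4/\sqrt{\lambda}) d \log 2$ from the covering/peeling step described below.

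\textbf{Main obstacle.} The delicate point is reconciling the mixture construction with the domain restriction $\|\vect{\nu}\| \leq R$ inherent to the Bernstein MGF: unlike the sub-Gaussian setting of \citet{abbasi_2011}, the supermartingale property fails for large $\vect{\nu}$, so the Gaussian prior must be truncated and the excluded mass has to be accounted for. The hypothesis $\lambda_{\min}(\vect{H}_0) > \lambda$ is exactly what keeps the prior mass concentrated inside $\mathcal{B}_R$, and a dyadic $1/2$-covering (peeling) argument over the scale of $\vect{\nu}$ is what produces the additive $(4/\sqrt{\lambda})\, d\log(2)$ penalty and pins down the explicit constants. Once the truncation is handled, the remaining Gaussian algebra and the application of Ville's inequality are routine.
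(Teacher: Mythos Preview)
Your plan matches the paper's proof almost exactly: a Bernstein conditional-MGF bound valid on the ball $\|\vect{\xi}\|\le 1/(2B)$ (the paper gets constant $c=1$ via Lemma~6 of \citet{Amani_Thrampoulidis_2021}, using $|\vect{\xi}^\top\vect{X}_t\vect{\epsilon}_t|\le\|\vect{\epsilon}_t\|_1\|\vect{X}_t^\top\vect{\xi}\|_\infty\le 1$), a truncated-Gaussian mixture supermartingale, and Ville's maximal inequality. The one point where your description diverges is the handling of the truncation and the provenance of the $d\log 2$ term. The paper does \emph{not} use a dyadic peeling or covering argument. Instead, following \citet{Faury_Abeille_Calauzenes_Fercoq_2020}, it takes the mixing prior $h$ to be $\mathcal{N}(0,(2\vect{H}_0)^{-1})$ truncated to $\tfrac{1}{2B}\mathcal{B}_2(d)$, then for any fixed $\vect{\xi}$ in the \emph{half-radius} ball $\tfrac{1}{4B}\mathcal{B}_2(d)$ shifts the integration variable and lower-bounds the resulting integral by a second truncated Gaussian $g=\mathcal{N}(0,(2\vect{H}_t)^{-1})$ on that smaller ball, yielding $\wt{M}_t \ge \exp(\vect{\xi}^\top\vect{S}_t-\|\vect{\xi}\|_{\vect{H}_t}^2)\,N(g)/N(h)$. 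Plugging in the specific choice $\vect{\xi}_0 = \tfrac{\sqrt{\lambda}}{4}\,\vect{H}_t^{-1}\vect{S}_t/\|\vect{S}_t\|_{\vect{H}_t^{-1}}$ (which lies in the smaller ball because $\lambda_{\min}(\vect{H}_t)>\lambda$) produces the $\sqrt{\lambda}/4$ and $\lambda/16$ terms directly, and the $d\log 2$ term comes from the normalizer-ratio estimate $\log(N(h)/N(g)) \le \tfrac{1}{2}\log\det(\vect{H}_t/\lambda)+d\log 2$ (Lemma~6 of Faury et al.), reflecting the factor-of-two shrinkage of the truncation ball rather than any union bound over scales. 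So your ``complete the square, then peel'' description gets the architecture right but misattributes that particular constant.
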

    \begin{proof}
        Let $\wt{\vect{H}}_t = \sum_{s = 1}^{t-1} \vect{X}_s \Sigma_s \vect{X}_s^\top$ and define the function
        \begin{align*}
            M_t(\vect{\xi}) = \exp(\vect{\xi}^\top \vect{S}_t - \|\vect{\xi}\|^2_{\wt{\vect{H}}_t} ),
        \end{align*}
        for any $t \geq 1$ and $\vect{\xi} \in \bbR^{d}$. For $t = 0$, let $M_0(\vect{\xi}) = 0$.

        By Lemma \ref{lemma_supermartingale}, we can show that $\{M_t(\vect{\xi})\}_{t = 1}^{\infty}$ is a non-negative super-martingale for any $\|\vect{\xi}\|_2 \in \frac{1}{2B} \mathcal{B}_2(d)$. Then, we let $h(\vect{\xi})$ be a probability density with support on $\frac{1}{2B} \mathcal{B}_2(d)$ and define
        \begin{align*}
            \wt{M}_t = \int_{\vect{\xi}} M_t(\vect{\xi}) dh(\vect{\xi})
        \end{align*}
        for all $t \geq 1$. Lemma 20.3 of \cite{Lattimore_Szepesvari_2020} shows that $\wt{M}_t$ is also a non-negative super-martingale and $\bbE[\wt{M}_0] = 1$. Then, the maximal inequality (Theorem 3.9 of \cite{Lattimore_Szepesvari_2020}) shows that
        \begin{align}
            \Pr \left[ \sup_{t \geq 0} \log(\wt{M}_t) \geq \log (1/\delta) \right] \leq \delta.
        \label{logmt_ub}
        \end{align}

        Next, let $h(\vect{\xi})$ be the density of a normal distribution with the precision matrix $2 \vect{H}_0$ truncated on $\frac{1}{2B} \mathcal{B}_2(d)$ and $N(h)$ be its normalization constant. Then, we can show that
        \begin{align*}
            \wt{M}_t = \frac{1}{N(h)} \int_{\frac{1}{2B} \mathcal{B}_2(d)} \exp(\vect{\xi}^\top \vect{S}_t - \|\vect{\xi}\|^2_{\vect{H}_t} ) d \vect{\xi}.
        \end{align*}

        Additionally, let $g(\vect{\xi})$ be the density of a normal distribution with the precision matrix $2 \vect{H}_t$ truncated on $\frac{1}{4B} \mathcal{B}_2(d)$ and $N(g)$ be its normalization constant. Following the arguments in the proof of Theorem 1 in \cite{Faury_Abeille_Calauzenes_Fercoq_2020}, for any $t \geq 1$, one can show that
        \begin{align*}
            \wt{M}_t \geq \exp(\vect{\xi}^\top \vect{S}_t - \|\vect{\xi}\|^2_{\vect{H}_t}) \cdot \frac{N(g)}{N(h)}
        \end{align*}
        for any $\vect{\xi} \in \frac{1}{4B} \mathcal{B}_2(d)$. Let $\vect{\xi}_0 = \frac{\vect{H}_t^{-1} \vect{S}_t}{\|\vect{S}_t\|_{\vect{H}_t^{-1}} } \cdot \frac{\sqrt{\lambda}}{4}$ which satisfies $\|\vect{\xi}_0\| \leq 1/4$. Then, we can write
        \begin{align}
            \log(\wt{M}_t) \geq \vect{\xi}_0^\top \vect{S}_t - \|\vect{\xi}_0\|^2_{\vect{H}_t} + \log \left( \frac{N(g)}{N(h)} \right) = \frac{\sqrt{\lambda}}{4} \|\vect{S}_t\|_{\vect{H}_t^{-1}} - \frac{\lambda}{16} + \log \left( \frac{N(g)}{N(h)} \right).
        \label{logmt_lb}
        \end{align}

        Combining \eqref{logmt_ub} and \eqref{logmt_lb}, for any $t \geq 1$, we have
        \begin{align*}
            \Pr \left[ \|\vect{S}_t\|_{\vect{H}_t^{-1}} \leq \frac{\sqrt{\lambda}}{4} + \frac{4}{\sqrt{\lambda}}  \log \left( \frac{N(h)}{\delta N(g)} \right) \right] \geq 1 - \delta.
        \end{align*}

        Using Lemma 6 of \cite{Faury_Abeille_Calauzenes_Fercoq_2020}, we can write
        \begin{equation*}
            \log \left( \frac{N(h)}{ N(g)} \right) \leq \log \left( 
        \frac{\det (\vect{H}_t)^{1/2}}{\lambda^{d/2}}\right) + d \log(2)
        \end{equation*}

    \end{proof}

    \begin{lemma}
        For all $\vect{\xi} \in \frac{1}{2B} \mathcal{B}(d)$, the process $\{M_t(\vect{\xi})\}_{t = 1}^{\infty}$ is a non-negative super-martingale.
        \label{lemma_supermartingale}
    \end{lemma}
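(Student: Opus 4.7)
The plan is to verify the defining super-martingale property $\bbE[M_{t+1}(\vect{\xi}) \mid \mathcal{F}_t] \leq M_t(\vect{\xi})$ for any fixed $\vect{\xi}$ with $\|\vect{\xi}\| \leq 1/(2B)$, since nonnegativity of $M_t(\vect{\xi})$ is immediate from its exponential form. First, I would compute the one-step multiplicative increment
\[
\frac{M_{t+1}(\vect{\xi})}{M_t(\vect{\xi})} = \exp\bigl( \vect{\xi}^\top \vect{X}_t \vect{\epsilon}_t - \vect{\xi}^\top \vect{X}_t \Sigma_t \vect{X}_t^\top \vect{\xi} \bigr),
\]
and observe that $M_t(\vect{\xi})$ together with the quadratic correction $\vect{\xi}^\top \vect{X}_t \Sigma_t \vect{X}_t^\top \vect{\xi}$ are both $\mathcal{F}_t$-measurable (because $\vect{X}_t$ is $\mathcal{F}_t$-measurable by hypothesis). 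It therefore suffices to establish the moment-generating-function style bound
\[
\bbE\bigl[ \exp(\vect{\xi}^\top \vect{X}_t \vect{\epsilon}_t) \mid \mathcal{F}_t \bigr] \leq \exp(\vect{\xi}^\top \vect{X}_t \Sigma_t \vect{X}_t^\top \vect{\xi}).
\]

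To prove this, I would set $\vect{u} := \vect{X}_t^\top \vect{\xi} \in \bbR^K$. Because each column of $\vect{X}_t$ has Euclidean norm at most $B$ and $\|\vect{\xi}\| \leq 1/(2B)$, Cauchy--Schwarz yields $\|\vect{u}\|_\infty \leq 1/2$, and combining with the assumption $\|\vect{\epsilon}_t\|_1 \leq 2$ almost surely, H\"older's inequality gives $|\vect{u}^\top \vect{\epsilon}_t| \leq 1$ almost surely. This is exactly the range on which the elementary inequality $e^z \leq 1 + z + z^2$ holds (easily verified by checking $z = \pm 1$), so pointwise
\[
\exp(\vect{u}^\top \vect{\epsilon}_t) \leq 1 + \vect{u}^\top \vect{\epsilon}_t + (\vect{u}^\top \vect{\epsilon}_t)^2.
\]
Taking conditional expectation and invoking the martingale-difference property $\bbE[\vect{\epsilon}_t \mid \mathcal{F}_t] = \vect{0}$ together with $\bbE[\vect{\epsilon}_t \vect{\epsilon}_t^\top \mid \mathcal{F}_t] = \Sigma_t$ yields
\[
\bbE[\exp(\vect{u}^\top \vect{\epsilon}_t) \mid \mathcal{F}_t] \leq 1 + \vect{u}^\top \Sigma_t \vect{u} \leq \exp(\vect{u}^\top \Sigma_t \vect{u}),
\]
where the last step uses $1 + x \leq e^x$. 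Substituting $\vect{u} = \vect{X}_t^\top \vect{\xi}$ back in gives the required MGF bound and hence $\bbE[M_{t+1}(\vect{\xi})/M_t(\vect{\xi}) \mid \mathcal{F}_t] \leq 1$.

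The only delicate point is coordinating the truncation radius $\|\vect{\xi}\| \leq 1/(2B)$ with the assumed $\ell^1$-bound $\|\vect{\epsilon}_t\|_1 \leq 2$ so that the linear form $\vect{u}^\top \vect{\epsilon}_t$ falls in $[-1,1]$ with probability one; the quadratic-exponential bound that furnishes the variance proxy $\vect{u}^\top \Sigma_t \vect{u}$ fails outside this range. This Bernstein-style step (following \citet{Faury_Abeille_Calauzenes_Fercoq_2020}) replaces the usual sub-Gaussian Chernoff argument and is precisely what allows $\wt{\vect{H}}_t$ to track the true covariance $\sum_s \vect{X}_s \Sigma_s \vect{X}_s^\top$ rather than a uniform upper bound, which in turn is what eliminates the $\kappa$ dependence in the downstream confidence width $\alpha_t$.
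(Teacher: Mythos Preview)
Your proof is correct and follows essentially the same approach as the paper: both reduce to the moment-generating-function bound $\bbE[\exp(\vect{\xi}^\top \vect{X}_t \vect{\epsilon}_t)\mid \mathcal{F}_t] \leq \exp(\vect{\xi}^\top \vect{X}_t \Sigma_t \vect{X}_t^\top \vect{\xi})$ after using H\"older's inequality to get $|\vect{\xi}^\top \vect{X}_t \vect{\epsilon}_t|\leq 1$. The only cosmetic difference is that the paper cites Lemma~6 of \citet{Amani_Thrampoulidis_2021} for this MGF bound, whereas you supply the underlying elementary argument $e^z \leq 1+z+z^2$ for $|z|\leq 1$ directly.
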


    \begin{proof}
        To show that $\{M_t(\vect{\xi})\}_{t = 1}^{\infty}$ is a non-negative super-martingale, it is sufficient to show that $\bbE[M_{t+1}(\vect{\xi}) | \mathcal{F}_t ] \leq M_{t}(\vect{\xi})$ for all $t \geq 1$ and $\vect{\xi} \in \frac{1}{2B} \mathcal{B}(d)$. We have
        \begin{align*}
            \bbE[M_{t+1}(\vect{\xi}) | \mathcal{F}_t ] &= \bbE[\exp(\vect{\xi}^\top \vect{S}_{t+1} - \|\vect{\xi}\|^2_{\wt{\vect{H}}_{t+1}} ) | \mathcal{F}_t ]\\
            &= \bbE[\exp(\vect{\xi}^\top \vect{X}_{t} \vect{\epsilon}_{t} - \vect{\xi}^\top \vect{X}_{t} \Sigma_t \vect{X}_{t}^\top \vect{\xi} ) | \mathcal{F}_t ] M_{t}(\vect{\xi})\\
            &= \bbE[\exp(\vect{\xi}^\top \vect{X}_{t} \vect{\epsilon}_{t} ) | \mathcal{F}_t ] \exp(- \vect{\xi}^\top \vect{X}_{t} \Sigma_t \vect{X}_{t}^\top \vect{\xi}) M_{t}(\vect{\xi})
        \end{align*}
        Using Hölder's inequality, we can check that 
        \begin{align*}
            |\vect{\xi}^\top \vect{X}_{t} \vect{\epsilon}_{t}| \leq \|\vect{\epsilon}_{t}\|_1 \|\vect{X}_{t}^\top \vect{\xi}\|_{\infty} = \|\vect{\epsilon}_{t}\|_1 \max_{i \in [K]} |\vect{x}_{ti}^\top \vect{\xi}| \leq 1.
        \end{align*}
        Therefore, we can use Lemma 6 from \cite{Amani_Thrampoulidis_2021}, to write
        \begin{equation*}
            \bbE[\exp(\vect{\xi}^\top \vect{X}_{t} \vect{\epsilon}_{t}) | \mathcal{F}_t ] \leq \exp(\vect{\xi}^\top \vect{X}_{t} \Sigma_t \vect{X}_{t}^\top \vect{\xi}).
        \end{equation*}
        Consequently, we can show that $\bbE[M_{t+1}(\vect{\xi}) | \mathcal{F}_t ] \leq M_{t}(\vect{\xi})$ and complete the proof.
    \end{proof}

\section{Importance of Estimating Fisher Information Matrix}\label{sect:estimation_importance}

\cite{Oh_Iyengar_2021} has the best regret rate among efficient contextual MNL bandit algorithms in the literature. Their analysis shows that their algorithm has a regret rate of $\wt{\mathcal{O}}( \kappa d \sqrt{T} )$ where
\begin{align*}
    \kappa := \left( \min_{t, i} \inf_{\vect{\theta} : \|\vect{\theta} - \vect{\theta}^*\| \leq 1} q_{ti}(\vect{\theta}) q_{t0}(\vect{\theta}) \right)^{-1}.
\end{align*}

\textbf{As we will show in the following proof, computation of this parameter for our setting results in $\kappa = \mathcal{O}(K^{2 + 1/L_0})$ which translates into a $\wt{\mathcal{O}}(K^{2 + 1 / L_0} d \sqrt{T})$ regret bound.}

\begin{proof}
    We can write $q_{ti}(\vect{\theta}) q_{t0}(\vect{\theta})$ as
   \begin{align*}
       q_{ti}(\vect{\theta}) q_{t0}(\vect{\theta}) = \frac{\exp (\langle \vect{\psi}_t, \vect{x}_{ti} \rangle - \langle \vect{\phi}_t, \vect{x}_{ti} \rangle p_{ti})}{ \left( 1 + \sum_{j \in S_t} \exp (\langle \vect{\psi}_t, \vect{x}_{tj} \rangle - \langle \vect{\phi}_t, \vect{x}_{tj} \rangle p_{tj}) \right)^2}
   \end{align*}
   
   Since optimum prices lie in the interval $[0, P]$, and we have $\|\vect{\theta}_t\| \leq 2$ and $\|\vect{x}_{ti}\| \leq 1$, we can show that
   \begin{align*}
       q_{ti}(\vect{\theta}) q_{t0}(\vect{\theta}) > \frac{e^{-6-6P}}{\left(1+ K e^{2+2P}\right)^2} > \frac{e^{-2-2P}}{(K+1)^2}.
   \end{align*}
   
   Therefore, we can conclude that $\kappa = \mathcal{O}(K^2 e^P) = \mathcal{O}(K^{2 + \frac{1}{L_0}})$.
   
\end{proof}

\textbf{Furthermore, we can show that $\kappa > {(K-1)}^{\frac{1}{(1 + \epsilon) L_0}}$ for any $\epsilon > 0$. Therefore, the $K^{1 / L_0}$ dependency cannot be avoided for the regret of the algorithm provided in~\cite{Oh_Iyengar_2021}.}

\begin{proof}

Assume that $N = K \geq 2$. Consider $u_{t1}(p) = 1 - p$ and $u_{ti}(p) = 1 - L_0 p$ for $i \in [K] \setminus \{1\}$. Under this construction, we will show a lower bound for $\kappa$.

For $i \in [K] \setminus \{1\}$, we have
\begin{align*}
    v_{ti}(B) = \frac{1}{L_0} e^{- L_0 B}.
\end{align*}
and for $i = 1$, we have
\begin{align*}
    v_{ti}(B) = e^{- B}.
\end{align*}

Then, following Proposition~\ref{prop_opt_assortments_and_prices}, if we let $B_t$ be the unique solution of the fixed point equation
\begin{align}
    B = \frac{K - 1}{L_0} e^{- L_0 B} + e^{- B},
    \label{eq_Bt_estimation_importance}
\end{align}
the optimum prices are given by $p_{t 1}^* = B_t + 1$ and $p_{t i}^* = B_t + 1/L_0$ for $i \in [K] \setminus \{1\}$.

From Lemma~\ref{lemma_opt_prices}, we have the upper bound $B_t \leq P_0$. Next, we'll show a lower bound for $B_t$.

The right hand side (RHS) of~\eqref{eq_Bt_estimation_importance} is decreasing and its left hand side (LHS) is increasing in $B$. Therefore, if we let $B_\ell$ be the solution of the fixed point equation
\begin{align}
    B = \frac{K - 1}{L_0} e^{- L_0 B},
    \label{eq_Bt_estimation_importance_ub}
\end{align}
then we have $B_\ell \leq B_t$. In~\eqref{eq_Bt_estimation_importance_ub}, the LHS is increasing and the RHS is decreasing in $B$. Additionally, for $B = \log(K - 1) / ((1 + \epsilon) L_0)$ with any $\epsilon > 0$ and large enough $K$, the LHS of~\eqref{eq_Bt_estimation_importance_ub} is smaller than its RHS. Hence, the fixed point satisfies 
\begin{align*}
    P_\ell := \frac{\log(K - 1)}{(1 + \epsilon) L_0} \leq B_\ell \leq B_t.
\end{align*}

Now, we can write $q_{t 1}(\vect{\theta}^*) q_{t0}(\vect{\theta}^*)$ as
\begin{align*}
    q_{t 1}(\vect{\theta}^*) q_{t0}(\vect{\theta}^*) &= \frac{\exp (1 - (B_t + 1))}{ \left( 1 + (K - 1) \exp \left(1 - L_0 (B_t + \frac{1}{L_0}) \right) + \exp (1 - (B_t + 1)) \right)^2}\\
    &= \frac{e^{- B_t}}{ \left( 1 + (K - 1) e^{- L_0 B_t} + e^{- B_t} \right)^2}.
\end{align*}

From this expression, we can show that
\begin{align*}
    q_{t 1}(\vect{\theta}^*) q_{t0}(\vect{\theta}^*) < e^{-P_\ell}
\end{align*}

Therefore, we can conclude that $\kappa > {(K-1)}^{\frac{1}{(1 + \epsilon) L_0}}$.

\end{proof}

\section{Proof of Theorem~\ref{thm_regret_ub_online} (Regret Upper Bound for Algorithm~\ref{alg:seq_assortment_online})}
\label{proof_regret_ub_online}

Similar to Algorithm~\ref{alg:seq_assortment}, we run $T_0$ initialization rounds with random assortment and price selections to obtain an initial pilot estimate $\vect{\theta}_0 := \widehat{\vect{\theta}}_{T_0}$. Using the results of Lemma~\ref{lemma_initalization_min_eigenvalue} and Lemma~\ref{lemma_consistency}, we can show that the conditions $\lambda_{\mathrm{min}}(\vect{V}_{T_0}) \geq \lambda_{\mathrm{min}}^0$ and $\widehat{\vect{\theta}}_t \in \mathcal{B}_{\gamma/2}$ for all $t \geq T_0$ are satisfied with probability $1 - \mathcal{O}(T^{-1})$ if we select
\begin{align}
T_0 = \Theta\left( \frac{\lambda_{\mathrm{min}}^0 K}{\sigma_0} \right) = \Theta\left(\frac{d \widebar{P}^2 K \log^3(T)}{\sigma_0^3} \right).
\label{eqn_T0_online}
\end{align}

Then, we apply the following parameter update at each time step $t$:
\begin{align}
    \wh{\vect{\theta}}_t = \argmin_{\vect{\theta} :\|\vect{\theta} - \vect{\theta}_0\| \leq \gamma/2} \left\{  \frac{1}{2} \|\vect{\theta} - \wh{\vect{\theta}}_{t-1}\|_{\vect{V}_t}^2 + 4 (\vect{\theta} - \wh{\vect{\theta}}_{t-1})^\top g_t(\wh{\vect{\theta}}_{t-1}) \right\}
\label{eqn_parameter_update_online}
\end{align}
which directly ensures that $\|\widehat{\vect{\theta}}_{t} - \vect{\theta}^*\| \leq \gamma$ for all $t \geq T_0$ with probability $1 - \mathcal{O}(T^{-2})$. 

Then, using this update rule, we modify our algorithm and present it in the algorithm block for Algorithm \ref{alg:seq_assortment_online}. Algorithm \ref{alg:seq_assortment} requires $\Theta(tK)$ computational complexity to compute the MLE estimate in each round $t$. Since this cost grows linearly with each round $t$, the overall amortized computational cost turns out to be $\Theta(TK)$. On the other hand, the parameter update in Algorithm \ref{alg:seq_assortment_online}, only the $\Theta(K)$ context vector in the last offered assortment is needed per each round.

\begin{algorithm}[h]
\caption{CAP-ONS: CAP with online Newton steps}
\begin{algorithmic}[1]
\State \textbf{Input:} initialization rounds $T_0$, confidence parameters $\{\alpha_t\}_{t \in [T]}$, minimum price sensitivity $L_0$
\State $\vect{V}_0 \gets \vect{0} \in \bbR^{2d \times 2d}$
\For{$t = 1, 2, \dots, T_0$} \Comment{initialization rounds}
    \State Choose $S_t$ uniformly at random from $\{S \subseteq [N] : |S| \leq K\}$
    \State Choose $p_{ti}$ independently and uniformly at random from $[1,2]$ for all $i \in S_t$
    \State Offer assortment $S_t$ at price $\vect{p}_t$ and observe $i_t$
    \State $\vect{V}_{t+1} \gets \vect{V}_{t} + \frac{1}{K^2} \sum_{i \in S_t} \wt{\vect{x}}_{ti} \wt{\vect{x}}_{ti}^\top$
\EndFor
\State Compute MLE $\widehat{\vect{\theta}}_{T_0}$ by solving \eqref{eqn_mle} and set $\vect{\theta}_0 = \widehat{\vect{\theta}}_{T_0}$.
\For{$t = T_0 + 1, T_0 + 2, \dots, T$}
    \State Compute $\vect{\wh{\theta}}_t$ by solving \eqref{eqn_parameter_update_online}     \State Let $g_{ti}(p) := \alpha_t^{\mathrm{OL}} \|(\vect{x}_{ti}, -p\vect{x}_{ti})\|_{\vect{V}_{t}^{-1}}$ for all $i \in [n]$ \Comment{Price-dependent confidence function}
    \State Let $\wt{h}_{ti}^{\mathrm{OL}}(p) := \langle \widehat{\vect{\psi}}_t, \vect{x}_{ti} \rangle - \langle \widehat{\vect{\phi}}_t, \vect{x}_{ti} \rangle \cdot p + g_{ti}(p)$ for all $i \in [n]$
    \vspace{2pt}
    \State Let $ h_{ti}^{\mathrm{OL}}(p) := \min_{p' \leq p} \left\{ \wt{h}_{ti}^{\mathrm{OL}} (p') - L_0 (p - p') \right\}$ for all $i \in [n]$ \Comment{Utility function estimate}
    \vspace{2pt}
    \State Choose $(S_t, \vect{p}_t)$ using Algorithm \ref{alg:optimization_algo} with estimated utility functions $h_{ti}^{\mathrm{OL}}(p)$ 
    \State Offer assortment $S_t$ at price $\vect{p}_t$ and observe $i_t$
    \vspace{2pt}
    \State $\vect{V}_{t+1} \gets \vect{V}_{t} + \sum_{i \in S_t} q_{ti}(\vect{\wh{\theta}}_t) \wt{\vect{x}}_{ti} \wt{\vect{x}}_{ti}^\top - \sum_{i \in S_t} \sum_{j \in S_t} q_{ti}(\vect{\wh{\theta}}_t) q_{tj}(\vect{\wh{\theta}}_t) \wt{\vect{x}}_{ti} \wt{\vect{x}}_{tj}^\top$ \Comment{Information estimate}
\EndFor
\end{algorithmic}
\label{alg:seq_assortment_online}
\end{algorithm}

To analyze the regret of Algorithm~\ref{alg:seq_assortment_online}, we first define a per-round negative log-likelihood function $f_t(\vect{\theta})$ and its gradient $\nabla_{\vect{\theta}} f_t(\vect{\theta})$ as
\begin{align*}
    f_{t}(\vect{\theta}) &= - q_{t {i_t}}(\vect{\theta} )\\
    g_t(\vect{\theta}) = \nabla_{\vect{\theta}} f_t(\vect{\theta}) &= \sum_{i \in S_{t}} q_{ti}(\vect{\theta} ) \vect{\widetilde{x}}_{t i} - \vect{\widetilde{x}}_{t i_{t}}.
\end{align*}

We note that negative log-likelihood $f_t(\vect{\theta})$ for MNL model at each round $t$ is a strongly convex function over a bounded domain, which enables us to apply a variant of online Newton updates \citep{Hazan_Koren_Levy_2014} that was also used in \cite{Hazan_Koren_Levy_2014, Zhang_Yang_Jin_Xiao_Zhou_2016, Oh_Iyengar_2021} which proposed online algorithms for logistic models. 

To prove the regret rate for our algorithm with online parameter updates, we construct a new confidence region using a new confidence radius $\alpha^{\mathrm{OL}}_t$ specified in the following lemma. Then, the utility function upper-bound estimate $h^{\mathrm{OL}}_{ti}(p)$ is also modified accordingly.

\begin{lemma}
Let $T_0$ be any round such that $\lambda_{\mathrm{min}}(\vect{V}_{T_0}) \geq KP^2$. Then, for any $t > T_0$, we have $\|\widehat{\vect{\theta}}_t - \vect{\theta}^*\|_{\vect{V}_t} \leq \alpha^{\mathrm{OL}}_t$ with probability at least $1 - t^{-2}$ for confidence radius
\begin{align}
     \alpha^{\mathrm{OL}}_t = \sqrt{ \gamma^2 T_0 + 576 d K \log ( T / d ) + 16 \log \left( \frac{\lceil \log (K \gamma t^2 /  \omega) \rceil t^2}{\delta} \right) + 8 }.
\label{eqn_alpha_online}
\end{align}
where $\omega := \min_{\vect{\theta} \in \mathcal{B}_{\gamma}} q_{t i}(\wh{\vect{\theta}}_t) q_{t 0}(\wh{\vect{\theta}}_t)$ and satisfies $1/\omega = \mathcal{O}(K^{2+1/L_0})$.
\label{lemma_normality_online}
\end{lemma}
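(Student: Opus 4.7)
The plan is to interpret the recursion~\eqref{eqn_parameter_update_online} as a projected online Newton step on the per-round negative log-likelihood $f_t$, and to show that the self-normalized error $\|\wh{\vect{\theta}}_t - \vect{\theta}^*\|_{\vect{V}_t}^2$ grows only through quantities that are either constant (the initialization contribution) or controlled by standard self-normalized arguments.

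First I would check that $\vect{\theta}^*$ is feasible for the constrained problem with high probability. Conditioned on the good initialization event from Lemma~\ref{lemma_consistency} (which holds by the choice of $T_0$ in~\eqref{eqn_T0_online}), $\vect{\theta}_0 = \wh{\vect{\theta}}_{T_0} \in \mathcal{B}_{\gamma/2}$, so $\|\vect{\theta}^* - \vect{\theta}_0\| \le \gamma/2$ and $\vect{\theta}^*$ lies in the feasible ball. The unconstrained minimizer of the objective in~\eqref{eqn_parameter_update_online} is $\vect{z}_t = \wh{\vect{\theta}}_{t-1} - 4 \vect{V}_t^{-1} g_t(\wh{\vect{\theta}}_{t-1})$, and $\wh{\vect{\theta}}_t$ is its $\vect{V}_t$-projection. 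Non-expansiveness of this projection (and feasibility of $\vect{\theta}^*$) gives the standard ONS identity
\[
\|\wh{\vect{\theta}}_t - \vect{\theta}^*\|_{\vect{V}_t}^2 \le \|\wh{\vect{\theta}}_{t-1} - \vect{\theta}^*\|_{\vect{V}_t}^2 - 8 g_t(\wh{\vect{\theta}}_{t-1})^\top (\wh{\vect{\theta}}_{t-1} - \vect{\theta}^*) + 16 \|g_t(\wh{\vect{\theta}}_{t-1})\|_{\vect{V}_t^{-1}}^2.
\]

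Next, I would exploit the generalized self-concordance of the MNL negative log-likelihood on $\mathcal{B}_\gamma$: combined with Lemma~\ref{lemma_H_ineq} there is a universal constant such that the Bregman divergence satisfies $f_t(\vect{\theta}^*) - f_t(\wh{\vect{\theta}}_{t-1}) - g_t(\wh{\vect{\theta}}_{t-1})^\top(\vect{\theta}^* - \wh{\vect{\theta}}_{t-1}) \ge \tfrac{1}{8} \|\wh{\vect{\theta}}_{t-1} - \vect{\theta}^*\|_{\vect{H}_t(\wh{\vect{\theta}}_t)}^2$, after possibly tightening constants with the bounded-iterate condition $\|\widetilde{\vect{x}}_{ti}\|\le\bar P$. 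Rearranging and substituting into the ONS identity, using $\vect{V}_{t+1} = \vect{V}_t + \vect{H}_t(\wh{\vect{\theta}}_t)$, produces the telescoping recursion
\[
\|\wh{\vect{\theta}}_t - \vect{\theta}^*\|_{\vect{V}_{t+1}}^2 \le \|\wh{\vect{\theta}}_{t-1} - \vect{\theta}^*\|_{\vect{V}_t}^2 + 16 \|g_t(\wh{\vect{\theta}}_{t-1})\|_{\vect{V}_t^{-1}}^2 - 8 \bigl[f_t(\wh{\vect{\theta}}_{t-1}) - f_t(\vect{\theta}^*)\bigr].
\]
Summing from $T_0+1$ to $t$ gives $\|\wh{\vect{\theta}}_t-\vect{\theta}^*\|_{\vect{V}_t}^2 \le \|\wh{\vect{\theta}}_{T_0}-\vect{\theta}^*\|_{\vect{V}_{T_0}}^2 + 16\sum_{s} \|g_s(\wh{\vect{\theta}}_{s-1})\|_{\vect{V}_s^{-1}}^2 - 8 \sum_s [f_s(\wh{\vect{\theta}}_{s-1}) - f_s(\vect{\theta}^*)]$. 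The initial term is at most $\gamma^2 \lambda_{\max}(\vect{V}_{T_0}) \lesssim \gamma^2 T_0$ under the initialization normalization, matching the first term of~\eqref{eqn_alpha_online}; the cumulative gradient-norm sum is bounded by $36\, dK\log(T/d)$ using the elliptic-potential argument in Lemma~\ref{lemma_cumulative_max_uncertainty} applied to the unit-norm gradient pieces $q_{ti}(\vect{\theta})\widetilde{\vect{x}}_{ti}$.

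For the remaining log-likelihood-gap sum, I would split it into its conditional expectation (which is nonnegative by convexity of the MLE objective, hence can only help) and a martingale-difference remainder. The conditional variance of each increment is itself bounded by the Bregman divergence, i.e.\ by the very quantity we are trying to control, so a direct Azuma bound is too crude. Instead, I would apply a Freedman-style self-normalized concentration with stratification over a geometric grid for the cumulative variance; the number of strata is $\lceil \log(K\gamma t^2/\omega)\rceil$, where the range endpoint $1/\omega$ enters through the crude a-priori gap bound and satisfies $1/\omega = \mathcal{O}(K^{2+1/L_0})$ by Appendix~\ref{sect:estimation_importance}. This peeling produces exactly the $16 \log(\lceil \log(K\gamma t^2/\omega)\rceil t^2/\delta) + 8$ summand in~\eqref{eqn_alpha_online}, and a union bound over $t$ yields the stated high-probability guarantee.

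The main obstacle is the last step: the martingale noise in $g_t(\wh{\vect{\theta}}_{t-1})$ has conditional variance proportional to $\vect{H}_t(\vect{\theta}^*)$, which is the increment of $\vect{V}_t$, so controlling it against $\|\cdot\|_{\vect{V}_t}$ requires self-bounding in the spirit of Bernstein/Freedman rather than sub-Gaussian concentration; getting the logarithmic dependence on $1/\omega$ (rather than a polynomial one) requires the stratification and a careful bookkeeping of the tail sets, which is the technically delicate part of the argument.
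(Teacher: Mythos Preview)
Your plan is correct and coincides with the paper's argument almost step for step: the ONS projection identity is Lemma~\ref{lemma_g_relation}, your Bregman lower bound is Lemma~\ref{lemma_f_strong_convex} (with constant $\tfrac14$ rather than $\tfrac18$), the gradient-norm sum is handled via Lemmas~\ref{lemma_g_to_context} and~\ref{lemma_cumulative_max_uncertainty}, and the Freedman-with-peeling step is exactly Lemma~\ref{lemma_sum_gradients}.

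The only place your sketch is slightly loose is the absorption step. You cannot simply ``drop'' the nonnegative conditional-expectation piece $-8\sum_s[\bar f_s(\wh{\vect\theta}_s)-\bar f_s(\vect\theta^*)]$; after Freedman and the AM--GM split $\sqrt{\eta B}\le 2\eta+\tfrac18 B$ you pick up a $+c\sum_s\|\wh{\vect\theta}_s-\vect\theta^*\|_{\vect W_s}^2$ term, and something must absorb it. The paper makes this transparent by organising the split at the \emph{gradient} level rather than the function-value level: it writes $\bar g_t=g_t+(\bar g_t-g_t)$, uses Lemma~\ref{lemma_f_order} only to assert $\bar f_t(\wh{\vect\theta}_t)\ge\bar f_t(\vect\theta^*)$, and keeps the $-\|\wh{\vect\theta}_t-\vect\theta^*\|_{\vect W_t}^2$ from the Bregman bound explicit in the telescoping recursion so that it is available precisely for this absorption. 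The resulting martingale is $\sum_\tau(\bar g_\tau-g_\tau)^\top(\wh{\vect\theta}_\tau-\vect\theta^*)$, with conditional variance exactly $\|\wh{\vect\theta}_\tau-\vect\theta^*\|_{\vect H_\tau(\vect\theta^*)}^2$. Since the MNL per-round log-likelihood is affine in the sufficient statistic $\widetilde{\vect x}_{t i_t}$, your function-value martingale increment $[f_s-\bar f_s](\wh{\vect\theta}_s)-[f_s-\bar f_s](\vect\theta^*)$ differs from the paper's gradient-martingale increment only by an $\mathcal F_s$-measurable shift and therefore has the identical conditional variance; so the two organisations are genuinely equivalent, and your route goes through once you retain (rather than discard) the KL term for absorption via $\bar f_s(\wh{\vect\theta}_s)-\bar f_s(\vect\theta^*)\ge\tfrac14\|\wh{\vect\theta}_s-\vect\theta^*\|_{\vect W_s}^2$.
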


Then, similar to the proof of Theorem~\ref{thm_regret_ub}, we define a \emph{good} event $\wt{\mathcal{E}}_t = \{ \|\widehat{\vect{\theta}}_t - \vect{\theta}^*\|_{\vect{V}_t} \leq \alpha^{\mathrm{OL}}_t \}$ for $t \geq T_0$ that holds with probability at least $1 - t^{-2}$. Consequently, following steps similar to the proof of Theorem~\ref{thm_regret_ub}, we can write the regret as
\begin{align*}
\mathcal{R}_T \leq P T_0 + 4 P \alpha^{\mathrm{OL}}_T \sqrt{ T  \sum_{t=T_0}^{T} \sum_{i \in S_t} q_{ti}(\vect{\theta}^*) \| \wt{\vect{x}}_{ti} \|_{\vect{V}_{t}^{-1}}^2 } + \mathcal{O}(P).
\end{align*}
for $\alpha^{\mathrm{OL}}_T$ given in Lemma~\ref{lemma_normality_online}. Finally, using Lemma~\ref{lemma_cumulative_max_uncertainty} and Lemma~\ref{lemma_qti_ratio}, we show that
\begin{align*}
    \mathcal{R}_T \leq P T_0 + 29 P \alpha_T^{\mathrm{OL}} \sqrt{ d K T \log ( T / d )} + \mathcal{O}(P).
\end{align*}

Note that $\alpha_T^{\mathrm{OL}} = \wt{\mathcal{O}}(\sqrt{d K})$ for the selection of $T_0$ given in \eqref{eqn_T0_online}.

\subsection{Proof of Lemma~\ref{lemma_normality_online}}

The proof of Lemma~\ref{lemma_normality_online} depends on a few technical results we present next. First, we define the matrix 
\begin{align*}
    \vect{W}_t = \vect{H}_t(\wh{\vect{\theta}}_t) = \sum_{i \in S_{t}} q_{t i}(\wh{\vect{\theta}}_t) \vect{\widetilde{x}}_{t i} \vect{\widetilde{x}}_{t i}^\top - \sum_{i \in S_{t}} \sum_{j \in S_{t}} q_{t i}(\wh{\vect{\theta}}_t) q_{t j}(\wh{\vect{\theta}}_t) \vect{\widetilde{x}}_{t i} \vect{\widetilde{x}}_{t j}^\top.
\end{align*}
    
We start by showing that following bound holds true over $\mathcal{B}_{\gamma} := \{ \vect{\theta} : \|\vect{\theta} - \vect{\theta}^*\| \leq \gamma\}$.
\begin{lemma}
For any $\vect{\theta}_1, \vect{\theta}_2 \in \mathcal{B}_{\gamma}$, we have
\begin{align*}
    f_t(\vect{\theta}_2) \geq f_t(\vect{\theta}_1) + g_t(\vect{\theta}_1)^\top (\vect{\theta}_2 - \vect{\theta}_1) + \frac{1}{4} (\vect{\theta}_2 - \vect{\theta}_1)^\top \vect{W}_t (\vect{\theta}_2 - \vect{\theta}_1).
\end{align*}
\label{lemma_f_strong_convex}
\end{lemma}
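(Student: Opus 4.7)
The plan is to identify $f_t$ as the per-round negative log-likelihood of the observed purchase, $-\log q_{t i_t}(\vect{\theta}) = -\langle \vect{\theta}, \wt{\vect{x}}_{t i_t}\rangle + \log\bigl(1 + \sum_{j \in S_t} e^{\langle \vect{\theta}, \wt{\vect{x}}_{tj}\rangle}\bigr)$, so that by specializing the calculations in Appendix~\ref{sect:mle} to a single summand one has $\nabla f_t(\vect{\theta}) = g_t(\vect{\theta})$ and $\nabla^2 f_t(\vect{\theta}) = \vect{H}_t(\vect{\theta})$. Writing $\Delta := \vect{\theta}_2 - \vect{\theta}_1$ and $\vect{\theta}_s := \vect{\theta}_1 + s\Delta$, Taylor's theorem with integral remainder then gives
\begin{align*}
f_t(\vect{\theta}_2) - f_t(\vect{\theta}_1) - g_t(\vect{\theta}_1)^\top \Delta \;=\; \int_0^1 (1-s)\, \Delta^\top \vect{H}_t(\vect{\theta}_s)\, \Delta\, ds.
\end{align*}

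The next step is to compare $\vect{H}_t(\vect{\theta}_s)$ to $\vect{W}_t = \vect{H}_t(\wh{\vect{\theta}}_t)$ uniformly in $s$. Convexity of the ball $\mathcal{B}_\gamma$ keeps the entire segment $\{\vect{\theta}_s\}_{s \in [0,1]}$ inside $\mathcal{B}_\gamma$, and the ONS projection constraint $\|\vect{\theta} - \vect{\theta}_0\| \leq \gamma/2$ in \eqref{eqn_parameter_update_online} together with the initialization bound $\|\vect{\theta}_0 - \vect{\theta}^*\| \leq \gamma/2$ guarantees $\wh{\vect{\theta}}_t \in \mathcal{B}_\gamma$ under the same event. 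Invoking Lemma~\ref{lemma_H_ineq} with $\wh{\vect{\theta}}_t$ and $\vect{\theta}_s$ then yields the pointwise PSD bound $\vect{H}_t(\vect{\theta}_s) \succcurlyeq \tfrac{1}{4}\vect{W}_t$. Substituting this into the integrand and pulling the $s$-independent quadratic form $\Delta^\top \vect{W}_t \Delta$ outside the integral leaves $\int_0^1 (1-s)\,ds = \tfrac{1}{2}$, producing a lower bound of the claimed shape.

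The main obstacle is tracking the sharp numerical constant: the naive product $\tfrac{1}{4} \cdot \tfrac{1}{2}$ only gives $\tfrac{1}{8}\,\Delta^\top \vect{W}_t\Delta$, whereas the lemma asserts $\tfrac{1}{4}\,\Delta^\top \vect{W}_t\Delta$. Closing this factor of two requires either a sharpened Hessian-comparison argument that exploits the particular choice $\gamma = (\log 2)/(8\widebar{P})$ more aggressively than the slack bound $|\delta_{ti}| \leq \tfrac{1}{2}q_{ti}$ employed in the proof of Lemma~\ref{lemma_H_ineq}, or a self-concordance-type inequality for the MNL log-partition in the spirit of \citet{Faury_Abeille_Calauzenes_Fercoq_2020}. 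All other ingredients—the Taylor expansion, the inclusion $\wh{\vect{\theta}}_t \in \mathcal{B}_\gamma$, and the uniform PSD comparison along the segment—are direct applications of the tools already assembled earlier in the paper.
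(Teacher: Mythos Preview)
Your approach is exactly the paper's: Taylor expansion of $f_t$ along the segment joining $\vect{\theta}_1$ and $\vect{\theta}_2$, followed by the Hessian comparison $\vect{H}_t(\cdot)\succcurlyeq\tfrac{1}{4}\vect{W}_t$ from Lemma~\ref{lemma_H_ineq}. The paper uses the Lagrange form of the remainder rather than the integral form, but that is cosmetic.

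The constant discrepancy you flag is real and is a slip in the paper, not a gap in your argument. The paper writes the second-order Lagrange remainder as $(\vect{\theta}_2-\vect{\theta}_1)^\top\vect{H}_t(\widebar{\vect{\theta}})(\vect{\theta}_2-\vect{\theta}_1)$, omitting the $\tfrac{1}{2}$ that Taylor's theorem requires, and then multiplies by the $\tfrac{1}{4}$ from Lemma~\ref{lemma_H_ineq} to land on $\tfrac{1}{4}$. With the correct remainder your computation $\tfrac{1}{2}\cdot\tfrac{1}{4}=\tfrac{1}{8}$ is what the stated tools actually yield. There is no need to reach for a sharpened $\gamma$ or a self-concordance inequality: the lemma should simply be stated with $\tfrac{1}{8}$, and the constants in Lemma~\ref{lemma_g_relation} and Lemma~\ref{lemma_normality_online} adjusted accordingly. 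None of this touches the $\wt{\mathcal{O}}(dK\sqrt{T}/L_0)$ rate, since the $\tfrac{1}{4}$ versus $\tfrac{1}{8}$ only propagates into absolute constants inside $\alpha_t^{\mathrm{OL}}$.
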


\begin{proof}
    Using the Taylor's expansion, there exists some $c \in (0, 1)$ such that 
    \begin{align*}
        f_t(\vect{\theta}_2) = f_t(\vect{\theta}_1) + g_t(\vect{\theta}_1)^\top (\vect{\theta}_2 - \vect{\theta}_1) + (\vect{\theta}_2 - \vect{\theta}_1)^\top \vect{H}_t(\vect{\widebar{\theta}}) (\vect{\theta}_2 - \vect{\theta}_1)
    \end{align*}
    where $\vect{\widebar{\theta}} = c \, \vect{\theta}_2 + (1-c) \vect{\theta}_1$ and $\vect{H}_t(\vect{\widebar{\theta}})$ is the Hessian of $f_t$ at $\vect{\widebar{\theta}}$. Furthermore, by Lemma \ref{lemma_H_ineq}, we have $\vect{H}_t(\vect{\widebar{\theta}}) \succcurlyeq \frac{1}{4} \vect{H}_t(\wh{\vect{\theta}}_t) = \frac{1}{4} \vect{W}_t$. Consequently, the result follows.
\end{proof}

Next, we prove the following lemma that shows the dependency between the error $(\wh{\vect{\theta}}_t - \vect{\theta}^*)$ at time $t$ and the error $(\wh{\vect{\theta}}_{t+1} - \vect{\theta}^*)$ at time $t + 1$.

\begin{lemma} For any $t$,
\begin{align*}
    2 g_t(\wh{\vect{\theta}}_t)^\top (\wh{\vect{\theta}}_t - \vect{\theta}^*) \leq 4 \|g_t(\wh{\vect{\theta}}_t)\|_{\vect{V}_{t+1}^{-1}}^2 + \frac{1}{4} \|\wh{\vect{\theta}}_t - \vect{\theta}^*\|_{\vect{V}_{t+1}}^2 - \frac{1}{4} \|\wh{\vect{\theta}}_{t+1} - \vect{\theta}^*\|_{\vect{V}_{t+1}}^2.
\end{align*}
\label{lemma_g_relation}
\end{lemma}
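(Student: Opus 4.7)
The plan is to recognize the online Newton update as a weighted projection and then invoke the standard non-expansiveness of projections in a Mahalanobis norm. Completing the square in~\eqref{eqn_parameter_update_online} (shifted so that the update computes $\wh{\vect{\theta}}_{t+1}$ from $\wh{\vect{\theta}}_t$, $\vect{V}_{t+1}$, and $g_t(\wh{\vect{\theta}}_t)$), the objective can be rewritten, up to an additive constant independent of $\vect{\theta}$, as $\tfrac{1}{2}\|\vect{\theta} - \vect{z}_{t+1}\|_{\vect{V}_{t+1}}^2$ where $\vect{z}_{t+1} := \wh{\vect{\theta}}_t - 4 \vect{V}_{t+1}^{-1} g_t(\wh{\vect{\theta}}_t)$ is the unconstrained Newton iterate. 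Hence $\wh{\vect{\theta}}_{t+1}$ is the $\vect{V}_{t+1}$-projection of $\vect{z}_{t+1}$ onto the convex set $\mathcal{K} := \{\vect{\theta} : \|\vect{\theta} - \vect{\theta}_0\| \leq \gamma/2\}$.

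Next I would invoke the non-expansiveness of Mahalanobis projection onto a convex set: for any $\vect{\theta}^* \in \mathcal{K}$,
\begin{align*}
    \|\wh{\vect{\theta}}_{t+1} - \vect{\theta}^*\|_{\vect{V}_{t+1}}^2 \leq \|\vect{z}_{t+1} - \vect{\theta}^*\|_{\vect{V}_{t+1}}^2.
\end{align*}
Here it is essential that $\vect{\theta}^* \in \mathcal{K}$; this is exactly guaranteed under the good event $\mathcal{E}_0$, because $\vect{\theta}_0 = \wh{\vect{\theta}}_{T_0}$ lies within $\gamma/2$ of $\vect{\theta}^*$ by the consistency result (Lemma~\ref{lemma_consistency}, tightened to $\mathcal{B}_{\gamma/2}$ via an appropriate constant in $T_0$). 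Expanding the right-hand side,
\begin{align*}
    \|\vect{z}_{t+1} - \vect{\theta}^*\|_{\vect{V}_{t+1}}^2
    = \|\wh{\vect{\theta}}_t - \vect{\theta}^*\|_{\vect{V}_{t+1}}^2 - 8\, g_t(\wh{\vect{\theta}}_t)^\top (\wh{\vect{\theta}}_t - \vect{\theta}^*) + 16\, \|g_t(\wh{\vect{\theta}}_t)\|_{\vect{V}_{t+1}^{-1}}^2 .
\end{align*}

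Finally I would combine the two displays, solve for the inner product $g_t(\wh{\vect{\theta}}_t)^\top (\wh{\vect{\theta}}_t - \vect{\theta}^*)$, and divide by $4$ to obtain exactly the claimed inequality. The argument contains no hard calculation; the only subtlety is justifying the projection step, i.e.\ confirming that the update~\eqref{eqn_parameter_update_online} truly corresponds to projecting $\vect{z}_{t+1}$ in the $\vect{V}_{t+1}$-norm (so that the non-expansive projection lemma applies with the matching weighted norm) and that the feasibility $\vect{\theta}^* \in \mathcal{K}$ holds along the entire trajectory with high probability.
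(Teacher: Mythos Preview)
Your proposal is correct and is essentially the same argument as the paper's, just packaged more cleanly: the paper works directly from the first-order optimality condition of~\eqref{eqn_parameter_update_online} at $\vect{\theta}=\vect{\theta}^*$ and then separately bounds the cross term via an unconstrained minimization, whereas you collapse both steps into the single Pythagorean inequality for the $\vect{V}_{t+1}$-projection. Both rely on $\vect{\theta}^*\in\mathcal{K}$, which the paper uses implicitly and you flag explicitly.
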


\begin{proof}
Note that 
\begin{align*}
    \wh{\vect{\theta}}_{t+1} = \argmin_{\vect{\theta} : \|\vect{\theta} - \vect{\theta}_0\| \leq \gamma/2} \left\{  \frac{1}{2} \|\vect{\theta} - \wh{\vect{\theta}}_{t}\|_{\vect{V}_{t+1}}^2 + 4 (\vect{\theta} - \wh{\vect{\theta}}_{t})^\top g_t(\wh{\vect{\theta}}_{t}) \right\}. 
\end{align*}

From the first-order optimality condition, we have
\begin{align*}
    \left( 4 g_t(\wh{\vect{\theta}}_{t}) + \vect{V}_{t+1} (\wh{\vect{\theta}}_{t+1} - \wh{\vect{\theta}}_{t}) \right)^\top (\vect{\theta} - \wh{\vect{\theta}}_{t+1}) \geq 0
\end{align*}
for any $\vect{\theta}$ such that $\|\vect{\theta} - \vect{\theta}_0\| \leq \gamma/2$. We can rewrite this inequality as
\begin{align*}
    \vect{\theta}^\top \vect{V}_{t+1} (\wh{\vect{\theta}}_{t+1} - \wh{\vect{\theta}}_{t}) \geq \wh{\vect{\theta}}_{t+1}^{\,\top} \vect{V}_{t+1} (\wh{\vect{\theta}}_{t+1} - \wh{\vect{\theta}}_{t}) - 4 g_t(\wh{\vect{\theta}}_{t})^\top (\vect{\theta} - \wh{\vect{\theta}}_{t+1}).
\end{align*}
Then, we can write
\begin{align*}
    &\|\wh{\vect{\theta}}_{t} - \vect{\theta}^*\|_{\vect{V}_{t+1}}^2 - \|\wh{\vect{\theta}}_{t+1} - \vect{\theta}^*\|_{\vect{V}_{t+1}}^2 \\
    &= \wh{\vect{\theta}}_{t}^\top \vect{V}_{t+1} \wh{\vect{\theta}}_{t} - \wh{\vect{\theta}}_{t+1}^\top \vect{V}_{t+1} \wh{\vect{\theta}}_{t+1} + 2 {\vect{\theta}^*}^\top \vect{V}_{t+1} (\wh{\vect{\theta}}_{t+1} - \wh{\vect{\theta}}_{t})\\
    &\geq \wh{\vect{\theta}}_{t}^\top \vect{V}_{t+1} \wh{\vect{\theta}}_{t} - \wh{\vect{\theta}}_{t+1}^\top \vect{V}_{t+1} \wh{\vect{\theta}}_{t+1} + 2 \wh{\vect{\theta}}_{t+1}^\top \vect{V}_{t+1} (\wh{\vect{\theta}}_{t+1} - \wh{\vect{\theta}}_{t}) - 8 g_t(\wh{\vect{\theta}}_{t})^\top (\vect{\theta}^* - \wh{\vect{\theta}}_{t+1})\\
    &= \wh{\vect{\theta}}_{t}^\top \vect{V}_{t+1} \wh{\vect{\theta}}_{t} + \wh{\vect{\theta}}_{t+1}^\top \vect{V}_{t+1} \wh{\vect{\theta}}_{t+1} - 2 \wh{\vect{\theta}}_{t+1}^\top \vect{V}_{t+1} \wh{\vect{\theta}}_{t} - 8 g_t(\wh{\vect{\theta}}_{t})^\top (\vect{\theta}^* - \wh{\vect{\theta}}_{t+1})\\
    &= \|\wh{\vect{\theta}}_{t+1} - \wh{\vect{\theta}}_{t}\|_{\vect{V}_{t+1}}^2 + 8 g_t(\wh{\vect{\theta}}_{t})^\top (\wh{\vect{\theta}}_{t+1} - \wh{\vect{\theta}}_{t}) + 8 g_t(\wh{\vect{\theta}}_{t})^\top (\wh{\vect{\theta}}_{t} - \vect{\theta}^*)\\
    &\geq - 16 \| g_t(\wh{\vect{\theta}}_{t}) \|_{\vect{V}_{t+1}^{-1}}^2 + 8 g_t(\wh{\vect{\theta}}_{t})^\top (\wh{\vect{\theta}}_{t} - \vect{\theta}^*)
\end{align*}
where the last inequality follows from
\begin{align*}
\|\wh{\vect{\theta}}_{t+1} - \wh{\vect{\theta}}_{t}\|_{\vect{V}_{t+1}}^2 + 8 g_t(\wh{\vect{\theta}}_{t})^\top (\wh{\vect{\theta}}_{t+1} - \wh{\vect{\theta}}_{t}) &\geq \min_{\vect{\theta} : \|\vect{\theta} - \vect{\theta}_0\| \leq \gamma/2} \left\{ \|\vect{\theta} - \wh{\vect{\theta}}_{t}\|_{\vect{V}_{t+1}}^2 + 8 (\vect{\theta} - \wh{\vect{\theta}}_{t})^\top g_t(\wh{\vect{\theta}}_{t}) \right\}\\
&\geq \min_{\vect{\theta}} \left\{ \|\vect{\theta} - \wh{\vect{\theta}}_{t}\|_{\vect{V}_{t+1}}^2 + 8 (\vect{\theta} - \wh{\vect{\theta}}_{t})^\top g_t(\wh{\vect{\theta}}_{t}) \right\}\\
&= \min_{\vect{\theta}} \left\{ \|\vect{\theta}\|_{\vect{V}_{t+1}}^2 + 8 \vect{\theta}^\top g_t(\wh{\vect{\theta}}_{t}) \right\}\\
&= - 16 \| g_t(\wh{\vect{\theta}}_{t}) \|_{\vect{V}_{t+1}^{-1}}^2
\end{align*}

\end{proof}

Next, we let $\mathcal{F}_t$ denote the filtration up to time $t$ and define the conditional expected values for the per-round negative log-likelihood $f_t(\vect{\theta})$ and its gradient $g_t(\vect{\theta})$ as follows.
\begin{align*}
\widebar{f}_{t}(\vect{\theta}) &= \bbE_{i_t} [ f_{t}(\vect{\theta}) | \mathcal{F}_t ]\\
\widebar{g}_t(\vect{\theta}) &= \bbE_{i_t} [ g_{t}(\vect{\theta}) | \mathcal{F}_t ].
\end{align*}

\begin{lemma}
For any positive definite matrix $\vect{V}$,
\begin{align*}
    \|g_t(\vect{\theta})\|_{\vect{V}}^2 \leq 4 \max_{i \in S_t} \|\wt{\vect{x}}_{ti}\|_{\vect{V}}^2.
\end{align*}
\label{lemma_g_to_context}
\end{lemma}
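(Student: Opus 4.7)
The plan is to recognize that $g_t(\vect{\theta}) = \sum_{i \in S_t} q_{ti}(\vect{\theta}) \wt{\vect{x}}_{ti} - \wt{\vect{x}}_{t i_t}$ decomposes into a difference of two vectors, each of which lies in the convex hull of $\{\wt{\vect{x}}_{ti} : i \in S_t \cup \{0\}\}$. I will then bound each piece separately using the fact that $\|\cdot\|_{\vect{V}}$ is a norm whenever $\vect{V}$ is positive definite, and combine them using the triangle inequality.

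More concretely, let $\vect{y} := \sum_{i \in S_t} q_{ti}(\vect{\theta}) \wt{\vect{x}}_{ti}$ and $\vect{z} := \wt{\vect{x}}_{t i_t}$, so that $g_t(\vect{\theta}) = \vect{y} - \vect{z}$. Using $\wt{\vect{x}}_{t0} = \vect{0}$ together with $q_{t0}(\vect{\theta}) + \sum_{i \in S_t} q_{ti}(\vect{\theta}) = 1$, I can write $\vect{y} = \sum_{i \in S_t \cup \{0\}} q_{ti}(\vect{\theta}) \wt{\vect{x}}_{ti}$ as an honest convex combination. Since $\|\cdot\|_{\vect{V}}$ is a norm, convexity (Jensen's inequality) yields
\begin{equation*}
\|\vect{y}\|_{\vect{V}} \leq \sum_{i \in S_t \cup \{0\}} q_{ti}(\vect{\theta}) \|\wt{\vect{x}}_{ti}\|_{\vect{V}} \leq \max_{i \in S_t \cup \{0\}} \|\wt{\vect{x}}_{ti}\|_{\vect{V}} = \max_{i \in S_t} \|\wt{\vect{x}}_{ti}\|_{\vect{V}},
\end{equation*}
where the last equality again uses $\wt{\vect{x}}_{t0} = \vect{0}$. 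Since $i_t \in S_t \cup \{0\}$, the same observation gives $\|\vect{z}\|_{\vect{V}} \leq \max_{i \in S_t} \|\wt{\vect{x}}_{ti}\|_{\vect{V}}$.

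Finally, applying the triangle inequality and squaring yields
\begin{equation*}
\|g_t(\vect{\theta})\|_{\vect{V}} \leq \|\vect{y}\|_{\vect{V}} + \|\vect{z}\|_{\vect{V}} \leq 2 \max_{i \in S_t} \|\wt{\vect{x}}_{ti}\|_{\vect{V}},
\end{equation*}
which gives the claimed bound. There is no real obstacle here; the only subtlety is remembering to include the outside option $i = 0$ so that $\vect{y}$ is a true convex combination, which is harmless because the associated feature vector is zero.
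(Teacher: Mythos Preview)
Your proof is correct and is arguably cleaner than the paper's own argument. The paper expands the quadratic form
\[
\|g_t(\vect{\theta})\|_{\vect{V}}^2 = \sum_{i,j \in S_t} (q_{ti} - y_{ti})(q_{tj} - y_{tj}) \,\wt{\vect{x}}_{ti}^\top \vect{V} \wt{\vect{x}}_{tj},
\]
then bounds each cross term in absolute value, uses $\sum_i q_{ti} \leq 1$ and $\sum_i y_{ti} \leq 1$ to collapse the double sum to $4 \max_{i,j} |\wt{\vect{x}}_{ti}^\top \vect{V} \wt{\vect{x}}_{tj}|$, and finishes with Cauchy--Schwarz. Your route avoids the expansion entirely by working at the level of the norm: you split $g_t(\vect{\theta})$ into two terms each lying in the convex hull of $\{\wt{\vect{x}}_{ti} : i \in S_t \cup \{0\}\}$, apply Jensen to each, and combine with the triangle inequality. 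Both reach the same constant $4$, but your argument is more transparent and uses only the fact that $\|\cdot\|_{\vect{V}}$ is a norm, whereas the paper's approach manipulates the bilinear form directly.
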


\begin{proof}
Recall that $y_{ti}$ is a binary variable such that $y_{ti} = 1$ if $i_t = i$ and $y_{ti} = 0$ otherwise. For convenience also denote $q_{ti} = q_{t}(i | S_{t}, \vect{p}_{t} ; \vect{\theta})$. Then, we note that $\sum_{i \in S_t} q_{ti} \leq 1$ and $\sum_{i \in S_t} y_{ti} \leq 1$. Consequently, we can write
\begin{align*}
\|g_t(\vect{\theta})\|_{\vect{V}}^2 &= \sum_{i \in S_t} \sum_{j \in S_t} (q_{ti} - y_{ti}) (q_{tj} - y_{tj}) \, \wt{\vect{x}}_{ti}^\top \vect{V} \wt{\vect{x}}_{tj}\\
&\leq \sum_{i \in S_t} \sum_{j \in S_t} (q_{ti} q_{tj} + y_{ti} y_{tj} + q_{ti} y_{tj} + q_{tj} y_{ti}) \, |\wt{\vect{x}}_{ti}^\top \vect{V} \wt{\vect{x}}_{tj}|\\
&\leq 4 \max_{i,j \in S_t} |\wt{\vect{x}}_{ti}^\top \vect{V} \wt{\vect{x}}_{tj}|\\
&\leq 4 \max_{i \in S_t} |\wt{\vect{x}}_{ti}^\top \vect{V} \wt{\vect{x}}_{ti}|\\
&= 4 \max_{i \in S_t} \|\wt{\vect{x}}_{ti}\|_{\vect{V}}^2.
\end{align*}

\end{proof}

Then, we show that $\widebar{f}_{t}(\vect{\theta})$ is minimized at $\vect{\theta}^*$. Formally, we prove the following lemma.
\begin{lemma}
    For any $\vect{\theta} \in \bbR^{2d}$, we have $\widebar{f}_{t}(\vect{\theta}) \geq \widebar{f}_{t}(\vect{\theta}^*)$.
    \label{lemma_f_order}
\end{lemma}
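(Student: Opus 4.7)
The plan is to recognize $\widebar{f}_t(\vect{\theta})$ as a cross-entropy between the true MNL choice distribution (under $\vect{\theta}^*$) and the candidate distribution (under $\vect{\theta}$), at which point the result reduces to the non-negativity of KL divergence.

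First, I would unpack the conditional expectation. Given $\mathcal{F}_t$, the observed choice $i_t$ is a categorical random variable over $S_t \cup \{0\}$ with probabilities $q_{ti}(\vect{\theta}^*)$. Matching the stated gradient $g_t(\vect{\theta}) = \sum_{i \in S_t} q_{ti}(\vect{\theta})\wt{\vect{x}}_{ti} - \wt{\vect{x}}_{t i_t}$ against the standard MNL score identifies $f_t$ (up to an additive constant that is independent of $\vect{\theta}$) as the negative log-likelihood $-\log q_{t i_t}(\vect{\theta})$. Taking the conditional expectation then yields
\begin{equation*}
    \widebar{f}_t(\vect{\theta}) = -\sum_{i \in S_t \cup \{0\}} q_{ti}(\vect{\theta}^*) \log q_{ti}(\vect{\theta}) + C_t,
\end{equation*}
where $C_t$ depends only on $\mathcal{F}_t$ and $\vect{\theta}^*$. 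This is precisely the cross-entropy $H\!\left(q_t(\cdot \mid \vect{\theta}^*),\, q_t(\cdot \mid \vect{\theta})\right)$ plus a $\vect{\theta}$-independent term.

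Next, I would form the difference and collapse it to a KL divergence:
\begin{equation*}
    \widebar{f}_t(\vect{\theta}) - \widebar{f}_t(\vect{\theta}^*) = \sum_{i \in S_t \cup \{0\}} q_{ti}(\vect{\theta}^*) \log \frac{q_{ti}(\vect{\theta}^*)}{q_{ti}(\vect{\theta})} = D_{\mathrm{KL}}\!\left(q_t(\cdot \mid \vect{\theta}^*) \,\|\, q_t(\cdot \mid \vect{\theta})\right) \geq 0,
\end{equation*}
where the inequality is Gibbs' inequality (equivalently, Jensen's inequality applied to the strictly convex function $-\log$). There is no serious obstacle: this is the classical observation that the population log-likelihood is maximized at the true parameter. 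The only care required is the correct identification of $f_t$ with the log-loss so that $\widebar{f}_t$ becomes a cross-entropy; once that identification is in place, the bound is immediate from a single application of Gibbs.
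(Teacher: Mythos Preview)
Your proposal is correct and matches the paper's own proof: both identify $\widebar{f}_t(\vect{\theta}) - \widebar{f}_t(\vect{\theta}^*)$ as the KL divergence between the true and candidate MNL choice distributions and invoke its non-negativity. Your version is slightly more careful in summing over the full support $S_t \cup \{0\}$ (the paper writes the sum over $S_t$ only), and your derivation of $f_t$ from its gradient is a nice way to sidestep the paper's apparent typo $f_t(\vect{\theta}) = -q_{t i_t}(\vect{\theta})$ in place of $-\log q_{t i_t}(\vect{\theta})$.
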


\begin{proof}
For any $\vect{\theta} \in \bbR^{2d}$,
\begin{align*}
\widebar{f}_{t}(\vect{\theta}) - \widebar{f}_{t}(\vect{\theta}^*) &= \sum_{i \in S_t} q_{t}(i | S_{t}, \vect{p}_{t} ; \vect{\theta}^* ) [ \log q_{t}(i | S_{t}, \vect{p}_{t} ; \vect{\theta}^* ) - \log q_{t}(i | S_{t}, \vect{p}_{t} ; \vect{\theta} )]\\
&\geq 0
\end{align*}
since it is equal to the Kullback-Leibler (KL) divergence between distributions $q_{t}(i | S_{t}, \vect{p}_{t} ; \vect{\theta}^* )$ and $q_{t}(i | S_{t}, \vect{p}_{t} ; \vect{\theta} )$.
\end{proof}

\begin{lemma}
Suppose $\wh{\vect{\theta}}_t \in \mathbb{B}_\gamma$ for all $t \geq T_0$. Then, with probability at least $1 - \delta$,
\begin{align*}
    \sum_{\tau = T_0}^{t} &\left(\widebar{g}_\tau(\widehat{\vect{\theta}}_\tau) - g_\tau(\widehat{\vect{\theta}}_\tau) \right)^\top (\widehat{\vect{\theta}}_\tau - \vect{\theta}^*) \leq 2 \log \left( \frac{\lceil \log (K \gamma t^2 / \omega) \rceil t^2}{\delta} \right) + \frac{1}{8} \sum_{\tau  = T_0}^{t} \|\wh{\vect{\theta}}_\tau - \vect{\theta}^*\|_{\vect{W}_\tau}^2 + 1.
\end{align*}
where $\omega := \min_{\vect{\theta} \in \mathcal{B}_{\gamma}} q_{t i}(\wh{\vect{\theta}}_t) q_{t 0}(\wh{\vect{\theta}}_t)$ and satisfies $1/\omega = \mathcal{O}(K^{2+1/L_0})$.
\label{lemma_sum_gradients}
\end{lemma}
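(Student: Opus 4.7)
The plan is to define $Z_\tau := (\widebar{g}_\tau(\widehat{\vect{\theta}}_\tau) - g_\tau(\widehat{\vect{\theta}}_\tau))^\top (\widehat{\vect{\theta}}_\tau - \vect{\theta}^*)$ and attack $\sum_\tau Z_\tau$ via a Freedman/Bernstein-type martingale concentration bound combined with a dyadic peeling argument over the cumulative conditional variance. The key algebraic observation is that
\begin{equation*}
g_\tau(\vect{\theta}) - \widebar{g}_\tau(\vect{\theta}) = \sum_{i \in S_\tau} q_{\tau i}(\vect{\theta}^*)\wt{\vect{x}}_{\tau i} - \wt{\vect{x}}_{\tau i_\tau}
\end{equation*}
does not depend on $\vect{\theta}$ at all; it is simply the centered context vector of the realized choice. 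Hence $\{Z_\tau\}$ is a scalar, $\mathcal{F}_{\tau+1}$-adapted martingale difference sequence whose conditional covariance matrix equals the MNL Fisher block $\vect{H}_\tau(\vect{\theta}^*)$.

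The per-step estimates come out cleanly. Since $\widehat{\vect{\theta}}_\tau, \vect{\theta}^* \in \mathcal{B}_\gamma$, Lemma~\ref{lemma_H_ineq} applied to $\vect{H}_\tau$ at these two parameters gives $\vect{H}_\tau(\vect{\theta}^*) \preccurlyeq 4\vect{W}_\tau$, so
\begin{equation*}
\bbE[Z_\tau^2 \mid \mathcal{F}_\tau] = \|\widehat{\vect{\theta}}_\tau - \vect{\theta}^*\|_{\vect{H}_\tau(\vect{\theta}^*)}^2 \leq 4\|\widehat{\vect{\theta}}_\tau - \vect{\theta}^*\|_{\vect{W}_\tau}^2,
\end{equation*}
which is precisely the variance that must appear in the conclusion. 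Cauchy--Schwarz with $\|\wt{\vect{x}}_{\tau i}\| \leq \widebar{P}$ and $\|\widehat{\vect{\theta}}_\tau - \vect{\theta}^*\| \leq \gamma$ gives the a.s.\ bound $|Z_\tau| \leq 2\widebar{P}\gamma$, which is $O(1)$ since $\gamma = \log 2/(8\widebar{P})$; this is the bound needed for the Freedman regime constraint $\eta|Z_\tau|\leq 1$. Choosing $\eta$ small enough that $4\eta = 1/8$ and $\eta \leq 1/(2\widebar{P}\gamma)$, Freedman's inequality then yields, on the event $\{\sum_\tau \bbE[Z_\tau^2|\mathcal{F}_\tau] \leq V\}$, the bound $\sum_\tau Z_\tau \leq \eta V + \log(1/\delta)/\eta$ with probability at least $1 - \delta$, producing exactly the $\tfrac{1}{8}\sum_\tau \|\widehat{\vect{\theta}}_\tau - \vect{\theta}^*\|_{\vect{W}_\tau}^2$ term.

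To eliminate the fixed-$V$ conditioning I would run a dyadic peeling over $V_\tau := \sum_{s \leq \tau}\bbE[Z_s^2 | \mathcal{F}_s]$. The almost-sure upper envelope $V_t \leq 4\gamma^2\widebar{P}^2 t$ together with a trivial base case $V_t \leq 1$ (which contributes the additive $+1$) bounds the stratum count by $\lceil \log(K\gamma t^2/\omega) \rceil$; here $\omega$ enters as the coarse lower envelope $q_{\tau i}q_{\tau 0} \geq \omega$ on $\mathcal{B}_\gamma$, which controls how small $\bbE[Z_\tau^2|\mathcal{F}_\tau]$ can become before being negligible. Splitting the failure probability uniformly across strata and across $\tau \leq t$ (the extra $t^2$ factor ensures uniformity in the stopping time) and union-bounding produces exactly the logarithmic correction in the stated bound. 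The main technical obstacle is precisely this peeling bookkeeping: identifying the right upper and lower envelopes of $V_\tau$ so the stratum count comes out to the claimed form $\lceil \log(K\gamma t^2/\omega)\rceil$, and verifying that the Bernstein parameter $\eta$ is simultaneously compatible with Freedman's window $\eta \leq 1/(2\widebar{P}\gamma)$ and with the target variance coefficient $1/8$. Both constraints are satisfiable because $\gamma \widebar{P} = O(1)$, after which the remainder of the argument is a routine union bound.
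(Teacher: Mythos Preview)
Your proposal is correct and follows essentially the same route as the paper: identify $Z_\tau$ as a bounded martingale-difference sequence, compute its conditional variance as $\|\widehat{\vect{\theta}}_\tau-\vect{\theta}^*\|_{\vect{H}_\tau(\vect{\theta}^*)}^2$, handle the small-variance regime deterministically via the crude bound $\vect{W}_\tau \succcurlyeq \omega\sum_{i}\wt{\vect{x}}_{\tau i}\wt{\vect{x}}_{\tau i}^\top$ (this is exactly the paper's Case~(i) producing the additive~$+1$), and apply Freedman with a dyadic peeling over $B_t$ in the range $[\omega/(tK),\,\gamma t]$ to get the $\lceil\log(K\gamma t^2/\omega)\rceil$ strata. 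The only cosmetic difference is that the paper obtains the bound in the form $M_t\leq\sqrt{\eta_t B_t}+1$ and then applies AM--GM ($\sqrt{ab}\leq 2a+b/8$) to split into $2\eta_t+\tfrac18 B_t$, whereas your fixed-$\eta$ Freedman with $\eta=1/32$ would give the same $\tfrac18$-variance term but a larger constant on the logarithm; this affects only the leading constant and not the argument.
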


\begin{proof}
    The following proof is adapted from Lemma~14 of \cite{Oh_Iyengar_2021}. Note that $\xi_t = (\widebar{g}_t(\wh{\vect{\theta}}_t) - g_t(\wh{\vect{\theta}}_t))^\top (\wh{\vect{\theta}}_t - \vect{\theta}^*)$ is a martingale difference sequence and it satisfies
    \begin{align*}
        | (\widebar{g}_t(\wh{\vect{\theta}}_t) - g_t(\wh{\vect{\theta}}_t))^\top (\wh{\vect{\theta}}_t - \vect{\theta}^*) | &\leq | \widebar{g}_t(\wh{\vect{\theta}}_t)^\top (\wh{\vect{\theta}}_t - \vect{\theta}^*) | + | g_t(\wh{\vect{\theta}}_t)^\top (\wh{\vect{\theta}}_t - \vect{\theta}^*) | \\
        &\leq 2 \sqrt{2} \gamma \widebar{P} \\
        &\leq 2 \sqrt{2}.
    \end{align*}
    using the fact that $\|g_t(\wh{\vect{\theta}}_t)\| = \|\sum_{i \in S_t} (q_{\tau i}( \vect{\theta} ) - y_{t i}) \wt{\vect{x}}_{t i}\| \leq \sqrt{2} \widebar{P}$ for any $\vect{\theta}$. Therefore, 
    \begin{align*}
        M_t := \sum_{\tau = T_0}^{t} &\left(\widebar{g}_\tau(\widehat{\vect{\theta}}_\tau) - g_\tau(\widehat{\vect{\theta}}_\tau) \right)^\top (\widehat{\vect{\theta}}_\tau - \vect{\theta}^*)
    \end{align*}
    is a martingale. Now, we notice that $\bbE_{i_t} [\xi_t^2 | \mathcal{F}_t] = (\wh{\vect{\theta}}_t - \vect{\theta}^*)^\top \vect{H}_t(\wh{\vect{\theta}}_t) (\wh{\vect{\theta}}_t - \vect{\theta}^*) = \|\wh{\vect{\theta}}_t - \vect{\theta}^*\|_{\vect{W}_t}^2$ and define the random variable
    \begin{align*}
        B_t :=  \sum_{\tau  = T_0}^{t} \bbE_{i_\tau} [\xi_t^2 | \mathcal{F}_\tau] = \sum_{\tau  = T_0}^{t} \|\wh{\vect{\theta}}_\tau - \vect{\theta}^*\|_{\vect{W}_\tau}^2.
    \end{align*}
    
    In the following, we will show how we can use this quantity to upper-bound $M_t$. Since $B_t$ is a random variable, it is not possible to apply Freedman's inequality \cite{Freedman_1975} directly to $M_t$. Instead, we consider two cases with (i) $B_t \leq \frac{\omega}{t K}$ and (ii) $B_t > \frac{\omega}{t K}$ where $\omega := \min_{\vect{\theta} \in \mathcal{B}_{\gamma}} q_{t i}(\wh{\vect{\theta}}_t) q_{t 0}(\wh{\vect{\theta}}_t)$ as     introduced in Sect \ref{sect:estimation_importance}.

    \textbf{Case (i): } When $B_t \leq \frac{\omega}{t K}$, we have
    \begin{align*}
        M_t &= \sum_{\tau = T_0}^{t} \left(\widebar{g}_\tau(\widehat{\vect{\theta}}_\tau) - g_\tau(\widehat{\vect{\theta}}_\tau) \right)^\top (\widehat{\vect{\theta}}_\tau - \vect{\theta}^*) \\
        &= \sum_{\tau = T_0}^{t} \sum_{i \in S_\tau} \left( y_{\tau i} - q_{ti}(\vect{\theta}^*) \right) \wt{\vect{x}}_{\tau i}^\top (\widehat{\vect{\theta}}_\tau - \vect{\theta}^*)\\
        &\leq \sum_{\tau = T_0}^{t} \sum_{i \in S_\tau} \left| y_{\tau i} - q_{ti}(\vect{\theta}^*) \right| |\wt{\vect{x}}_{\tau i}^\top (\widehat{\vect{\theta}}_\tau - \vect{\theta}^*)|\\
        &\leq \sum_{\tau = T_0}^{t} \sum_{i \in S_\tau} |\wt{\vect{x}}_{\tau i}^\top (\widehat{\vect{\theta}}_\tau - \vect{\theta}^*)|\\
        &\leq \sqrt{t K \sum_{\tau = T_0}^{t} \sum_{i \in S_\tau} (\wt{\vect{x}}_{\tau i}^\top (\widehat{\vect{\theta}}_\tau - \vect{\theta}^*))^2}\\
        &= \sqrt{t K \sum_{\tau = T_0}^{t} (\widehat{\vect{\theta}}_\tau - \vect{\theta}^*)^\top \left( \sum_{i \in S_\tau} \wt{\vect{x}}_{\tau i} \wt{\vect{x}}_{\tau i}^\top \right)(\widehat{\vect{\theta}}_\tau - \vect{\theta}^*)}\\
        &\leq \sqrt{ \frac{t K}{\omega} \sum_{\tau = T_0}^{t} (\widehat{\vect{\theta}}_\tau - \vect{\theta}^*)^\top \vect{W}_\tau (\widehat{\vect{\theta}}_\tau - \vect{\theta}^*)}\\
        &= \sqrt{\frac{t K}{\omega} B_t}\\
        &\leq 1
    \end{align*}

    where we defined and used the result that 
    \begin{align*}
        \vect{W}_t &= \sum_{i \in S_{t}} q_{t i}(\wh{\vect{\theta}}_t) \vect{\widetilde{x}}_{t i} \vect{\widetilde{x}}_{t i}^\top - \sum_{i \in S_{t}} \sum_{j \in S_{t}} q_{t i}(\wh{\vect{\theta}}_t) q_{t j}(\wh{\vect{\theta}}_t) \vect{\widetilde{x}}_{t i} \vect{\widetilde{x}}_{t j}^\top\\
        &\succcurlyeq \sum_{i \in S_{t}} q_{t i}(\wh{\vect{\theta}}_t) q_{t 0}(\wh{\vect{\theta}}_t) \vect{\widetilde{x}}_{t i} \vect{\widetilde{x}}_{t i}^\top\\
        &\succcurlyeq \omega \sum_{i \in S_{t}} \vect{\widetilde{x}}_{t i} \vect{\widetilde{x}}_{t i}^\top.
    \end{align*}

    \textbf{Case (ii): } When $B_t > \frac{\omega}{t K}$, we have both a lower and upper bound for $B_t$, i.e., $\frac{\omega}{t K} < B_t \leq \gamma t$ since $\|\wh{\vect{\theta}}_t - \vect{\theta}^*\| \leq \gamma$ and $\vect{W}_t \preccurlyeq 1$ for all $t$. Then, we let $\eta_t$ to denote a constant and apply the peeling technique from \cite{Bartlett_Bousquet_Mendelson_2005} to obtain
    \begin{align*}
        \Pr \left( M_t \geq \sqrt{\eta_t B_t}  \right) &= \Pr \left( M_t \geq \sqrt{\eta_t B_t} , \frac{\omega}{t K} < B_t < \gamma t \right)\\
        &= \Pr \left( M_t \geq \sqrt{\eta_t B_t}, \frac{\omega}{t K} < B_t < \gamma t \right)\\
        &= \sum_{j = 1}^{m} \Pr \left( M_t \geq \sqrt{\eta_t B_t}, \frac{2^{j-1}  \omega}{t K} < B_t < \frac{2^{j}\omega}{t K } \right)\\
        &\leq \sum_{j = 1}^{m} \Pr \left( M_t \geq \sqrt{\eta_t \frac{2 \cdot 2^j  \omega}{t K}}, B_t < \frac{2^{j}  \omega}{t K} \right)\\
        &\leq 2 m \exp(- \eta_t)
    \end{align*}
    where we set $m = \lceil \log (K \gamma t^2 / \omega) \rceil$ and use the Freedman's inequality \cite{Freedman_1975} for the last inequality. 
    
    Combining the results from both cases, letting $\eta_t = \log \frac{m t^2}{\delta}$, and taking a union bound over $t$, we have
    \begin{align*}
        M_t &\leq \sqrt{\eta_t B_t} + 1\\
        &\leq 2 \eta_t + \frac{1}{8} B_t + 1
    \end{align*}
    where the last step uses the inequality $u v \leq c u^2 + v^2 /(4c)$.
\end{proof}

Now, we prove Lemma~\ref{lemma_normality_online} by using the previous results. First, we note that $\wh{\vect{\theta}}_t, \vect{\theta}^* \in \mathcal{B}_\gamma$ for $t \geq T_0$ by construction. Then, we use Lemma~\ref{lemma_f_strong_convex} to write
\begin{align*}
f_t(\wh{\vect{\theta}}_t) \leq f_t(\vect{\theta}^*) + g_t(\wh{\vect{\theta}}_t)^\top (\wh{\vect{\theta}}_t - \vect{\theta}^*) - \frac{1}{4} (\wh{\vect{\theta}}_t - \vect{\theta}^*)^\top \vect{W}_t (\wh{\vect{\theta}}_t - \vect{\theta}^*).
\end{align*}
Then, by taking the expectation over $i_t$ on both sides, we obtain
\begin{align*}
\widebar{f}_t(\wh{\vect{\theta}}_t) \leq \widebar{f}_t(\vect{\theta}^*) + \widebar{g}_t(\wh{\vect{\theta}}_t)^\top (\wh{\vect{\theta}}_t - \vect{\theta}^*) - \frac{1}{4} (\wh{\vect{\theta}}_t - \vect{\theta}^*)^\top \vect{W}_t (\wh{\vect{\theta}}_t - \vect{\theta}^*).
\end{align*}

Since $\widebar{f}_{t}(\vect{\theta}) \geq \widebar{f}_{t}(\vect{\theta}^*)$ by Lemma~\ref{lemma_f_order}, we have
\begin{align*}
    0 &\leq \widebar{f}_{t}(\wh{\vect{\theta}}_t) - \widebar{f}_{t}(\vect{\theta}^*)\\
    &\leq \widebar{g}_t(\wh{\vect{\theta}}_t)^\top (\wh{\vect{\theta}}_t - \vect{\theta}^*) - \frac{1}{4} \|\wh{\vect{\theta}}_t - \vect{\theta}^*\|_{\vect{W}_t}^2\\
    &= g_t(\wh{\vect{\theta}}_t)^\top (\wh{\vect{\theta}}_t - \vect{\theta}^*) - \frac{1}{4} \|\wh{\vect{\theta}}_t - \vect{\theta}^*\|_{\vect{W}_t}^2 + \left( \widebar{g}_t(\wh{\vect{\theta}}_t) - g_t(\wh{\vect{\theta}}_t) \right)^\top (\wh{\vect{\theta}}_t - \vect{\theta}^*).
\end{align*}

Then, using Lemma~\ref{lemma_g_relation} and Lemma~\ref{lemma_g_to_context}, we have
\begin{align*}
0 &\leq 2 \|g_t(\wh{\vect{\theta}}_t)\|_{\vect{V}_{t+1}^{-1}}^2 + \frac{1}{8} \|\wh{\vect{\theta}}_t - \vect{\theta}^*\|_{\vect{V}_{t+1}}^2 - \frac{1}{8} \|\wh{\vect{\theta}}_{t+1} - \vect{\theta}^*\|_{\vect{V}_{t+1}}^2 \\
& \qquad - \frac{1}{4} \|\wh{\vect{\theta}}_t - \vect{\theta}^*\|_{\vect{W}_t}^2 + \left( \widebar{g}_t(\wh{\vect{\theta}}_t) - g_t(\wh{\vect{\theta}}_t) \right)^\top (\wh{\vect{\theta}}_t - \vect{\theta}^*)\\
&\leq 2 \max_{i \in S_t} \|\wt{\vect{x}}_{ti}\|_{\vect{V}_{t+1}^{-1}}^2 + \frac{1}{8} \|\wh{\vect{\theta}}_t - \vect{\theta}^*\|_{\vect{V}_{t+1}}^2 - \frac{1}{8} \|\wh{\vect{\theta}}_{t+1} - \vect{\theta}^*\|_{\vect{V}_{t+1}}^2 \\
& \qquad - \frac{1}{4} \|\wh{\vect{\theta}}_t - \vect{\theta}^*\|_{\vect{W}_t}^2 + \left( \widebar{g}_t(\wh{\vect{\theta}}_t) - g_t(\wh{\vect{\theta}}_t) \right)^\top (\wh{\vect{\theta}}_t - \vect{\theta}^*)\\
&= 2 \max_{i \in S_t} \|\wt{\vect{x}}_{ti}\|_{\vect{V}_{t+1}^{-1}}^2 + \frac{1}{8} \|\wh{\vect{\theta}}_t - \vect{\theta}^*\|_{\vect{V}_{t}}^2 - \frac{1}{8} \|\wh{\vect{\theta}}_{t+1} - \vect{\theta}^*\|_{\vect{V}_{t+1}}^2 \\
& \qquad - \frac{1  }{8} \|\wh{\vect{\theta}}_t - \vect{\theta}^*\|_{\vect{W}_t}^2 + \left( \widebar{g}_t(\wh{\vect{\theta}}_t) - g_t(\wh{\vect{\theta}}_t) \right)^\top (\wh{\vect{\theta}}_t - \vect{\theta}^*)
\end{align*}
where the last equality follows by noting that we have
\begin{align*}
\|\wh{\vect{\theta}}_t - \vect{\theta}^*\|_{\vect{V}_{t+1}}^2 = \|\wh{\vect{\theta}}_t - \vect{\theta}^*\|_{\vect{V}_{t}}^2 + \|\wh{\vect{\theta}}_t - \vect{\theta}^*\|_{\vect{W}_{t}}^2
\end{align*}
since $\vect{V}_{t+1} = \vect{V}_t + \vect{W}_t$.

Hence, we have
\begin{align*}
\|\wh{\vect{\theta}}_{t+1} - \vect{\theta}^*\|_{\vect{V}_{t+1}}^2  &\leq \|\wh{\vect{\theta}}_t - \vect{\theta}^*\|_{\vect{V}_{t}}^2 + 16 \max_{i \in S_t} \|\wt{\vect{x}}_{ti}\|_{\vect{V}_{t+1}^{-1}}^2  - \|\wh{\vect{\theta}}_t - \vect{\theta}^*\|_{\vect{W}_t}^2\\
& \qquad  + 8 \left( \widebar{g}_t(\wh{\vect{\theta}}_t) - g_t(\wh{\vect{\theta}}_t) \right)^\top (\wh{\vect{\theta}}_t - \vect{\theta}^*).
\end{align*}

Summing over $\{T_0, \dots, t\}$, we obtain
\begin{align*}
\|\wh{\vect{\theta}}_{t+1} - \vect{\theta}^*\|_{\vect{V}_{t+1}}^2 &\leq \|\wh{\vect{\theta}}_t - \vect{\theta}^*\|_{\vect{V}_{T_0}}^2 + 16 \sum_{\tau = T_0}^{t} \max_{i \in S_\tau} \|\wt{\vect{x}}_{\tau i}\|_{\vect{V}_{\tau +1}^{-1}}^2 - \sum_{\tau = T_0}^{t} \|\widehat{\vect{\theta}}_\tau - \vect{\theta}^*\|_{\vect{W}_\tau}^2 \\
& \qquad + 8 \sum_{\tau = T_0}^{t} \left( \widebar{g}_\tau(\wh{\vect{\theta}}_\tau) - g_\tau(\wh{\vect{\theta}}_\tau) \right)^\top (\wh{\vect{\theta}}_\tau - \vect{\theta}^*).
\end{align*}

Then, Lemma~\ref{lemma_sum_gradients} shows with a probability at least $1 - \delta$,
\begin{align*}
 &\|\wh{\vect{\theta}}_{t+1} - \vect{\theta}^*\|_{\vect{V}_{t+1}}^2 \\
 &\qquad \leq \|\wh{\vect{\theta}}_t - \vect{\theta}^*\|_{\vect{V}_{T_0}}^2 + 16 \sum_{\tau = T_0}^{t} \max_{i \in S_\tau} \|\wt{\vect{x}}_{\tau i}\|_{\vect{V}_{\tau +1}^{-1}}^2 + 16 \log \left( \frac{\lceil \log (K \gamma t^2 / \omega) \rceil t^2}{\delta} \right) + 8\\
&\qquad \leq \gamma^2 \lambda_\mathrm{max}(\vect{V}_{T_0}) + 576 d K \log ( T / d ) + 16 \log \left( \frac{\lceil \log (K \gamma t^2 / \omega ) \rceil t^2}{\delta} \right) + 8\\
&\qquad \leq \gamma^2 T_0 + 576 d K \log ( T / d ) + 16 \log \left( \frac{\lceil \log (K \gamma t^2 / \omega) \rceil t^2}{\delta} \right) + 8
\end{align*}
where we apply Lemma~\ref{lemma_cumulative_max_uncertainty} for the last step.

\section{Proof of Theorem~\ref{thm_regret_lb}}
\label{lb_proofs}

At a high level, we prove Theorem~\ref{thm_regret_lb} in three steps. In the first step, we construct an adversarial set of parameters and reduce the task of lower bounding the worst-case regret of any policy to lower bounding the Bayes risk over the constructed parameter set. In the second step, we use a counting argument similar to the one used in \cite{Chen_Wang_2018} and \cite{Chen_Wang_Zhou_2020} to provide an explicit lower bound on the Bayes risk of the constructed adversarial parameter set. Finally, we apply Pinsker’s inequality to complete the proof. The following sections provide the details for each of these steps.

\subsection{Adversarial construction and the Bayes risk}

Let $\epsilon \in (0, (1-L_0^2)/d\sqrt{d})$ be a small positive parameter to be specified later.  For every subset $W \subseteq [d]$, define the corresponding parameter $\vect{\psi}_W \in \bbR^d$ as $[\vect{\psi}_W]_i = \epsilon$ for all $i \in W$, and $[\vect{\psi}_W]_i = 0$ for all $i \notin W$. Next, define $\vect{\phi}^* \in \bbR^d$ as $[\vect{\phi}^*]_i = L_0\sqrt{1/d}$ for all $i \in [d]$. Finally, for any $W \subseteq [d]$, define the concatenated parameter vectors $\vect{\theta}_W \in \bbR^{2d}$ as $\vect{\theta}_W = (\vect{\psi}_W, \vect{\phi}^*)$. The parameter set that we consider is
\begin{align*}
    \vect{\theta} \in \Theta := \{ \vect{\theta}_W : W \in \mathcal{W}_{d/4}\}
\end{align*}
where $\mathcal{W}_{d/4} := \{W \subseteq [d] : |W| = d/4\}$ denotes the set of all subsets of $[d]$ whose size is $d/4$. Note that $d/4$ is a positive integer because $d$ is divisible by $4$. It is also easy to check that with the condition $\epsilon \in (0, (1-L_0^2)/\sqrt{d})$, we satisfy $\|\vect{\theta}\| \leq 1$ for any $\vect{\theta} \in \Theta$.

The feature vectors $\{\vect{x}_{ti}\}$ are constructed to be invariant over time iterations $t$. For each $t$ and $U \in \mathcal{W}_{d/4}$, $K$ identical feature vectors $\vect{x}_U$ are constructed as $[\vect{x}_U]_i = 2/\sqrt{d}$ for all $i \in U$, and $[\vect{x}_U]_i = 0$ for all $i \notin U$. Furthermore, it is straightforward to verify that $\|\vect{x}_U\| \leq 1$ for any $U \in \mathcal{W}_{d/4}$.

Hence, the worst-case regret of any policy $\pi$ can be lower bounded by the worst-case regret of parameters belonging to $\Theta$, which can be further lower bounded by the average regret over a uniform prior over $\Theta$. Formally,
\begin{align}
    \sup_{\vect{\theta}} \bbE_{\vect{x}, \vect{\theta}}^{\pi} \sum_{t = 1}^{T} R(S_{\vect{\theta}}^*, \vect{p}_{\vect{\theta}}^*) - R(S_t, \vect{p}_t) &= \max_{\vect{\theta} \in \Theta} \bbE_{\vect{x}, \vect{\theta}}^{\pi} \sum_{t = 1}^{T} R(S_{\vect{\theta}}^*, \vect{p}_{\vect{\theta}}^*) - R(S_t, \vect{p}_t)\\
    &= \frac{1}{|\mathcal{W}_{d/4}|} \sum_{W \in \mathcal{W}_{d/4}} \bbE_{\vect{x}, \vect{\theta}_W}^{\pi} R(S_{\vect{\theta}_W}^*, \vect{p}_{\vect{\theta}_W}^*) - R(S_t, \vect{p}_t)
    \label{eq_bayes_risk}
\end{align}

Here, the $R(\cdot)$ function refers to the expected revenue function $R_t(\cdot)$ defined in \eqref{eqn_rev_t}. Since both the context vectors and the feature vectors are invariant over time by construction, we drop the time subscript $t$ to simplify the notation. Additionally, $S_{\vect{\theta}_W}^*$ and $ \vect{p}_{\vect{\theta}_W}^*$ refer to the optimal size-$K$ assortment and pricing that maximizes expected revenue under the feature parameter $\vect{\theta}_W$. By construction, it is easy to verify that $S_{\vect{\theta}_W}^*$ consists of all $K$ items corresponding to feature $\vect{x}_{W}$.

For any fixed assortment $S \in \mathcal{S}_K$, let $\vect{p}^*(S)$ denote the revenue-maximizing price vector to offer with assortment $S$. That is,
\begin{align*}
    \vect{p}^*(S) \in \max_{\vect{p} \in \bbR_{+}^{n}} R(S, \vect{p})
\end{align*}
with entries $p_{i}^*(S)$. Then, the optimum prices $\vect{p}_{\vect{\theta}_W}^* = \vect{p}^*(S_{\vect{\theta}_W}^*)$ can be characterized using the following proposition which is a special case of the Proposition~\ref{prop_opt_assortments_and_prices}.

\begin{proposition}
 Consider that items in an assortment $S$ of size $K$ have utility functions $u_i(p) = \alpha_i - \beta_i \cdot p$. Then, the revenue-maximizing prices for offering assortment $S$ are given by
    \begin{align*}
        p_{i}^*(S) = \frac{1}{\beta_i} + B^0(S)
    \end{align*}
    where $B^0(S)$ is the unique fixed point solution $B$ of the equation
    \begin{align*}
    B = \sum_{i \in S} \frac{1}{\beta} e^{\alpha_i - \beta_i B - 1}.
    \end{align*} 
    Furthermore, the revenue achieved by offering $(S, \vect{p}^*(S))$ is equal to $B^0(S)$.
\label{prop_identical_prices}
\end{proposition}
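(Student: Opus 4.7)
The plan is to derive Proposition~\ref{prop_identical_prices} as a direct specialization of Proposition~\ref{prop_opt_assortments_and_prices} to the case of linear utility functions, with the assortment $S$ treated as fixed (so that the outer maximization over $\mathcal{S}_K$ plays no role). First I would verify the hypotheses of Proposition~\ref{prop_opt_assortments_and_prices}: the linear utility $u_i(p) = \alpha_i - \beta_i p$ is differentiable and, because $\beta_i > 0$ under our standing minimum price sensitivity assumption, strictly decreasing, so the characterization in terms of the fixed point $B$ applies with $h_{ti} \equiv u_i$.

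Next, I would compute the functions that appear in that characterization explicitly for the linear case. Since $u_i'(p) = -\beta_i$ is constant, the constraint set in the definition of $v_i(B)$ becomes
\begin{align*}
    \mathcal{P}_i(B) = \left\{ p : p + \tfrac{1}{u_i'(p)} = B \right\} = \left\{ B + \tfrac{1}{\beta_i} \right\},
\end{align*}
which is a singleton. Therefore the inner maximization in $v_i(B) = \max\{f_i(p) : p \in \mathcal{P}_i(B)\}$ is trivial, and plugging $p = B + 1/\beta_i$ into $f_i(p) = -e^{u_i(p)}/u_i'(p) = e^{\alpha_i - \beta_i p}/\beta_i$ yields
\begin{align*}
    v_i(B) = \frac{1}{\beta_i} e^{\alpha_i - \beta_i B - 1}.
\end{align*}

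Substituting this expression into the fixed point equation $B = \sum_{i \in S} v_i(B)$ from Proposition~\ref{prop_opt_assortments_and_prices} (with the maximum over assortments replaced by evaluation at the fixed $S$) recovers exactly the equation stated in the proposition. Existence and uniqueness of the solution $B^0(S)$ then follow immediately because the left-hand side is strictly increasing in $B$ while the right-hand side is continuous, strictly decreasing in $B$, positive, and tends to $0$ as $B \to \infty$. Finally, by Proposition~\ref{prop_opt_assortments_and_prices}, the optimal price for item $i$ is the unique element of $\mathcal{P}_i(B^0(S))$, namely $p_i^*(S) = 1/\beta_i + B^0(S)$, and the maximum revenue equals $B^0(S)$.

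This is essentially a mechanical substitution, so there is no real obstacle; the only thing to be careful about is checking that the singleton structure of $\mathcal{P}_i(B)$ makes the $\argmax$ in $v_i(B)$ well-defined without invoking the full machinery (such as Lemma~\ref{lemma_f_decreasing}) that is needed in the general case. Once that is observed, the result follows with no further work.
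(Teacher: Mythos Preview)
Your proposal is correct and matches the paper's approach exactly: the paper simply states that this proposition ``is a special case of Proposition~\ref{prop_opt_assortments_and_prices}'' without giving any further proof, and your specialization (noting that $\mathcal{P}_i(B)$ is a singleton because $u_i'$ is constant, then substituting $p = B + 1/\beta_i$ into $f_i$) is precisely the mechanical computation needed to justify that claim.
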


In particular, if all items in an assortment $S$ have the same utility function $u_i(p) = \alpha - \beta \cdot p$, then we can write $B^0(S)$ as the fixed point solution of
\begin{align*}
B = \frac{K}{\beta} e^{\alpha - \beta B - 1}.
\end{align*} 

\subsection{The counting argument}

In this section, we derive an explicit lower bound on the Bayes risk in \eqref{eq_bayes_risk}. For any sequence $\{(S_t, \vect{p}_t)\}_{t = 1}^{T}$ produced by the policy $\vect{\pi}$, we first describe an alternative sequence $\{(\wt{S}_t, \wt{\vect{p}}_t)\}_{t = 1}^{T}$ that provably enjoys less regret under the feature parameter $\vect{\theta}_W$.

Let $\{\vect{x}_{U_1}, \dots, \vect{x}_{U_M}\}$ be the set of context vectors of items contained in assortment $S_t$ (if $S_t = \emptyset$, then choose an arbitrary feature vector $\vect{x}_{U}$). Let $\wt{U}_t$ be the subset among $U_1, \dots, U_M$ that maximizes $\langle \vect{x}_{\wt{U}_t}, \vect{\psi}_W \rangle$, where $\vect{\theta}_W = (\vect{\psi}_W, \vect{\phi}^*)$ is the underlying parameter. Let $\wt{S}_t$ be the assortment consisting of all $K$ items corresponding to the feature $\vect{x}_{\wt{U}_t}$ and let $\wt{\vect{p}}_t = \vect{p}^*(\wt{S}_t)$ be the optimum prices for assortment $\wt{S}_t$ according to Proposition~\ref{prop_identical_prices}. Then, the following lemma holds true.

\begin{lemma}
    $R(S_t, \vect{p}_t) \leq R(\wt{S}_t, \wt{\vect{p}}_t)$ for feature parameter $\vect{\theta}_W = (\vect{\psi}_W, \vect{\phi}^*)$. 
\end{lemma}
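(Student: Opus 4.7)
The plan is to exploit two structural properties of the adversarial construction: (i) the price-sensitivity term $\langle \vect{\phi}^*, \vect{x}_U \rangle$ is the same for every feature vector $\vect{x}_U$ used in the instance, so all items have identical $\beta$, and (ii) the homogenization step ($S_t \mapsto \wt{S}_t$) can only increase the optimum revenue by a direct comparison of fixed-point equations supplied by Proposition~\ref{prop_identical_prices}. The main obstacle is essentially bookkeeping: verifying that the scalar uniform-price problem's fixed point is monotone in its coefficient.

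First I would compute the inner products explicitly. Since each $\vect{x}_U$ has exactly $d/4$ entries of $2/\sqrt{d}$ on $U$ and zeros elsewhere, and $\vect{\phi}^*$ has every entry equal to $L_0/\sqrt{d}$, a direct calculation yields $\langle \vect{\phi}^*, \vect{x}_U \rangle = (d/4)(2/\sqrt{d})(L_0/\sqrt{d}) = L_0/2$, which is independent of $U$. Setting $\beta := L_0/2$, each item appearing in any context $\vect{x}_{U_k}$ has utility $\alpha_{U_k} - \beta p$ where $\alpha_{U_k} := \langle \vect{\psi}_W, \vect{x}_{U_k} \rangle = 2\epsilon|U_k \cap W|/\sqrt{d}$. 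In particular, $\wt{U}_t$ is precisely the index maximizing $\alpha_U$ among $\{U_1,\dots,U_M\}$, so $\alpha_{U_k} \leq \alpha_{\wt{U}_t}$ for every item in $S_t$.

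Next I would chain two inequalities. Since $\vect{p}_t$ is not necessarily revenue-optimal for $S_t$,
\begin{equation*}
    R(S_t, \vect{p}_t) \;\leq\; R(S_t, \vect{p}^*(S_t)) \;=\; B^0(S_t),
\end{equation*}
where by Proposition~\ref{prop_identical_prices} the value $B^0(S_t)$ is the unique fixed point of
\begin{equation*}
    B \;=\; \frac{1}{\beta}\, e^{-\beta B - 1} \sum_{i \in S_t} e^{\alpha_{U(i)}}.
\end{equation*}
Analogously, because all $K$ items of $\wt{S}_t$ share utility $\alpha_{\wt{U}_t} - \beta p$, the revenue $R(\wt{S}_t, \wt{\vect{p}}_t) = B^0(\wt{S}_t)$ is the fixed point of
\begin{equation*}
    B \;=\; \frac{K}{\beta}\, e^{\alpha_{\wt{U}_t} - \beta B - 1}.
\end{equation*}

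The final step is a monotonicity argument. Since $|S_t| \leq K$ and $\alpha_{U(i)} \leq \alpha_{\wt{U}_t}$ for every $i \in S_t$, we have the pointwise bound $\sum_{i \in S_t} e^{\alpha_{U(i)}} \leq K\, e^{\alpha_{\wt{U}_t}}$. Hence for every $B$ the right-hand side of the $S_t$-equation is dominated by that of the $\wt{S}_t$-equation; as both right-hand sides are strictly decreasing in $B$ while the left-hand side is strictly increasing, their intersection points satisfy $B^0(S_t) \leq B^0(\wt{S}_t)$. Combining the displayed chain yields $R(S_t, \vect{p}_t) \leq B^0(S_t) \leq B^0(\wt{S}_t) = R(\wt{S}_t, \wt{\vect{p}}_t)$, completing the proof.
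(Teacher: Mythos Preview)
Your proof is correct and follows essentially the same approach as the paper: both first bound $R(S_t,\vect{p}_t) \leq R(S_t,\vect{p}^*(S_t)) = B^0(S_t)$ via price optimality and Proposition~\ref{prop_identical_prices}, and then compare the two fixed-point equations using monotonicity in the $\alpha$'s to conclude $B^0(S_t) \leq B^0(\wt{S}_t)$. Your version is slightly more explicit in verifying that $\beta$ is constant across all items and in handling the case $|S_t| < K$, but the argument is the same.
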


\begin{proof}

First, from the optimality of prices $\vect{p}^*(S_t)$ under $S_t$, we have $R(S_t, \vect{p}_t) \leq R(S_t, \vect{p}^*(S_t))$. Then, by Proposition~\ref{prop_identical_prices}, $R(S_t, \vect{p}^*(S_t))$ is equal to the unique fixed point solution for
\begin{align*}
B = \sum_{i \in S} \frac{1}{\beta} e^{\alpha_i - \beta_i B - 1}.
\end{align*}
Note that the expression on the right-hand side of this equation is monotonically increasing in each $\alpha_i$. Therefore, by replacing all $i \in S_t$ with $i \in \wt{S}_t$, the $\alpha_{i}$ values do not decrease and therefore the fixed point does not increase. That is, the fixed-point solution for
\begin{align}
B = \sum_{i \in \wt{S}_t} \frac{1}{\beta} e^{\alpha_i - \beta_i B - 1}.
\label{eqn_fixed_point_revenue_order}
\end{align}
is greater than or equal to $R(S_t, \vect{p}^*(S_t))$. Since the unique fixed point solution of \eqref{eqn_fixed_point_revenue_order} is equal to $R(\wt{S}_t, \wt{\vect{p}}_t)$, we have $R(S_t, \vect{p}^*(S_t)) \leq R(\wt{S}_t, \wt{\vect{p}}_t)$, completing the proof.

\end{proof}

To simplify notation, we use $\bbE_{W}$ to denote the expectations under parameter $\theta_W$ and policy $\pi$. The following lemma gives a lower bound for $R(S_{\vect{\theta}_W}^*, \vect{p}_{\vect{\theta}_W}^*) - R(\wt{S}_t, \wt{\vect{p}}_t)$. 

\begin{restatable}{lemma}{lemmalbrevgap}
    Suppose $\epsilon \in (0, 1/d\sqrt{d})$ and define $\delta := d/4 - |\wt{U}_t \cap W|$. Then,
    \begin{align*}
        R(S_{\vect{\theta}_W}^*, \vect{p}_{\vect{\theta}_W}^*) - R(\wt{S}_t, \wt{\vect{p}}_t) \geq \frac{\delta \epsilon}{15 L_0 \sqrt{d}}
    \end{align*}
    \label{lemma_lb_rev_gap}
\end{restatable}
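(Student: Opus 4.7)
The plan is to exploit the fact that, by construction, every item in $S^*_{\vect{\theta}_W}$ shares the feature vector $\vect{x}_W$ and every item in $\wt{S}_t$ shares $\vect{x}_{\wt{U}_t}$, so both assortments are of the \emph{identical-item} type and Proposition~\ref{prop_identical_prices} applies directly. A quick computation from the adversarial construction shows that $\beta := \langle \vect{\phi}^*, \vect{x}_U \rangle = L_0/2$ is independent of $U$, while $\alpha_U := \langle \vect{\psi}_W, \vect{x}_U \rangle = 2\epsilon|U\cap W|/\sqrt{d}$; in particular $\alpha^* := \alpha_W = \epsilon\sqrt{d}/2$ and $\alpha_t := \alpha_{\wt{U}_t} = \alpha^* - 2\epsilon\delta/\sqrt{d}$.

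Applying the identical-item form of Proposition~\ref{prop_identical_prices}, the two revenues $B^* := R(S^*_{\vect{\theta}_W}, \vect{p}^*_{\vect{\theta}_W})$ and $B_t := R(\wt{S}_t, \wt{\vect{p}}_t)$ are the unique positive fixed points of $B = (K/\beta)\, e^{\alpha - \beta B - 1}$ for $\alpha = \alpha^*$ and $\alpha = \alpha_t$, respectively. Implicit differentiation at the fixed point yields $dB/d\alpha = B/(1+\beta B)$, which is monotone increasing in $\alpha$, so integrating from $\alpha_t$ to $\alpha^*$ produces
\[
B^* - B_t \;\geq\; (\alpha^* - \alpha_t)\cdot\frac{B_t}{1+\beta B_t} \;=\; \frac{2\epsilon\delta}{\sqrt{d}}\cdot\frac{B_t}{1+\beta B_t}.
\]

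The one nontrivial ingredient I expect to be the main obstacle is a \emph{universal} lower bound on $B_t$ that does not deteriorate with small $\alpha_t$ or with $K$. To obtain one, I would rearrange the fixed-point equation into Lambert-W form $(\beta B_t)\, e^{\beta B_t} = K\, e^{\alpha_t - 1} \geq e^{-1}$, and use monotonicity of $x \mapsto x e^x$ on $[0,\infty)$: a one-line numerical check $(1/5)\, e^{1/5} \approx 0.244 < 0.368 \approx e^{-1}$ gives $\beta B_t \geq 1/5$, whence $B_t/(1+\beta B_t) \geq 1/(6\beta) = 1/(3L_0)$. Substituting back yields $B^* - B_t \geq 2\epsilon\delta/(3L_0\sqrt{d})$, which is comfortably larger than the claimed $\epsilon\delta/(15L_0\sqrt{d})$. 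The hypothesis $\epsilon < 1/(d\sqrt{d})$ is never tight in this estimate; it is needed only for the surrounding parts of the lower-bound argument (normalization $\|\vect{\theta}_W\|\leq 1$ and the eventual KL computation). Everything apart from the Lambert-W-style bound on $\beta B_t$ is routine algebra.
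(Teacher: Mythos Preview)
Your argument is correct and in fact yields a constant ten times better than the lemma claims. The route, however, is genuinely different from the paper's. The paper writes the revenue explicitly as $W(K e^{\alpha-1})/\beta$ via the Lambert $W$ function, proves a two--sided Taylor-type estimate $f_K(0)+f_K'(0)x \le f_K(x) \le f_K(0)+f_K'(0)x+C_K x^2$ for $f_K(x)=W(Ke^{x-1})$ (their Lemma~\ref{lemma_prod_log}, which needs a separate case analysis for $K\in\{1,2\}$), and then subtracts the two bounds; the quadratic remainder $C_K\alpha_t^2$ is absorbed using the hypothesis $\epsilon<1/(d\sqrt d)$ together with $\delta\ge 1$. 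Your approach instead observes that the fixed point $B(\alpha)$ is convex in $\alpha$ (since $dB/d\alpha=B/(1+\beta B)$ is increasing), bounds $B^*-B_t$ from below by $(\alpha^*-\alpha_t)$ times the derivative at the \emph{lower} endpoint, and then lower-bounds $\beta B_t$ via the Lambert-$W$ identity $(\beta B_t)e^{\beta B_t}=Ke^{\alpha_t-1}\ge e^{-1}$. This is shorter, avoids the auxiliary Taylor lemma and its $K$-casework, and---as you note---never uses the $\epsilon$ hypothesis. The paper's expansion is more ``global'' in flavor but pays for it with a weaker constant and the need for the smallness condition on $\epsilon$ in this step.
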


Define random variables $\wt{N}_i := \sum_{t = 1}^{T} \1\{ i \in \wt{U}_t \}$. Lemma~\ref{lemma_lb_rev_gap} immediately implies
\begin{align*}
    \bbE_W \left[ R(S_{\vect{\theta}_W}^*, \vect{p}_{\vect{\theta}_W}^*) - R(\wt{S}_t, \wt{\vect{p}}_t) \right] \geq \frac{\epsilon}{15 L_0 \sqrt{d}} \left( \frac{d T}{4} - \sum_{i \in W} \bbE_W[\wt{N}_i] \right), \forall W \in \mathcal{W}_{d/4}.
\end{align*}

Summing both sides of this equation over all $W \in \mathcal{W}_{d/4}$ gives
\begin{align*}
    \sum_{W \in \mathcal{W}_{d/4}} \bbE_W \left[ R(S_{\vect{\theta}_W}^*, \vect{p}_{\vect{\theta}_W}^*) - R(\wt{S}_t, \wt{\vect{p}}_t) \right] \geq \frac{\epsilon}{15 L_0 \sqrt{d}} \sum_{W \in \mathcal{W}_{d/4}} \left( \frac{d T}{4} - \sum_{i \in W} \bbE_W[\wt{N}_i] \right).
\end{align*}

Next, we will upper-bound the term $\sum_{W \in \mathcal{W}_{d/4}} \sum_{i \in W} \bbE_W[\wt{N}_i]$. First, define 
\begin{align*}
    \mathcal{W}^{(i)}_{d/4} := \{W \in \mathcal{W}_{d/4} : i \in W\}.
\end{align*}
Then, we swap the order of summation to write
\begin{align*}
    \sum_{W \in \mathcal{W}_{d/4}} \sum_{i \in W} \bbE_W[\wt{N}_i] &= \sum_{i \in [d]} \sum_{W \in \mathcal{W}^{(i)}_{d/4}} \bbE_W[\wt{N}_i]\\
    &= \sum_{i \in [d]} \sum_{W \in \mathcal{W}_{d/4 - 1}} \bbE_{W \cup \{i\}}[\wt{N}_i]\\
    &\leq |\mathcal{W}_{d/4 - 1}| \max_{W \in \mathcal{W}_{d/4 - 1}} \sum_{i \in [d]} \bbE_{W \cup \{i\}}[\wt{N}_i]\\
    &= |\mathcal{W}_{d/4 - 1}| \max_{W \in \mathcal{W}_{d/4 - 1}} \sum_{i \in [d]} \left( \bbE_{W}[\wt{N}_i] + \bbE_{W \cup \{i\}}[\wt{N}_i] - \bbE_{W}[\wt{N}_i] \right)\\
    &\leq |\mathcal{W}_{d/4 - 1}| \left[\max_{W \in \mathcal{W}_{d/4 - 1}} \sum_{i \in [d]} \left( \bbE_{W \cup \{i\}}[\wt{N}_i] - \bbE_{W}[\wt{N}_i] \right) + \frac{d T}{4} \right]
\end{align*}
where the last step follows from the fact that $\sum_{i \in [d]} \bbE_{W}[\wt{N}_i] \leq dT/4$ for any fixed $W \in \mathcal{W}_{d/4 - 1}$.

Next, we note that
\begin{align*}
    \frac{|\mathcal{W}_{d/4 - 1}|}{|\mathcal{W}_{d/4}|} = \frac{\binom{d}{d/4 - 1}}{\binom{d}{d/4}} = \frac{d/4}{3d/4+1} \leq \frac{1}{3}
\end{align*}
to write
\begin{align*}
    \frac{1}{|\mathcal{W}_{d/4}|} \sum_{W \in \mathcal{W}_{d/4}} &\bbE_W \left[ R(S_{\vect{\theta}_W}^*, \vect{p}_{\vect{\theta}_W}^*) - R(\wt{S}_t, \wt{\vect{p}}_t) \right] \\
    &\geq \frac{1}{|\mathcal{W}_{d/4}|} \frac{\epsilon}{15 L_0 \sqrt{d}} \sum_{W \in \mathcal{W}_{d/4}} \left( \frac{d T}{4} - \sum_{i \in W} \bbE_W[\wt{N}_i] \right)\\
    &\geq \frac{\epsilon}{15 L_0 \sqrt{d}} \left( \frac{d T}{4} - \frac{1}{|\mathcal{W}_{d/4}|} \sum_{W \in \mathcal{W}_{d/4}} \sum_{i \in W} \bbE_W[\wt{N}_i] \right)\\
    &\geq \frac{\epsilon}{45 L_0 \sqrt{d}} \left( \frac{d T}{2} - \max_{W \in \mathcal{W}_{d/4 - 1}} \sum_{i \in [d]} \left| \bbE_{W \cup \{i\}}[\wt{N}_i] - \bbE_{W}[\wt{N}_i] \right| \right)
\end{align*}

\subsection{Pinsker's inequality}

In this section, we upper bound $\left| \bbE_{W \cup \{i\}}[\wt{N}_i] - \bbE_{W}[\wt{N}_i] \right|$ for any fixed $W \in \mathcal{W}_{d/4 - 1}$. Let $\bbP_{W}$ and $\bbP_{W  \cup \{i\}}$ to denote the probability law under parameter  $\theta_{W}$ and $\theta_{W  \cup \{i\}}$, respectively. Then,
\begin{align*}
    \left| \bbE_{W \cup \{i\}}[\wt{N}_i] - \bbE_{W}[\wt{N}_i] \right| &\leq \sum_{n = 0}^{T} n \cdot \left| \bbP_{W}[\wt{N}_i = n] - \bbP_{W  \cup \{i\}}[\wt{N}_i = n] \right|\\
    &\leq T \cdot \sum_{n = 0}^{T} \left| \bbP_{W}[\wt{N}_i = n] - \bbP_{W  \cup \{i\}}[\wt{N}_i = n] \right|\\
    &\leq 2 T \cdot \|\bbP_{W} - \bbP_{W  \cup \{i\}}\|_{\mathrm{TV}}\\
    &\leq T \sqrt{2 \cdot \mathrm{KL}(\bbP_{W} || \bbP_{W  \cup \{i\}})}
\end{align*}
where $\|P-Q\|_{\mathrm{TV}} = \sup_{A} |P(A) - Q(A)|$ is the total variation distance between laws $P$ and $Q$; $\mathrm{KL}(P || Q) = \int (\log dP/ dQ) dP$ is the Kullback-Leibler (KL) divergence between $P$ and $Q$; and the inequality $\|P-Q\|_{\mathrm{TV}} \leq \sqrt{\frac{1}{2} \mathrm{KL}(P || Q) }$ is the Pinsker's inequality.

Recall that $\{\vect{x}_{U_1}, \dots, \vect{x}_{U_M}\}$ denotes the set of context vectors of items contained in assortment $S_t$. Then, for every $i \in [d]$, define a new random variable $N_i := \frac{1}{K} \sum_{t = 1}^{T} \sum_{j = 1}^{M} \1\{i \in U_j\}$. The next lemma is used to upper bound the KL divergence term $\mathrm{KL}(\bbP_{W} || \bbP_{W  \cup \{i\}})$.

\begin{lemma}[Lemma~6 in \cite{Chen_Wang_Zhou_2020}]
    For any $W \in \mathcal{W}_{d/4-1}$ and $i \in [d]$, 
    \begin{align*}
        \mathrm{KL}(\bbP_{W} || \bbP_{W  \cup \{i\}}) \leq C_{\mathrm{KL}} \cdot \bbE_{W} [N_i] \cdot \epsilon^2 / d
    \end{align*}
    for some universal constant $C_{\mathrm{KL}} > 0$.
    \label{lemma_kl_div}
\end{lemma}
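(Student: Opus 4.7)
The plan is to reduce the joint KL divergence to a sum of per-round KL divergences via the chain rule, and then control each per-round term using the structure of the MNL choice model together with the smallness of $\epsilon$.

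First, I would apply the chain rule for KL divergence over the $T$ sequential observations. Because the policy $\vect{\pi}$ and the time-invariant contexts are identical under both laws, only the conditional choice distributions $q_t(\cdot \mid \vect{\theta})$ differ, yielding
\begin{equation*}
\mathrm{KL}(\bbP_W \| \bbP_{W \cup \{i\}}) = \sum_{t=1}^T \bbE_W\!\left[ \mathrm{KL}\!\left( q_t(\cdot \mid \vect{\theta}_W) \,\big\|\, q_t(\cdot \mid \vect{\theta}_{W \cup \{i\}}) \right) \right].
\end{equation*}

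Next, I would analyze a single per-round KL term. The two parameters $\vect{\theta}_W$ and $\vect{\theta}_{W \cup \{i\}}$ coincide except in the $i$-th coordinate of $\vect{\psi}$, where they differ by $\epsilon$. Hence the utility of an offered item with feature $\vect{x}_{U_j}$ shifts by $\delta_j = \epsilon \cdot [\vect{x}_{U_j}]_i \in \{0,\, 2\epsilon/\sqrt{d}\}$, with $\delta_j = 2\epsilon/\sqrt{d}$ precisely when $i \in U_j$. If no offered feature contains $i$, then the per-round KL is exactly zero. Otherwise I would Taylor-expand the log-likelihood ratio in $\vec{\delta}$: the first-order terms cancel because the score of the MNL has zero mean under the model, leaving a leading quadratic term equal to $\tfrac{1}{2}\,\mathrm{Var}_{p_t}(\delta)$. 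Bounding this variance by $4\epsilon^2/d$ and controlling the cubic remainder using the hypothesis $\epsilon < 1/(d\sqrt{d})$ then yields a per-round bound of order $\epsilon^2/d$.

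Finally, I would assemble the per-round bounds across $t$. Each round in which no offered feature contains $i$ contributes zero, so summing gives
\begin{equation*}
\mathrm{KL}(\bbP_W \| \bbP_{W \cup \{i\}}) \;\leq\; C \cdot \frac{\epsilon^2}{d}\, \bbE_W\!\left[ \sum_{t=1}^T \mathbf{1}\{\exists\, j : i \in U_j\} \right],
\end{equation*}
and relating this indicator sum to the definition $N_i = \tfrac{1}{K}\sum_t \sum_j \mathbf{1}\{i \in U_j\}$ completes the argument, with all prefactors absorbed into the universal constant $C_{\mathrm{KL}}$.

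The main obstacle will be the cancellation of first-order terms in the Taylor expansion of the MNL log-likelihood ratio: without this cancellation one only obtains a per-round KL of order $\epsilon/\sqrt{d}$, which would be insufficient to close the regret lower-bound argument. The cancellation is the usual score identity $\bbE_{q_t(\cdot;\vect{\theta})}[\nabla_{\vect{\theta}} \log q_t(\cdot;\vect{\theta})] = \vect{0}$ specialized to the MNL, but its application requires careful bookkeeping of the MNL normalization to ensure the quadratic coefficient is exactly the variance $\mathrm{Var}_{p_t}(\delta)$. The remaining steps (bounding the cubic remainder via $\epsilon < 1/(d\sqrt{d})$, and matching the indicator sum to the normalization of $N_i$) are routine.
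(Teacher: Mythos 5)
First, note that the paper does not actually prove this lemma: it is imported verbatim as Lemma~6 of \citet{Chen_Wang_Zhou_2020}, so your attempt can only be measured against the statement itself. Your chain-rule decomposition and the per-round estimate are fine as far as they go: since contexts are time-invariant and the policy is shared by both laws, the KL decomposes into expected per-round KLs between the two MNL choice distributions, each equal to $\log \bbE_q[e^{\delta}] - \bbE_q[\delta]$ for a utility shift $\delta \in \{0, 2\epsilon/\sqrt{d}\}$ on the relevant items, which is indeed $O(\epsilon^2/d)$ (Hoeffding's lemma on the cumulant generating function even avoids your Taylor-remainder bookkeeping). The genuine gap is in your final step. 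You bound the KL by $C (\epsilon^2/d)\, \bbE_W\bigl[\sum_{t} \1\{\exists j : i \in U_j\}\bigr]$ and then ``relate'' this to $N_i = \frac{1}{K}\sum_t \sum_j \1\{i \in U_j\}$, but the comparison goes the wrong way: per round, $\1\{\exists j : i \in U_j\} \geq \frac{1}{K}\sum_j \1\{i \in U_j\}$, and when exactly one offered feature class contains $i$ in each relevant round one has $\sum_t \1\{\exists j : i \in U_j\} = K N_i$. Hence your argument only delivers $\mathrm{KL}(\bbP_W \| \bbP_{W\cup\{i\}}) \leq C\, K\, \bbE_W[N_i]\, \epsilon^2/d$; the factor $K$ is not a universal constant and cannot be absorbed into $C_{\mathrm{KL}}$, and if carried through Pinsker's inequality and the counting argument it would only yield a lower bound of order $d\sqrt{T/K}/L_0$ rather than the $d\sqrt{T}/L_0$ the paper needs.

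To obtain the bound with the $1/K$-normalized $N_i$ you cannot discard the variance structure as you do when bounding $\mathrm{Var}(\delta)$ by $4\epsilon^2/d$. The per-round KL is, up to higher order, $\tfrac12 \delta^2\, q_{A_t}(1-q_{A_t})$, where $q_{A_t}$ is the probability that the buyer selects an item whose feature support contains $i$; the stated lemma implicitly requires $q_{A_t}(1-q_{A_t}) = O\bigl(\frac{1}{K}\sum_j \1\{i \in U_j\}\bigr)$, i.e.\ a uniform $O(1/K)$-type control of the relevant choice probabilities under the adversarial construction. Establishing that control is precisely the nontrivial content of the cited lemma, and in the present setting it is a point that deserves care rather than a routine step, since the policy also chooses prices and can thereby inflate the choice probability of a single relevant item.
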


Combining Lemma~\ref{lemma_kl_div} with the final result of the previous subsection, we obtain
\begin{align*}
    \frac{1}{|\mathcal{W}_{d/4}|} \sum_{W \in \mathcal{W}_{d/4}} &\bbE_W \left[ R(S_{\vect{\theta}_W}^*, \vect{p}_{\vect{\theta}_W}^*) - R(\wt{S}_t, \wt{\vect{p}}_t) \right] \\
    &\geq \frac{\epsilon}{45 L_0 \sqrt{d}} \left( \frac{d T}{2} - T \sum_{i \in [d]} \sqrt{2 C_{\mathrm{KL}} \cdot \bbE_{W} [N_i] \cdot \epsilon^2 / d} \right)\\
    &\geq \frac{\epsilon}{45 L_0 \sqrt{d}} \left( \frac{d T}{2} - T \epsilon \sqrt{2 C_{\mathrm{KL}} \sum_{i \in [d]} \bbE_{W} [N_i]} \right)\\
    &\geq \frac{\epsilon}{45 L_0 \sqrt{d}} \left( \frac{d T}{2} - T \epsilon \sqrt{ C_{\mathrm{KL}}' dT} \right)
\end{align*}
where $C_{\mathrm{KL}}' = C_{\mathrm{KL}} / 2$. Setting $\epsilon = \sqrt{d / 16 C_{\mathrm{KL}}' T} \in (0, (1-L_0^2)/d\sqrt{d})$ for sufficiently large $T$, we obtain
\begin{align*}
    \sup_{\vect{\theta}} \bbE_{\vect{x}, \vect{\theta}}^{\pi} \sum_{t = 1}^{T} R(S_{\vect{\theta}}^*, \vect{p}_{\vect{\theta}}^*) - R(S_t, \vect{p}_t) \geq C_0 d \sqrt{T} / L_0
\end{align*}
for some universal constant $C_0$, completing the proof of the theorem. 

\subsection{Proofs for Technical Lemmas}

\lemmalbrevgap*

\begin{proof}
    The optimum revenue from offering $K$ identical items with utility functions $u(p) = \alpha - \beta p$ is equal to the unique fixed point solution $B$ of the equation
    \begin{align}
    B = \frac{K}{\beta} e^{\alpha - \beta B - 1}.
    \end{align} 

    Using the product logarithm function $W(\cdot)$, we can express the optimum revenue as
    \begin{align}
        \frac{W(e^{\alpha -1} K)}{\beta}
    \end{align}

    Let $f_K(x) := W(e^{x -1} K)$ and denote its first derivative with $f_K'(x)$ for any $K \geq 1$. Then, by Lemma~\ref{lemma_prod_log}, there exists a constant $C_K < \frac{2}{3} f_K'(0)$ such that
    \begin{align*}
        f_K(0) + f_K'(0) \cdot x \leq f_K(x) \leq f_K(0) + f_K'(0) \cdot x + C_K \cdot x^2 
    \end{align*}
    for all $0 \leq x \leq 1$. For the remainder of this proof, let $\vect{x} = \vect{x}_{W}$, $\wt{\vect{x}} = \vect{x}_{\wt{U}_t}$, and $\vect{\theta} = \vect{\theta}_W$. Then, we can write
    \begin{align*}
        R(S_{\vect{\theta}_W}^*, \vect{p}_{\vect{\theta}_W}^*) = f_K(\vect{x}^\top \vect{\theta}) \quad \text{and} \quad R(\wt{S}_t, \wt{\vect{p}}_t) = f_K(\wt{\vect{x}}^\top \vect{\theta}).
    \end{align*}

    Putting it all together, we can show that
    \begin{align*}
        R(S_{\vect{\theta}_W}^*, \vect{p}_{\vect{\theta}_W}^*) - R(\wt{S}_t, \wt{\vect{p}}_t) &\geq \frac{1}{L_0} \left[ (f_K(0) + f_K'(0) \vect{x}^\top \vect{\theta}) - \left(f_K(0) + f_K'(0) \wt{\vect{x}}^\top \vect{\theta}  + C_K (\wt{\vect{x}}^\top \vect{\theta})^2 \right) \right]\\
        &= \frac{1}{L_0} \left[ f_K'(0) (\vect{x}-\wt{\vect{x}})^\top \vect{\theta} - C_K (\wt{\vect{x}}^\top \vect{\theta})^2 \right]\\
        &\geq \frac{f_K'(0)}{L_0} \left[ (\vect{x}-\wt{\vect{x}})^\top \vect{\theta} - \frac{2}{3} (\wt{\vect{x}}^\top \vect{\theta})^2 \right]\\
        &\geq \frac{f_K'(0)}{L_0} \left[ \frac{\delta \epsilon}{\sqrt{d}} - \frac{2 d \epsilon^2}{3} \right]\\
        &\geq \frac{f_K'(0) \delta \epsilon}{3 L_0\sqrt{d}}
    \end{align*}
    where the last three inequalities use the inequality $0 < f''(0) < f_K'(0)$, the definition of $\delta$, and the inequality $d \epsilon^2 \leq \delta \epsilon/\sqrt{d}$ provided that $\epsilon \in (0, 1/d\sqrt{d})$. Lastly, noting that $f_K'(0) > 1/5$ by Lemma~\ref{lemma_prod_log} for any $K \geq 1$, we conclude the proof.
\end{proof}

\begin{lemma}
    Let $f_K(x) := W(e^{x -1} K)$ and denote its first derivative with $f_K'(x)$. Then, for any $K \geq 1$, 
    \begin{enumerate}[label=(\alph*)]
    \item $f_K'(x) > 1/5$ for all $0 \leq x \leq 1$, and
    \item there exists a constant $C_K < \frac{2}{3} f_K'(0)$ such that
    \begin{align*}
        f_K(0) + f_K'(0) \cdot x \leq f_K(x) \leq f_K(0) + f_K'(0) \cdot x + C_K \cdot x^2 
    \end{align*}
    for all $0 \leq x \leq 1$.
    \end{enumerate}
    \label{lemma_prod_log}
\end{lemma}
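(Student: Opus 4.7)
The plan is to parameterize everything through the Lambert $W$ relation. Set $y = y(x) := f_K(x) = W(Ke^{x-1})$, so that $y e^y = K e^{x-1}$. Implicitly differentiating gives $y'(1+y)e^y = y e^y$, hence
\begin{equation*}
f_K'(x) \;=\; \frac{y(x)}{1+y(x)}, \qquad f_K''(x) \;=\; \frac{y'(x)}{(1+y(x))^2} \;=\; \frac{y(x)}{(1+y(x))^3}.
\end{equation*}
Both formulas will drive the rest of the argument; in particular $f_K'' \geq 0$ so $f_K$ is convex, immediately giving the lower bound in part (b).

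For part (a), the map $y \mapsto y/(1+y)$ is strictly increasing, so it suffices to lower bound $y(x)$ on $[0,1]$ for every $K \geq 1$. The minimum is attained at $K=1$, $x=0$, namely $y(0) = W(1/e)$. I would show $W(1/e) > 1/4$ by noting that $z \mapsto z e^z$ is strictly increasing on $[0,\infty)$ and verifying $\tfrac14 e^{1/4} < 1/e$; using $e^{1/4} < 4/3$ (from the power series, say) gives $\tfrac14 e^{1/4} < 1/3 < 1/e$. Hence $y(x) > 1/4$ uniformly, and $f_K'(x) > \tfrac{1/4}{1+1/4} = 1/5$.

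For the upper bound in part (b), define $g(x) := f_K(x) - f_K(0) - f_K'(0)x$, for which $g(0)=g'(0)=0$ and $g''(x) = f_K''(x)$. Taylor with integral remainder yields $g(x) \leq \tfrac{x^2}{2}\sup_{t\in[0,1]} f_K''(t)$. An elementary calculus check on $h(y) := y/(1+y)^3$ shows $h'(y) = (1-2y)/(1+y)^4$, so $h$ is globally maximized at $y=1/2$ with value $4/27$. Thus one may take
\begin{equation*}
C_K \;=\; \tfrac{1}{2}\sup_{t\in[0,1]} f_K''(t) \;\leq\; \tfrac{2}{27}.
\end{equation*}
Combining with part (a), $\tfrac{2}{3} f_K'(0) > \tfrac{2}{3} \cdot \tfrac{1}{5} = \tfrac{2}{15} > \tfrac{2}{27} \geq C_K$, which is the required strict inequality.

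There is no real obstacle here; the only care required is the numerical step $W(1/e) > 1/4$, which I would handle via the strict monotonicity of $z \mapsto z e^z$ together with a crude truncation of the series for $e^{1/4}$. Everything else follows from the two closed-form expressions for $f_K'$ and $f_K''$ derived at the start.
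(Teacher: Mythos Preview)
Your proof is correct and, for part (b), cleaner than the paper's. Both arguments start from the same closed forms $f_K'(x)=y/(1+y)$ and $f_K''(x)=y/(1+y)^3$ with $y=f_K(x)$, and part (a) is handled identically (monotonicity reduces to checking $W(1/e)>1/4$). The difference lies in the upper bound of (b): the paper uses a third-order Taylor expansion with Lagrange remainder, then splits into the cases $K\geq 3$ (where $f_K\geq 1/2$ makes the cubic remainder nonpositive) and $K\in\{1,2\}$ (handled by a numerical check that $\max_{[0,1]} f_K^{(3)}=f_K^{(3)}(0)\leq f_K''(0)$), arriving at $C_K\leq \tfrac{2}{3}f_K''(0)$ and then invoking $f_K''(0)<f_K'(0)$. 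You instead stop at second order and use the global bound $\sup_{y>0} y/(1+y)^3 = 4/27$, giving $C_K\leq 2/27$ uniformly; combined with $f_K'(0)>1/5$ this yields $C_K\leq 2/27<2/15<\tfrac{2}{3}f_K'(0)$ without any case split or numerical verification. Your route is more elementary and self-contained; the paper's route gives a $K$-dependent $C_K$ that is tighter for large $K$ (since then $f_K''(0)\to 0$), but that extra sharpness is not used anywhere.
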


\begin{proof}
    Let $f_K''(x)$ and $f_K^{(3)}(x)$ denote the second and third derivatives of $f_K(x)$ respectively. Using the properties of the product logarithm function, it is easy to show that 
\begin{align*}
    f_K'(x) = \frac{f_K(x)}{1 + f_K(x)}, \qquad f_K''(x) = \frac{f_K(x)}{(1 + f_K(x))^3}, \qquad f_K^{(3)}(x) = \frac{(1 - 2f_K(x)) f_K(x)}{(1 + f_K(x))^5}.
\end{align*}

For any $K \geq 1$, $f_K(x)$ is a positive and increasing function of $x$. Hence, $\min_{0 \leq x \leq 1} f_K'(x) = f_K'(0)$. Furthermore, we can show that
\begin{align*}
    \min_{K \geq 1} f_K'(0) = \min_{K \geq 1} \frac{W(K/e)}{1+W(K/e)} = \frac{W(1/e)}{1+W(1/e)} > 1/5
\end{align*}
proving the first part of the lemma.

To prove the second part of the lemma, we use Taylor's Theorem to write
\begin{align*}
        f_K(x) &= f_K(0) + f_K'(0) \cdot x + \frac{f_K''(0)}{2} \cdot x^2 + R_K(\zeta; x)\\
        R_K(\zeta; x) &= \frac{f_K^{(3)}(\zeta)}{6} x^3
\end{align*}
for some $\zeta$ between $0$ and $x$. For any $K \geq 3$, we can easily show that $f_K(x) \geq 1/2$ for all $0 \leq x \leq 1$. Therefore, $R_K(\zeta; x) \leq 0$ for all $0 \leq \zeta \leq x \leq 1$ and we can set $C_K = f_K''(0)/2$ to satisfy the upper bound inequality.

On the other hand, for $K=1$ and $K=2$, we can numerically show that
\begin{align*}
    \max_{0 \leq \zeta \leq 1} f_K^{(3)}(\zeta) = f_K^{(3)}(0).
\end{align*}
and $f_K^{(3)}(0) \leq f_K''(0)$. Therefore, we have
\begin{align*}
    R_K(\zeta; x) \leq \frac{f_K''(0)}{6} \cdot x^2
\end{align*}
for all $0 \leq \zeta \leq x \leq 1$ when $K = 1$ or $K=2$. As a result, we can set $C_K = 2 f_K''(0)/3$ to satisfy the upper bound inequality.

Since $f_K''(0) < f_K'(0)$ for any $K \geq 1$, the selected constant $C_K$ also satisfies $C_K < \frac{2}{3} f_K'(0)$.

\end{proof}

\section{Experimental Details}
\label{appendix_exp_details}

We numerically evaluate our algorithms over $20$ independently generated problem instances and provide our results in Figure \ref{fig:experimental_results}. We run experiments with $n = 100$ items for various assortment sizes $K$ and various numbers of feature dimensions $d$. In each instance, the parameter $\vect{\psi}^*$ is uniformly chosen from ${\{\vect{\psi} : \|\vect{\psi}\|_2 = 1/2 \}}$. On the other hand, price sensitivity parameter $\vect{\phi}^*$ is generated by independently drawing its entries from a uniform distribution over $[\sqrt{L_0}/\sqrt{d}, 1/\sqrt{2d}]$ for some parameter $L_0 > 0$. Each context vector $\vect{x}_{ti}$ is generated by independently drawing its entries over $[\sqrt{L_0}/\sqrt{d}, 1/\sqrt{2d}]$. This construction ensures that we satisfy both Assumptions~\ref{assumption_positive_sens} and \ref{assumption_stochastic_contexts}. 

Figure~\ref{fig:regret_growth} demonstrates that the regret of CAP algorithm follows a $T^\alpha$ dependency with an empirically observed slope of $\alpha \approx 0.5$. This result aligns with the theoretical regret rate of $O(\sqrt{T})$ we obtained in this work.

\begin{figure}[ht]
    \centering
    \includegraphics[width=0.4\linewidth]{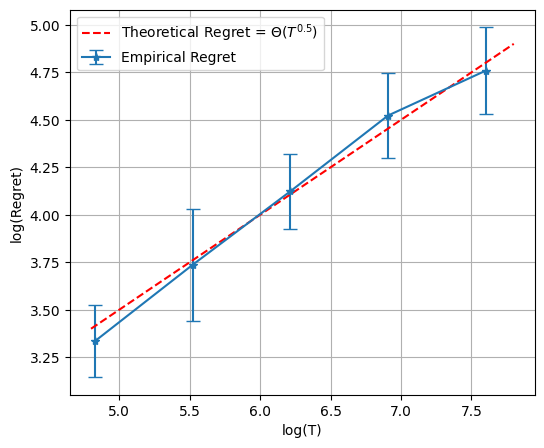}
    \caption{Log-log plot illustrating the dependency of regret for our proposed algorithm CAP. The slope of the curve reflects the empirical growth rate of regret with respect to time horizon $T$.}
    \label{fig:regret_growth}
\end{figure}

\end{document}